\pdfoutput=1
\documentclass[letterpaper,twocolumn,10pt]{article}
\usepackage{usenix2019_v3}

\usepackage{amsmath,amssymb,amsthm}
\usepackage{booktabs,tabularx,array,colortbl}
\usepackage{enumitem}
\usepackage{alltt}
\usepackage{upquote}
\usepackage{refcount}
\usepackage{comment}
\usepackage{tikz}
\usepackage[numbers]{natbib}
\usetikzlibrary{arrows.meta,positioning,fit,calc,shapes.geometric,shapes.misc,decorations.pathreplacing,backgrounds,patterns}
\usepackage{xurl}

\newif\ifdonotdistribute \donotdistributefalse
\ifdonotdistribute
  \usepackage{draftwatermark}
  \DraftwatermarkOptions{text={Draft --- Do Not Distribute}, angle=0, fontsize=9pt,
    color=black!45, pos={0.5\paperwidth, 0.4in}, anchor=ct}
\fi

\newcommand{\AuthorBlock}{%
{\rm Kefan Liu$^{1,2}$ \quad Fengning Ou$^{3}$ \quad Yelin Luo$^{2,4}$ \quad Jingdi Lei$^{5}$}\\
$^{1}$Institute of Computing Technology, CAS \quad $^{2}$University of Chinese Academy of Sciences\\
$^{3}$Nanjing University \quad $^{4}$Institute of Automation, CAS \quad $^{5}$Nanyang Technological University\\
{\normalsize\texttt{Kefan.Liu@outlook.com \quad reverymoon@gmail.com}}\\
{\normalsize\texttt{Yelin\_Luo@outlook.com \quad jingdi001@e.ntu.edu.sg}}}

\definecolor{pcol}{HTML}{0072B2}
\definecolor{ocol}{HTML}{E69F00}
\definecolor{ostroke}{HTML}{B37200}
\colorlet{pfill}{pcol!10}
\colorlet{ofill}{ocol!16}
\colorlet{pdark}{pcol!80!black}
\colorlet{odark}{ocol!55!black}
\colorlet{inkmuted}{black!55}
\colorlet{rulegray}{black!18}

\colorlet{rowrulegray}{black!30}
\newcommand{\rowrule}{\arrayrulecolor{rowrulegray}\specialrule{0.3pt}{2pt}{2pt}\arrayrulecolor{black}}
\newcommand{\grouprule}{\midrule[\heavyrulewidth]}

\theoremstyle{definition}
\newtheorem{theorem}{Theorem}[section]
\newtheorem{proposition}[theorem]{Proposition}
\newtheorem{lemma}[theorem]{Lemma}
\newtheorem{corollary}[theorem]{Corollary}
\newtheorem{definition}[theorem]{Definition}
\newtheorem{construction}[theorem]{Construction}

\newcommand{\envendsymbol}{\ensuremath{\lozenge}}
\newcommand{\envendhook}[1]{%
  \AddToHook{env/#1/begin}{\pushQED{\qed}\renewcommand{\qedsymbol}{\envendsymbol}}%
  \AddToHook{env/#1/end}{\popQED}}
\envendhook{theorem}\envendhook{proposition}\envendhook{lemma}\envendhook{corollary}
\envendhook{definition}\envendhook{construction}\envendhook{example}

\newcommand{\namedpara}[1]{\par\medskip\noindent\textbf{#1}\ }
\newcounter{insight}
\newcommand{\insight}{\refstepcounter{insight}\namedpara{Insight \theinsight.}}

\newcommand{\sectionepigraph}[2]{\begin{flushright}\small\parbox{0.84\columnwidth}{\itshape #1\\[2pt]
  \hspace*{\fill}{\upshape --- #2}}\end{flushright}}

\makeatletter
\renewcommand\paragraph{\@startsection{paragraph}{4}{\z@}%
  {\medskipamount}{-1em}{\normalfont\normalsize\bfseries}}
\renewcommand\subparagraph{\@startsection{subparagraph}{5}{\z@}%
  {\medskipamount}{-1em}{\normalfont\normalsize\itshape}}
\makeatother

\newcommand{\mach}[1]{\mbox{#1}}
\AtBeginDocument{\pdfstringdefDisableCommands{\def\mach#1{#1}}}

\setlist{itemsep=1pt,topsep=3pt}

\newenvironment{pseudo}{\par\small\begin{alltt}}{\end{alltt}\par}
\newcommand{\tc}[3][l]{\makebox[#2\fontcharwd\font`x][#1]{#3}}

\hypersetup{linkcolor=pdark, citecolor=odark, urlcolor=pdark}

\begin{document}

\date{}

\title{\Large \bf We Query, Therefore We Compute:\\
  On Oracle Computation beyond the Machine, with an Application to Agents}

\author{\AuthorBlock}

\maketitle

\begin{abstract}
Agentic systems use large language models (LLMs) to carry out concrete tasks. Prior work often borrows abstractions such as
scheduling, caching or isolation piecemeal from operating systems, so the mechanisms it builds share little common ground, and
the shared view of the two forms of agentic system, Workflows and Agents, is limited. We construct an abstract machine that provides both.

We treat the LLM as an Oracle and extend a two-stack pushdown automaton with one instruction, which hands the Oracle a whole stack
as its query and appends the answer to that same stack. The machine thus performs two computations, the Oracle's and a
Turing-complete one that we call the Priestess. A stack that the program only appends to grows autoregressively, as an agent's
context does. Two symmetry breakings, S in storage and T in transitions, make a Priestess program the operating system of the
programs the Oracle runs, and produce the Agent and the Workflow as the two placements of a task's program.

For internally autoregressive Oracles, the two computations synchronize at the end of every answer under certain conditions, and
through that synchronization we model caching and analyse scheduling. No guarantee that holds for every
Oracle can fix which content crosses between the two computations, but such a guarantee does fix the boundary itself.

The construction V fits the machine to a von Neumann computer. To show that it is realizable, we propose \textbf{ArchNights}, an
extended \mbox{RISC-V} ISA and a Linux-style operating system implementing the machine by design. ArchNights-SE runs on gem5 as a
computer system, becomes an agentic system when it runs an LLM as the Oracle, and will be open source.

Agentic systems can then be designed as computer systems are. With a foundation built and a unified view, future work can share
invariants and bounds, each with its conditions.
\end{abstract}

{\renewcommand{\abstractname}{Extended Abstract}
\begin{abstract}
Agentic systems use large language models (LLMs) to carry out concrete tasks. Prior work often borrows abstractions such as
scheduling, caching or isolation piecemeal from operating systems. An analogy reaches only as far as what it borrows, so the
mechanisms share little common ground on which to state or compare them. Moreover, agentic systems appear in two forms, Workflows
and Agents, and the shared view that subsumes them is limited. We thus construct an abstract machine that provides this common
ground and a unified view.

We treat the LLM as an Oracle and extend a two-stack pushdown automaton with an instruction that queries the Oracle with a whole
stack and appends the answer to that same stack. The machine thus performs two computations: the Oracle's, and a Turing-complete
one that we call the Priestess. For the Oracle's computation, a stack that the program only appends to grows autoregressively, as an
agent's context does. We then colour the two stacks, type the machine with call-by-push-value, give it two modes and break its symmetry in
storage and in transitions. The storage (S) symmetry breaking makes the Oracle execute a program only in the Oracle colour,
while content stays data in the Priestess colour. Under T, the Oracle mode can switch to the
Priestess mode only at a fixed entry. Together they make the \mach{O-2-PDA-TS}, on which a Priestess program is the operating
system of the programs the Oracle runs. On this machine, when an LLM is the Oracle, a task whose program runs on the
Oracle is an Agent. When
the task's program is placed differently, we find the Agent and Workflow forms on this machine. At this point the machine is defined
and the operating system is derived, but what invariants hold between the Oracle and the Priestess remains open.

We then study Oracles that are autoregressive inside, and link their one-step transitions to the machine's history through the
encoder and decoder of the Oracle. The two computations synchronize at the end of every answer that yields, and
through that synchronization we model caching and analyse scheduling behaviour, once for both forms. We further analyse the
boundary between the Oracle and the Priestess and find that no guarantee that holds for every Oracle can fix which content
crosses, but that such a guarantee does fix the boundary itself.

To build a practical machine, we first ask which Oracle can be part of it, and then analyse how to fit the Oracle into the
Priestess. The answers to these questions are the construction V, which fits the \mach{O-2-PDA-TS} to a von Neumann computer.
To show that the machine is realizable, we propose \textbf{ArchNights}: an extended \mbox{RISC-V} ISA and a Linux-style operating
system. Accordingly, when it runs an LLM as the Oracle, ArchNights becomes an agentic system. ArchNights implements the machine by design, and
we implement ArchNights-SE, in gem5's syscall-emulation mode, as a computer system. ArchNights-SE will be open source. We further
discuss ArchNights-FS, which is designed to run on real hardware.

On this machine, agentic systems can be designed as computer systems are. With a foundation built and a unified view, future work
can share invariants and bounds, each under the conditions it states.
\end{abstract}}

\section{Introduction}
\label{sec:intro}

Agents predate language models~\cite{shoham1993agent,rao1995bdi}, but only large language models~\cite{brown2020language} gave
them a practical executor for open-ended tasks, and the abilities of these models as agents have grown quickly
since then~\cite{kwa2025measuring}. Agents are now used in real work, such as writing
software~\cite{robbes2026agentic}. How well, and at
what cost, a model carries out a task also depends on the harness that runs it, the program that gives the model its context and
tools, and acts on its answers~\cite{yang2024sweagent}. With its scaffolding, a model has come to be viewed as a
computer~\cite{millidge2023scaffolded}, and the harness has become a system in its own right.

We follow the terminology of Building Effective Agents~\cite{anthropic2024building}, which distinguishes Workflows from Agents by
how control is organized: in a \textbf{Workflow}, model and tool calls follow predefined code paths, and in an \textbf{Agent}, the
model dynamically determines its actions and tool use. Workflow engines such as n8n, Dify and Coze execute models or other computations
at nodes of an orchestration, while systems such as Claude Code, Codex, Pi Agent and DeepSeek Harness repeatedly invoke a model,
dispatch its tool calls and return their results as the task
proceeds~\cite{n8n2019workflow,dify2023agentic,coze2025studio,claudecode,codex2025lightweight,pi2025agent,deepseek2026harness}. In
practice, the two forms are increasingly composed. A Workflow node may run an Agent, while an Agent may use task modes that impose
workflow-like structure, such as drafting a plan before execution. We call a system that runs Agents an \textbf{Agent harness},
and one that runs Workflows a \textbf{Workflow engine}; Appendix~\ref{app:glossary} defines these and the other terms used in the
paper.

\subsection{Two Clouds over Agentic Systems}
\label{sec:intro-clouds}

As agentic systems multiply, their study is becoming systematic.
Where is the security boundary of a model? What is the cache of an agent? What separates a person's output from a
model's?
Such questions arise whenever a concrete problem is analysed, and prior work has answered many of them, each from its own
perspective.
Take the cache as an example.
For those who build agents, the cache is the context a program keeps and reuses; for those who serve models, it is the model's
attention state, its KV cache~\cite{kwon2023efficient,zheng2024sglang,gim2024prompt}.
Both are right and have solved real-world problems, yet few works examine their foundations or the relation between the two
views.
Work on agentic systems abounds (\S\ref{sec:related}), new solutions appear, and harnesses are revised from day to day.
Research on agentic systems is growing, yet is at present obscured by two clouds.

\paragraph{The first cloud: foundations.} On foundations, the research goes in two directions. One is formal: calculi and
languages that give agent programs a semantics and prove what they cannot do, such as $\lambda_A$~\cite{liu2026lambda},
ZipperGen~\cite{bollig2026provable}, LLMbda~\cite{garby2026llmbda} and the Calculus of Intelligence~\cite{yuan2026calculus}, as
well as rule languages enforced at run time, such as
AgentSpec~\cite{wang2026agentspec}. The other is
structural: designs and systems built by analogy with operating systems, such as AIOS~\cite{mei2025aios}, the AgentOS
papers~\cite{li2026architecting,liu2026agentos} and the AOS papers~\cite{sharma2026aagent,sharma2026bagent}. Quine takes one
abstraction further, realizing agents as POSIX processes that the host's kernel isolates~\cite{ke2026quine}; its operating system
is the host's and stands between processes.
Both directions have produced real results: a lint for
agent configurations~\cite{liu2026lambda}, a verified interpreter that is itself a harness~\cite{garby2026llmbda},
and runtimes in use.
Yet the formal works define languages, and none states what makes one part of an agentic system the operating
system of another, or what that part guarantees.
An analogy's theory reaches only as far as the analogy, and a new question needs a new analogy before it can even be asked,
so the mechanisms built on analogies share little common ground on which they can be
stated and checked.
The two views of the cache are one example: each is stated in its own side's terms, with no ground common to both.

\paragraph{The second cloud: unification.} Work on Workflows and Agents usually starts from one of the two. FMOS states the
problem and treats workflows and agentic loops as two execution forms over shared primitives, with guarantees under a
virtualization condition assumed at its interface~\cite{bhattacharya2026position}.
Harness as a Language derives both from one primitive of a language, and defines no operating system~\cite{li2026harness}.
A concrete mechanism is still built in one form and studied apart from its counterpart in the other.
The cache is one example. One line of work decides what to do with the state of an
agent's task while its tool call runs~\cite{abhyankar2024infercept,li2026continuum}; another decides which prefixes of a
workflow's steps to keep or prefetch, by how soon each will run~\cite{pan2025kvflow}.
On one machine both target the same interval, between two calls that
share a prefix, in the Agent form and in the Workflow form (Appendix~\ref{app:further-sched}). ``This duplication of work is
tiresome. Furthermore, it makes the [systems] involved seem inherently arbitrary. We would prefer to study a single, canonical
[system]''~\cite[\S1.2.1]{levy2001call}, where ``systems'' replaces Levy's ``languages'', one that subsumes both forms. \S\ref{sec:os} shows that one does.

\paragraph{Beyond the two clouds.} The third question, what separates a person's output from a model's, asks for what neither
cloud names: invariant properties or bounds that hold
no matter who or what answers the queries, a language model or a person. On the machine a person and a language model are alike as
the source of answers, provided a semantics can be defined for parsing the person's words. What separates their outputs is how each
is called (\S\ref{sec:query}). None of the three questions is asked for its own sake. Their answers let future work design agentic
systems as computer systems are designed, with each design choice argued from the theories or the experience of computer
systems. \S\ref{sec:archnights} designs ArchNights in this way.

\subsection{One Instruction, Two Symmetry Breakings}
\label{sec:intro-idea}

Our solution rests on two methods.
\textbf{Relativization} separates the language model from the rest of the
computation as a black-box Oracle and makes their interaction a machine.
\textbf{Symmetry breaking} makes one part of that machine
the operating system of the other. Two further steps extend the machine to the questions of the field and to hardware: opening the
Oracle by one step, and realizing the machine on a von Neumann computer.
Figure~\ref{fig:readings} follows one agent session through
the construction, and Figure~\ref{fig:construction} shows the whole of it.

\begin{figure*}[t]
\centering
\begin{tikzpicture}[x=1pt, y=1pt, font=\footnotesize,
  >={Stealth[length=4.2pt, width=3pt]},
  title/.style={align=center, anchor=south, inner sep=0pt, font=\footnotesize},
  brk/.style={line width=0.5pt, black!75},
  flow/.style={->, line width=0.5pt, black!75},
  note/.style={font=\scriptsize, inner sep=1pt},
  cbpv/.style={font=\scriptsize, anchor=base west, inner sep=0pt, text=black!75},
  ker/.style={draw=pcol, fill=pfill, line width=0.5pt, rounded corners=1.5pt, align=center,
    font=\scriptsize, inner sep=1.5pt}]
  \def\W{231}%
  \def\rec#1#2#3#4{%
    \fill[#2fill] (0,{-14*(#1-1)}) rectangle (\W,{-14*(#1-1)-13});
    \fill[#2col] (0,{-14*(#1-1)}) rectangle (2,{-14*(#1-1)-13});
    \node[anchor=base east, inner sep=0pt, text=inkmuted] at (11,{-14*(#1-1)-9.2}) {#1};
    \node[anchor=base west, inner sep=0pt] at (15,{-14*(#1-1)-9.2}) {#3};
    \node[anchor=base west, inner sep=0pt, font=\footnotesize\ttfamily] at (55,{-14*(#1-1)-9.2})
      {#4\,\normalfont$\natural$};}
  \rec{1}{p}{declaration}{file\_read, file\_patch, run, \dots}
  \rec{2}{p}{user}{fix the typo in notes.txt}
  \rec{3}{o}{answer}{file\_read(notes.txt)}
  \rec{4}{p}{reply}{"\# notes / - prepare teh meeting agenda"}
  \rec{5}{o}{answer}{file\_patch(notes.txt, teh \(\scriptstyle\to\) the)}
  \rec{6}{p}{reply}{done}
  \rec{7}{o}{answer}{Fixed: teh \(\scriptstyle\to\) the.}
  \draw[ostroke, line width=0.6pt, rounded corners=1.5pt] (-1.5,1.5) rectangle (\W+1.5,-98+1-1.5);
  \node[title] at (\W/2,8) {one task stack (an Oracle stack)};
  \def\qbr#1#2#3{%
    \draw[brk] (#1+3,-1) -- (#1,-1) -- (#1,{-14*#2+2}) -- (#1+3,{-14*#2+2});
    \node[anchor=east, inner sep=0pt] at (#1-2.5,{(-14*#2+1)/2}) {#3};}
  \qbr{-10}{2}{$Q_0$}
  \qbr{-26}{4}{$Q_1$}
  \qbr{-42}{6}{$Q_2$}
  \foreach \i/\l in {3/r_0, 4/p_0, 5/r_1, 6/p_1, 7/r_2}
    \node[anchor=east, inner sep=0pt] at (-5,{-14*(\i-1)-6.5}) {$\l$};
  \node[anchor=base, inner sep=0pt] at (-30,-113) {$Q_{t+1} = Q_t \cdot r_t \cdot p_t$};
  \node[title] at (-30,8) {(a) a stack that\\only grows (\S\ref{sec:machine})};
  \foreach \i/\c in {3/force, 4/{to $x_1$}, 5/force, 6/{to $x_2$}, 7/return}
    \node[cbpv] at (\W+6,{-14*(\i-1)-9.2}) {\texttt{\c}};
  \node[ker, minimum width=40pt, minimum height=25pt] (k1) at (\W+108,-41.5) {serve\\\texttt{file\_read}};
  \node[ker, minimum width=40pt, minimum height=25pt] (k2) at (\W+108,-69.5) {serve\\\texttt{file\_patch}};
  \draw[flow] (\W+31,-34.5) -- node[note, above, inner sep=0.6pt] {yield: trap to $\ell_{\mathrm{trap}}$} (k1.west |- 0,-34.5);
  \draw[flow] (k1.west |- 0,-48.5) -- node[note, below, inner sep=0.6pt] {reply, then $\mathtt{uret}$} (\W+31,-48.5);
  \draw[flow] (\W+31,-62.5) -- (k2.west |- 0,-62.5);
  \draw[flow] (k2.west |- 0,-76.5) -- (\W+31,-76.5);
  \node[note, anchor=base west, inner sep=0pt] at (\W+6,-12) {the task in O-mode};
  \node[note, anchor=base, inner sep=0pt] at (\W+108,-12) {the kernel in P-mode};
  \node[title] at (\W+66,8) {(b) a program the\\Oracle runs (\S\ref{sec:os})};
  \foreach \i/\w/\c in {1/p/prefill, 2/p/prefill, 3/o/decode, 4/p/prefill, 5/o/decode, 6/p/prefill, 7/o/decode} {
    \fill[\w fill] (\W+156,{-14*(\i-1)}) rectangle (\W+190,{-14*(\i-1)-13});
    \fill[\w col] (\W+156,{-14*(\i-1)}) rectangle (\W+158,{-14*(\i-1)-13});
    \node[note] at (\W+174,{-14*(\i-1)-6.5}) {\c};
  }
  \node[title] at (\W+173,8) {(c) a cache along\\the prefix chain (\S\ref{sec:eclipse})};
  \begin{scope}[shift={(18,-113)}, every node/.style={anchor=base west, inner sep=0pt, font=\scriptsize}]
    \node at (0,0) {written by};
    \fill[pfill] (35,-0.5) rectangle (43,5.5); \fill[pcol] (35,-0.5) rectangle (37,5.5);
    \node at (46,0) {the Priestess (the kernel)};
    \fill[ofill] (130,-0.5) rectangle (138,5.5); \fill[ocol] (130,-0.5) rectangle (132,5.5);
    \node at (141,0) {the Oracle (the model)};
    \node at (225,0) {$\natural$\ the yield symbol};
  \end{scope}
\end{tikzpicture}
\caption{One history, three readings. A session in which an agent fixes a typo, as the records of one task stack. The
Priestess (the kernel) writes the blue records and the Oracle (the model) writes the amber ones. Each record ends with
the yield symbol $\natural$. All of them lie on one Oracle stack and so have colour $O$
(Definition~\ref{def:colouring}), and blue and amber show only who wrote them. (a) As a stack (\S\ref{sec:machine}):
each query is the whole history so far, and the next query extends it by the Oracle's answer $r_t$ and the kernel's
reply $p_t$, so the history grows autoregressively (Proposition~\ref{prop:ar-form}). (b) As a program (\S\ref{sec:os}):
an answer that ends with $\natural$ traps to the one service entry $\ell_{\mathrm{trap}}$, and the kernel serves the
request in P-mode and appends its reply before it re-enters the task with $\mathtt{uret}$. In call-by-push-value,
the history is $\mathtt{force}\ \mathit{file\_read}\ \mathtt{to}\ x_1.\ \mathtt{force}\ \mathit{file\_patch}\
\mathtt{to}\ x_2.\ \mathtt{return}\ \mathit{result}$. (c) As a cache (\S\ref{sec:eclipse}): a private cache that keeps
every state computes each token of the history once, by prefill where the Priestess wrote and by decode where the Oracle
wrote. This needs the hypotheses of Proposition~\ref{prop:prefix-chain}, among them that the tokenizer encodes the text between
yield symbols piece by piece, that the model is called one whole answer at a time and writes each answer back as it generated it,
and that the kernel only appends records ending with $\natural$.}
\label{fig:readings}
\end{figure*}
\begin{figure*}[t]
\centering
\def\sub{\scriptsize\color{inkmuted}}%
\begin{tikzpicture}[
  font=\footnotesize,
  m/.style={draw=black!80, fill=white, line width=0.5pt, rounded corners=1.5pt, minimum width=21mm,
    minimum height=6mm, inner sep=2pt, align=center},
  lab/.style={font=\scriptsize, inner sep=1.5pt},
  note/.style={font=\scriptsize, align=left, inner sep=0pt},
  frame/.style={draw=black!30, line width=0.5pt, rounded corners=3pt},
  row/.style={font=\scriptsize\bfseries, anchor=west, inner sep=0pt},
  col/.style={font=\scriptsize, anchor=base, inner sep=0pt},
  bx/.style={line width=0.5pt, rounded corners=1pt, align=center, font=\scriptsize, inner sep=1.5pt,
    minimum height=11mm, text width=40mm},
  bO/.style={bx, draw=ostroke, fill=ofill},
  bP/.style={bx, draw=pcol, fill=pfill},
  bG/.style={bx, draw=black!45, fill=black!5},
  design/.style={dash pattern=on 2.5pt off 1.5pt},
  flow/.style={->, line width=0.5pt, black!75},
  >={Stealth[length=4.2pt, width=3pt]}]
  \node[m] (pda)  at (3.1, 0.3)   {PDA};
  \node[m] (tpda) at (3.1,-1.1)   {\mach{2-PDA}\\[-1pt]{\sub Priestess computation}};
  \node[m] (o)    at (3.1,-2.3)   {\mach{O-2-PDA}};
  \node[m] (s)    at (1.5,-3.5)   {\mach{O-2-PDA-S}\\[-1pt]{\sub storage only}};
  \node[m] (t)    at (4.7,-3.5)   {\mach{O-2-PDA-T}\\[-1pt]{\sub transitions only}};
  \node[m, minimum width=27mm] (ts)  at (3.1,-4.8) {\mach{O-2-PDA-TS}\\[-1pt]{\sub Priestess above Oracle}};
  \node[m, minimum width=27mm] (tsv) at (3.1,-6.0) {\mach{O-2-PDA-TSV}};
  \draw[flow] (pda) -- node[lab, right, pos=0.3] {a second stack (\S\ref{sec:machine-pda})} (tpda);
  \draw[flow] (tpda) -- node[lab, right] (oilab) {one Oracle instruction (\S\ref{sec:machine-oracle})} (o);
  \draw[flow] (o.south) -- node[lab, above left, pos=0.5, inner sep=1pt] {S (\S\ref{sec:os-s})} (s.north);
  \draw[flow] (o.south) -- node[lab, above right, pos=0.5, inner sep=1pt] {T (\S\ref{sec:os-t})} (t.north);
  \draw[flow] (s.south) -- node[lab, above right, pos=0.6, inner sep=1pt] {T} (ts.north);
  \draw[flow] (t.south) -- node[lab, above left, pos=0.6, inner sep=1pt] {S} (ts.north);
  \draw[flow] (ts) -- node[lab, right] {V (\S\ref{sec:query})} (tsv);
  \node[lab, anchor=north] (oslab) at ([yshift=-1.2mm]tsv.south) {an operating system (Definition~\ref{def:os})};
  \begin{scope}[on background layer]
    \node[frame, draw=pcol, fill=pfill, fit=(ts)(tsv)(oslab), inner xsep=2.5mm, inner ysep=1.5mm] (osf) {};
  \end{scope}
  \node[frame, fit=(tpda)(s)(t)(osf)(oilab), inner sep=2.5mm] (tc) {};
  \node[lab, anchor=north west] at (tc.north west) {Turing complete};
  \node[note, anchor=north west] (st) at ([yshift=-2mm]tc.south west) {%
    \textbf{S}, on storage: where a program the Oracle runs sits\\[1pt]
    \textbf{T}, on transitions: who decides the next transition};
  \coordinate (bl) at (7.3,0 |- st.south);
  \coordinate (tr) at (17.7,1.0);
  \draw[frame] ([yshift=-3mm]bl) rectangle (tr);
  \node[m, anchor=west] (an) at (7.5,0.5) {ArchNights};
  \draw[flow] (tsv.east) -- (7.0,0 |- tsv) -- (7.0,0.5) -- (an.west);
  \node[col] at (11.1,-0.4) {\textbf{O-mode}: what the Oracle runs};
  \node[col] at (15.45,-0.4) {\textbf{P-mode}: what the Priestess runs};
  \foreach \y/\r in {-1.3/Programs, -2.75/Library, -4.2/Kernel, -5.65/ISA}
    \node[row] at (7.5,\y) {\r};
  \node[bO] at (11.1,-1.3) {init, the shell and the core tools,\\each a text that the Oracle runs\\as a program};
  \node[bP] at (15.45,-1.3) {the shell's loop that reads the user,\\init's loop that restarts the shell,\\and the check on host commands};
  \node[bO] at (11.1,-2.75) {the tool calls (operators) the Oracle\\may make, listed at the start of\\every task's history};
  \node[bP] at (15.45,-2.75) {the code that runs each tool call, in\\a kernel module; one table generates\\the list and the argument checks};
  \node[bO] at (11.1,-4.2) {each task's history, the context\\that the Oracle reads, kept in a\\memory segment of its own};
  \node[bP] at (15.45,-4.2) {PRTS: the scheduler, fork and exec,\\a virtual file system, checkpoints,\\and drivers that call LLMs or people};
  \node[bO] at (11.1,-5.65) {the Oracle mode, a privilege level\\below user mode with no code of its\\own, run by microcode};
  \node[bP] at (15.45,-5.65) {64-bit RISC-V, plus instructions that\\ask what the Oracle can do, call it,\\and enter the Oracle mode};
  \draw[black!30, line width=0.4pt] (7.45,-6.55) -- (17.55,-6.55);
  \node[row] at (7.5,-7.4) {gem5};
  \node[bG] at (11.1,-7.4) {ArchNights-SE: PRTS on gem5's\\syscall-emulation (SE) mode};
  \node[bG, design] at (15.45,-7.4) {ArchNights-FS, a design, on gem5's\\full-system (FS) mode: ZOOT in user\\mode, Linux in supervisor mode};
\end{tikzpicture}
\caption{The construction. Left: the abstract machines from the PDA to the \mach{O-2-PDA-TSV}. S and T are the two
symmetry breakings, which can be taken in either order and together make a Priestess program the operating system of the tasks
the Oracle runs. V realizes the \mach{O-2-PDA-TS} on a von Neumann computer. Right: ArchNights, the prototype that implements the \mach{O-2-PDA-TSV} on gem5, with what runs in
O-mode on the Oracle side (amber) and what runs in P-mode on the Priestess side (blue). The bottom row, not split by side,
holds the two realizations. ArchNights-SE runs the kernel PRTS on gem5's SE mode, and ArchNights-FS (dashed) is a design
with the kernel ZOOT.}
\label{fig:construction}
\end{figure*}

\paragraph{One instruction.} By one instruction we connect an oracle of computability theory to a Turing-complete machine, a
two-stack pushdown automaton written as a program.
The instruction $\mathtt{oracle}\ s$ hands the Oracle the whole content of the stack $s$
and appends the answer to that same stack. The result is the Oracle \mach{2-PDA},
\mach{O-2-PDA} (\S\ref{sec:machine}).
Its computation divides in two: the computation inside the Oracle, and the Turing-complete
rest, which queries the Oracle; we call it the \textbf{Priestess}.
We colour one stack O, after the Oracle, and the other P, after the Priestess.
Two stacks make the machine Turing complete and give each colour a stack of its own.
Nothing is assumed of the Oracle, and yet the machine carries a language model's inference: while the program only appends to a stack,
the history that the Oracle sees on it grows autoregressively from call to call (query, answer, tool result, query), as a model's
context does (Figure~\ref{fig:readings}(a)).
We call this growth the autoregressive form.
It belongs to the interaction, not to the model.

\paragraph{Two breakings.} The machine is symmetric, though: its colours are labels, and nothing in it distinguishes an operating system
from the other programs. We type it with call-by-push-value~\cite{levy2003call}, in which an entry on a stack is either data
or a suspended program, much as a page's permissions say whether it may be executed. We then break the symmetry twice
(\S\ref{sec:os}). The storage breaking S decides where the Oracle may find a program. It may find one on an \textbf{Oracle stack},
a stack coloured O. On a \textbf{Priestess stack}, a stack coloured P, the Oracle finds no program, and whatever it answers there is
data. The transition breaking T decides who chooses the next transition. It adds
an \textbf{O-mode}, in which the machine queries one Oracle stack again and again and appends each answer, with no Priestess
instruction between the queries, until an answer ends with a fixed \textbf{yield symbol}. Control then traps to one fixed label in the
Priestess's code. Outside O-mode the machine is in the privileged \textbf{P-mode}, where the Priestess's code chooses the next
transition. The
machine with both breakings is the \mach{O-2-PDA-TS}. For readers who build agents, O-mode is the model generating, a tool call is
a yield, and the dispatcher that runs the tool and returns its result is the Priestess's code in P-mode. For readers from
operating systems, O-mode stands to P-mode as user mode to the kernel, and a yield trap is a system call. Table~\ref{tab:terms-quick}
sets these terms out by side, and \S\ref{sec:os} defines them formally; \S\ref{sec:eclipse-safety} says where the correspondence stops.
Appendix~\ref{app:further-blackbox} separates the two breakings into a general machine and derives the safety boundary and the
trusted computing base on it, and Appendix~\ref{app:further-levels} asks what the several levels of privilege of an ordinary computer amount to.

\paragraph{An operating system, derived.} Under this choice of S and T the colours are ordered, in storage and in mode, with P
above O. A Priestess program therefore confines the tasks that the Oracle runs: it decides what they see, when they run and what
answers they receive, and they reach it only at one fixed entry. Such a Priestess program is an operating system, and its contract
is an invariant of the machine (Theorem~\ref{thm:contract}; Figure~\ref{fig:readings}(b)). Under the contract, a task that the
Oracle runs reads and writes only its own stack, control returns only at the fixed entry, and to the Priestess every answer is
data: none becomes a code address, and none grants itself a right to execute. The contract
is derived from the rules, for every Oracle, and it says what a kernel on this machine may rely on. Both breakings restrict as little
as possible, since a stronger form of either would cost the Workflow form. Neither is a check that a program could pass or fail,
so the contract holds for every program as well (\S\ref{sec:os-t}). With a
budget, control also comes back to the Priestess on a good
machine, one on which no history reached in O-mode is rejected (Figure~\ref{fig:contract}). Each
breaking could be designated the other way, but the two differ in what reversing them would take: reversing S would relabel the two
colours (Appendix~\ref{app:further-blackbox}), while reversing T rewrites the transition rules (Appendix~\ref{app:further-reversed}). An operating system forms only when
the two are opposed, that is, when the side
where a task's program resides is not the side that T privileges. With privilege reversed, the Oracle is the operating system, the
Priestess is confined to its own stacks, and each item of the contract has a counterpart with the roles exchanged, so the
construction reaches the two quadrants in which a person is trusted and the program is confined.

\paragraph{Two placements.} With a language model as the Oracle, the two placements of a task's program are the two forms of
agentic systems. An Agent carries its program in its history, on the Oracle side, and the Priestess is its operating system. A
Workflow's program is Priestess code, on the privileged side, so no operating system mediates it and the model's output is data. A
hybrid has part of its program on each side. ``Workflow'' and ``Agent'' thus have two senses. As \textbf{implementation forms} they differ in whether an
operating system mediates the task, a distinction we derive on the machine. As \textbf{design paradigms}, the sense of Building
Effective Agents above, they differ in whether the model's output or the program's code decides the next step. Either form can
carry either paradigm, with a proper compiler, so the machine is one architecture in which to analyse their combinations. With
privilege reversed and a trusted Oracle, such as a person, the same placements are human--computer interaction and automated
systems, respectively. Their machine confines the program completely but cannot check the judgement of the Oracle it trusts. An
Agent can always run in Workflow form, through an interpreter in the
Priestess that makes the same queries with the same answers. The guarantees of the fixed
entry and of task confinement then become obligations of the interpreter's code. A Workflow runs as an Agent task if a compiler
translates it into content the Oracle runs. An Agent harness on an ordinary computer implements the machine in its host code.

\paragraph{Caching, scheduling and safety.} The contract holds for a black box, so it says nothing about what depends on the
Oracle's interior, such as the cache. \S\ref{sec:eclipse} opens that one black box, for the Oracles that are autoregressive
inside. It takes in the history through an encoder, which for a language model is its tokenizer, and the two views of the cache in
\S\ref{sec:intro-clouds} are then two levels of one hierarchy, from the history down to the model's KV cache. What the hierarchy
recomputes in excess is exactly what re-encoding the history discards. For certain Oracles and
kernels (Figure~\ref{fig:readings}(c)), a private cache that keeps every state
computes each token of a history once while the kernel only appends to it. Compaction and truncation rewrite the history
and break this reuse, at a price stated by the cache model, and LMCache reports that truncation can halve the hit ratio of a prefix
cache~\cite{liu2025lmcache}. For scheduling, the Priestess side (a harness or a workflow's program) knows which histories it will
submit next, and the Oracle side (an inference server) knows what it has cached. Each side's scheduler works from an estimate of the
other. The cache model quantifies the costs of keeping, discarding or swapping a task's state while its tool call runs
~\cite{abhyankar2024infercept}, and the choice among the three needs one more input, which comes from the Priestess side: when the task
returns. Without it, the best that a deterministic Oracle-side choice between keeping and discarding can guarantee is twice the cost
of a choice that knows. Safety, here the safety
of the agentic system rather than of the model, needs no assumption on the Oracle's interior. Another invariant of the machine is its
\textbf{safety boundary}. The Priestess wrote whatever an Oracle call reads, apart from the answers the Oracle appended to that
stack, and the Oracle's answers reach the Priestess's control only at the trap and at three instructions that read content. The
boundary does not make the machine a defence against prompt injection: a guarantee that holds
for every Oracle can fix where content crosses, but not which content crosses. The
boundary says instead where each defence
outside the Oracle has to sit. Appendix~\ref{app:further} examines the caches, schedulers and defences on these models, as case
studies.

Each of these results is a property of the machine of \S\ref{sec:intro-idea}, derived from it. Two questions that opened
\S\ref{sec:intro-clouds} are these results seen from the other side: where the security boundary of a model lies is what
the safety boundary fixes, as the place where content crosses, and what the cache of an agent is is the history end of that
one hierarchy.

\paragraph{On a von Neumann computer.} The construction V of \S\ref{sec:query} makes the \mach{O-2-PDA-TS} compatible with a von Neumann
computer, and the result is the \mach{O-2-PDA-TSV}. The stacks are segments of one memory, with a base, a top and a limit, and the
Oracle operation is one instruction, with no code in O-mode. O-mode feeds the
Oracle its own answers, and this iteration is the Oracle's own computation only when the Oracle is autoregressive inside. V
therefore lets O-mode run only Oracles that are autoregressive inside (\S\ref{sec:query-oracle} states the price of this choice),
while P-mode may call any Oracle. The Oracle learns its
requests from a declaration in its history, and one semantic delegation call, evaluated by the Priestess, can stand for several
system calls. The results above are of the kind that \S\ref{sec:intro-clouds} asks for beyond the two clouds: invariants or bounds
that hold no matter who or what answers the queries, or for every Oracle that meets a stated condition. Where a statement cannot take this
form, as the Lynchpin hypothesis of \S\ref{sec:query-message} cannot, we say so.

\paragraph{Headline properties.} Several of these results are statements that the works of
\S\ref{sec:intro-clouds} do not make, because each needs two ingredients at once: a quantifier over every Oracle, or over every Oracle
that meets a stated condition, and an operating system derived on the same machine as the Oracle's computation.
\begin{itemize}
\item \emph{The contract is an invariant.} Whatever the model answers, a task writes only its own stack and reaches the kernel only
  at one entry. What remains to assume is that a realization follows the runs of the machine
  (\S\ref{sec:related-parallel}).
\item \emph{One statement for both forms.} An Agent task can always run in Workflow form with the same queries and answers,
  and the properties below hold of a task whichever side its program is placed on
  (\S\ref{sec:os-agentic}).
\item \emph{Cache and scheduling.} For an Oracle that is autoregressive inside, an Oracle call is one hierarchy from the history
  down to the model's KV cache, whose excess recomputation is exactly the encoder's loss, and, for
  certain Oracles and kernels, a private cache that keeps every state computes each token of a history once while the kernel only
  appends to it. The Priestess side knows the next keys and the Oracle side knows its cache,
  so each side's scheduler works from an estimate of the other (\S\ref{sec:eclipse-control}).
\item \emph{Safety.} A guarantee for every Oracle fixes where content crosses, and none fixes which
  content crosses, a limit that needs the quantifier over every Oracle to be stated at
  all.
\end{itemize}
We claim only that the works cited do not make these statements, not that no other formalism could.

\subsection{ArchNights}
\label{sec:intro-archnights}

To show that the machine can be realized, we build ArchNights, a prototype that follows the \mach{O-2-PDA-TSV} on gem5, the
architecture simulator among the most used in computer-architecture research~\cite{binkert2011gem5,lowepower2020gem5}. ArchNights' instruction set extends the RV64
ISA~\cite{waterman2019risc,waterman2021risc} with an Oracle mode that lies below user mode and has no addressable code, and with an
Oracle instruction compatible with RoCC, Rocket Chip's interface for custom coprocessors~\cite{asanovic2016rocket}. The
simulated processor runs the Oracle mode by microcode. The prototype's Linux-style kernel, PRTS, takes control back from every task at one
fixed entry, confines ordinary tasks to a virtual file system, and checkpoints and compresses their histories. ArchNights' distribution
has a system library, an init system, a shell and core tools. On this machine every system call costs the Oracle a round trip and
a piece of its history, so the system library moves work out of the Oracle's history into the kernel by \textbf{semantic
delegation} (\S\ref{sec:query-message}): it registers operators as tool calls, each of which packs several system calls into one
and is evaluated in P-mode by either trusted or verified code.

ArchNights-SE, which runs in gem5's syscall-emulation mode, will be open source. We evaluate it to show that the machine can serve
as the basis of an agent harness. With deepseek-v4.1-flash as the Oracle, ArchNights-SE performs comparably on Terminal-Bench
2.1~\cite{merrill2026terminal} to two reference harnesses run with the same model in the same setting:
mini-swe-agent~\cite{minisweagent2025} and Terminus~2, the benchmark's own agent (\S\ref{sec:an-eval}). Given a virtual display and audio output, ArchNights-SE produces a music video of
\emph{world.execute(me);}, working both as an agent harness and as a general-purpose computer. ArchNights-FS is a full-system
design, with a kernel of its own, for a physical processor that implements this instruction set. \S\ref{sec:archnights} argues
ArchNights' design choices one by one against the machine.

\subsection{Contributions}
\label{sec:intro-contrib}

The contribution of this paper is one construction, in three parts.
\begin{enumerate}
\item \textbf{The machine (\S\ref{sec:machine}).} The \mach{O-2-PDA}: a two-stack pushdown automaton, Turing complete, extended
  with one instruction that queries a black-box Oracle with the whole content of a stack and appends the answer to it. While the
  program only appends, the history on a stack grows autoregressively, as an agent's context does
  (Proposition~\ref{prop:ar-form}).
\item \textbf{The operating system derived on it (\S\ref{sec:os}).} Two symmetry breakings---S on storage, T on
  transitions---make a Priestess program the operating system of the programs the Oracle runs. They are the weakest that keep the
  two placements of a task's program apart for every program (Proposition~\ref{prop:forced}). The kernel contract K1--K4 is an
  invariant of the machine, for every Oracle (Theorem~\ref{thm:contract}); with a budget, control also returns
  (Proposition~\ref{prop:k5}). Agent and Workflow follow as the two placements of a task's program (Definition~\ref{def:forms}).
\item \textbf{The realization (\S\S\ref{sec:query}--\ref{sec:archnights}).} The construction V fits the machine to a von Neumann
  computer: stacks become segments of one memory, the Oracle operation is one instruction, and O-mode has no addressable code.
  ArchNights, an extended RISC-V ISA with a Linux-style kernel, runs on gem5 and demonstrates that the machine can be built and can
  serve as the basis of an agent harness (\S\ref{sec:an-eval}).
\end{enumerate}

\paragraph{Claims of this paper.} Every property derived on the machine---the kernel contract, the two placements, the cache model, the safety boundary---is
evidence for the construction: each answers a question that \S\ref{sec:intro-clouds}'s two clouds do not ask. Three assumptions are
used along the way and are named where they are used: factorization at the yield symbol (Definition~\ref{def:factor}),
Frame-faithfulness (\S\ref{sec:eclipse-codec}), and the Lynchpin hypothesis (\S\ref{sec:query-message}). The kernel contract, the
two placements and the safety boundary assume none of the three. A companion prototype demonstrates that the machine is realizable.
What the construction leaves open---complexity, refinement, asynchrony---is listed in \S\ref{sec:future}.

\paragraph{Instances of the theory (\S\ref{sec:eclipse}, Appendix~\ref{app:further}).} The following statements are not
further contributions. They are instances: their number, in the vocabulary of the fields they touch, is how the construction
shows itself to be coherent. Each holds under the hypotheses named.
\begin{center}
\small
\setlength{\tabcolsep}{4pt}
\begin{tabular}{@{}>{\raggedright\arraybackslash}p{0.39\columnwidth}>{\raggedright\arraybackslash}p{0.31\columnwidth}>{\raggedright\arraybackslash}p{0.22\columnwidth}@{}}
\toprule
Statement & Hypotheses & Where \\
\midrule
Kernel contract K1--K4; crossing invariant & none---every Oracle & Theorem~\ref{thm:contract}, Corollary~\ref{cor:crossing} \\
\rowrule
Control returns (K5) & budget; good machine & Proposition~\ref{prop:k5} \\
\rowrule
An Agent can always run in Workflow form & none & Proposition~\ref{prop:interpreter} \\
\rowrule
Frame synchronization & internally autoregressive Oracle; factorization at $\natural$ & Theorem~\ref{thm:run-sync} \\
\rowrule
Excess recomputation is the encoder's loss & + Frame-faithfulness & Proposition~\ref{prop:excess} \\
\rowrule
Prefix chain across tool calls; each token computed once & + kernel's epoch discipline & Proposition~\ref{prop:prefix-chain} \\
\rowrule
Keep-or-discard without the return time costs a factor of two & deterministic policy & Proposition~\ref{prop:return} \\
\rowrule
Safety boundary: where content crosses & none---every Oracle & Theorem~\ref{thm:boundary} \\
\rowrule
No guarantee fixes which content crosses & --- & Proposition~\ref{prop:where-not-which} \\
\bottomrule
\end{tabular}
\end{center}

\paragraph{Non-claims.} Statements about language models rest on the assumption of \S\ref{sec:eclipse} or on reported
observations; the Lynchpin hypothesis (\S\ref{sec:query-message}) is a working hypothesis; the prototype implements the machine
without a refinement proof, and this paper gives it no security guarantee.

\paragraph{Reading the paper.} \S\S\ref{sec:machine}--\ref{sec:eclipse} are the theory, and \S\S\ref{sec:query}
and~\ref{sec:archnights} build the machine and realize it. Readers with other interests can take shorter paths:
\begin{center}
\small
\begin{tabular}{@{}>{\raggedright\arraybackslash}p{0.27\columnwidth}>{\raggedright\arraybackslash}p{0.67\columnwidth}@{}}
\toprule
Interest & Path \\
\midrule
Theory, programming languages & \S\S\ref{sec:machine}--\ref{sec:eclipse} with Appendices~\ref{app:proofs} and~\ref{app:eclipse};
  then Appendices~\ref{app:further-reversed} and~\ref{app:further-blackbox}, and \S\S\ref{sec:related-theory}
  and~\ref{sec:related-pl} \\
\rowrule
Operating systems & skim \S\S\ref{sec:machine-properties} and~\ref{sec:machine-ar}; read \S\ref{sec:os},
  \S\S\ref{sec:eclipse-memory} and~\ref{sec:eclipse-control}, and \S\S\ref{sec:query} and~\ref{sec:archnights} with
  Appendix~\ref{app:archnights}; then Appendices~\ref{app:further-blackbox} and~\ref{app:further-levels} \\
\rowrule
Building agents & \S\S\ref{sec:machine-ar} and~\ref{sec:os-agentic}; the statements of the results of \S\ref{sec:eclipse}; then
  Table~\ref{tab:designs} and Appendix~\ref{app:further} \\
\rowrule
Safety & \S\S\ref{sec:os-kernel}, \ref{sec:eclipse-safety} and~\ref{sec:query-priestess}; then
  Appendices~\ref{app:eclipse-replay}, \ref{app:further-safety} and~\ref{app:further-blackbox} \\
\rowrule
Human--computer interaction, automated systems & \S\S\ref{sec:os-kernel} and~\ref{sec:os-agentic}, and the person as an Oracle in
  \S\ref{sec:query-message}; then Appendix~\ref{app:further-reversed} \\
\bottomrule
\end{tabular}
\end{center}

\begin{table*}[t]
\centering
\small
\setlength{\tabcolsep}{4pt}
\begin{tabular}{@{}>{\raggedright\arraybackslash}p{0.13\textwidth}>{\raggedright\arraybackslash}p{0.41\textwidth}>{\raggedright\arraybackslash}p{0.41\textwidth}@{}}
\toprule
 & \multicolumn{1}{c}{\cellcolor{ofill}Oracle side (amber)} & \multicolumn{1}{c}{\cellcolor{pfill}Priestess side (blue)} \\
\midrule
computation & the \textbf{Oracle}: the source of the answers, treated as a black box; an LLM or a person (\S\ref{sec:machine-oracle}) & the \textbf{Priestess}: the Turing-complete computation that queries the Oracle (\S\ref{sec:machine-oracle}) \\
\rowrule
letters & \textcolor{odark}{O} names the colours, the modes and the order of the sides, and nothing more (\S\ref{sec:machine-split}, \S\ref{sec:os}) & \textcolor{pdark}{P} likewise \\
\rowrule
stack & \textbf{Oracle stack}: the only kind that may hold a program the Oracle runs (S) & \textbf{Priestess stack}: whatever the Oracle produces there is data (S) \\
\rowrule
mode & \textbf{O-mode}: the Oracle answers again and again until an answer yields; no Priestess code runs between the queries (T) & \textbf{P-mode}: the Priestess program chooses the next transition (T) \\
\rowrule
a task's program & \textbf{Agent form}: the program is the task's history, which the Oracle runs & \textbf{Workflow form}: the program is Priestess code; the model's output is data \\
\rowrule
control crossing & an answer ending with $\natural$ \textbf{yields}: it traps to the one fixed label $\ell_{\mathrm{trap}}$, the system call & the \textbf{kernel} enters tasks with $\mathtt{uret}$ and serves their traps; its \textbf{library} declares the services (Defs.~\ref{def:t}, \ref{def:os}) \\
\grouprule
\multicolumn{3}{@{}l}{\textbf{task} and \textbf{history}: the unit of work the system runs, and the content of its task stack (Definition~\ref{def:os}).} \\
\multicolumn{3}{@{}l}{\textbf{Step} and \textbf{Frame}: one internal step of an autoregressive Oracle, and one whole answer (\S\ref{sec:eclipse-codec}).} \\
\bottomrule
\end{tabular}
\caption{The two sides at a glance. Terms that the Introduction uses before their definitions, split by side as the
figures are. This is a map: the full dictionary is Appendix~\ref{app:glossary}, whose Table~\ref{tab:terms} lists the
symbols of \S\ref{sec:os}.}
\label{tab:terms-quick}
\end{table*}

\section{The Machine: \mach{O-2-PDA}}
\label{sec:machine}

\sectionepigraph{``Let us suppose that we are supplied with some unspecified means of solving number-theoretic problems; a kind of
oracle as it were. We shall not go any further into the nature of this oracle apart from saying that it cannot be a
machine.''}{A.\,M.~Turing, 1939~\cite[\S4]{turing1939systems}}

With these words Turing introduced oracle machines. We take his first point: until \S\ref{sec:eclipse} we go no further into the
nature of the Oracle and assume nothing of its interior. Turing's oracle cannot be a machine. Ours may well be one: a language model on
hardware of its own. \S\ref{sec:eclipse} looks inside.

We start with a two-stack pushdown automaton, written as a program, and add one instruction that consults the Oracle: the program
supplies the query as the whole content of one stack; the instruction appends the answer to that same stack. The stack becomes
a record of the interaction. As long as the program only appends to it between queries, each query contains the whole preceding
history, and the history grows autoregressively, as a language model's context does (Proposition~\ref{prop:ar-form}).

\subsection{Two-Stack Pushdown Automata, Written as Programs}
\label{sec:machine-pda}
\label{sec:machine-program}

A pushdown automaton (PDA) is a finite-state control with one stack. In state form it is a tuple $(\Theta, \Sigma_{\mathrm{in}},
\Sigma, \delta, \theta_0, \Theta_F)$ of a finite set of states $\Theta$ with a start state $\theta_0$ and final states $\Theta_F$,
an input alphabet $\Sigma_{\mathrm{in}}$, a stack alphabet $\Sigma$, and a transition relation $\delta$. From the current state,
the next input symbol or none, and the top of the stack, it chooses a next state and a word that replaces the top.
Non-deterministic PDAs recognize exactly the context-free languages~\cite{hopcroft1979introduction}. A two-stack PDA
(\mach{2-PDA}) has the same control and two stacks, and each transition reads both tops and replaces both. The second stack
changes the class of machine: the stacks can hold a Turing machine's tape to the left and to the right of its head, so the two
models compute the same functions~\cite{minsky1967computation,hopcroft1979introduction}. A \mach{2-PDA} is
therefore Turing-complete. We write the
\mach{2-PDA} as a program, in the style of Shepherdson and Sturgis's register machines~\cite{shepherdson1963computability}: the
finite control becomes a numbered list of instructions, and the state becomes a program counter.

\namedpara{Notation.} $\lambda$ is the empty word. A stack's content is a word whose right end is the top. For words $u$ and $v$, $u \cdot v$ is
their concatenation; for a symbol $x$, $w \cdot x$ is $w$ with $x$ pushed. $\mathrm{top}(w \cdot x) = x$ and
$\mathrm{top}(\lambda) = \bot$, where $\bot \notin \Sigma$. $u \sqsubseteq w$ means that $u$ is a prefix of $w$. A \textbf{store}
$\sigma$ maps each stack to its content, so $\sigma(s)$ is the content of stack $s$.

\begin{definition}[program-style \mach{2-PDA}]
\label{def:2pda}
Fix a finite alphabet $\Sigma$ and two stacks, $a$ and $b$. A \textbf{program} is a finite sequence $\Pi = (\iota_1, \dots, \iota_n)$
of instructions:
\begin{center}
\begin{tabular}{@{}l>{\raggedright\arraybackslash}p{0.62\columnwidth}@{}}
\toprule
Instruction & Meaning \\
\midrule
$\mathtt{push}\ s\ x$ & push $x \in \Sigma$ onto stack $s$ \\
$\mathtt{pop}\ s$ & pop the top of $s$ \\
$\mathtt{jeq}\ s\ x\ l$ & jump to label $l$ if $\mathrm{top}(\sigma(s)) = x$, where $x \in \Sigma \cup \{\bot\}$ \\
$\mathtt{halt}$ & stop \\
\bottomrule
\end{tabular}
\end{center}

A \textbf{configuration} is a pair $(\ell, \sigma)$ of a label $\ell \in \{1, \dots, n+1\}$ and a store $\sigma$. The step relation
$\to$ is:
\begin{itemize}
\item if $\iota_\ell = \mathtt{push}\ s\ x$, then $(\ell, \sigma) \to (\ell+1, \sigma[s \mapsto \sigma(s) \cdot x])$;
\item if $\iota_\ell = \mathtt{pop}\ s$ and $\sigma(s) = w \cdot x$, then $(\ell, \sigma) \to (\ell+1, \sigma[s \mapsto w])$;
  if $\sigma(s) = \lambda$, there is no successor and the machine is \textbf{stuck};
\item if $\iota_\ell = \mathtt{jeq}\ s\ x\ l$, then $(\ell, \sigma) \to (l, \sigma)$ when $\mathrm{top}(\sigma(s)) = x$, and
  $(\ell, \sigma) \to (\ell+1, \sigma)$ otherwise;
\item if $\iota_\ell = \mathtt{halt}$ or $\ell = n+1$, the configuration is final.
\end{itemize}

A \textbf{run} is a sequence of configurations linked by $\to$, starting from $(1, \sigma_0)$ for some initial store $\sigma_0$. An
unconditional jump $\mathtt{goto}\ l$ abbreviates, all on one fixed stack $s$, a sequence of $\mathtt{jeq}\ s\ x\ l$ instructions,
one for each $x \in \Sigma \cup \{\bot\}$. Since the sequence tests every possible top, one of these instructions jumps to
$l$. One program runs, one instruction per step.
\end{definition}

What a \mach{2-PDA} program computes is the \textbf{Turing-complete computation}. Once the Oracle is added
in \S\ref{sec:machine-oracle}, it is the \mach{2-PDA} part of the machine.

\subsection{Split Stacks and Their Colours}
\label{sec:machine-split}

A \mach{2-PDA} simulates the components of a computer~\cite{minsky1967computation}. Its two stacks hold a tape on either side of the
head, and its control holds finitely many registers, each with finitely many values, which in the program style it
represents by duplicating code for each value. A symbol at depth $i$ is accessed by moving the $i$ symbols above it to the other stack and back in
reverse order, and the same transfers exchange two symbols. One construction from this simulation is used throughout the paper:
a stack can be split into several stacks.

\begin{lemma}[split]
\label{lem:split}
Let the two stacks $a$ and $b$ hold finitely many \textbf{logical stacks}, each a segment of $a$ or of $b$, separated by marker
symbols that lie outside the alphabet of the logical stacks. Every $\mathtt{push}$, $\mathtt{pop}$ or $\mathtt{jeq}$ on a logical
stack $s$ that has a successor on $s$ is realized by a finite block of \mach{2-PDA} steps such that, at the end of the block, $s$ has changed as the instruction
prescribes, control continues where the instruction prescribes, and every other logical stack holds the same symbols in the same
order as at the start. An instruction with no successor on $s$ leaves the machine stuck, as it would be on a stack of its own.
\end{lemma}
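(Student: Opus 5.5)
The plan is to realize each logical-stack instruction by temporarily exposing the top of the target segment and then restoring everything else. Fix the layout: on physical stack $a$ (and likewise $b$) the logical stacks occupy consecutive segments $s_1, \dots, s_k$ from bottom to top, each segment closed on top by a distinct marker $\#_i$ outside the logical alphabet. Since the program has finitely many instructions and finitely many logical stacks, we may write one block per pair of instruction and target stack. Duplicating code per finite control value is allowed by Definition~\ref{def:2pda} and by the remark preceding the lemma.

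For an instruction on logical stack $s = s_i$ held in $a$, the block has three phases.

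\emph{Excavate.} Repeatedly test the top of $a$ with $\mathtt{jeq}$ against every symbol, and move it to $b$ by $\mathtt{pop}\ a$ followed by a $\mathtt{push}\ b\ x$ in the branch for that $x$. Stop when the top of $a$ is the marker $\#_i$. Every moved symbol lands on $b$ above all of $b$'s own segments. To keep $b$'s segments untouched, first push a fresh delimiter $\flat$ onto $b$, also outside every alphabet. After this phase the top of $a$ is $\#_i$, and just below it is the top of $s$.

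\emph{Operate.} Pop $\#_i$, keeping it in control by code duplication, and perform the instruction on $a$, whose top is now the top of $s$.
\begin{itemize}
\item For $\mathtt{push}\ s\ x$, push $x$.
\item For $\mathtt{pop}\ s$, first test whether the top is $\#_{i-1}$ or $\bot$, meaning $s$ is empty. In that case execute a $\mathtt{pop}$ on an empty stack, for instance by routing to a physical $\mathtt{pop}$ of $b$ after draining it, or more simply to a dedicated stuck gadget (a $\mathtt{pop}$ of a stack known empty). This realizes the last sentence of the lemma. Otherwise pop.
\item For $\mathtt{jeq}\ s\ x\ l$, test the top, mapping $\#_{i-1}$ and $\bot$ to the logical $\bot$. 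Record the outcome in control, again by code duplication, choosing between two copies of the remaining block.
\end{itemize}
Then push $\#_i$ back.

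\emph{Restore.} Move symbols from $b$ back to $a$ until the top of $b$ is $\flat$, then pop $\flat$. Finally jump, by $\mathtt{goto}$, to the block for label $\ell+1$, or for $l$ in the taken $\mathtt{jeq}$ branch.

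Correctness is checked as an invariant. Moving the segments above $s$ to $b$ and back reverses the order twice, so $s_{i+1},\dots,s_k$ and their markers return with the same symbols in the same order. Segments below $s$ on $a$ are never touched. The segments of $b$ lie below $\flat$ and are never popped, because every transfer loop stops at $\flat$. Each loop terminates because the stacks are finite and contain the marker sought; this is where the disjointness of markers from the logical alphabet is used, since it guarantees that the first $\#_i$ encountered is the right one.

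The main obstacle is bookkeeping rather than mathematics. First, the stuck case must be faithful: an empty-stack pop must leave the whole machine with no successor, not merely a logical error. Second, the control must not need unbounded memory; this holds because only the branch outcome and the fixed identity of the instruction are carried across the block. I would handle the stuck case first by exhibiting the gadget explicitly, and then state the invariant formally as an equality of the restricted stores before and after the block.
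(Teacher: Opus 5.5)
Your proposal is correct and follows essentially the same route as the paper's proof: move the segments above $s$ onto the other stack symbol by symbol, act on the exposed top (reading a marker or $\bot$ there as the empty logical stack), and transfer the moved symbols back, so that the double reversal restores their order. Your fresh delimiter $\flat$ and distinct markers $\#_i$ only make explicit the stopping conditions that the paper leaves implicit. One caution on the stuck case: with two physical stacks, no stack is ``known empty'' in general, since each holds data and $b$ holds at least $\flat$. So rely on your first option, which drains a physical stack (for instance by moving $b$ onto $a$) and then pops it, rather than on a separate gadget.
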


The block moves the segments above $s$ onto the other stack, acts on $s$, and moves them back in reverse order
(Appendix~\ref{app:proofs}). We call the finite sequence of steps that realizes one instruction on a logical stack a
\textbf{block}, and we call the end of a block an \textbf{observable step}. Without splitting, every step is observable. From here
on we describe the machine at its observable steps, with the logical stacks as its stacks. A stack produced by splitting is
\textbf{derived} from the stack it was split from.

\begin{definition}[colouring]
\label{def:colouring}
Stack $a$ is coloured $P$ and stack $b$ is coloured $O$. Every derived stack carries the colour of the stack it was derived from,
so each stack has a colour, $O$ or $P$. At an observable step, a symbol has the colour of the stack it lies on, and a
symbol that is moved inside a block keeps the colour of the logical stack it belongs to.
\end{definition}

The letters denote the Priestess and the Oracle, the two computations \S\ref{sec:os} names. At this point the colours are only
labels. Every instruction may therefore act on every stack, and exchanging the two colours everywhere maps programs to programs
and runs to runs. Write $\varepsilon$ for this exchange. The colours mark where \S\ref{sec:os} breaks this symmetry: each breaking
adds a structure that $\varepsilon$ fails to preserve.

\subsection{The Oracle and Its Instruction}
\label{sec:machine-oracle}

The Oracle answers queries over its own alphabets, and an encoder and a decoder translate between the Oracle's alphabets and the machine's.

\begin{definition}[Oracle, encoder, decoder]
\label{def:oracle}
An \textbf{Oracle} $(A, E, D)$ consists of:
\begin{itemize}
\item Oracle-side alphabets $\Gamma_Q$ and $\Gamma_R$, with sets $X_A \subseteq \Gamma_Q^*$ of queries and $Y_A \subseteq \Gamma_R^*$
  of answers;
\item an answer relation $A \subseteq X_A \times Y_A$ in which every $x \in X_A$ has at least one answer;
\item machine-side query and reply alphabets $\Sigma_Q, \Sigma_R \subseteq \Sigma$;
\item an encoder $E : \Sigma_Q^* \rightharpoonup X_A$, which may be partial, and a decoder $D : Y_A \to \Sigma_R^*$. Together they
  are the Oracle's \textbf{codec}.
\end{itemize}

These components define the machine-side answer map
\[ O(w) = \{\, D(y) : (E(w), y) \in A \,\} \subseteq \Sigma_R^*, \qquad w \in \mathrm{dom}(E), \]
which is non-empty for every $w \in \mathrm{dom}(E)$. A word outside $\mathrm{dom}(E)$ is \textbf{rejected}.
\end{definition}

\namedpara{Convention (non-empty answers).} Every answer is non-empty: $O(w) \subseteq \Sigma_R^+$ for every $w \in
\mathrm{dom}(E)$. An empty answer appends nothing, so a program that queries the same stack again asks the same query. When the
Oracle is iterated on its own history, as it is from \S\ref{sec:os} on, an empty answer can return the machine to the
configuration it left, and the iteration may never end. Appendix~\ref{app:eclipse-scope} states where the convention holds by
construction, where a realization must maintain it, and which results use it.

The answer relation need not be a function: the same query may have more than one possible answer. We reason about
which answers are possible, never about how likely they are, so the results of this section quantify over all possible answers
and hold as well when the answer is chosen adversarially among them.

\begin{definition}[Oracle instruction; \mach{O-2-PDA}]
\label{def:o2pda}
Add to the \mach{2-PDA} with the stacks of Definition~\ref{def:colouring} the instruction $\mathtt{oracle}\ s\ l_e$, where $l_e$ is a
label:
\begin{itemize}
\item if $\sigma(s) \in \mathrm{dom}(E)$, then $(\ell, \sigma) \to (\ell+1, \sigma[s \mapsto \sigma(s) \cdot r])$ for each
  $r \in O(\sigma(s))$;
\item otherwise $(\ell, \sigma) \to (l_e, \sigma)$: the encoder rejected the query.
\end{itemize}

The resulting machine is the \textbf{\mach{O-2-PDA}} (Figure~\ref{fig:o2pda}). The instruction reads the whole content of one
stack and writes the answer next to it, on that same stack. On a tape, it reads a contiguous region and writes new symbols right
after it. An accepted call appends one of the
possible answers to the stack, and each possible answer yields a possible successor configuration.
\end{definition}

\begin{figure}[t]
\centering
\begin{tikzpicture}[
  font=\footnotesize,
  cell/.style={draw, line width=0.5pt, minimum width=7mm, minimum height=3.6mm, inner sep=0pt},
  box/.style={draw=ostroke, fill=white, line width=0.5pt, rounded corners=1pt, minimum width=6mm,
    minimum height=4.6mm, inner sep=1.5pt},
  flow/.style={->, line width=0.5pt, black!75},
  >={Stealth[length=4.2pt, width=3pt]}]
  \node[draw, line width=0.5pt, rounded corners=1.5pt, inner sep=3pt, align=left, anchor=north west]
    (prog) at (0,0) {%
    $\Pi$\\[2pt]
    $\iota_1$\\
    $\vdots$\\
    $\iota_\ell = \mathtt{oracle}\ s\ l_e$\\
    $\vdots$\\
    $\iota_n$};
  \foreach \i in {0,1}
    \node[cell, draw=ostroke] (b\i) at (2.85, -2.05+0.36*\i) {};
  \node[below=2pt of b0, inner sep=1pt] {$b$\,:\,$O$};
  \foreach \i in {0,1,2}
    \node[cell, draw=pcol] (a\i) at (3.75, -2.05+0.36*\i) {};
  \node[cell, draw=pcol, fill=ofill] (a3) at (3.75, -2.05+0.36*3) {$r$};
  \node[below=2pt of a0, inner sep=1pt] {$a$\,:\,$P$};
  \draw[decorate, decoration={brace, amplitude=2.5pt}, line width=0.5pt]
    ([xshift=1.2mm]a2.north east) -- ([xshift=1.2mm]a0.south east) coordinate[midway, xshift=3pt] (sig);
  \node[right=0pt of sig, inner sep=1pt] (siglab) {$\sigma(s)$};
  \node[box] (E) at (6.25, -1.69) {$E$};
  \node[box] (A) at (6.25, -0.91) {$A$};
  \node[box] (D) at (6.25, -0.13) {$D$};
  \begin{scope}[on background layer]
    \node[draw=ostroke, fill=ofill, line width=0.5pt, rounded corners=2pt, fit=(E)(A)(D), inner sep=3.5pt] (orc) {};
  \end{scope}
  \node[above=1pt of orc, inner sep=1pt] {Oracle $(A,E,D)$};
  \draw[flow] (siglab.east) -- node[below, pos=0.45, inner sep=2pt] {read} (E.west |- siglab.east);
  \draw[flow] (E) -- (A);
  \draw[flow] (A) -- (D);
  \draw[flow] (D.west) -- ++(-0.45,0) |- node[above, pos=0.75, inner sep=2pt] {append $r$} (a3.east);
  \draw[flow, dashed, dash pattern=on 2pt off 1.5pt] (orc.south) -- (orc.south |- 0,-3.0) -|
    node[below, pos=0.25, inner sep=2pt] {rejected: continue at $l_e$} (prog.south);
\end{tikzpicture}
\caption{The \mach{O-2-PDA}. The program $\Pi$ runs on two stacks, coloured $P$ (blue outline) and $O$ (amber outline).
The instruction $\mathtt{oracle}\ s\ l_e$ reads the whole content of the stack $s$, here $a$. The encoder $E$ maps it to
a query, and the decoder $D$ maps an answer of $A$ to a word $r \in O(\sigma(s))$, which is appended to $s$. The word $r$ is
filled amber because the Oracle wrote it, and since it lies on $a$, it has colour $P$ in the sense of
Definition~\ref{def:colouring}. When $E$ rejects a query, the machine continues at $l_e$.}
\label{fig:o2pda}
\end{figure}

The Oracle instruction is the only access the Turing-complete computation has to the Oracle. The Oracle's internal computation (for a
language model, a forward pass on hardware of its own) belongs to the \mach{O-2-PDA} but not to its \mach{2-PDA} part. An Oracle
is the whole triple $(A, E, D)$: one answer relation with two encoders gives two Oracles, and the machine's behaviour, including
which queries are rejected, may differ between them. A rejection is the encoder's decision, not the answer relation's, and it is
also how a bounded input is modelled: an encoder that accepts only words up to some length rejects the rest. The label $l_e$
matters only when a query is rejected, so \S\ref{sec:intro-idea} writes the instruction as $\mathtt{oracle}\ s$. Because the
\mach{2-PDA} is Turing complete, $l_e$ adds no power when $\mathrm{dom}(E)$ is decidable: the program could test membership
itself. On a machine that is not Turing complete, the jump could add power, since such a machine may be unable to decide
$\mathrm{dom}(E)$. Conditions that link the
Oracle's own process to the history on the machine are the subject of \S\ref{sec:eclipse} and are not assumed here.
Lemma~\ref{lem:split} extends to the Oracle instruction on a logical stack, by a block that gathers the operand alone onto one
physical stack and back again (Appendix~\ref{app:proofs}).

\begin{lemma}[locality]
\label{lem:locality}
At every observable step, the \mach{O-2-PDA} changes at most the stack that its instruction names, and a $\mathtt{jeq}$ changes no
stack.
\end{lemma}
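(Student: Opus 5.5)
The plan is a case analysis on the instruction $\iota_\ell$ at the current label, with the step relation read at the level of observable steps. For the physical \mach{2-PDA} instructions of Definition~\ref{def:2pda}, the claim can be read off the step relation directly. A $\mathtt{push}\ s\ x$ or $\mathtt{pop}\ s$ step has the form $(\ell,\sigma)\to(\ell+1,\sigma[s\mapsto w'])$, so the new store agrees with $\sigma$ on every stack other than $s$. A $\mathtt{jeq}\ s\ x\ l$ step leaves $\sigma$ unchanged in both branches, and $\mathtt{halt}$ has no successor. For $\mathtt{oracle}\ s\ l_e$, Definition~\ref{def:o2pda} gives two cases. An accepted query produces $\sigma[s\mapsto\sigma(s)\cdot r]$, which changes only $s$. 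A rejected query produces $(l_e,\sigma)$, which changes no stack. Since the answer relation may be non-functional, the argument must hold for every $r\in O(\sigma(s))$, and it does, because the update shape does not depend on $r$.

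The second step lifts this to logical stacks, where one observable step is the end of a block. For $\mathtt{push}$, $\mathtt{pop}$ and $\mathtt{jeq}$ I will invoke Lemma~\ref{lem:split}. It guarantees that at the end of the block $s$ has changed as prescribed and every other logical stack holds the same symbols in the same order. For $\mathtt{jeq}$, the prescribed change to $s$ is none, so after the block $s$ is also restored exactly. That gives the second clause of the lemma. If the instruction has no successor on $s$, the machine is stuck, so no observable step is produced and there is nothing to check.

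The remaining case, and the main obstacle, is the Oracle instruction on a logical stack. Lemma~\ref{lem:split} as stated covers only $\mathtt{push}$, $\mathtt{pop}$ and $\mathtt{jeq}$, while the extension to $\mathtt{oracle}$ is only sketched in the text: a block gathers the operand alone onto one physical stack and back again. I would make this precise in three stages. First, move every segment lying above $s$, together with its markers, onto the other physical stack. Second, move every segment lying below $s$ away as well, so that the physical stack's whole content is exactly $\sigma(s)$. Third, issue the physical $\mathtt{oracle}$ and then reverse the moves. Two facts need care here. The query must be exactly $\sigma(s)$ with no markers, which follows because markers lie outside the logical alphabet and are moved along with their segments. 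The reversal must also restore the other segments in their original order, which follows from the double reversal argument of Lemma~\ref{lem:split}.

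The main technical point is that a rejection must likewise restore everything before the block jumps to $l_e$. I plan to handle this by having the block's rejection label point into a copy of the restore sequence that ends in a $\mathtt{goto}\ l_e$. With that in place, both the accept branch and the reject branch touch at most $s$ at the observable step, and the lemma follows.
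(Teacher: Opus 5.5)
Your route is the paper's. You read locality off the step rules of Definitions~\ref{def:2pda} and~\ref{def:o2pda}, then lift it to logical stacks with Lemma~\ref{lem:split} and its extension to the Oracle instruction (Appendix~\ref{app:proofs}). That extension gathers the operand alone on one physical stack, calls the Oracle there, and rearranges back, including before continuing at $l_e$ after a rejection. The physical cases, the $\mathtt{jeq}$ clause and the stuck case are fine. The gap is in your second stage, which is where you said the difficulty lies.

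The segments below $s$ cannot be ``moved away'' by transfers, and the double reversal of Lemma~\ref{lem:split} does not restore them. Moving a symbol from the top of one physical stack to the top of the other leaves the word formed by $a$ followed by the reverse of $b$ unchanged. It only shifts the point where that word is cut. So transfers alone cannot in general leave $s$ by itself on a stack while anything, a marker included, lies below it.

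To change that word, the block must hold a symbol in its control and push it elsewhere later. This is possible because $\Sigma$ is finite and the program style keeps a register of finitely many values by duplicating code (\S\ref{sec:machine-split}). One round goes as follows:
\begin{itemize}
\item push a fresh marker onto the other stack and transfer $s$ onto it;
\item pop the marker that bounded $s$ from below and the symbol under it into the control, and push the marker back;
\item transfer $s$ back;
\item push the held symbol onto the other stack.
\end{itemize}
Repeat until only that marker lies below $s$, and remove it last. This is the exchange by transfers that \S\ref{sec:machine-split} mentions, and the paper covers it by simulating a Turing machine on the two stacks.

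The restore runs the same exchange backwards, carrying the held symbols back under $s \cdot r$ one at a time, the last moved first. That, not double reversal, returns them in their original order; double reversal does suffice for the segments above $s$.

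You should also state one hypothesis explicitly. The restore finds segment boundaries only by their markers, so it works for every $r \in O(\sigma(s))$ only if no answer symbol is a marker, that is, only if $\Sigma_R$ lies in the alphabet of the logical stacks. Your observation that the update does not depend on $r$ covers the physical step but not the block. The paper leaves this hypothesis implicit too.
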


\begin{proof}
The rules of Definitions~\ref{def:2pda} and~\ref{def:o2pda} update $\sigma$ only at the stack that their instruction names, and a
$\mathtt{jeq}$ changes no stack. Where the stacks are logical stacks on $a$ and $b$, Lemma~\ref{lem:split} and its extension to the
Oracle instruction restore every other logical stack at the end of each block. So at every observable step the \mach{O-2-PDA}
changes at most the stack that its instruction names.
\end{proof}

In particular, a step on one stack leaves every stack of the other colour as it was.

\paragraph{A real Oracle.} The definition and the convention require three properties of an Oracle on every query its encoder accepts: that
it answers, that the answer is non-empty, and, since an Oracle instruction is one step, that it answers in finite time. A real
implementation may satisfy none of these requirements. A model behind an API may time out, fail or refuse, and a person may not answer at all. A real Oracle
is \textbf{total} if, on every query its encoder accepts, it returns a non-empty answer in finite time. A total Oracle is an
Oracle in the sense of Definition~\ref{def:oracle}, and \S\S\ref{sec:eclipse} and~\ref{sec:archnights} discuss what a realization
does when the Oracle is not total. A word that the encoder does not accept is rejected before the Oracle sees it, and the program continues at
$l_e$.

\subsection{What the Instruction Adds}
\label{sec:machine-properties}

Oracle pushdown automata have been studied as one-stack devices for reducibilities among context-free
languages~\cite{yamakami2013oracle}, whereas the machine here has two stacks. A program computes a partial function when its
input is written on $a$ at the start, $b$ is empty, and its output is the content of $a$ in a final configuration. Compare
an \mach{O-2-PDA} with an oracle Turing machine~\cite{turing1939systems} that has the same Oracle and may also ask whether a word
lies in $\mathrm{dom}(E)$. With a deterministic Oracle, the two compute exactly the same partial functions, by simulating
Turing machines on two stacks~\cite{minsky1967computation}. One direction keeps each stack on its own track of the Turing
machine, copies the queried track to the query tape and appends the answer to the track; the other simulates the Turing machine on
two stacks and, before a query, gathers the tape onto one stack and moves the query word alone onto the other. So taking a whole
stack as the query adds no power, split stacks add none either, and the Oracle instruction adds no power beyond the Oracle's.
\textbf{What it adds is a form} (\S\ref{sec:machine-ar}): each answer is appended to the query that produced it.

A von Neumann machine stores program and data side by side in one random-access memory~\cite{vonneumann1945first}. As an abstract
machine it is the random-access stored-program machine~\cite{cook1973time}. Its instructions are fetched from memory rather
than given as a fixed list beside it, and its control has a finite set of registers besides the program counter.
\S\ref{sec:query} realizes the machine of \S\ref{sec:os} on such a computer.

\subsection{The Autoregressive Form}
\label{sec:machine-ar}

Let $F$ assign a non-empty set $F(h)$ of words to each word $h$ in a set $\mathrm{dom}(F)$. A sequence of words $h_0, h_1, \dots$ is
an \textbf{autoregressive process driven by $F$} if, for every $t$ such that $h_{t+1}$ is defined, $h_t \in \mathrm{dom}(F)$ and
\[ h_{t+1} = h_t \cdot x_t \cdot p_t \qquad\text{with } x_t \in F(h_t). \]
The words $p_t$ are \textbf{interjections}, not produced by $F$. Without interjections ($p_t = \lambda$ for every $t$),
each new segment is generated from the whole history so far.

\begin{proposition}[autoregressive form]
\label{prop:ar-form}
Consider a run of an \mach{O-2-PDA} and a stack $s$. Suppose that the run executes $\mathtt{oracle}\ s\ l_e$ at observable steps
$n_0 < n_1 < \cdots$, that none of these queries is rejected, and that, between $n_t$ and $n_{t+1}$, every instruction with operand
$s$ is a $\mathtt{push}$ or a $\mathtt{jeq}$. Let $Q_t$ be the content of $s$ just before step $n_t$. Then, for every $t$ such that
$n_{t+1}$ exists,
\[ Q_{t+1} = Q_t \cdot r_t \cdot p_t, \]
where $r_t \in O(Q_t)$ is the answer written at $n_t$, and $p_t \in \Sigma^*$ is the word of symbols pushed onto $s$ between $n_t$ and
$n_{t+1}$, in order.
\end{proposition}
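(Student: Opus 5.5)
The plan is an induction over the observable steps strictly between $n_t$ and $n_{t+1}$. At each such step we track the content of $s$ and show it always has the form $Q_t \cdot r_t \cdot u$, where $u$ is the word of symbols pushed onto $s$ so far since $n_t$, in order.

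The base case is step $n_t$ itself. The query $Q_t$ is not rejected, so by Definition~\ref{def:o2pda} the successor configuration has $\sigma(s) = Q_t \cdot r_t$ for some $r_t \in O(Q_t)$. This gives $u = \lambda$. If $s$ is a logical stack, we invoke the extension of Lemma~\ref{lem:split} to the Oracle instruction. It guarantees that at the end of the block, which is the next observable step, $s$ holds exactly $Q_t \cdot r_t$.

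For the inductive step, consider an observable step $m$ with $n_t < m < n_{t+1}$, and split into cases on the instruction executed.
\begin{itemize}
\item If the instruction names a stack other than $s$, then Lemma~\ref{lem:locality} says $s$ is unchanged.
\item If it is $\mathtt{jeq}$ on $s$, then Lemma~\ref{lem:locality} again says no stack changes.
\item If it is $\mathtt{push}\ s\ x$, then by Definition~\ref{def:2pda}, read through Lemma~\ref{lem:split} on logical stacks, the content becomes $Q_t \cdot r_t \cdot u \cdot x$. So $u$ is extended by $x$, which is exactly the next pushed symbol in order.
\end{itemize}
The hypothesis excludes $\mathtt{pop}\ s$ and $\mathtt{oracle}\ s$ between $n_t$ and $n_{t+1}$. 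The Oracle instructions on $s$ occur only at the listed steps $n_i$, so any Oracle instruction in this interval names another stack and falls under the first case. At step $n_{t+1}$ the content just before the step is therefore $Q_t \cdot r_t \cdot p_t$, with $p_t$ the full word pushed onto $s$ in the interval. This is $Q_{t+1}$ by definition.

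The main obstacle is the bookkeeping at the level of logical stacks. Within a block, symbols of $s$ may be moved temporarily onto the other physical stack, so the claim holds only at observable steps. I would state the invariant explicitly at observable steps and rely on Lemmas~\ref{lem:split} and~\ref{lem:locality}, so that the block internals are never examined. A second point to check is that "$p_t$ in order" is well defined: pushes append at the top, so their order in time matches their left-to-right order in $p_t$. No use of the answer relation beyond membership $r_t \in O(Q_t)$ is needed, so the statement holds for every possible answer, as the adversarial reading after Definition~\ref{def:oracle} requires.
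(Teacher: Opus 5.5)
Your proof is correct and follows the paper's argument: the paper also inducts over the observable steps strictly between $n_t$ and $n_{t+1}$. Step $n_t$ gives $Q_t \cdot r_t$ by Definition~\ref{def:o2pda}, a $\mathtt{push}$ on $s$ appends one symbol, and every other admissible step leaves $s$ unchanged by Lemma~\ref{lem:locality}; your explicit handling of logical stacks and blocks is what the paper absorbs into that lemma.
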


So $Q_0, Q_1, \dots$ is an autoregressive process driven by $O$, with the program's pushes as interjections. In particular
$Q_t \sqsubseteq Q_{t+1}$, and each answer $r_t$ is drawn from $O$ applied to the entire history $Q_t$.

\begin{proof}
Step $n_t$ replaces $Q_t$ by $Q_t \cdot r_t$ (Definition~\ref{def:o2pda}, query accepted). Consider an observable step $n$ with
$n_t < n < n_{t+1}$:
\begin{itemize}
\item if its instruction is a $\mathtt{push}$ on $s$, it appends one symbol to $s$;
\item if it is a $\mathtt{jeq}$, or its instruction has another operand, it leaves $s$ unchanged (Lemma~\ref{lem:locality}).
\end{itemize}

By hypothesis there is no other case, so by induction on
these steps the content of $s$ is always $Q_t \cdot r_t \cdot p$, where $p$ is the word pushed onto $s$ so far. At $n_{t+1}$ this
gives the claim.
\end{proof}

The form belongs to the interaction between the program and the Oracle, not to the Oracle: the program stores the history, each
call appends an answer to it, and the Oracle may produce that answer by any internal process. The form the query history takes is
therefore fixed by the program and the Oracle instruction alone; \S\ref{sec:eclipse} adds conditions on the Oracle's interior
without revising it. In Figure~\ref{fig:readings}(a) the session's queries
are $Q_0 \sqsubseteq Q_1 \sqsubseteq Q_2$, each the whole history so far. If the program answers requests that the Oracle's output
contains, the interjection $p_t$ is that reply, and the history is the familiar context of an agent: history, model output, tool
result, history (\S\ref{sec:os}). A program that pops from $s$, or rewrites it, starts a new history, and the proposition applies
again from the next query. A language model continues the text it is given~\cite{brown2020language}, and a
chat-style use of a language model has this form, since each call queries the history so far, extended by the model's previous output and whatever the program
appends. In such a use, a context bound
is an encoder that rejects longer words.

We take every construction in its simplest form: synchronous, without concurrency, fault tolerance or recovery. Later sections
take up liveness (\S\ref{sec:os}), safety (\S\ref{sec:eclipse}), and asynchrony and finite capacity (\S\S\ref{sec:query}
and~\ref{sec:archnights}). We prove no time or space bound: the cost statements of \S\ref{sec:eclipse} count Steps and
tokens, and an excursion may take as many O-mode transitions as its budget allows.

So the \mach{O-2-PDA} is a Turing-complete machine whose one new instruction appends the Oracle's answer to the stack it queries.
Its stacks split into as many as a computation needs, a call touches only its operand, and a stack accumulates an autoregressive
history while the program only appends to it. Its colours, however, are still labels, and the machine is symmetric between them.
\S\ref{sec:os} breaks that symmetry.

\section{The Operating System: Two Symmetry Breakings}
\label{sec:os}

An operating system must distinguish the programs it governs from the code that governs them. On the machine of \S\ref{sec:machine}
nothing does: every instruction may act on either colour of stack. A type system
from call-by-push-value gives us a language for when the machine treats content as data and when it runs it as a program. Two symmetry
breakings then fix the reading of stored content and the transfer of control. The storage breaking S types the Oracle's operand by its
colour: on a Priestess stack the Oracle only answers, and what it produces there is data, while on an Oracle stack it may run a program.
S restricts the reading, not the runs. The transition breaking T separates task execution from privileged service and decides how control
passes between them. The machine with both, the \mach{O-2-PDA-TS}, has an operating system, and Agent and Workflow emerge as two placements
of a task's program on this machine.

From here on we write \textbf{Oracle stack} and \textbf{Priestess stack} for a stack coloured $O$ and a stack coloured $P$, and
\textbf{O-mode} and \textbf{P-mode} for the two modes T introduces. The \textbf{computation inside the Oracle} is the Oracle's internal process,
encoding, computing and decoding included; the \textbf{Priestess computation} is the Turing-complete part of the \mach{O-2-PDA}.
The names \emph{Oracle} and \emph{Priestess} also denote what belongs to each computation: its stacks, its side of the machine
and its execution. The letters O and P, by contrast, name only the colours, the modes and the order of the two sides.
Appendix~\ref{app:glossary} separates stacks, colours, types, modes and computations.

\subsection{Data and Thunks: a CBPV Fragment}
\label{sec:os-cbpv}

Call-by-push-value (CBPV)~\cite{levy2003call}
distinguishes stored values from computations that run. We use a fragment of it and none of its metatheory
(\S\ref{sec:related-pl}): the fragment gives a vocabulary for when stored content is data and when it runs. It is used here for
semantic analysis, which differs from analysis at the level of the binary, and a rigorous development of the machine in the full
calculus is left for future work. The contract of
\S\ref{sec:os-kernel} is derived from the transition rules, and the types give the contract its meaning.
For now the same rules apply to stacks of both colours, and the restrictions come with S and T.

\begin{definition}[the CBPV fragment]
\label{def:cbpv}
CBPV has two classes of type. \textbf{Value types} $A$ are the types of things that are; \textbf{computation types} $B$ are the types
of things that run.
\[ A ::= \mathsf D \mid U\,B \qquad\qquad B ::= F\,A \]
Here $\mathsf D$ is the type of data, a data symbol or a word of them; $U\,B$ is the type of a \textbf{thunk}, a value that suspends a
computation of type $B$; and $F\,A$ is the type of a computation that produces a value of type $A$. The terms are
\[ \begin{array}{@{}l@{}}
  V ::= x \mid \mathsf d \mid \mathtt{thunk}\,M \\
  M ::= \mathtt{return}\,V \mid M\ \mathtt{to}\ x.\,N \mid \mathtt{force}\,V
\end{array} \]
with the typing rules, where a context $\Gamma$ assigns value types to variables,
\[ \begin{array}{@{}c@{}}
  \dfrac{}{\Gamma, x{:}A \vdash x : A}
  \qquad
  \dfrac{}{\Gamma \vdash \mathsf d : \mathsf D}
  \\[3ex]
  \dfrac{\Gamma \vdash M : B}{\Gamma \vdash \mathtt{thunk}\,M : U\,B}
  \qquad
  \dfrac{\Gamma \vdash V : U\,B}{\Gamma \vdash \mathtt{force}\,V : B}
  \\[3ex]
  \dfrac{\Gamma \vdash V : A}{\Gamma \vdash \mathtt{return}\,V : F\,A}
  \\[3ex]
  \dfrac{\Gamma \vdash M : F\,A \quad \Gamma, x{:}A \vdash N : B}{\Gamma \vdash M\ \mathtt{to}\ x.\,N : B}
\end{array} \]
\end{definition}

So $\mathtt{thunk}\,M$ suspends $M$ as a value, $\mathtt{force}\,V$ runs what the thunk $V$ suspends, $\mathtt{return}\,V$
produces $V$, and $M\ \mathtt{to}\ x.\,N$ runs $M$, binds its value to $x$ and continues with $N$. Only values are bound and
stored, so a computation is stored by thunking it.

\paragraph{The typed \mach{O-2-PDA}.} We attach a value type to each stored symbol to specify how it may be used. The type is
metadata, like a page-table permission, which states whether a page may be executed but does not change the page. The stack therefore
still holds symbols of the alphabet of \S\ref{sec:machine}. An element is $\mathsf d : \mathsf D$, a data symbol, or $\mathsf c :
U\,B$, a stored value that represents a suspended computation. A stack's content as a whole can be read as data, of type $\mathsf
D$, or as a thunk, a program for the Oracle. For a stack $s$ with content $w = \sigma(s)$, the Oracle instruction performs the
computation $\mathcal O(w) : F\,\mathsf D$, whose value for an accepted query is an answer in $O(w)$. CBPV gives the call two
readings of the same stored content.
\begin{itemize}
\item \textbf{Program reading, as a force.} The stored content represents a thunk of $\mathcal O(w)$, of type $U(F\,\mathsf D)$, and
  forcing the thunk runs that computation.
\item \textbf{Data reading, as a return/to.} The stored content is the data value $w : \mathsf D$. The call returns a word
  $r : \mathsf D$, and $\mathcal O(w)\ \mathtt{to}\ r.\,N$ binds it for the Priestess code $N$ that follows.
\end{itemize}
Both readings use the same instruction, which appends the answer to its operand. They differ in whether the stored content is the
program being run or the data supplied to a computation. Colours do not occur in the typing rules. They meet the types only in the
two breakings, which restrict the reading of the Oracle instruction by the colour of its operand. The typed machine thus poses the
question of this section: when is stored content a program for the Oracle, and when is it data?

\subsection{Storage Symmetry Breaking S}
\label{sec:os-s}

\begin{definition}[storage symmetry breaking S]
\label{def:s}
On a Priestess stack, every element has type $\mathsf D$, and the Oracle instruction is typed only as
$\mathcal O(\sigma(s))\ \mathtt{to}\ r.\,N$: a computation that returns a data value, which the code that follows binds. On an Oracle
stack, content may have type $U\,B$, and the Oracle instruction may be typed either way. The \textbf{\mach{O-2-PDA-S}} is the typed
\mach{O-2-PDA} with S.
\end{definition}

In words: whatever the Oracle produces on a Priestess stack is data, and only an Oracle stack can hold a program the Oracle runs.
S restricts only the reading of the Oracle instruction on a Priestess stack, constraining how the instruction's operand is typed
rather than whether the instruction exists; Appendix~\ref{app:further-blackbox} returns to what the other choice would cost. The Priestess's own instructions may still read and
write either colour, including the Oracle stacks on which it supplies task content and service replies. S fixes where the Oracle
may act, and so who is the subject of an Oracle call: on an Oracle stack the Oracle runs a program, and on a Priestess stack it
only answers, with a value that the Priestess program binds and uses. The underlying step is the same in both readings, so erasing the
types preserves the runs of \S\ref{sec:machine}: S by itself changes no run. What S breaks is $\varepsilon$. Exchanging the two
colours maps a system in which the Oracle runs a program to one in which content of the other colour is handed to it as a program,
and the typing admits only the first, so $\varepsilon$ is not an automorphism of typed machines. On the \mach{O-2-PDA-S} this
designation is a relabelling: erasing the types maps the machine whose subject colour is $O$ to the one whose subject colour is
$P$, and this paper fixes it to $O$ without loss. That S restricts no run is deliberate, not incidental. The contract of
\S\ref{sec:os-kernel} holds for every program and every run, an adversarial history among them, so the distinction between data and
program must be one that nothing checks and nothing bypasses: a designation derived from the colours asks nothing of a program, so
there is no well-formedness condition for a program to fail. The symmetry, not the machine, is what S breaks. It fixes how the content of each stack is read,
and once T is added, that reading decides where control may enter: $\mathtt{uret}$ is typed as a force, so it enters only an
Oracle stack (Definition~\ref{def:t}). For the Oracle, S declares the Priestess stacks non-executable.
In an implementation, storage boundaries can be enforced by a \textbf{view}, which limits the storage an execution can name, or a
\textbf{check}, which tests whether a named access is permitted. Page mappings provide views, permission bits and hardware tags
provide checks, and page tables combine the two. A check may also be static: a type system rules out disallowed accesses before
execution, as in Singularity's process isolation~\cite{hunt2007singularity}. \S\ref{sec:eclipse-safety} returns to these
protections and to the type bypasses that occur when the distinction between program and data is violated. On our machine the
Oracle is given only the content of its operand, and the locality lemma (Lemma~\ref{lem:locality}) restricts the effect of each
call. T, next, confines a whole execution to that boundary.

\subsection{Transition Symmetry Breaking T}
\label{sec:os-t}

S fixes \emph{which stack holds a program the Oracle runs}, and T fixes \emph{who decides the next transition}. Privilege enters here, and
it applies only where control changes hands. With S alone, each Oracle instruction makes one call and returns to the Priestess program
at $\ell+1$, or at $l_e$ on rejection: the Oracle determines the appended answer, and the program determines the continuation.
Alongside the mode in which Priestess code runs, T adds a mode in which successive Oracle calls on one task continue without an
intervening Priestess instruction until an answer ends them. The mode identifies which component is executing, so the rules can
allow different operations in the two modes, and an access is legitimate or not according to the mode that performs it. A mode
that is granted some operations and denied others is a privilege level.

\begin{definition}[transition symmetry breaking T]
\label{def:t}
Fix a symbol $\natural \in \Sigma_R$, the \textbf{yield symbol}, and a label $\ell_{\mathrm{trap}}$ of the program, the \textbf{trap
label}. Replace the configurations of the machine by configurations in two modes, P-mode and O-mode, and add one instruction,
$\mathtt{uret}\ u$, for $u$ a stack. The rules:
\begin{itemize}
\item \textbf{P-mode.} A configuration $\langle \mathsf P, \ell, \sigma \rangle$ runs the program. Its instructions are those of the
  machine without $\mathtt{halt}$. The Oracle instruction has a P-mode rule only on Priestess stacks, and continues at $\ell+1$, or at
  $l_e$ on rejection, whatever its answer. $\mathtt{uret}\ u$ enters $\langle \mathsf O, u, \sigma \rangle$.
\item \textbf{O-mode.} A configuration $\langle \mathsf O, u, \sigma \rangle$ runs no Priestess instruction. It calls the Oracle
  on the stack it entered on, $u$. With $w = \sigma(u)$,
  \[ \begin{array}{@{}l@{}}
  \langle \mathsf O, u, \sigma \rangle \to \langle \mathsf P, \ell_{\mathrm{trap}}, \sigma[u \mapsto w \cdot r] \rangle \\
    \qquad \text{for } r \in O(w) \text{ ending with } \natural, \\[0.5ex]
  \langle \mathsf O, u, \sigma \rangle \to \langle \mathsf O, u, \sigma[u \mapsto w \cdot r] \rangle \\
    \qquad \text{for } r \in O(w) \text{ not ending with } \natural.
  \end{array} \]
\end{itemize}

A transition from O-mode to P-mode is a \textbf{trap}. An answer that ends with $\natural$ is a \textbf{yield}, and the trap it
causes is a \textbf{yield trap}. On this machine every trap is one. Runs start in $\langle \mathsf P, 1, \sigma_0 \rangle$. An
\textbf{excursion} is a maximal O-mode segment, together with the $\mathtt{uret}$ that starts it and, if it is finite, the trap
that ends it. The \textbf{\mach{O-2-PDA-T}} is the \mach{O-2-PDA} with T, and the \textbf{\mach{O-2-PDA-TS}} is the
\mach{O-2-PDA-S} with T, on which $\mathtt{uret}\ u$ is typed as $\mathtt{force}\,u$ and so, by S, allowed only on an Oracle
stack.
\end{definition}

In words: the Priestess decides when the Oracle runs a program of its own, and on which stack; the Oracle's answers decide when
that program stops; and when it stops, control goes to one fixed place in the Priestess's program. A P-mode Oracle call, by
contrast, returns to the Priestess program whatever its answer, and that program may inspect the answer and branch with its own
instructions. The Priestess program stops when execution runs past its last label. T designates no colour but a mode: a privilege is
the set of operations a mode is granted, and the privileged mode is the one that runs the program. Reversing that choice is
therefore not a relabelling but a second set of rules, the reversed ordering of Appendix~\ref{app:further-reversed}. Like the
designation of S, T's is fixed for the machine and revised by no run; unlike S's, what T decides is not how content is read but
which of two executions a configuration may take, and so it is visible in runs. S restricts the reading and leaves every run
standing; T restricts the transitions.

In CBPV terms, P-mode on the \mach{O-2-PDA-TS} evaluates Priestess computations, whose Oracle instructions are
$\mathtt{return}$/$\mathtt{to}$ on Priestess stacks; $\mathtt{uret}\ u$ is $\mathtt{force}\,u$ for the thunk on $u$, and only
Oracle stacks hold thunks; and O-mode is the evaluation of that force, which the Oracle carries out. O-mode ends by forcing the
thunk $k = \mathtt{thunk}(\ell_{\mathrm{trap}}) : U(F\,\mathsf D)$, the Priestess code from the trap label. T
itself consults no type, so it applies to the untyped machine as well, and the two breakings can be taken in either order
(Figure~\ref{fig:construction}). Each leaves out what the other supplies: the \mach{O-2-PDA-S} has the types without the modes, so
nothing yet distinguishes the Oracle's execution from the Priestess's, and the \mach{O-2-PDA-T} has the modes without the types, so the
Oracle may be sent to run any stack, Priestess stacks included. Only the two breakings together make the Oracle run nothing but
Oracle stacks, in a mode of its own.

\paragraph{Good machines.} Definition~\ref{def:t} specifies O-mode rules only for queries the encoder accepts, and in O-mode no
program tests acceptance first. A machine, together with its Oracle and codec, is \textbf{good} if every query reachable in O-mode
is accepted, so that each excursion continues or yields. The decoder matters here, since it may write symbols the encoder does not
accept. A machine that is not good has a third outcome, a \textbf{fault}: the encoder rejects the history in O-mode, and
Definition~\ref{def:t} gives no successor. Faults belong to exception handling, outside this section. \S\ref{sec:eclipse} gives a
condition on the codec under which a machine is good. \S\ref{sec:an-obligations} treats a realization's handling of faults as
one more trap to $\ell_{\mathrm{trap}}$, a trap that leaves the store as it was; with this trap the return of control
(Proposition~\ref{prop:k5}) needs no good machine.

\begin{table*}[t]
\centering
\small
\begin{tabular}{@{}>{\raggedright\arraybackslash}p{0.2\textwidth}>{\raggedright\arraybackslash}p{0.38\textwidth}>{\raggedright\arraybackslash}p{0.36\textwidth}@{}}
\toprule
Transfer of control & In CBPV & Who decides what \\
\midrule
entry: P to O, $\mathtt{uret}\ u$ & the Priestess forces the thunk on $u$ & the Priestess decides when to enter, and which stack the Oracle sees \\
\rowrule
trap: O to P, a yield trap to $\ell_{\mathrm{trap}}$ & the Oracle call forces $k$, which reads what the excursion appended since the Priestess last wrote $u$ & the Oracle's answer decides when; where is fixed \\
\rowrule
return: the reply $v$ appended to $u$, then $\mathtt{uret}\ u$ & $k$ returns $v$ to the task, which binds it with $\mathtt{to}$ & the Priestess decides what $v$ is, and when the task receives it \\
\grouprule
a P-mode Oracle call, with no switch & $\mathcal O(\sigma(d))\ \mathtt{to}\ r.\,N$ binds the answer as a value & the Priestess's code: it continues at $\ell+1$, or at $l_e$ on rejection \\
\bottomrule
\end{tabular}
\caption{Where control passes between the Priestess and the Oracle on the \mach{O-2-PDA-TS}, as machine transitions and in CBPV.
Above the thick line, the mode switches: the Priestess enters a task, the task traps, and the Priestess returns to it. A trap and the
return after it are one call of $k$, $\mathtt{force}\ k\ \mathtt{to}\ x$. Below the thick line, a P-mode call passes control to the
Oracle and back without a switch.}
\label{tab:switches}
\end{table*}

\begin{figure*}[t]
\centering
\begin{tikzpicture}[
  font=\footnotesize,
  cell/.style={draw, line width=0.5pt, minimum width=8mm, minimum height=3.6mm, inner sep=0pt, font=\scriptsize},
  lab/.style={anchor=north, inner sep=0pt},
  rd/.style={anchor=north, align=center, inner sep=0pt, font=\footnotesize\ttfamily},
  title/.style={anchor=base, inner sep=0pt},
  mode/.style={draw, line width=0.5pt, rounded corners=1.5pt, minimum width=24mm, minimum height=6mm, inner sep=2pt},
  note/.style={font=\scriptsize, inner sep=1.5pt},
  flow/.style={->, line width=0.5pt, black!75},
  >={Stealth[length=4.2pt, width=3pt]}]
  \begin{scope}[shift={(0,0)}]
    \foreach \x in {0.8,3.4} \foreach \i in {0,1,2} \node[cell] at (\x, 0.36*\i) {};
    \node[lab] at (0.8,-0.32) {stack coloured $O$};
    \node[lab] at (3.4,-0.32) {stack coloured $P$};
    \node[rd] at (0.8,-0.78) {force\\return/to};
    \node[rd] at (3.4,-0.78) {force\\return/to};
    \node[title] at (2.1,1.42) {(a) typed \mach{O-2-PDA}: symmetric};
  \end{scope}
  \begin{scope}[shift={(5.8,0)}]
    \foreach \i in {0,1,2} \node[cell, draw=ostroke] at (0.8, 0.36*\i) {};
    \node[cell, draw=ostroke] at (0.8, 0.36) {$U\,B$};
    \foreach \i in {0,1,2} \node[cell, draw=pcol] at (3.4, 0.36*\i) {$\mathsf D$};
    \node[lab] at (0.8,-0.32) {Oracle stack};
    \node[lab] at (3.4,-0.32) {Priestess stack};
    \node[rd] at (0.8,-0.78) {force\\return/to};
    \node[rd] at (3.4,-0.78) {\phantom{force}\\return/to};
    \node[rd, text=inkmuted] (pf) at (3.4,-0.78) {force};
    \draw[inkmuted, line width=0.5pt] ([yshift=-0.2pt]pf.west) -- ([yshift=-0.2pt]pf.east);
    \node[title] at (2.1,1.42) {(b) after S: storage};
  \end{scope}
  \begin{scope}[shift={(11.6,0)}]
    \node[mode, draw=pcol, fill=pfill] (P) at (1.7,0.78) {P-mode $\langle \mathsf P, \ell, \sigma\rangle$};
    \node[mode, draw=ostroke, fill=ofill] (O) at (1.7,-0.92) {O-mode $\langle \mathsf O, u, \sigma\rangle$};
    \draw[flow] ([xshift=-5mm]P.south) -- node[note, left, align=right] {$\mathtt{uret}\ u$\\(force the\\thunk on $u$)} ([xshift=-5mm]O.north);
    \draw[flow] ([xshift=5mm]O.north) -- node[note, right, align=left] {yield trap\\to $\ell_{\mathrm{trap}}$\\(force $k$)} ([xshift=5mm]P.south);
    \draw[flow] (P.east) .. controls +(0.85,0.36) and +(0.85,-0.36) .. node[note, right, align=left] {P-mode call:\\return/to} (P.east);
    \draw[flow] (O.east) .. controls +(0.85,0.36) and +(0.85,-0.36) .. node[note, right, align=left] {answer not\\ending with $\natural$} (O.east);
    \node[title] at (1.7,1.42) {(c) after T: mode};
  \end{scope}
\end{tikzpicture}
\caption{The two breakings. (a) The typed \mach{O-2-PDA}: on every stack the Oracle instruction may be read as a force
or as a return/to. (b) After S: on a Priestess stack (blue) every element has type $\mathsf D$ and the Oracle
instruction is only a return/to (force struck out); on an Oracle stack (amber) content may have type $U\,B$, and both
readings remain. (c) After T: $\mathtt{uret}\ u$ enters O-mode on the Oracle stack $u$, O-mode appends answers to $u$
until one ends with $\natural$, and the yield trap continues at $\ell_{\mathrm{trap}}$. P stands above O.}
\label{fig:st}
\end{figure*}

Table~\ref{tab:switches} shows each transfer of control as a machine transition and in CBPV. When forced, $k$ takes no argument. It
reads the request from the stack on which the excursion ran, and produces the reply. The transfers preserve the CBPV distinction:
Priestess code enters a task by forcing its Oracle-stack content through $\mathtt{uret}$, an Oracle call in O-mode forces only
$k$, and a P-mode call binds a data result. These restrictions follow from the transition rules, and therefore hold \emph{for
every program}. Ordered by privilege, the two sides form a two-point lattice with P above O (Figure~\ref{fig:st}), in which the
Priestess governs task entry and service, while O-mode execution is confined to its task. Before the breakings the colours were
unordered labels.

\paragraph{Minimal breakings.} Both breakings restrict as little as possible, and for the same
reason: a stronger form of either would rule out a placement of a task's program. S constrains how the Oracle instruction's operand is
typed; a form that took the instruction away on the side that treats content as data would leave no Workflow form and no hybrid, and would be a
different machine (Appendix~\ref{app:further-blackbox}). T adds a mode beside the one the program runs in and keeps the P-mode
call on Priestess stacks as it was, so a Workflow's calls are made as before. Neither is a check that a program could pass or fail: S designates how stored
content is read, and the side T privileges is derived from the transition rules, not chosen beside them. What that buys is the
quantifier: the contract of \S\ref{sec:os-kernel} holds for every program and every run, an adversarial history among them.
Proposition~\ref{prop:forced} (\S\ref{sec:os-agentic}) makes ``as little as possible'' precise: it states the criterion behind
the two breakings and shows that no other pair of exclusions meets it.

\subsection{The Kernel Contract}
\label{sec:os-kernel}

\textbf{Together, S and T let the operating system defy the program with a power from beyond.} The program runs, but the other
side decides which stack it runs on, when it runs, what replies it receives and where its control returns. The first four
guarantees are safety properties, which hold on every finite prefix of every run. The fifth, that control comes back at all, needs
a budget and a good machine, and follows the first four.

\begin{theorem}[the kernel contract, K1--K4]
\label{thm:contract}
For every Oracle, every program and every run:
\begin{itemize}
\item \textbf{K1 (an excursion writes only its stack).} An excursion changes no stack but the one it entered on, and that one only
  by appending answers.
\item \textbf{K2 (an excursion reads only its stack).} In O-mode the Oracle reads only the stack the excursion entered on: its own
  earlier answers, and what the Priestess wrote there, the initial content included.
\item \textbf{K3 (control returns only at the trap label).} Every transition from P-mode to O-mode is an $\mathtt{uret}$, and every
  transition from O-mode to P-mode is a trap to $\ell_{\mathrm{trap}}$.
\item \textbf{K4 (an answer is data to the Priestess).} A P-mode call changes only its operand, appending the answer, and continues
  at $\ell+1$, or at $l_e$ on rejection, whatever the answer. Every P-mode label in the run is $1$, $\ell_{\mathrm{trap}}$,
  $\ell + 1$, or a label written in $\Pi$, and no rule writes $\Pi$. So no answer, whether a P-mode call returns it or an excursion
  appends it, becomes a code address or instruction text.\qedhere
\end{itemize}
\end{theorem}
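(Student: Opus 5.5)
The plan is an induction on the length of a run of the \mach{O-2-PDA-TS}, checked one transition at a time against the rules of Definition~\ref{def:t}. Each of K1--K4 is either a property of single transitions or an invariant on configurations, so the argument is to list the transitions by the mode of the source configuration and verify every case. No Oracle property is used beyond the shape of $O$ (Definition~\ref{def:oracle}), so the result holds for every Oracle, program and run. Answers are quantified over all of $O(w)$, so adversarial choices are covered as well.

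First, O-mode. A configuration $\langle \mathsf O, u, \sigma\rangle$ has exactly two kinds of successor, and both are $\sigma[u \mapsto w\cdot r]$ with $w=\sigma(u)$ and $r \in O(w)$. Induction along the excursion then gives K1: every other stack is untouched, and $u$ only grows by appended answers. The O-mode index $u$ never changes inside an excursion, since the only O-to-O rule keeps it fixed. For K2, the only input the rule consults is $w=\sigma(u)$, through $E$. I would then decompose $\sigma(u)$ as the content at the $\mathtt{uret}$ that started the excursion, followed by the answers appended since. The content at entry is whatever P-mode wrote on $u$, including $\sigma_0(u)$, and the appended part consists of the Oracle's own earlier answers. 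K3 is a scan of the rules. In P-mode the only instruction whose successor is in O-mode is $\mathtt{uret}$, and the only O-to-P rule targets $\ell_{\mathrm{trap}}$. By the typing of S, $\mathtt{uret}$ is moreover applied only to an Oracle stack.

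For K4, a P-mode Oracle call on a Priestess stack is an ordinary step of Definition~\ref{def:o2pda}. By Lemma~\ref{lem:locality} it changes only its operand, appending $r$, and its target label is $\ell+1$ or $l_e$ independently of $r$. For the claim about labels, I would prove an invariant by induction on the run: every P-mode label lies in $\{1,\ell_{\mathrm{trap}}\}\cup\{\ell+1\}\cup\{\text{labels occurring in }\Pi\}$. The base case is the start configuration at label $1$. For the step, examine each instruction: $\mathtt{push}$, $\mathtt{pop}$ and $\mathtt{oracle}$ go to $\ell+1$ (or to $l_e$, a literal of $\Pi$), $\mathtt{jeq}$ goes to its literal $l$ or to $\ell+1$, and a trap goes to $\ell_{\mathrm{trap}}$. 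Finally, no rule has $\Pi$ in its update, since a configuration carries only a mode, a label or stack name, and $\sigma$. It follows that no symbol of any stack, and in particular no answer, ever determines a label or becomes instruction text.

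The main obstacle I expect is the passage between observable steps and the logical stacks. The rules act on logical stacks, while a block of Lemma~\ref{lem:split} temporarily moves other segments. I would therefore state K1 and K2 at observable steps, relying on Lemma~\ref{lem:locality} and on the extension of the split lemma to the Oracle instruction. I also need to check that O-mode runs no Priestess block at all, so that nothing moves other stacks mid-excursion. A second subtlety is K2's phrase ``what the Priestess wrote there''. I would make it precise as the prefix of $\sigma(u)$ fixed at the most recent $\mathtt{uret}\ u$, and handle earlier excursions on $u$ by the same decomposition applied recursively, labelling each segment with its writer.
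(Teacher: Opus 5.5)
Your proposal is correct and follows the paper's proof. K1 is proved by induction along the excursion on the two O-mode rules. K2 follows from the O-mode rule reading $\sigma(u)$ alone, together with K1 and the locality lemma. K3 is by inspection of the rules. K4 comes from the P-mode rule plus an induction on the run for the labels, with no rule writing $\Pi$. Your worry about blocks is already settled by the paper's convention of stating the rules at observable steps on logical stacks, so nothing beyond Lemma~\ref{lem:locality} is needed there.
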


\begin{proof}
\leavevmode
\begin{itemize}
\item \textbf{K1.} In an O-mode transition from $\langle \mathsf O, u, \sigma\rangle$, each case of Definition~\ref{def:t} updates
  $\sigma$ only at $u$: it appends the answer to $\sigma(u)$. An excursion is a sequence of such transitions, so the claim follows
  by induction on it.
\item \textbf{K2.} The O-mode rule takes as input $\sigma(u)$ alone. By K1 and the locality lemma, $u$ changes only by O-mode
  appends on itself, or by Priestess instructions with $u$ as operand, the P-mode call among them.
\item \textbf{K3.} Immediate from the rules: only $\mathtt{uret}$ leads from P to O, and every O rule that leads to P goes to
  $\ell_{\mathrm{trap}}$.
\item \textbf{K4.} The first part is derived from the P-mode rule of Definition~\ref{def:t}. For the labels, induct on the run: each
  P-mode rule's successor label is $\ell + 1$ or a label written in the instruction, and after an excursion it is
  $\ell_{\mathrm{trap}}$ by K3. No rule modifies $\Pi$.\qedhere
\end{itemize}
\end{proof}

Each guarantee is derived from the transition rules (K2 also uses the locality lemma), and none consults a type. The contract therefore
holds on the \mach{O-2-PDA-T} already, and S gives it its meaning: what an excursion runs is a program, and what a P-mode call
returns is data. K1 and K2 together are spatial isolation. In Figure~\ref{fig:readings}(b) the answer
\texttt{file\_patch(notes.txt, teh \(\to\) the)} is an O-mode yield. By K3 it reaches the kernel only as a request at
$\ell_{\mathrm{trap}}$, and by K4, whatever it says, no rule makes it a code address. K4 also says what it means to receive an
answer as data: the call returns to the continuation fixed by the instruction. Whatever the Priestess program then does with the
answer (branch on its top symbol, or use it in a later query), it selects among the successors that its labels allow, or leaves
the machine stuck. The answer never grants itself an execution right. The program can still run an answer as part of an
O-mode task, but only explicitly, by copying it to an Oracle stack and entering that stack with $\mathtt{uret}$.

\begin{corollary}[the crossing invariant]
\label{cor:crossing}
Content passes between the two modes in two ways only.
\begin{itemize}
\item \textbf{O to P only by a force.} Content produced in O-mode reaches P-mode only through the force of $k$, the trap at
  $\ell_{\mathrm{trap}}$ that ends the excursion producing it. The trap hands over what the excursion appended.
\item \textbf{P to O only by a return.} Content that the Priestess produces reaches O-mode execution only by a return: the Priestess writes
  the stack, and $\mathtt{uret}$ enters it. A resumption with no write is a return with an empty reply, and the initial content of a
  stack is the first write.\qedhere
\end{itemize}
\end{corollary}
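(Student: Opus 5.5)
The corollary is a restatement of Theorem~\ref{thm:contract} in terms of content, not control. The plan is to track, for each symbol on each stack, which mode wrote it and which mode later reads it, and to show that every cross-mode flow goes through one of the two named channels. I will first fix what ``reaches'' means. Content produced in one mode reaches the other mode when a transition of the other mode takes it as input. In P-mode that is a $\mathtt{jeq}$ on its stack, or a P-mode Oracle call whose operand contains it. In O-mode it is the O-mode rule reading $\sigma(u)$. Everything then reduces to a case analysis on the transitions of Definition~\ref{def:t}, using K1--K4 and the locality lemma (Lemma~\ref{lem:locality}).

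\textbf{O to P.} Take a symbol appended in O-mode. By K1 it lies on the stack $u$ of its excursion, and it was appended by that excursion. No P-mode transition can run between its production and the end of that excursion, because the excursion is a maximal O-mode segment. So the first P-mode configuration that can observe it is the one following the excursion's terminating transition. By K3 that transition is a trap to $\ell_{\mathrm{trap}}$, which in CBPV terms is the force of $k$. If the excursion never ends, the content never reaches P-mode, and the claim holds vacuously. I will note that the stack content at that moment extends what the Priestess last wrote by exactly the excursion's answers. This is the autoregressive form, Proposition~\ref{prop:ar-form}, applied inside the excursion with empty interjections. Hence the trap ``hands over what the excursion appended''. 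Later P-mode reads of the same symbols happen after this trap, so the trap is the only point of crossing.

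\textbf{P to O.} Take content written in P-mode: a push, a pop, or a P-mode Oracle answer. By K4 such an answer is appended to its operand as data. By K2, O-mode reads only $\sigma(u)$ for the stack $u$ it entered on. By K3, O-mode is entered only by $\mathtt{uret}\ u$. So Priestess-written content is read in O-mode only when it lies on some $u$ and a later $\mathtt{uret}\ u$ enters that stack. That is the return of Table~\ref{tab:switches}: the reply $v$ written to $u$, followed by $\mathtt{uret}\ u$. Two degenerate cases need to be folded in. First, if no P-mode write to $u$ occurred since the last trap, the reply is $\lambda$. Second, at the first entry into $u$, the initial content $\sigma_0(u)$ together with all earlier P-mode writes counts as the first write.

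The main obstacle is making ``reaches'' precise enough that the two channels are exhaustive. In particular, I must rule out O-mode content reaching P-mode through some path other than the trap, for instance a P-mode call on an Oracle stack. That case is excluded because P-mode Oracle calls apply only to Priestess stacks (Definition~\ref{def:t}). Still, the ordering argument must show that every such P-mode read occurs after a trap. I would handle this by induction on the run, with the following invariant: every O-mode-produced symbol present in a P-mode configuration was produced by an excursion that has already trapped.
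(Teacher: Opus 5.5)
Your proposal is correct and follows the paper's proof. For O to P, K1, K3 and the maximality of the excursion show that any O-mode symbol P-mode reads was produced by an excursion that has already trapped at $\ell_{\mathrm{trap}}$; for P to O, K3 together with the locality of the Oracle's reads (your K2, where the paper cites the locality lemma for the Priestess's writes) shows that the symbol was written before an $\mathtt{uret}$. Two of your extra steps are harmless but unnecessary: the appeal to Proposition~\ref{prop:ar-form}, since K1 already says the excursion only appends to its own stack, and the exclusion of P-mode calls on Oracle stacks, since such a read is a P-mode read and your ordering invariant already places it after the trap.
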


\begin{proof}
O to P: by K1 an excursion changes no stack but its own, and by K3 every return of control is at $\ell_{\mathrm{trap}}$, so if
P-mode reads a symbol that was produced in O-mode, that symbol was produced in an excursion that has already trapped. P to O: the Priestess
changes an Oracle stack only with that stack as operand, by the locality lemma, and enters O-mode only by $\mathtt{uret}$, by K3.
So if the Oracle reads in O-mode a symbol that the Priestess produced, that symbol was written before an $\mathtt{uret}$.
\end{proof}

\paragraph{The budget.} Liveness requires that something good eventually happens~\cite{lamport1977proving,alpern1985defining}. The
standard Oracle of computability is a set, whose membership test is a total function and returns at once. An Oracle that is a partial
function may never return on some inputs, and a real Oracle may fail and never return. On the machine every query has an answer,
so each Oracle transition returns. What need not end is an excursion, which runs as long as the Oracle never answers with
$\natural$. Real systems limit waiting with timeouts such as a watchdog, and for an Oracle the bound can be set on its internal
steps, which for a language model means a limit on output tokens. A budget bounds the excursion.

\begin{definition}[budget]
\label{def:budget}
Change Definition~\ref{def:t} as follows: $\mathtt{uret}\ u\ \beta$ takes a budget $\beta \in \mathbb{N}$ with $\beta \ge 1$, a configuration in O-mode carries the
remaining budget, and with $w = \sigma(u)$ and $r \in O(w)$,
\[ \begin{array}{@{}l@{}}
\langle \mathsf O, u, \beta, \sigma \rangle \to \langle \mathsf P, \ell_{\mathrm{trap}}, \sigma[u \mapsto w \cdot r] \rangle \\
  \qquad \text{if } r \text{ ends with } \natural \text{ (a yield trap)}, \\[0.5ex]
\langle \mathsf O, u, 1, \sigma \rangle \to \langle \mathsf P, \ell_{\mathrm{trap}}, \sigma[u \mapsto w \cdot r] \rangle \\
  \qquad \text{if } r \text{ does not end with } \natural \text{ (a budget trap)}, \\[0.5ex]
\langle \mathsf O, u, \beta, \sigma \rangle \to \langle \mathsf O, u, \beta - 1, \sigma[u \mapsto w \cdot r] \rangle \\
  \qquad \text{if } \beta > 1 \text{ and } r \text{ does not end with } \natural.
\end{array} \]
The kernel distinguishes the two kinds by the history: since answers are non-empty, the task stack ends with $\natural$ after a yield
trap and not after a budget trap.
\end{definition}

\begin{proposition}[K5: control returns]
\label{prop:k5}
On the machine with budgets, Theorem~\ref{thm:contract} and Corollary~\ref{cor:crossing} hold as stated, and on a good machine every
excursion entered with budget $\beta$ traps at $\ell_{\mathrm{trap}}$ within $\beta$ O-mode transitions, for every Oracle.
\end{proposition}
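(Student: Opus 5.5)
The proof has two parts: first that the contract survives the budget change, then the termination bound.

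For the first part, I will check that the modified rules of Definition~\ref{def:budget} have the same shape as those of Definition~\ref{def:t}, which is all the proofs of Theorem~\ref{thm:contract} and Corollary~\ref{cor:crossing} used. The P-mode rules are unchanged except that $\mathtt{uret}\ u\ \beta$ now records a number. Every O-mode rule still takes $\sigma(u)$ alone as input and updates $\sigma$ only at $u$, by appending an answer $r \in O(\sigma(u))$. The extra component $\beta$ is control state, not a stack.

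With that in hand, the earlier proofs go through case by case. K1 and K2 are the same induction over an excursion. K3 holds because the only rule from P to O is still $\mathtt{uret}$, and both rules from O to P, the yield trap and the budget trap, go to $\ell_{\mathrm{trap}}$. K4 holds because the P-mode call rule is untouched, and after any excursion the label is $\ell_{\mathrm{trap}}$ by K3. The corollary used only K1, K3 and the locality lemma, so it is re-derived verbatim.

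For the termination bound, I fix a good machine and an excursion entered by $\mathtt{uret}\ u\ \beta$. I will prove, by induction on $\beta - \beta'$, that the $(\beta-\beta'+1)$-th O-mode transition of the excursion, if it occurs, starts from a configuration $\langle \mathsf O, u, \beta', \sigma'\rangle$ with $\beta' \ge 1$.

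Goodness supplies the needed successor at each such configuration. The history $\sigma'(u)$ is reachable in O-mode, so it lies in $\mathrm{dom}(E)$. By Definition~\ref{def:oracle}, $O(\sigma'(u))$ is then non-empty, so some transition applies and the machine is never stuck in O-mode. This is the one place goodness is needed.

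Now I split on the answer $r$ and on $\beta'$:
\begin{itemize}
\item If $r$ ends with $\natural$, the transition is a yield trap to $\ell_{\mathrm{trap}}$.
\item If $r$ does not end with $\natural$ and $\beta' = 1$, it is a budget trap to $\ell_{\mathrm{trap}}$.
\item Otherwise the machine stays in O-mode with budget $\beta' - 1 \ge 1$.
\end{itemize}
The budget strictly decreases on each non-trapping transition and starts at $\beta$. So at most $\beta - 1$ transitions stay in O-mode, and the next one must trap. The excursion therefore traps at $\ell_{\mathrm{trap}}$ within $\beta$ transitions. This holds for every choice of answers, hence for every Oracle, including adversarial choices.

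The main obstacle is the goodness step: making sure "reachable in O-mode" covers every intermediate history of the excursion, including the entry history written by the Priestess. I will read the definition of good as quantifying over all configurations reached in O-mode, which includes the first one after $\mathtt{uret}$. The rest is bookkeeping.
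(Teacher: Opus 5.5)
Your proof is correct and follows the paper's own argument. The budgeted rules still send every trap to $\ell_{\mathrm{trap}}$ and still append only to $u$, so the proofs of K1--K4 and of the corollary carry over unchanged; and on a good machine every O-mode configuration has a successor, each non-trapping transition lowers the budget by one, and at budget $1$ every transition traps, which is your count of at most $\beta-1$ non-trapping transitions with goodness used only for the existence of successors.
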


\begin{proof}
The new rules change only the remaining budget and the point at which an excursion traps. Every trap still goes to
$\ell_{\mathrm{trap}}$ and every O-mode transition still appends only to $u$, so the proofs of K1--K4 and of the corollary apply
unchanged. On a good machine every O-mode configuration has a successor, each O-mode transition either traps or lowers the budget
by one, and at budget $1$ every transition traps.
\end{proof}

K5 is the fifth item of the kernel contract. It bounds transitions, not elapsed time, so a watchdog on a single Oracle step is the
realization's concern. A request may now span several excursions ended by the budget. The budget is a slice measured in O-mode
transitions, which \S\ref{sec:eclipse} makes Steps or Frames, and preemption is never finer than one such transition. Without a
budget, control returns only at yield traps, and tasks in O-mode are scheduled cooperatively. Figure~\ref{fig:contract} sums up the
five items and how they depend on one another.

\begin{figure}[t]
\centering
\begin{tikzpicture}[x=1pt, y=1pt, font=\scriptsize,
  >={Stealth[length=3.6pt, width=2.6pt]},
  kO/.style={draw=ostroke, fill=ofill, line width=0.5pt, rounded corners=1.5pt, minimum width=88pt,
    minimum height=21pt, align=center, inner sep=1.5pt},
  kP/.style={draw=pcol, fill=pfill, line width=0.5pt, rounded corners=1.5pt, minimum width=182pt,
    minimum height=21pt, align=center, inner sep=1.5pt},
  cor/.style={draw=black!60, fill=white, line width=0.5pt, rounded corners=1.5pt, minimum width=182pt,
    minimum height=21pt, align=center, inner sep=1.5pt},
  dep/.style={->, line width=0.5pt, black!70},
  row/.style={anchor=east, align=right, inner sep=0pt},
  head/.style={anchor=base, inner sep=0pt, text=inkmuted}]
  \node[cor] (cor) at (141,0) {\textbf{the crossing invariant} (Corollary~\ref{cor:crossing})\\
    O to P only by a force; P to O only by a return};
  \node[head] at (94,-26) {what it touches};
  \node[head] at (188,-26) {where control goes};
  \node[kO] (k1) at (94,-43) {\textbf{K1} writes only its stack,\\and only by appending};
  \node[kO] (k2) at (94,-72) {\textbf{K2} reads only its stack};
  \node[kO] (k3) at (188,-43) {\textbf{K3} entered only by $\mathtt{uret}$,\\returns only at $\ell_{\mathrm{trap}}$};
  \node[kO, dash pattern=on 2pt off 1pt] (k5) at (188,-72) {\textbf{K5} with a budget $\beta$,\\returns within $\beta$ transitions};
  \node[kP] (k4) at (141,-105) {\textbf{K4} a call changes only its operand, by appending the answer,\\
    and continues at $\ell+1$, or at $l_e$, whatever the answer;\\every label is literal, so no answer is a code address};
  \node[row] at (44,-57.5) {an\\excursion\\(O-mode)};
  \node[row] at (44,-105) {P-mode};
  \draw[dep] (k1) -- (k2);
  \draw[dep] (k3.east) -- ++(7,0) |- (k4.east);
  \draw[dep] ([xshift=34pt]k1.north) -- ([xshift=34pt]k1.north |- cor.south);
  \draw[dep] ([xshift=-34pt]k3.north) -- ([xshift=-34pt]k3.north |- cor.south);
  \node[anchor=north, inner sep=0pt, text=inkmuted, align=center] at (141,-124)
    {K1, K3 and K4 follow from the rules of T, and K5 from those of the budget;\\
     K2 and the crossing invariant also use the locality lemma};
\end{tikzpicture}
\caption{The kernel contract, K1--K5 (Theorem~\ref{thm:contract}, Proposition~\ref{prop:k5}). Amber: what holds for an
excursion; blue: what holds in P-mode. The left column shows what a step may touch, and the right column shows where control
goes next. An arrow points from a result to one that uses it: K2 rests on K1, K4 on K3, and the crossing invariant on K1 and
K3. K5 (dashed) needs a budget and a good machine. The other four hold on every run, and on the \mach{O-2-PDA-T} already.}
\label{fig:contract}
\end{figure}

Once a trap has brought control back to P, Priestess code may inspect the task and copy its content from either colour of
stack to the other. The invariant does not forbid this, since it governs only when content becomes available to execution in the other mode.
Whatever program an Oracle stack carries, its traps reach the Priestess at the same trap label, and the Priestess chooses when the
program runs, for how long, and what replies it receives before each resumption. A Priestess program can thus schedule the
programs on Oracle stacks and handle their requests. This is the role of an operating system.

\paragraph{The operating system in CBPV.} With the types, this role is what sets such a program apart from every other. The
machine has two kinds of force, and only two: an $\mathtt{uret}$, which forces the thunk on an Oracle stack, and a trap, which forces
$k$, the code at the trap label. A Priestess program that governs the programs on Oracle stacks is the code at both ends: it issues
every force of a thunk on an Oracle stack, and it is the computation every trap forces. The Oracle calls of any other Priestess
code force nothing (S). \textbf{So an operating system is the Priestess code that forces the thunks on Oracle stacks and is forced
by them; all other Priestess code only returns and binds.}
Table~\ref{tab:switches} is this reading at each transfer of control.

\begin{definition}[operating system]
\label{def:os}
On the \mach{O-2-PDA-TS}, an operating system is a kernel together with a library. The kernel is a Priestess program that
enters Oracle stacks with $\mathtt{uret}$ and serves their traps with its code at the trap label, the \emph{service entry}. The
library declares the kernel's services to the tasks.

A \textbf{task} is a unit of work the system runs. The Oracle stack that the kernel enters on behalf of a task is its \textbf{task stack},
and the content of the task stack is the task's \textbf{history}. The library $\Lambda$ provides tasks with an interface to the kernel's
services. It consists of a request language $L_\Lambda \subseteq \Sigma_R^*\natural$ and a declaration $\delta_\Lambda$ placed on every
task stack. The declaration is the Oracle-side part of the library: it describes the available services. The corresponding
Priestess-side handlers are implemented in the kernel.

At a yield trap, the task's \textbf{request} is the content appended in O-mode since the Priestess last wrote the task stack. The
initial content counts as a write. What the kernel then writes on the task stack before entering it again is its \textbf{reply}. To
use the request, a kernel records the boundary after its last write and reads the content appended beyond it, and a kernel with
several tasks records which task stack it enters.

A yield trap is a \textbf{system call} when its request belongs to $L_\Lambda$. The kernel dispatches such a request to the
corresponding handler and handles other requests as errors.
\end{definition}

Theorem~\ref{thm:contract} with Proposition~\ref{prop:k5} is the \textbf{kernel contract}: whatever the Oracle, every kernel can
rely on these storage and control boundaries and, on a good machine, also on the return of control under a budget. Each kernel
builds its scheduling and services on them. The contract is a property of the machine's rules, not of a kernel: a kernel's
scheduling and services lie outside it and need a verification of their own, as seL4 and CertiKOS give their
kernels~\cite{klein2009sel4,gu2016certikos}. Table~\ref{tab:terms} in Appendix~\ref{app:glossary} collects the terms of this section.

\begin{definition}[program side, privileged side]
\label{def:sides}
Fix the \textbf{architecture}: the data of Definition~\ref{def:t} except the program (the alphabets, the stacks and their colours,
$\natural$ and $\ell_{\mathrm{trap}}$), together with the Oracle. A \textbf{system} on it has a program decomposed into \textbf{kernel
code} $K$ (kernel and library together) and \textbf{task code} $C$; \textbf{task data} $\Delta$, the initial contents of the stacks
of its tasks; and the declaration $\delta_\Lambda$, if any. A \textbf{family} is a set of systems that share the architecture, $K$ and
$\delta_\Lambda$, and differ only in $C$ and $\Delta$.
\begin{itemize}
\item The \textbf{program side} of a task is where its \textbf{program}, its executable part, sits: the \textbf{Oracle side} when the
  program is content of its task stack that the Oracle runs, and the \textbf{Priestess side} when it is task code $C$.
\item The \textbf{privileged side} is the side holding the privileged mode of T. On the \mach{O-2-PDA-TS} it is the Priestess.
\end{itemize}

An operating system \textbf{mediates} a task when the task runs in O-mode and the kernel serves its traps. The Oracle runs
Oracle-stack content only in O-mode (Definition~\ref{def:t}), so a task runs in O-mode exactly when its program is on the Oracle side,
and an operating system mediates a task exactly when the task's program side differs from the privileged side. The separation gives
the two sides different roles: the task supplies the computation, while the privileged side governs its access to services.
\end{definition}

In these terms, the task's program identifies the acting subject, and privilege identifies the trusted party. An operating system
stands between them when they occupy different sides. It defines itself through its opponent: the program acts, and the operating
system determines the conditions under which it may act. S makes the acting subject definable, for it fixes which stacks can hold a
program the Oracle runs; T gives that program a mode in which it runs and fixes the trusted party, the side holding the privileged mode.
The two roles differ in kind. The acting subject is designated: the system decides, task by task, the side whose storage
holds the task's program. The trusted party is not designated beside it but derived from the rules, since T gives the privileged mode
to the side that runs the program, and the two rows of Table~\ref{tab:quadrants} are two machines rather than two settings of one.
Reversing the
privileged side gives the dual: the Oracle is trusted, and the Priestess is confined. With a human as the Oracle, the human is
then the operating system of the computer program and decides the services available at the program's interaction boundaries.
Appendix~\ref{app:further-reversed} gives this dual a machine, on which each item of the contract has a counterpart with the roles
exchanged, and the bottom
row of Table~\ref{tab:quadrants} shows its two placements.

\subsection{Agents and Workflows on One Machine}
\label{sec:os-agentic}

With a language model as the Oracle, the two placements of a task's program are the two forms of agentic system.

\begin{definition}[Agent form, Workflow form, hybrid]
\label{def:forms}
\leavevmode
\begin{itemize}
\item \textbf{Agent form.} The kernel code is fixed, and the tasks of a family differ in the content of their task stacks, where the
  Oracle finds the program it runs. The program side is therefore the Oracle, while the privileged side is the Priestess. Their
  separation makes the kernel and its library an operating system. Oracle output is program by default: once appended to the history,
  it becomes part of what the Oracle runs next. The task meets the operating system through the force/return boundaries of
  Corollary~\ref{cor:crossing}.
\item \textbf{Workflow form.} The task's program is Priestess task code that calls the Oracle in P-mode, and no task stack is
  entered. Its program side and privileged side are both the Priestess, so the task runs directly on the machine, with no operating
  system mediating it. Here the Oracle supplies data to an existing program: its returned value is bound by $\mathtt{to}$ and used by
  the Priestess continuation. The program may inspect the result, branch on it, or explicitly interpret or promote it as program
  content (K4).
\item \textbf{Hybrid.} A task has program parts on both sides: Priestess task code may start Agent tasks, or an Agent may use a
  service implemented by task-specific Priestess code. The two parts retain their respective forms of interaction. The Agent part
  runs as a program served by the operating system. Its yield trap forces a service computation, and the kernel returns a value
  before resuming it with $\mathtt{uret}$. The Workflow part runs in P-mode and uses Oracle answers through the data reading,
  binding a returned value to the continuation with $\mathtt{to}$. Data becomes program there only by the Priestess's explicit
  interpretation or promotion (K4). A hybrid brings these two forms together, with each program part classified by the side on
  which it executes.\qedhere
\end{itemize}
\end{definition}

A shared client library linked into task code belongs to $C$: it executes as part of the task, even when many tasks use it. A
fixed body of code alone therefore does not determine the form. The form is determined by where the task's program executes and
how it meets the other side. This is a machine-level account of the distinction drawn in Building Effective
Agents~\cite{anthropic2024building}: a Workflow follows control paths written in Priestess task code, and an Agent carries its
program in the Oracle-side history and directs its process through successive requests. Table~\ref{tab:quadrants} sets the two
placements beside the reversed assignment of privilege.

\begin{table*}[t]
\centering
\small
\begin{tabular}{@{}>{\raggedright\arraybackslash}p{0.24\textwidth}>{\raggedright\arraybackslash}p{0.355\textwidth}>{\raggedright\arraybackslash}p{0.355\textwidth}@{}}
\toprule
Trusted party $\downarrow$ / acting subject $\rightarrow$ & Oracle & Priestess \\
\midrule
\textbf{Priestess} (this machine; an untrusted Oracle such as a language model) & \emph{Agent}: an Agent operating a computer; the Priestess is the operating system & \emph{Workflow}: non-OS; Oracle output is data \\
\rowrule
\textbf{Oracle} (privilege reversed; a trusted Oracle such as a human) & \emph{human--computer interaction}: a human using a computer, the counterpart of an Agent operating one; non-OS, the program is on the trusted side & \emph{automated systems}: a Priestess program runs by itself, confined, and the human decides at the switches; the human is the operating system \\
\bottomrule
\end{tabular}
\caption{The four combinations of acting subject and trusted party. \emph{Non-OS} marks the two cells where the acting
subject is the trusted party: the task's program is on the privileged side, so no operating system mediates it
(Definition~\ref{def:sides}). The bottom row has privilege reversed, on the machine of Appendix~\ref{app:further-reversed}.}
\label{tab:quadrants}
\end{table*}

\paragraph{Two readings, and form versus paradigm.} Every interaction is one transition read from two sides. From the Oracle side, an
excursion runs the history as a program and calls on the Priestess through a yield trap, and the Priestess supplies the service as
an Oracle would; from the Priestess side, $\mathtt{uret}$ starts a computation whose appended content the program inspects at
$\ell_{\mathrm{trap}}$. The same exchange is a service request in one reading and a returned result in the other, and in CBPV the
same transition is a force in one reading and a return in the other. A P-mode call is likewise a program step that returns a data
value for the Priestess, and a computation over supplied content for the Oracle. In both cases the two readings describe the same
machine transitions. The Agent and Workflow forms are therefore \textbf{implementation forms}: they concern where the task's
program is placed. How that program chooses its next action is its \textbf{design paradigm}. A Workflow may follow a plan the
model generated, and an Agent may carry out a predefined procedure. So
either form can realize either paradigm, given a suitable compiler. \S\ref{sec:eclipse} uses the distinction: because both forms run on this one machine, it states its models once, and
its cache model, its return bound and the safety boundary hold of a task whichever side the task's program lies on.

\paragraph{Simulation across the sides.} The distinction between form and paradigm becomes clearer when a computation is moved
from one side to the other. An Agent computation can be implemented by an \emph{interpreter} in Priestess code that reproduces the
Oracle loop in P-mode, or translated into such code by a \emph{code generator}. Either way, the code preserves each task's queries
and appended answers and directs completed excursions to the service entry. The result has Workflow form, and its P-mode calls preserve the data-result
behaviour of K4. Conversely, a Workflow can run as an Agent task given a \textbf{compiler}, which maps the Workflow with its
inputs to Oracle-side content with the same effect on the Priestess stacks when the kernel serves it. A family of Workflows then runs
as Agent tasks on a fixed kernel, and K1--K4 hold for them; K5 also holds when the kernel enters them with budgets on a good
machine. We run such computations with the same Oracle and compare them by their final contents on the original stacks, taken as sets of
possible results for a nondeterministic Oracle. In CBPV an interpreter replaces one force of a thunk by a loop of
$\mathtt{return}$/$\mathtt{to}$ calls on a Priestess stack, and a compiler goes the other way. The interpreter always exists:

\begin{proposition}[an interpreter for Agent tasks]
\label{prop:interpreter}
For every program $K$ of the \mach{O-2-PDA-TS} there is a program $K^\flat$ of the \mach{O-2-PDA-S} that has no $\mathtt{uret}$ and
runs the same tasks as $K$, making the same Oracle queries with the same answers.
\end{proposition}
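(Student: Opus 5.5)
The plan is to build $K^\flat$ by replacing every $\mathtt{uret}\ u$ in $K$ with an inline interpreter block. The block reproduces the O-mode loop of Definition~\ref{def:t} with P-mode Oracle calls, and the proof is a lockstep simulation between the two machines. One point has to be settled first: P-mode Oracle calls are allowed only on Priestess stacks. So $K^\flat$ cannot query the Oracle stack $u$ directly. Instead it keeps each task stack of $K$ as a Priestess logical stack $\hat u$, obtained by splitting stack $a$ (Lemma~\ref{lem:split}). Every instruction of $K$ on $u$ becomes the same instruction on $\hat u$. Since the Oracle instruction reads only the content of its operand, a call on $\hat u$ with $\sigma(\hat u)=\sigma(u)$ makes the same query and has the same answer set. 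The colours and types play no role in the answer relation, and S only types such a call as a return/to, which is exactly how an interpreter uses it.

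The block for $\mathtt{uret}\ u$ is as follows. The block begins with a label $m_u$ holding $\mathtt{oracle}\ \hat u\ l_e$. It then tests whether the top of $\hat u$ is $\natural$ with $\mathtt{jeq}\ \hat u\ \natural\ \ell^\flat_{\mathrm{trap}}$, and otherwise does $\mathtt{goto}\ m_u$. Two details need care. First, the answer appended is a whole word $r$, and $\mathrm{top}(\sigma(\hat u)\cdot r)$ is the last symbol of $r$. By the non-empty-answer convention, ``top is $\natural$'' is exactly ``$r$ ends with $\natural$'', which is the yield condition of T. Second, the trap label of $K$ must be relocated: labels shift when blocks are inlined, so I fix the relabelling map $\ell\mapsto\ell^\flat$ and apply it to every jump, to $l_e$ and to $\ell_{\mathrm{trap}}$.

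The handler at $\ell^\flat_{\mathrm{trap}}$ is $K$'s trap code. The kernel of $K$ identifies the task by recording which stack it entered (Definition~\ref{def:os}). The interpreter needs this too, so the kernel's handler reads $\hat u$ in place of $u$ with the same recorded identity. For budgets, a finite counter is unbounded in general, so I would keep the remaining budget in unary on a fresh Priestess logical stack. Each loop iteration pops one symbol from it, and the block jumps to the trap label when the stack empties, mirroring Definition~\ref{def:budget}.

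The rejection case, which covers a fault on a machine that is not good, needs its own treatment. Here $l_e$ points to a self-loop, or to the realization's fault handler if one is given. This matches the absence of a successor in $K$ up to the stated comparison of queries and answers.

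Correctness is then an induction on runs. I would define a relation $R$ matching each configuration $\langle \mathsf P,\ell,\sigma\rangle$ of $K$ with $(\ell^\flat,\hat\sigma)$, where $\hat\sigma$ agrees with $\sigma$ under $u\mapsto\hat u$. I would also match each $\langle \mathsf O,u,\sigma\rangle$ with the block at label $m_u$. Each $K$ transition is matched by a finite sequence of $K^\flat$ steps that issues the same query and appends the same $r$ for every $r\in O(w)$, and conversely. This gives equality of query/answer sequences and of final stack contents, as sets of possible results. Lemma~\ref{lem:locality} and Lemma~\ref{lem:split} ensure the block disturbs no other stack.

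I expect the main obstacle to be the colour restriction on P-mode calls and making the move to Priestess copies precise. Task stacks may be written by $K$ before entry and inspected after traps. So every instruction of $K$ naming an Oracle stack must be redirected uniformly, not only those in the $\mathtt{uret}$ blocks. I would handle this by a global renaming of Oracle stacks to fresh Priestess logical stacks, and check that no rule of the \mach{O-2-PDA-S} distinguishes the two apart from the typing.
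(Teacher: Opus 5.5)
Your proposal is correct and follows the paper's own construction: rename each task stack to a fresh Priestess stack, and replace each $\mathtt{uret}\ u$ by a loop of Oracle calls that jumps to the relocated $\ell_{\mathrm{trap}}$ when the top is $\natural$. That test is exact by the non-empty-answer convention, and with budgets a unary counter on a fresh stack ends the loop. The one difference is a rejected query: the paper sends it to a stuck configuration (a $\mathtt{pop}$ on a fresh empty stack), so that both runs are stuck, which matches $K$ more closely than your self-loop; drop your alternative of a fault handler, since it could make queries that $K$ never makes.
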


The construction (Appendix~\ref{app:proofs}) renames each task stack to a fresh Priestess stack and replaces each $\mathtt{uret}$
by a loop that calls the Oracle there and jumps to $\ell_{\mathrm{trap}}$ when the top is $\natural$. The convention of non-empty
answers makes the test exact, and with budgets the loop maintains a counter. The interpreter makes the role of T explicit. The task
computation can be carried out by ordinary Priestess code, and in the interpreter, that code itself confines each task to its
history and routes completed excursions to the service entry. With T, by contrast, these boundaries belong to the machine's rules
and apply to every kernel. The interpreter does the work of an operating system within a single protection domain, and T is what
gives that work a privileged position. A Workflow can also be installed as a kernel service handler that runs the Workflow in P-mode with
its original queries and answers when an Agent issues the request; the task is then a hybrid.

\paragraph{Where a harness and a scaffold sit.} The interpreter is a program running inside the machine. An ordinary Agent harness
sits at another level: its host code provides the machine on which the Agent runs. In the familiar model--tool loop, model
execution corresponds to O-mode, a tool request triggers a yield trap, and the tool's reply is returned to the task before
execution resumes. The task history carries the Oracle-side program, and the dispatcher and its tool handlers play the kernel and
the library. The harness therefore exhibits the Agent form, although one host program combines the machine, the kernel and part of
the library. ArchNights separates the three. A \textbf{scaffold} is the system library together with the programs a task may run,
that is, the kernel-and-library side of the machine ($\delta_\Lambda$, $L_\Lambda$ and the handlers). It
exposes its operations to the task as thunks: the declaration names them, a request forces one, and the Priestess evaluates it in P-mode.

\paragraph{Placing the Workflows and Agents.} The paradigm is the Workflow paradigm when control follows code paths written in
advance and the Oracle's output at most selects among them, and it is the Agent paradigm when the Oracle's output determines which
steps there are. Prompt chaining, routing, parallelization and evaluator-optimizer~\cite{anthropic2024building} have Priestess
task code that fixes the stages and the control structure, so they have Workflow form. Their paradigm is also the Workflow
paradigm, since routing and the evaluator's verdict select among paths written in advance. In orchestrator-workers, a P-mode
program asks the Oracle for a decomposition and starts workers from it, which gives Workflow form with the Agent
paradigm~\cite{anthropic2024building}. Autonomous agents carry the program in the task history, and model iteration selects
actions and tool requests that the harness dispatches, which gives Agent form with the Agent paradigm. Hybrids, such as agents
calling Workflow tools, Workflow nodes running Agent tasks and harness skills that build task DAGs, have task-specific program
parts on both sides.

\paragraph{Why these two breakings.} This section chooses one criterion and derives the rest from it: both placements of a task's
program exist and stay apart, and what the machine guarantees holds for every program and every Oracle, with no condition asked of a
program. An Oracle execution can be classified by its operand colour, Oracle or Priestess, and by
whether a yielded answer forces the service entry or completes. Two of the four combinations are admitted: an Oracle stack whose
yield forces the service entry, which is O-mode, and a Priestess stack whose answer completes, which is the P-mode call. The first
is the only admitted one in which the machine forces the service entry, and the second is the only admitted one in which the
Priestess uses the Oracle on its own data. \textbf{Each breaking excludes one of the other two: S excludes the one in which
Priestess content would be run, and T excludes the one in which the Oracle's program would complete without reaching the service
entry.} T excludes an Oracle stack whose answer completes, and Proposition~\ref{prop:interpreter} realizes the same behaviour on a
Priestess stack; S excludes a Priestess stack whose answer forces the service entry, since Priestess content is never forced. The
criterion leaves no other choice of the two exclusions.

\begin{proposition}[the breakings are forced]
\label{prop:forced}
Give the \mach{O-2-PDA} the modes of Definition~\ref{def:t}, but admit $\mathtt{uret}$ only on the stacks whose colour lies in a
set $U$, and the P-mode Oracle call only on those whose colour lies in a set $C$, with no condition on the program. A task can run in
Agent form only if $U \ne \emptyset$, and in Workflow form only if $C \ne \emptyset$. For every program, every Oracle and every run,
no P-mode call reads an answer appended in O-mode if and only if $U \cap C = \emptyset$. So exactly two such machines meet the
criterion: the one with $U = \{O\}$ and $C = \{P\}$, which is S together with the P-mode rule of T, and the same machine with the
colours exchanged.
\end{proposition}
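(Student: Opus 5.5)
The proof falls into three parts, one for each claim, and each reduces to reading off the modified transition rules.

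\textbf{The necessity claims.} For the Agent and Workflow conditions I will argue directly from Definitions~\ref{def:forms} and~\ref{def:sides}. A task in Agent form runs in O-mode. O-mode is entered only by $\mathtt{uret}$, by the argument of K3, which survives the restriction because it uses only the shape of the rules. If $U=\emptyset$, no $\mathtt{uret}$ has a rule, so no run reaches O-mode and no task has Agent form. Symmetrically, Workflow form needs a P-mode Oracle call on some stack. If $C=\emptyset$, no such call has a rule, so no task can call the Oracle in P-mode.

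\textbf{The equivalence, easy direction.} Suppose $U\cap C=\emptyset$. By K1, an excursion on $u$ writes only $u$, and only by appending; this proof does not depend on $U$ or $C$. So every symbol appended in O-mode lies on a stack whose colour is in $U$. Colours of stacks are fixed, since derived stacks inherit the colour of their parent (Definition~\ref{def:colouring}). A P-mode call reads only its operand, whose colour lies in $C$. Since $U\cap C=\emptyset$, that operand never held an O-mode answer. The argument must be about symbols remaining on the operand stack at the time of the call, not about copies. A Priestess instruction may copy content across stacks, and I will read the statement as ``reads an answer appended in O-mode'' on the operand itself. I will state this reading explicitly, since copying is a P-mode act that K4 already treats as data.

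\textbf{The equivalence, converse.} Suppose a colour $c\in U\cap C$. I will exhibit one program, one Oracle and one run. The Oracle answers $\natural$ to every accepted query, with $E$ total. The program pushes a symbol on a stack $u$ of colour $c$, executes $\mathtt{uret}\ u$, and traps at once with $\natural$ appended. At $\ell_{\mathrm{trap}}$ it performs a P-mode $\mathtt{oracle}\ u\ l_e$, which is admitted because $c\in C$. That call's query contains the O-mode answer $\natural$. I expect the main obstacle here to be that the condition must hold ``for every program, every Oracle and every run.'' The right-to-left direction is therefore a universal statement, while the left-to-right direction needs only this single witness, and the witness is legal because the criterion places no condition on the program.

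\textbf{The enumeration.} The criterion asks for $U\ne\emptyset$, $C\ne\emptyset$ and $U\cap C=\emptyset$ with $U,C\subseteq\{O,P\}$. The disjoint nonempty pairs are exactly $(\{O\},\{P\})$ and $(\{P\},\{O\})$. I will then identify the first with S plus T's P-mode rule, using Definitions~\ref{def:s} and~\ref{def:t}: $\mathtt{uret}$ is typed as a force only on Oracle stacks, and the P-mode call exists only on Priestess stacks. The second is its image under $\varepsilon$.
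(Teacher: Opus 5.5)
Your proposal is correct and matches the paper's proof essentially step for step. The steps are: the first claim read off from the forms; the easy direction via K1, with excursions only on stacks whose colour lies in $U$ (your explicit reading, that a Priestess copy counts as a Priestess write, is exactly the paper's ``what the Priestess wrote there''); a converse witness with an Oracle whose only answer is $\natural$, run as $\mathtt{uret}$, trap, then a P-mode call on the same stack; and the enumeration of the two disjoint non-empty pairs, exchanged by $\varepsilon$. The only difference is cosmetic: the paper starts the witness stack empty, so your initial push is unnecessary, though harmless with a total encoder.
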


The proof is in Appendix~\ref{app:proofs}. A P-mode call that read an excursion's answers would hand the Oracle's program to a
Workflow as its data, and the two placements would no longer stay apart; the condition is the \emph{In} item of the safety
boundary (Theorem~\ref{thm:boundary}) in P-mode. No weaker restriction of this kind meets the criterion, and no stronger one keeps
both placements. A condition on the
program, such as the safety of Definition~\ref{def:safe}, is weaker, but it asks something of every program, which the criterion
rules out. The rest of T follows from the same criterion. Its mode is what the Agent form needs, since the Oracle runs a program by being
queried again and again with no Priestess instruction between the queries; and its one trap label is what K4 needs for every
Oracle, since an exit whose label an answer could select would make that answer a code address. What the section derives is
therefore the breakings from the criterion, the contract from the rules (Theorem~\ref{thm:contract}), and the operating system of
Definition~\ref{def:os} as the code at the two forces that the breakings leave. A harness's dispatcher is a kernel because it is
the code at those two forces, not by definition.

So the contract is established as an invariant of the machine, and Agents and Workflows are the two placements of a task's program
under it. We leave to \S\ref{sec:eclipse} what depends on the Oracle's interior, the cache first of all.

\section{Eclipse: Between Oracle and Priestess}
\label{sec:eclipse}

A solution found for one form of agentic system usually needs to be translated before it applies to the other. \S\ref{sec:os} puts
both forms on one machine, where the Oracle and the Priestess stand in a twin relation, like the two stars of VFTS~352, an
overcontact binary~\cite{almeida2015discovery}. Taken as one whole, the pair lets a problem be modelled once, and the model then applies to either form through the link
between the two sides. So far the Oracle has been a black box, and every result of \S\ref{sec:os} holds for every Oracle. But a
cache of the Oracle's computation is hard to analyse from outside. Turing went no further into the nature of his oracle. We go one
step further: this section assumes that the Oracle is \emph{autoregressive inside}, the paper's one assumption about the Oracle's
interior. The machine's queries already have the autoregressive form as long as the
program only appends to them
(Proposition~\ref{prop:ar-form}). With an autoregressive interior the two processes can be linked through the codec
(\S\ref{sec:eclipse-codec}), and the link underlies a model of the cache (\S\ref{sec:eclipse-memory}) and an analysis of scheduling
(\S\ref{sec:eclipse-control}) that hold for both forms. The assumption is needed for these and for nothing else. The safety
boundary of \S\ref{sec:eclipse-safety} does not use it: like the contract, the boundary is an invariant of the machine. An Oracle
that is not autoregressive inside therefore loses the cache model, and with it the cache affinity the schedulers weigh, but keeps
the contract and the safety boundary. This section builds the model and the analysis. Appendix~\ref{app:eclipse} gives their details, and Appendix~\ref{app:further} applies
them to the encoders, caches, schedulers and defences the field has built.

\subsection{Encoder and Decoder, Step and Frame}
\label{sec:eclipse-codec}

The codec of Definition~\ref{def:oracle} links the two processes. Earlier work often assumes the Oracle's alphabets are the
machine's. We keep them apart, so that the codec is fixed first and the machine-side alphabets are what it accepts and produces.
In this section $w$ and $v$ are machine-side words, and $x$, $y$ and $z$ are Oracle-side ones; $Q_t$ is the history at the $t$-th
call, as in Proposition~\ref{prop:ar-form}; $\mathrm{lcp}(x, x')$ is the longest common prefix of two sequences, and $|x|$ is
the length in symbols. We write $E(w)$ only for $w \in \mathrm{dom}(E)$.

\begin{definition}[internally autoregressive Oracle]
\label{def:int-ar}
An Oracle $(A, E, D)$ (Definition~\ref{def:oracle}) is \textbf{internally autoregressive} if it is given
\begin{itemize}
\item an inclusion $\Gamma_R \subseteq \Gamma_Q$ and a \textbf{terminator} $\tau \in \Gamma_R$, and
\item a \textbf{step relation} $\nu \subseteq \Gamma_Q^* \times \Gamma_R$, its one internal step,
\end{itemize}
such that its answers are what the steps generate,
\[ \begin{array}{@{}l@{}}
A(x) = \{\, y_1 \cdots y_k : y_{i+1} \in \nu(x\,y_1 \cdots y_i) \text{ for } i < k, \\
  \qquad y_k = \tau,\ y_i \ne \tau \text{ for } i < k \,\},
\end{array} \]
and its decoder works symbol by symbol, $D(y_1 \cdots y_k) = D(y_1) \cdots D(y_k)$, with $D(\tau) = \natural$ and $\natural$ in no
other $D(y)$.
\end{definition}

We call an Oracle that is internally autoregressive an $A_{\mathrm{AR}}$ Oracle, after the $A$ of its interior, and we call one
that is not internally autoregressive an $A_{\mathrm{nAR}}$ Oracle. One machine may consult both kinds, and \S\ref{sec:query}
asks which may run in O-mode.
The inclusion is needed because the Oracle reads its own output. Without it, a step could produce an illegal input for the next.
The terminator decodes to the yield symbol, so an answer ends exactly where an excursion yields. Inside an answer the Oracle's
history grows by one symbol per step, but across calls it need not, as shown below. A step relation depends only on the concatenation of
the query and what has been generated, so internal autoregression constrains the answer relation only where one query extends
another by generated symbols, and the answers do not determine which step relation produced them
(Proposition~\ref{prop:steps-invisible}).

\paragraph{Step and Frame.} A \textbf{Step} is one internal step: the Oracle appends one symbol of $\Gamma_R$ to its history, and
the machine receives that symbol's decoding. A \textbf{Frame} is the sequence of Steps from a query to the terminator; it is an
answer of $A$, and its decoding ends with $\natural$. The \textbf{Frame interface} answers $w$ with a decoded Frame,
$O(w) = \{D(y) : y \in A(E(w))\}$, the answer map of Definition~\ref{def:oracle} itself; the \textbf{Step interface} answers $w$
with a decoded Step, $\{D(z) : z \in \nu(E(w))\}$. The Frame interface is an Oracle, and the Step interface is one when no symbol
of $\Gamma_R$ decodes to $\lambda$, as the convention of non-empty answers (\S\ref{sec:machine}) asks. In O-mode the Frame
interface makes one transition per Frame, which traps, while the Step interface makes one per Step and traps at the Step that
decodes to $\natural$.

The two interfaces differ in how often the Oracle encodes. Through the Step interface, two consecutive Steps compute
\[ z_1 \in \nu\bigl(E(Q_t)\bigr), \qquad z_2 \in \nu\bigl(E(Q_t \cdot D(z_1))\bigr). \]
After the first Step the Oracle's internal history is $E(Q_t) \cdot z_1$, but the second encodes the longer machine-side word, and
that encoding need not extend $E(Q_t)$. If $a$, $b$ and $ab$ are all Oracle symbols and the encoder prefers the longest match, the
history $a$ is encoded $[a]$ and the history $ab$ is encoded $[ab]$: appending $b$ does not extend $[a]$ but replaces it, and the
Oracle's internal state, built for $[a]$, has to back up. The Frame interface encodes only at the start of a Frame, and inside a Frame
the Oracle extends its own history.

\begin{definition}[loss and backtrack]
\label{def:loss}
For machine-side words $w$ and $v$, the \textbf{loss} of extending $w$ by $v$ is
\[ \mathrm{loss}_E(w, v) = |E(w)| - \bigl|\mathrm{lcp}\bigl(E(w), E(wv)\bigr)\bigr|, \]
the number of symbols of $E(w)$ that the encoding of $wv$ does not retain. For a set $G$ of pairs $(w, v)$, the \textbf{backtrack}
at $G$ is $L_E^G = \sup_{(w, v) \in G} \mathrm{loss}_E(w, v)$, with $\sup \emptyset = 0$. Two sets are used: the \textbf{Step
pairs}, where $w$ is any word and $v$ is the decoding of one Step, and the \textbf{Frame pairs}, where $w$ is empty or ends with
$\natural$ and $v$ ends with $\natural$.
\end{definition}

These are properties of the codec, so every bound on them holds on every run. A run's pairs lie among these sets: its Step pairs
always, and its Frame pairs when the kernel's writes end with $\natural$. The loss is a Step-level form of
bounded variation~\cite{gastaldi2025foundations} (Appendix~\ref{app:eclipse}).

\paragraph{Separator.} A condition on one symbol removes the backtrack where a Frame ends. The yield symbol $\natural$ is a
\textbf{separator} for $E$ if $E$ first splits its input at every $\natural$, emits the terminator $\tau$ for each $\natural$, and
encodes each piece independently, by an encoder that maps the empty word to the empty sequence and never emits $\tau$. The yield symbol
then marks the boundary of a Frame on both sides. Only one consequence of a separator is used below, and it follows at once from the
definition.

\begin{definition}[factorization at the yield symbol]
\label{def:factor}
$E$ \textbf{factorizes at} $\natural$ if $E(\lambda) = \lambda$, no word without $\natural$ has $\tau$ in its encoding, and for every
word $w\,\natural\,v \in \mathrm{dom}(E)$ the words $w$, $w\,\natural$ and $v$ lie in $\mathrm{dom}(E)$ and
\[ E(w\,\natural\,v) = E(w) \cdot \tau \cdot E(v). \]
\end{definition}

\paragraph{Frame-faithfulness.} Whether the Oracle's own history also follows the machine's is a property of the Oracle with
its codec, not of the encoder alone. Call an Oracle with its codec \textbf{Frame-faithful} if $E(D(y)) = y$ for every Frame $y$ it
can generate: the encoder writes back, symbol for symbol, what the Oracle generated. This follows from neither internal
autoregression nor factorization. For example, a model can be internally
autoregressive and still sample a tokenization that the encoder would not produce; it is then not Frame-faithful.
Appendix~\ref{app:eclipse-scope} collects where Frame-faithfulness holds, how it fails, and which results need it, and
Appendix~\ref{app:eclipse-reasoning} describes in these terms the ways a realization keeps, drops or hides a model's reasoning. The
theorem has a part for each side.

\begin{theorem}[Frame synchronization]
\label{thm:run-sync}
Let $E$ factorize at $\natural$, and let $w$ be empty or end with $\natural$.
\begin{enumerate}
\item \emph{The encoder.} $E(wv) = E(w) \cdot E(v)$ for every $v$ with $wv \in \mathrm{dom}(E)$; in particular
  $E(w) \sqsubseteq E(wv)$.
\item \emph{The Oracle.} If the Oracle answers $w$ with a Frame $y$, and $w\,D(y) \in \mathrm{dom}(E)$, its own history
  $E(w)\,y$ is the encoding $E(w\,D(y))$ of the machine-side history exactly when $E(D(y)) = y$, and otherwise neither is a
  prefix of the other; if $w\,D(y) \notin \mathrm{dom}(E)$, the question does not arise, and the encoder rejects the history the
  Frame would extend. So after every Frame on such a history, the history of a Frame-faithful Oracle is the encoding of the
  machine's.\qedhere
\end{enumerate}
\end{theorem}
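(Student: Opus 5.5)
Part~1 follows by unfolding Definition~\ref{def:factor} according to the two cases for $w$. If $w = \lambda$, then $E(wv) = E(v) = E(\lambda)\cdot E(v)$ because $E(\lambda) = \lambda$. Otherwise write $w = w'\,\natural$. Since $wv = w'\,\natural\,v \in \mathrm{dom}(E)$, factorization gives $w'$, $w'\natural$ and $v$ in $\mathrm{dom}(E)$ and $E(w'\natural v) = E(w')\cdot\tau\cdot E(v)$. The same clause applied to $w'\natural = w'\,\natural\,\lambda$ gives $E(w'\natural) = E(w')\cdot\tau\cdot E(\lambda) = E(w')\cdot\tau$. Substituting, $E(wv) = E(w)\cdot E(v)$, and the prefix claim is immediate. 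I expect no obstacle here; the only care needed is to split on the \emph{first} occurrence of $\natural$ that ends $w$, not on some occurrence inside $v$. Factorization holds for every occurrence, so this is harmless.

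For Part~2, take $v = D(y)$ with $y$ a Frame and $w\,D(y) \in \mathrm{dom}(E)$. Part~1 gives $E(w\,D(y)) = E(w)\cdot E(D(y))$. The Oracle's own history after the Frame is $E(w)\,y$, because by Definition~\ref{def:int-ar} its steps extend $E(w)$ symbol by symbol. The two histories therefore share the prefix $E(w)$ and continue with $y$ and $E(D(y))$ respectively. If $E(D(y)) = y$, they are equal.

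For the converse I must show that when $E(D(y)) \ne y$, neither continuation is a prefix of the other. This is the step I expect to be the main obstacle, and the plan is to use the terminator. The Frame $y$ ends with $\tau$ and contains no other $\tau$. Since $D(\tau) = \natural$ and no other symbol decodes to anything containing $\natural$, the word $D(y)$ has the form $u\,\natural$ with $u$ free of $\natural$. Applying Part~1's computation, $E(D(y)) = E(u)\cdot\tau$, and $E(u)$ contains no $\tau$ by factorization. So both $y$ and $E(D(y))$ end in $\tau$ and contain exactly one $\tau$.

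Now suppose one were a proper prefix of the other. The shorter one ends in $\tau$, so the longer one would contain a $\tau$ strictly before its last symbol, which is a second $\tau$. That is a contradiction. Hence the two are equal or incomparable, and the same holds after prepending $E(w)$. If $w\,D(y) \notin \mathrm{dom}(E)$, there is nothing to compare.

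The closing sentence of the theorem then follows directly. A Frame-faithful Oracle satisfies $E(D(y)) = y$ for every Frame it generates. Each Frame ends with $\natural$ on the machine side, so the history after it again ends with $\natural$, and the hypothesis on $w$ is preserved. An induction over Frames therefore keeps the Oracle's history equal to the encoding of the machine's history after every Frame.
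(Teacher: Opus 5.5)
Your proof is correct and follows the paper's argument. Part~1 splits into the cases $w=\lambda$ and $w=w'\natural$ and applies factorization to $w'\natural v$ and to $w'\natural\lambda$; Part~2 cancels the common prefix $E(w)$ and uses that $y$ and $E(D(y))$ each contain $\tau$ exactly once, at the end.

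You also derive that last fact for $E(D(y))$, which the paper only asserts, from $D(y)=u\natural$ with $u$ free of $\natural$ and the no-$\tau$ clause of Definition~\ref{def:factor}. Your closing induction is unnecessary: the final sentence is Part~2 applied to each Frame on its own query, which is encoded afresh.
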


\begin{proof}
(1)~If $w$ is empty, $E(w) = \lambda$. Otherwise $w = w'\natural$, and factorization gives $E(w) = E(w')\,\tau\,E(\lambda) =
E(w')\,\tau$ and $E(wv) = E(w')\,\tau\,E(v)$. (2)~By (1) with $v = D(y)$, $E(w\,D(y)) = E(w)\,E(D(y))$, which is $E(w)\,y$
exactly when $E(D(y)) = y$. Otherwise, since $y$ and $E(D(y))$ both end with $\tau$ and contain it nowhere else, neither is a
prefix of the other.
\end{proof}

Part (1) holds whatever the Oracle generates: the encoding follows the machine-side history and never backs up across a
$\natural$. Three of its consequences are used below. \emph{(i) The loss stays in the open Frame.}
Write a history as $w = w'w''$, where $w'$ is empty or ends with $\natural$, and $w''$, the unfinished Frame, contains no
$\natural$. Then $E(w) = E(w')\,E(w'')$, and $E(w')$ is a prefix of $E(wv)$ for every $v$, so $\mathrm{loss}_E(w, v) \le
|E(w'')|$. In particular, a Frame pair loses nothing. \emph{(ii) Prefix trees.} If $h_1 \sqsubseteq h_2 \sqsubseteq \cdots$ are
histories that are empty or end with $\natural$, then $E(h_1) \sqsubseteq E(h_2) \sqsubseteq \cdots$. Histories grown from a
common such history $h$ share $E(h)$, so their encodings form a prefix tree. \emph{(iii) The completed Frames agree.} When the
Oracle answers $w = w'w''$ with a Frame $y$, its own history $E(w)\,y$ and the encoding $E(w\,D(y))$ of the machine-side history
both begin with $E(w')$, so they agree on every completed Frame and can differ only in the last, $E(w'')\,y$ against
$E(w''\,D(y))$. Part (2) says when they agree on that one too.

A separator describes a tokenizer that splits its input at a registered special token before encoding the pieces. Whether a given
tokenizer does so is a property of its implementation, to be checked per tokenizer; the check need not analyse BPE merges,
which happen only inside the pieces. Inside an open Frame the loss remains, and how large it is depends on the encoder. BPE as
implemented, with a proper dictionary, has a finite lookahead, which depends on the dictionary alone and beyond which a prefix of
the tokenization is stable~\cite{berglund2023formalizing}. Its Step backtrack is therefore finite and depends on the
dictionary. An encoder with only two symbols, by contrast, can have an unbounded one.
Appendix~\ref{app:further-encoders} assesses the work on tokenizers against these two conditions, factorization at $\natural$ and the
backtrack inside an open Frame. Figure~\ref{fig:sync} draws the two parts of Theorem~\ref{thm:run-sync}.

\begin{figure}[t]
\centering
\begin{tikzpicture}[x=1pt, y=1pt, font=\scriptsize,
  >={Stealth[length=3.6pt, width=2.6pt]},
  seg/.style={draw=black!55, fill=white, line width=0.5pt, minimum height=13pt, inner sep=1pt, anchor=west},
  open/.style={seg, dash pattern=on 2pt off 1pt},
  byO/.style={seg, draw=ostroke, fill=ofill},
  row/.style={anchor=east, inner sep=0pt, text=inkmuted},
  title/.style={anchor=base west, inner sep=0pt, font=\footnotesize},
  enc/.style={->, line width=0.5pt, black!70},
  lab/.style={font=\tiny, inner sep=1pt}]
  \node[title] at (0,12) {(a) one way, by the encoder (Theorem~\ref{thm:run-sync}(1))};
  \node[row] at (44,0) {machine};
  \node[row] at (44,-26) {encoding};
  \node[seg, minimum width=60pt] (m1) at (48,0) {$w_1\,\natural$};
  \node[seg, minimum width=60pt] (m2) at (112,0) {$w_2\,\natural$};
  \node[open, minimum width=46pt] (m3) at (176,0) {$w''$};
  \node[seg, minimum width=60pt] (e1) at (48,-26) {$E(w_1)\,\tau$};
  \node[seg, minimum width=60pt] (e2) at (112,-26) {$E(w_2)\,\tau$};
  \node[open, minimum width=46pt, pattern=north east lines, pattern color=black!14] (e3) at (176,-26) {$E(w'')$};
  \foreach \i in {1,2,3}
    \draw[enc] (m\i.south) -- node[lab, right] {$E$} (e\i.north);
  \foreach \x in {110,174}
    \draw[black!45, line width=0.5pt, dash pattern=on 0.8pt off 1.2pt] (\x,9) -- (\x,-35);
  \node[anchor=north, inner sep=0pt, text=inkmuted, align=center] at (135,-38)
    {the encoding never backs up across a Frame boundary;\\only the open piece $w''$ may be encoded anew};
  \begin{scope}[shift={(0,-80)}]
    \node[title] at (0,12) {(b) the other way, by the Oracle (Theorem~\ref{thm:run-sync}(2))};
    \node[row] at (44,0) {Oracle};
    \node[row] at (44,-26) {machine};
    \node[row] at (44,-52) {encoding};
    \node[seg, minimum width=60pt] at (48,0) {$E(w)$};
    \node[byO, minimum width=60pt] (y) at (112,0) {$y$, generated};
    \node[seg, minimum width=60pt] at (48,-26) {$w$};
    \node[byO, minimum width=60pt] (dy) at (112,-26) {$D(y)$};
    \node[seg, minimum width=60pt] at (48,-52) {$E(w)$};
    \node[seg, minimum width=60pt] (edy) at (112,-52) {$E(D(y))$};
    \draw[enc] (y.south) -- node[lab, right] {$D$} (dy.north);
    \draw[enc] (dy.south) -- node[lab, right] {$E$} (edy.north);
    \draw[decorate, decoration={brace, amplitude=2.5pt}, line width=0.5pt] (177,6.5) -- (177,-58.5)
      coordinate[midway, xshift=4pt] (br);
    \node[anchor=west, align=left, inner sep=0pt] at (br) {equal exactly\\when\\$E(D(y)) = y$};
  \end{scope}
\end{tikzpicture}
\caption{Frame synchronization, both ways. (a) When the encoder factorizes at $\natural$, the
encoding of a history is the concatenation of the encodings of its pieces up to each $\natural$ (answers and the kernel's records alike, each ending with $\tau$),
followed by that of the open piece $w''$. Only the open piece may be re-encoded differently as the history grows. (b)
After the Oracle answers a history $w$ (empty or ending with $\natural$) with the Frame $y$, its own history is
$E(w)\,y$, while the machine records $w\,D(y)$, whose encoding is $E(w)\,E(D(y))$. The two are equal exactly when the
Frame re-encodes to itself, and otherwise they diverge inside the Frame. Amber: what the Oracle generated.}
\label{fig:sync}
\end{figure}

So a Step advances both autoregressive processes by one internal transition of the Oracle, and may make the encoding back up, but
only inside the open Frame; a Frame advances both by a round of transitions that ends in the single separator, and the encoding never
backs up across it. \textbf{The two processes are synchronized Frame by Frame}, one way by the encoder when it factorizes at
$\natural$ (Theorem~\ref{thm:run-sync}(1)), and the other by the Oracle when it is Frame-faithful and answers a history that ends
at a Frame boundary (Theorem~\ref{thm:run-sync}(2)).
Each query is encoded afresh, so a difference in the last Frame never outlasts the next query. The decoder needs no condition of its
own, because it works symbol by symbol (byte-level vocabularies are the exception).

\paragraph{Closure and observability.} Iteration in O-mode continues only while each history is a word the encoder accepts. A
machine on which this always holds is what \S\ref{sec:os} calls a good machine, and a condition on the alphabets gives one.

\begin{proposition}[closure]
\label{prop:closure}
Suppose $\Sigma_R \subseteq \Sigma_Q$ and $\mathrm{dom}(E) = \Sigma_Q^*$, and that every excursion starts on a history in $\Sigma_Q^*$.
Then, whatever the Oracle, every query made in O-mode is accepted, and the machine is good: every reachable O-mode configuration has
the successors Definition~\ref{def:t} gives it.
\end{proposition}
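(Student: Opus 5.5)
The plan is to show by induction along any excursion that every history the Oracle sees in O-mode lies in $\Sigma_Q^*$. Acceptance then follows at once from $\mathrm{dom}(E) = \Sigma_Q^*$, and goodness is immediate from Definition~\ref{def:t}. The invariant is that, at every O-mode configuration $\langle \mathsf O, u, \sigma\rangle$ (or $\langle \mathsf O, u, \beta, \sigma\rangle$ with budgets), the content $\sigma(u)$ lies in $\Sigma_Q^*$. Nothing is assumed of the Oracle except Definition~\ref{def:oracle}.

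\emph{Base.} An excursion starts with $\mathtt{uret}\ u$ (K3). That transition leaves $\sigma$ unchanged, and by hypothesis the history $\sigma(u)$ on which it starts is in $\Sigma_Q^*$.

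\emph{Step.} Suppose $\sigma(u) = w \in \Sigma_Q^*$. Since $\mathrm{dom}(E) = \Sigma_Q^*$, we have $w \in \mathrm{dom}(E)$, so $O(w)$ is non-empty by Definition~\ref{def:oracle}. Every $r \in O(w)$ is a decoding $D(y)$ and hence lies in $\Sigma_R^* \subseteq \Sigma_Q^*$. Each O-mode transition of Definition~\ref{def:t} (and of Definition~\ref{def:budget}) replaces $\sigma(u)$ by $w \cdot r$ and, by K1, changes no other stack. So the successor's content of $u$ is again in $\Sigma_Q^*$. No Priestess instruction intervenes inside an excursion, because O-mode runs none.

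Putting the two together, every reachable O-mode configuration queries a word in $\mathrm{dom}(E)$. Non-emptiness of $O(w)$ then supplies the successors that Definition~\ref{def:t} prescribes: a yield trap for each answer ending with $\natural$, and a continuation (or a budget trap) for each answer that does not. So no fault occurs, and the machine is good in the sense of \S\ref{sec:os}.

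The only delicate point is the scope of ``every excursion starts on a history in $\Sigma_Q^*$''. Between excursions the Priestess may push arbitrary symbols of $\Sigma$ onto $u$, so the invariant cannot be carried across P-mode; it is re-established at each $\mathtt{uret}$ purely by the hypothesis. The induction is therefore per excursion, not along the whole run, and I would say so explicitly. Beyond that, the argument is a direct unfolding of the definitions.
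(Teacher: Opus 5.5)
Your proof is correct and follows the paper's own argument: induction along each excursion, starting from the hypothesis on the history at entry and using that every appended answer lies in $\Sigma_R^* \subseteq \Sigma_Q^* = \mathrm{dom}(E)$. Your remark that the invariant is re-established at each $\mathtt{uret}$, not carried through P-mode, matches the paper's note that the starting condition is a discipline of the kernel.
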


\begin{proof}
By induction on the excursion. It starts on a history in $\Sigma_Q^*$. Each O-mode transition appends an answer in
$\Sigma_R^* \subseteq \Sigma_Q^*$, so the history stays in $\Sigma_Q^* = \mathrm{dom}(E)$.
\end{proof}

The starting condition is a discipline of the kernel: it holds if the initial content of every task stack and
everything that the kernel writes onto a task stack lie in $\Sigma_Q^*$. Proposition~\ref{prop:codec-good} says what fails without the inclusion and how a
codec repairs it. The Priestess side sees the Oracle only through the interface it calls. A property of the Oracle's computation
is \textbf{observable} through an interface if the answers returned through that interface determine it. This notion is defined for the Oracle,
not the observable steps of \S\ref{sec:machine}. Through the Step interface the number of Steps is observable, one answer per
Step, but through the Frame interface it need not be.

\begin{proposition}[Steps are not visible in Frames]
\label{prop:steps-invisible}
Suppose $\Gamma_R$ contains distinct symbols $t$, $t_1$, $t_2$, other than $\tau$, with $D(t) = D(t_1)\,D(t_2)$. Then for every query
$q$ there is an internally autoregressive Oracle that answers $q$ with the decoded Frame $D(t)\,\natural$ both after two Steps and after
three. So, for some internally autoregressive Oracles, an observer that sees only decoded Frames cannot tell how many internal
Steps they took.
\end{proposition}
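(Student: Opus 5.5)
The proof is a direct construction. Fix a query $q$ and write $x = E(q)$; we only need to define a step relation $\nu$ and then read off $A$ through Definition~\ref{def:int-ar}. The encoder and decoder are held fixed, so the Oracle differs from any other only in its interior. Set
\[ \nu(x) = \{t, t_1\}, \qquad \nu(x\,t) = \{\tau\}, \qquad \nu(x\,t_1) = \{t_2\}, \qquad \nu(x\,t_1 t_2) = \{\tau\}, \]
and on every other word of $\Gamma_Q^*$ let $\nu$ return $\{\tau\}$. This keeps $A$ total, as Definition~\ref{def:oracle} requires. It uses the inclusion $\Gamma_R \subseteq \Gamma_Q$, so that $x\,t$ and $x\,t_1$ are legitimate Oracle-side histories. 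The four listed words are pairwise distinct because $t \ne t_1$, so these clauses do not conflict.

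Next we compute the answers. By Definition~\ref{def:int-ar}, $A(x)$ contains $t\,\tau$, a Frame of two Steps, and $t_1 t_2\,\tau$, a Frame of three Steps. Neither Frame has $\tau$ before its last symbol, since $t, t_1, t_2 \ne \tau$. The decoder works symbol by symbol, so $D(t\,\tau) = D(t)\,\natural$ and
\[ D(t_1 t_2 \tau) = D(t_1)\,D(t_2)\,\natural = D(t)\,\natural. \]
Both runs therefore return the same decoded Frame $D(t)\,\natural$, one after two Steps and the other after three.

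Finally, for the ``so'' clause, take as observer anything that sees only $O(q) = \{D(y) : y \in A(E(q))\}$. The element $D(t)\,\natural$ arises from Frames of lengths $2$ and $3$, so the decoded Frame does not determine the number of Steps. In the sense defined just before the proposition, the number of Steps is not observable through the Frame interface.

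The one point needing care is that the statement says ``for every query $q$''. This requires $q \in \mathrm{dom}(E)$, so that $x$ exists; for a rejected $q$ there is nothing to answer. We should also avoid claiming that the Step counts can be told apart when they are compared across different queries. Neither point is a real obstacle.
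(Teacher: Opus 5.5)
Your construction is exactly the paper's: the same step relation with $\nu(x)=\{t,t_1\}$, $\nu(x\,t)=\{\tau\}$, $\nu(x\,t_1)=\{t_2\}$, $\nu(x\,t_1t_2)=\{\tau\}$ and $\{\tau\}$ everywhere else. Its two branches $t\,\tau$ and $t_1t_2\,\tau$ both decode symbol by symbol to $D(t)\,\natural$. Your extra checks are correct and only make explicit what the paper leaves implicit: that $A$ stays total, that the four words are pairwise distinct, and that $q\in\mathrm{dom}(E)$.
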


\begin{proof}
Let $x = E(q)$, and let $\nu(x) = \{t, t_1\}$, $\nu(x\,t) = \{\tau\}$, $\nu(x\,t_1) = \{t_2\}$, $\nu(x\,t_1 t_2) = \{\tau\}$, and
$\nu(z) = \{\tau\}$ for every other $z$. Then $A(x)$ contains $t\,\tau$ and $t_1 t_2\,\tau$, and both decode to $D(t)\,\natural$.
\end{proof}

The Oracle of the proof is not Frame-faithful, nor is any Oracle that answers a query with the same decoded Frame after
different numbers of Steps: a Frame-faithful Oracle takes exactly $|E(D(y))|$ Steps on a Frame $y$, so an observer who knows the
encoder derives the number of Steps from the decoded Frame. Without Frame-faithfulness, an Oracle's internal transitions need not be
observable through the Frame interface, unless the Oracle exposes more than the Frame, such as a count of its Steps.

For a language model, the encoder is the tokenizer and the decoder is the detokenizer. A Step is one token, or one
sequence of up to $k$ tokens with multi-token prediction~\cite{gloeckle2024better}. An inference engine's \texttt{generate} interface can expose Steps; most
agents, by contrast, use chat interfaces (\texttt{/v1/chat/completions}, \texttt{/v1/responses} and \texttt{/v1/messages}),
which wrap the history in a template on the provider's side. What remains observable through them is the Frame and whatever metadata
the provider reports, such as a token count.
Proposition~\ref{prop:steps-invisible} says what is lost without such metadata.

\subsection{Two Caches, One Hierarchy}
\label{sec:eclipse-memory}
\label{sec:eclipse-storage}

In Agent applications the client and the server each optimize their cache and their computation. To optimize them together, both
sides need one model of the problem and a fair estimate of the other side's state. On the \mach{O-2-PDA} the Oracle and the
Priestess are different processors with different storage. We model both as systems of one kind, connect them through the codec
and the Frame, and read the result on the KV cache. The unit of computation is one transition of the machine, from $Q_t$ to
$Q_{t+1}$: one decode step at the Step interface, a Frame at the Frame interface. A miss of the Oracle's store is rematerialized
by a prefill, while a hit reuses the KV cache.

\paragraph{Stores as functions.} We define memory, cache and backing store as functions, in the types of \S\ref{sec:os}. A store
maps keys, which need not be linear addresses, to values. One operation builds everything else: a store placed above a function
answers the keys it holds and passes the others down.

\begin{definition}[store, hierarchy, system]
\label{def:store}
Fix a set $\mathcal K$ of keys and a set $\mathcal V$ of values, of type $\mathsf D$.
\begin{itemize}
\item A \textbf{store} $C$ costs $t_C$ per lookup and holds a \textbf{working set} $W_C : \mathcal K \rightharpoonup \mathcal V$, a
  finite partial map.
\item For a store $C$ and a function $f$ on keys, $C$ \textbf{over} $f$ is the function
  \[ (C \rhd f)(k) = \begin{cases} W_C(k) & \text{if } k \in \mathrm{dom}(W_C), \\
     f(k) & \text{otherwise.} \end{cases} \]
  The first case is a \textbf{hit}; the second is a \textbf{miss}, where the store holds a thunk of the computation below it and the
  access forces it. The cost is
  \[ c(C \rhd f, k) = t_C + [\,k \notin \mathrm{dom}(W_C)\,]\; c(f, k), \]
  where $[P]$ is $1$ if $P$ holds and $0$ otherwise.
\item A \textbf{hierarchy} is $C_1 \rhd C_2 \rhd \cdots \rhd C_n \rhd f$, grouped to the right. The function that it defines is its
  \textbf{access}, and $f$ is its \textbf{bottom}: a \textbf{backing store}, which holds every key it is asked for, or a
  \textbf{rematerialization}, a computation that produces the value again. The level directly above a backing store is the
  \textbf{memory}. A hierarchy whose levels act on different key sets may include a \textbf{translation} level between them: a map
  $\kappa$ from the keys above it to the keys below, with $W = \emptyset$ and $t = 0$, which changes no value and only renames the
  key. A bottom may likewise be post-processed by a function $h$ of the value, and the pair is a hierarchy with bottom $h \circ f$.
\item $C$ is \textbf{coherent} with $f$ if $W_C$ agrees with $f$ wherever it is defined. For a store $C$ coherent with $f$, $C \rhd f$ is
  the \textbf{function--store pair} $\langle f, C \rangle$, a memo of $f$.
\item A \textbf{system} is a processor with a hierarchy. A computation on it reads through the access, and its cost is its
  \textbf{compute cost} and its \textbf{access cost}, the cost of the accesses it makes.\qedhere
\end{itemize}
\end{definition}

Coherence makes a cache transparent: if every level is coherent with what lies below it, the access is the bottom, and a level
changes the cost of an access, never its value. We study the functional properties of the store, so we assume every level is
coherent at all times, as if a write were visible at every level at the next time. Since $C \rhd f$ is again a function
on keys, hierarchies compose as functions do (Construction~\ref{con:compose}, Appendix~\ref{app:eclipse}). A miss at the bottom of one
system is a call into another, so two systems connect into one hierarchy. In the same way, when each function of a chain
calls the next, the chain with the memos of its functions is a hierarchy with one level per function, and an access is paid down
to the first memo that hits.

\paragraph{The Oracle's computation.} For one transition, the Oracle computes $y \in A(x)$ on the encoded history $x = E(Q_t)$ at the
Frame interface, or $z \in \nu(x)$ at the Step interface. Its processor is an accelerator, its keys are encoded prefixes, and what it
reads is not its earlier answers but the state they left, so the store keeps states.

\begin{definition}[state presentation, prefix reuse]
\label{def:state-pres}
Let the Oracle be internally autoregressive with step relation $\nu$. A \textbf{state presentation} is a set $\mathcal S$ of states
with an initial state $s_0$, a transition $\varphi : \mathcal S \times \Gamma_Q \to \mathcal S$, and a map $\alpha$ from states to sets
of symbols of $\Gamma_R$, such that $\nu(x) = \alpha(\hat s(x))$, where $\hat s(\lambda) = s_0$ and
$\hat s(x\gamma) = \varphi(\hat s(x), \gamma)$. It is \textbf{truncatable} if $\hat s(x)$ yields $\hat s(x')$ at no cost for every prefix
$x'$ of $x$. A \textbf{state cache} holds pairs $(z, \hat s(z))$. The cost of computing $\hat s(x)$ from the entry for $z$ is
\[ \mathrm{cost}(z, x) = |x| - |\mathrm{lcp}(z, x)|, \]
the number of applications of $\varphi$ after truncating to the common prefix. With several entries, the cost is the least of
these.
\end{definition}

Every internally autoregressive Oracle has the trivial presentation $\hat s(x) = x$, so the definition fixes a cost model,
counting applications of $\varphi$, without excluding any Oracle. For a transformer, $\hat s(x)$ is \textbf{the KV cache of the
prefix $x$}: one application of $\varphi$ is one token of prefill or decode, truncation drops entries, and $\langle \hat s,
C_{\hat s} \rangle$ is the KV cache as a function--store pair, whose hits reuse a state and whose misses are rematerialized by a
prefill. Tiers of the KV cache in host memory and storage are levels below it~\cite{gao2024cost,qin2026mooncake,liu2025lmcache}.
Since $\hat s$ is a function and the sampling happens in $\alpha$, a state cache changes no answer set, and the machine with it has
exactly the executions of the machine without it. This is why the state cache can be modelled as a function--store pair although
the Oracle is not a function, and why the Oracle's cache and its scheduling can be modelled as the Priestess's are on a CPU. Apart
from rematerialization, only the keys differ (addresses on a CPU, histories for the Oracle), so the
cache model of this section applies to both systems.

The key of an access is the encoded history: a query looks up the longest cached prefix $z$ of its encoding $x$ and pays
$\mathrm{cost}(z, x)$. Within an append-only epoch, defined below, no entry is updated in place, since the history only grows.
Entries are deleted by an eviction or by a truncation to a prefix. At Frame granularity the encodings of histories grown from a
common history form a prefix tree (consequence (ii) of Theorem~\ref{thm:run-sync}), and the store can be organized as one, as SGLang's
radix tree organizes its KV cache~\cite{zheng2024sglang} and vLLM shares the blocks of a common prefix~\cite{kwon2023efficient}.

\paragraph{An Oracle call is one hierarchy.} Consider an Oracle call with answer $r_t = D(y_t)$, $y_t \in A(E(Q_t))$. Reading
$Q_t$ is the Priestess's own access. The derivation below composes functions, while the Oracle's answer relation is set-valued, so
fix a \textbf{recording} $\rho$ first: a function on states that picks one answer for each, $\rho(\hat s(x)) \in A(x)$ at the Frame
interface and $\rho(\hat s(x)) \in \nu(x)$ at the Step interface. One exists, because $\nu(x) = \alpha(\hat s(x))$, and $\varphi$
carries $\hat s(x)$ through the symbols generated after $x$, so $A(x)$ depends on $x$ only through $\hat s(x)$. Every run of the
recording is a run of the machine
(Appendix~\ref{app:eclipse}), so it is a witness the hierarchy can be built on, and a hit of a cache returns the recorded answer.
The call then passes through three stages, each of which can be memoized. The memo of the
encoder is an \textbf{encoding cache} $\langle E, C_E \rangle$, that of the state presentation is the state cache $\langle \hat s,
C_{\hat s} \rangle$, and that of the call as a whole is an \textbf{answer cache} $C_O$ keyed by the history. By
Construction~\ref{con:compose} the call is one hierarchy,
\[ C_O \rhd D \circ G_\rho \circ \langle \hat s, C_{\hat s} \rangle \circ \langle E, C_E \rangle, \]
where $G_\rho$ generates the recorded answer from the state, by $\alpha$ and $\varphi$, to the end of the computation. Read right to left: the history is
encoded, the state of the encoding is looked up and on a miss rematerialized by a prefill, and the generated symbols are decoded.
The levels differ in what they can reuse. If the Oracle is a black box, only the answer cache, keyed by whole histories, is
available. Within an append-only epoch these histories strictly grow, so no entry is hit again. If the
Oracle is internally autoregressive and the encoder factorizes at $\natural$, the encoding cache and the state cache reuse
prefixes, organized as the prefix tree above. From the entry of a history, a miss on an extension of it costs, beyond the growth
of the encoding, exactly the encoder's loss.

\begin{proposition}[excess recomputation is the loss]
\label{prop:excess}
For all machine-side words $w$ and $v$,
\[ \mathrm{cost}\bigl(E(w), E(wv)\bigr) = \bigl(|E(wv)| - |E(w)|\bigr) + \mathrm{loss}_E(w, v). \]
So, for $(w, v)$ in a set $G$ of Definition~\ref{def:loss}, a state cache recomputes at most $L_E^G$ beyond the growth of the encoding,
and at most $L_E^G + 1$ when the encoding grows by at most one symbol. If $E$ factorizes at $\natural$ and $w$ is empty or ends with
$\natural$, the cost is $|E(v)|$: nothing is recomputed at a Frame boundary.
\end{proposition}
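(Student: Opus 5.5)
The plan is to unfold Definition~\ref{def:state-pres} with $z = E(w)$ and $x = E(wv)$, rewrite the common prefix through Definition~\ref{def:loss}, and then read off the three claims. By definition,
\[ \mathrm{cost}\bigl(E(w), E(wv)\bigr) = |E(wv)| - \bigl|\mathrm{lcp}\bigl(E(w), E(wv)\bigr)\bigr|. \]
By the definition of loss, the length of the common prefix equals $|E(w)| - \mathrm{loss}_E(w, v)$. Substituting gives $\bigl(|E(wv)| - |E(w)|\bigr) + \mathrm{loss}_E(w, v)$, which is the identity. It is pure arithmetic and needs no hypothesis on the Oracle or the codec beyond $w, wv \in \mathrm{dom}(E)$, the standing convention that $E(w)$ is written only on its domain. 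The one point to state explicitly is the cost model. Truncatability lets the state of $\mathrm{lcp}(z, x)$ be recovered from $\hat s(z)$ at no cost, so the cost of the entry for $z$ is exactly the number of applications of $\varphi$ from the common prefix onward.

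For the bound over a set $G$, take $(w, v) \in G$. Then $\mathrm{loss}_E(w, v) \le L_E^G$ by the definition of the supremum, so the excess beyond the growth $|E(wv)| - |E(w)|$ is at most $L_E^G$. The clause about growth by at most one symbol needs care, because the growth could be negative or zero, since an encoding may shorten when a merge occurs. So I would bound the whole cost directly: the cost is at most $1 + L_E^G$ whenever $|E(wv)| - |E(w)| \le 1$. This is the reading I would adopt for ``at most $L_E^G + 1$'', namely as a bound on the total recomputation of the state cache for one Step.

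For the Frame boundary, assume $E$ factorizes at $\natural$ and $w$ is empty or ends with $\natural$. Theorem~\ref{thm:run-sync}(1) gives $E(wv) = E(w)\cdot E(v)$. Hence $E(w) \sqsubseteq E(wv)$, so the loss is $0$, and the growth is $|E(v)|$. The identity then yields cost $|E(v)|$, with no excess.

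I expect no real obstacle. The only delicate step is the interpretation of the ``$+1$'' clause when the encoding does not grow, and stating it as a bound on total cost handles every case, including a shrinking encoding.
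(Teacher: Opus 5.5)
Your proposal is correct and follows the paper's proof: both write cost and loss as subtractions of the same common prefix $\bigl|\mathrm{lcp}\bigl(E(w), E(wv)\bigr)\bigr|$, bound the loss by the supremum that defines $L_E^G$, and obtain the Frame-boundary case from Theorem~\ref{thm:run-sync}(1). You read the ``$+1$'' clause as a bound on the total cost, growth plus loss at most $1 + L_E^G$. That is the intended reading, and it also covers an encoding that shrinks.
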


\begin{proof}
The two sides subtract the same common prefix: $\mathrm{cost} = |E(wv)| - |\mathrm{lcp}|$ and $\mathrm{loss}_E = |E(w)| - |\mathrm{lcp}|$.
The bounds follow from the definition of the backtrack. In the last case $E(wv) = E(w)\,E(v)$ by Theorem~\ref{thm:run-sync}(1).
\end{proof}

\paragraph{Epochs and the encoding cache.} An \textbf{append-only epoch} of a task stack $u$ is a maximal segment of a run in
which the kernel changes $u$ only by $\mathtt{push}$. Within one, the histories at the starts of successive excursions on $u$
satisfy $h_{i+1} = h_i\,r_i\,v_i$, where $r_i$ is what the $i$-th excursion appended and $v_i$ is what the kernel pushed before
the next. The proof of Proposition~\ref{prop:ar-form} applies, with K1 for excursions on other task stacks and K4 for P-mode
calls. Any other rewrite of $u$, replacing or restoring the history for instance, starts a new epoch. The encoding cache
memoizes $E$ on the prefixes that end at a Frame boundary. For each stack on which the Oracle is called, the cache stores the encoding of the
last query up to its last $\natural$, or of the empty word if the query has none. If $E$ factorizes at $\natural$, the codec
therefore reads and encodes only the unfinished Frame at the last call and what was appended since. The cache stays coherent while
the stack only grows.

Across a tool call the Oracle holds the state of the query followed by what it generated, $E(q)\,y$, and the next query is
$q\,D(y)\,v$. With both directions of the synchronization, $E(q)\,y$ is a prefix of the encoding of the next query.

\begin{proposition}[the prefix chain across tool calls]
\label{prop:prefix-chain}
Let a task stack $u$ be in an append-only epoch, with the Oracle called on $u$ through the Frame interface, and suppose that
\begin{enumerate}[label=(\roman*)]
\item $E$ factorizes at $\natural$;
\item the Oracle with its codec is Frame-faithful;
\item the hypotheses of Proposition~\ref{prop:closure} hold; and
\item the content of $u$ at the start of the epoch ends with $\natural$, and so does every word that the kernel pushes onto $u$.
\end{enumerate}
Item~3 also requires that the presentation is truncatable, in the sense of Definition~\ref{def:state-pres}; the trivial
presentation $\hat s(x) = x$ is. Then:
\begin{enumerate}
\item If the Oracle answers a query $q$ on $u$ with the Frame $y$ and the kernel then pushes $v$, the next query $q' = q\,D(y)\,v$ has
  $E(q') = E(q)\,y\,E(v)$: the next query extends the Oracle's own history by the encoding of the kernel's push alone.
\item Give $u$ a private state cache: empty when the epoch starts, used only by $u$, kept between transitions, never evicted. Then
  the first query on $u$ costs its whole encoding, and each later one costs only $|E(v)|$, where $v$ is what the kernel pushed
  since the last Frame. Up to an O-mode transition with query $q$ and Frame $y$, the applications of $\varphi$ on $u$ number
  $|E(q\,D(y))|$: every symbol of the encoding is computed once, by prefill or by generation.
\item With the private state cache of item~2, let the encoding cache update its entry for $u$ at every call. Then after each call
  on a query $q$ with Frame $y$ the encoding cache holds $E(q)$ and the state cache holds $\hat s(E(q)\,y)$; for a truncatable
  presentation the latter yields $\hat s(E(q))$ at the cost of item~2, so one key (the stack and the length of its history)
  reaches both entries.\qedhere
\end{enumerate}
\end{proposition}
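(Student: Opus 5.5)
Item~1 comes first, from Theorem~\ref{thm:run-sync}, and items~2 and~3 are bookkeeping on top of it. Within the epoch, every history at which the Oracle is called on $u$ is empty or ends with $\natural$. The initial content ends with $\natural$ by (iv). Each Frame decodes to a word ending with $\natural$, because $D(\tau)=\natural$. Every kernel push ends with $\natural$ by (iv). By the proof of Proposition~\ref{prop:ar-form}, using K1 for other excursions and K4 for P-mode calls, $u$ changes only by these appends. By the closure of Proposition~\ref{prop:closure} (iii), every such history lies in $\mathrm{dom}(E)$. So for a query $q$ with Frame $y$ and push $v$, Theorem~\ref{thm:run-sync}(1), applied with $w=q$, gives $E(q\,D(y)\,v)=E(q)\,E(D(y)\,v)$. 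Since $D(y)$ ends with $\natural$, applying part~(1) again, or factorizing directly, gives $E(D(y)\,v)=E(D(y))\,E(v)$. Frame-faithfulness (ii) then turns $E(D(y))$ into $y$, which yields $E(q')=E(q)\,y\,E(v)$.

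For item~2, I would induct over the calls on $u$ in the epoch. The invariant is that after the call on $q$ with Frame $y$, the private cache holds an entry for $E(q)\,y$. Generating $y$ applies $\varphi$ through the state for each generated symbol, so this state is available without further cost. The cache is never evicted and only $u$ uses it, so the entry survives until the next call. By item~1, that entry's key is a prefix of $E(q')$, so $\mathrm{cost}(E(q)y,E(q'))=|E(v)|$ by Definition~\ref{def:state-pres}. The first query finds the cache empty and pays $|E(q_0)|$. For the count, I sum prefill and generation: the first query contributes $|E(q_0)|+|y_0|$, and each later call contributes $|E(v_i)|+|y_i|$. Telescoping with item~1 gives exactly $|E(q\,D(y))|=|E(q)|+|y|$ at the transition with query $q$, again using (ii). So each symbol of the encoding is computed once.

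Item~3 is where I expect the only real care to be needed. The point is that the encoding cache's entry is $E$ of the query up to its last $\natural$, and here that is the whole query. It therefore holds $E(q)$ after the call, and it stays coherent because the stack only grows. The state entry $\hat s(E(q)\,y)$ truncates to $\hat s(E(q))$ at no cost, since the presentation is truncatable. Both entries are therefore determined by $u$ together with $|q|$ (and $|y|$, recoverable from the current length). The subtle step is checking that the truncation costs nothing extra, so that the cost remains the one of item~2, and that the key correspondence survives when several tasks interleave. K1 ensures other excursions never touch $u$, so the interleaving does not disturb these entries.
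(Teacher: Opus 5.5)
Your proposal is correct and follows the paper's proof essentially step by step: item~1 comes from two applications of Theorem~\ref{thm:run-sync}(1), at the end of $q$ and at the end of $D(y)$, together with Frame-faithfulness; item~2 comes from the state $\hat s(E(q)\,y)$ that generation leaves in the cache, plus the telescoping count; and item~3 comes from $q$ ending with $\natural$ and from truncatability. One small slip: your parenthetical that $|y|$ is recoverable from the current length of $u$ is inaccurate, since that length gives $|D(y)|$ and not $|y|$. You do not need it, because the key $(u,|q|)$ already reaches $E(q)$ and, by truncation, $\hat s(E(q))$.
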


\begin{proof}
Every query on $u$ ends with $\natural$: the first is the content at the start of the epoch followed by pushed words, and each
later one appends a Frame (whose decoding ends with $\natural$) and pushed words. (1)~Theorem~\ref{thm:run-sync}(1), applied at the
end of $q$ and at the end of $D(y)$, gives $E(q') = E(q)\,E(D(y))\,E(v)$, and $E(D(y)) = y$ by Frame-faithfulness. (2)~Generating
$y$ leaves $\hat s(E(q)\,y)$ in the cache, and by item~1 the next query costs $\mathrm{cost}(E(q)\,y, E(q')) = |E(v)|$. The first
query costs its whole encoding, each Frame costs its $|y|$ Steps, and each push costs its $|E(v)|$. By item~1 at every earlier
call, and with $E(q\,D(y)) = E(q)\,y$ by Theorem~\ref{thm:run-sync}(2) and Frame-faithfulness, the encoding of $q\,D(y)$ is the
concatenation of these pieces. (3)~Since $q$ ends with $\natural$, the entry of the encoding cache is $E(q)$. On the first query
the state cache starts empty and processes $E(q)$; on a later one, by item~1, it already holds the state of $E(q)$ without the
encoding of the last push, and the call processes that encoding.
\end{proof}

Figure~\ref{fig:readings}(c) draws item 2: the records that the Priestess wrote are prefilled, the Frames that the Oracle wrote are decoded,
and nothing is computed twice. With a cache shared across tasks the number of computations is an upper bound, since another task may already hold a
state that $u$ reuses. Since one key indexes both caches, the start of a new epoch invalidates both at once. The chain breaks in
three places. When \emph{the kernel rewrites the task stack}, replacing or restoring the history, a new epoch starts, and its
first query costs $\mathrm{cost}$ from the nearest cached prefix to the new encoding. When \emph{a Frame does not re-encode to
itself}, the state cache recomputes $|E(D(y))| - |\mathrm{lcp}(y, E(D(y)))| > 0$ symbols beyond the push, by the cost of
Definition~\ref{def:state-pres} and because $y$ and $E(D(y))$ diverge. This happens when a model samples a tokenization the
encoder would not produce, or when a chat interface re-renders the history through a template on the provider's side. A related
mismatch, which token healing repairs, is known at the boundary between a prompt and its continuation~\cite{lundberg2023art,xu2026are}
(Appendix~\ref{app:further-encoders}). And at \emph{a budget trap inside a Frame} (Definition~\ref{def:budget}) the history need
not end with $\natural$, and the state that the Oracle holds need not be that of the history's encoding.

What the machine adds is a place for each of these mechanisms (Table~\ref{tab:sched}) and, for prefix sharing and its break at the end of an
epoch, a statement that can be checked against it. The mechanisms themselves are not new: prefix trees of KV
caches~\cite{zheng2024sglang}, paged KV memory~\cite{kwon2023efficient},
keeping, discarding or swapping the cache at tool calls~\cite{abhyankar2024infercept}, and program-aware
scheduling~\cite{lin2024parrot,luo2026agentix,kang2026thunderagent} all exist. Reuse of segments that are not prefixes~\cite{yao2025cacheblend,ma2026leyline,gim2024prompt} falls outside
Definition~\ref{def:state-pres}. And the statement is made once for both forms. In the Agent form, within an append-only epoch,
the queries on a task form a chain, because between two of them the task changes only by its own appends (K1) and by the kernel's
pushes, while excursions on other tasks and P-mode calls leave it alone (K1, K4). The chain thus follows from the contract, given the
kernel's epoch discipline. In the Workflow form, the queries of the P-mode calls on a Priestess stack $d$ form a chain if the
program never pops $d$, by Proposition~\ref{prop:ar-form}, since no excursion writes $d$ (K1). The chain is then a property of the
program text. The condition can be checked syntactically, and it is sufficient but not necessary. Table~\ref{tab:chain} in Appendix~\ref{app:eclipse} sets the two chains side by side.
Appendix~\ref{app:further-cache} places the mechanisms of serving systems on this model, together with reuse beyond prefixes and
the rewrites by which agent systems manage their context (compaction and paging), each of which starts a new epoch.

\subsection{Two Schedulers}
\label{sec:eclipse-control}

With several Oracle requests and several Oracles in a system, assigning the evaluations to the processors that run them is a
scheduling problem, as assigning tasks to cores is for a CPU scheduler. This subsection analyses the problem instead of modelling
it: it assigns each scheduler to its side, says what each side knows, and prices one decision that needs both
(Proposition~\ref{prop:return}). A \textbf{scheduler} assigns computations to the systems
that can run them: each computation is a system's computation, bound to its value argument. A computation costs its
compute cost and its access cost, found recursively down the hierarchy (Definition~\ref{def:store}, Construction~\ref{con:compose}).
A scheduler has \textbf{cache affinity} when it places a computation on the system whose stores already hold the keys that
computation will access, and it \textbf{balances load} when it spreads the compute cost evenly. The two can pull apart, since the
system that holds a computation's keys may be the busiest. The Oracle and the Priestess may run on one processor or on two, each
with its own caches.
On the machine they meet only at the switches and at P-mode calls (Corollary~\ref{cor:crossing}), so between those points each
side's work is local to it, and a new scheduler can be built beside the existing one and coordinate with it through the known
relation between the sides. Each configuration is either in O-mode on one task stack or in P-mode, and a P-mode call is a single
transition, so the Oracle's execution of a task and the Priestess's never overlap. One identifier per task serves both
sides, as the task's identifier does on a von Neumann computer.

Each side has a scheduler, and each schedules both its own side and the other. The \textbf{Priestess-side scheduler} orders the
Priestess's work and schedules the Oracle side by choosing which task to enter with $\mathtt{uret}$ and, under a budget, for how
long. In the Agent form it is the operating system's scheduler. The \textbf{Oracle-side scheduler} (an inference server's batching
and its order of prefill and decode, for instance) orders the Oracle's computations. It also schedules the Priestess side, since a
P-mode call completes when the Oracle has computed the answer, and only then does the program that issued it continue. In the
Workflow form no operating system mediates the task, and the Oracle-side scheduler belongs to the serving side. The machine represents the
Priestess-side scheduler as kernel code. The Oracle-side scheduler's decisions show only in when answers arrive, and time is not
in the model. At every configuration in P-mode the Priestess side holds the machine-side content of each Oracle access it may
submit next: the history of each task it may enter (K2) and the operand of each call it may issue. The Oracle side, by contrast,
receives only the queries submitted to it. So \textbf{the Priestess side knows the next keys, and the Oracle side knows its
cache}. Each works from an estimate of the other, built from what the other exposes, since the model gives neither side the
other's state. Appendix~\ref{app:further-sched} works this out for a task's state while its tool call
runs~\cite{abhyankar2024infercept}. Keeping, discarding or swapping the state are three costs of the cache model, and the decision
needs one more input from the Priestess side: when the task returns. Time is not in the machine, but what this knowledge
is worth can be quantified once the interval is given a duration from outside the machine.

\begin{proposition}[the price of not knowing the return]
\label{prop:return}
Let a task yield, and let the Oracle side hold the state of its history from the trap until the kernel enters the task again,
after a time $T > 0$. Keeping the state costs $\mu > 0$ per unit of time. Discarding it costs nothing until the task returns, and
then $R > 0$, the cost of rebuilding the state, in the same unit. A \textbf{retention policy} keeps the state until a time
$\theta \in [0, \infty]$ and discards it then if the task has not returned, at the cost
\[ c_\theta(T) = \begin{cases} \mu T & \text{if } T \le \theta, \\ \mu\theta + R & \text{if } T > \theta. \end{cases} \]
\begin{enumerate}
\item Every policy pays at least $\mathrm{opt}(T) = \min(\mu T, R)$, and a policy chosen with knowledge of $T$ pays exactly that.
\item For every $\theta$, $\sup_{T > 0} c_\theta(T) / \mathrm{opt}(T) \ge 2$, with equality exactly when $\theta = R/\mu$. So for
  every $\varepsilon > 0$, a policy chosen without knowledge of $T$ pays more than $(2 - \varepsilon)\,\mathrm{opt}(T)$ for some
  $T$.\qedhere
\end{enumerate}
\end{proposition}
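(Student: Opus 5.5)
This is the classical ski-rental argument, and I would prove the two parts in order.

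For part 1, I compare $c_\theta(T)$ with both branches of the minimum. If $T \le \theta$, the cost is $\mu T \ge \min(\mu T, R)$. If $T > \theta$, the cost is $\mu\theta + R \ge R \ge \min(\mu T, R)$, because $\mu\theta \ge 0$. For $\theta = \infty$ only the first case arises. A policy that knows $T$ takes $\theta = \infty$ when $\mu T \le R$, paying $\mu T$, and $\theta = 0$ otherwise, paying $R$. So it pays exactly $\mathrm{opt}(T)$.

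For part 2, I fix $\theta$ and choose an adversarial $T$, splitting into three cases.
\begin{itemize}
\item If $\theta = \infty$, take $T \to \infty$. Then $c_\theta(T)/\mathrm{opt}(T) = \mu T/R$, which is unbounded.
\item If $\theta < R/\mu$, take $T$ just above $\theta$. Then the cost is $\mu\theta + R$ and $\mathrm{opt}(T) = \min(\mu T, R)$, which tends to $\mu\theta$ as $T \downarrow \theta$. The ratio therefore approaches $(\mu\theta + R)/\mu\theta$, which is infinite if $\theta = 0$. Since $\mu\theta < R$, this limit is greater than $2$, so the supremum exceeds $2$.
\item If $R/\mu \le \theta < \infty$, take $T$ just above $\theta$ again. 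Now $\mathrm{opt}(T) = R$, so the ratio is $(\mu\theta + R)/R \ge 2$, with equality only when $\theta = R/\mu$.
\end{itemize}

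It remains to show that $\theta = R/\mu$ attains $2$, that is, that the ratio never exceeds $2$ for this $\theta$.
\begin{itemize}
\item For $T \le R/\mu$, the cost is $\mu T$ and $\mathrm{opt}(T) = \mu T$, so the ratio is $1$.
\item For $T > R/\mu$, the cost is $2R$ and $\mathrm{opt}(T) = R$, so the ratio is $2$.
\end{itemize}
Hence the supremum is exactly $2$ when $\theta = R/\mu$, and strictly greater than $2$ for every other $\theta$.

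The $\varepsilon$ statement then follows from the definition of supremum. For any $\theta$ the supremum is at least $2 > 2 - \varepsilon$, so some $T$ has $c_\theta(T) > (2-\varepsilon)\,\mathrm{opt}(T)$. A policy chosen without knowledge of $T$ is some fixed $\theta$, which is where determinism is used.

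The only delicate step is the equality claim, "equality exactly when $\theta = R/\mu$". The supremum is approached as $T \downarrow \theta$ rather than attained, so I will phrase each case as a limit of the ratio. I will also check that for $\theta < R/\mu$ the limit is strictly greater than $2$, not merely at least $2$. This holds because $R/(\mu\theta) > 1$.
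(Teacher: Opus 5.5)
Your proof is correct and follows the paper's argument essentially step for step. Part 1 compares each branch of $c_\theta$ with $\min(\mu T, R)$ and chooses $\theta \in \{0, \infty\}$ with knowledge of $T$. Part 2 lets $T$ decrease to $\theta$ from above and checks that $\theta = R/\mu$ gives ratio $1$ below $\theta$ and exactly $2$ above it. The only difference is cosmetic: you fold $\theta = 0$ into the case $\theta < R/\mu$, whereas the paper treats $\theta = 0$ and $\theta = \infty$ separately before handling $0 < \theta < \infty$.
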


\begin{proof}
(1)~If $T \le \theta$ the policy pays $\mu T$, and otherwise $\mu\theta + R \ge R$; either is at least $\mathrm{opt}(T)$. Knowing
$T$, take $\theta = \infty$ if $\mu T \le R$ and $\theta = 0$ otherwise. (2)~For $\theta = 0$ the ratio is $R/(\mu T)$, and for
$\theta = \infty$ it is $\mu T/R$ once $\mu T \ge R$; both are unbounded. For $0 < \theta < \infty$, let $T$ decrease to $\theta$
from above. The cost stays $\mu\theta + R$ while $\mathrm{opt}(T)$ tends to $\min(\mu\theta, R)$, so the ratio tends to
$1 + R/(\mu\theta)$ if $\mu\theta \le R$ and to $1 + \mu\theta/R$ otherwise. Each limit is at least $2$, and equals $2$ only when
$\mu\theta = R$. At $\theta = R/\mu$ the ratio is $1$ for $T \le \theta$, and $2R/R = 2$ for $T > \theta$, since then $\mu T > R$.
\end{proof}

The proposition is the rent-or-buy problem of competitive analysis~\cite{manasse1988competitive}, which arises in snoopy caching
as the choice of how long to retain a cache block~\cite{karlin1988competitive}. The machine adds two things to it:
the cost $R$, which the cache model prices (beyond the push, the $\mathrm{cost}$ from the nearest cached prefix, and the whole
encoding of the history when none is left; Definition~\ref{def:state-pres}), and the side on which $T$ is decided. The Oracle side
has the queries and its own answers, and neither determines $T$. It may predict $T$ from the request it generated, but it chooses
its policy without knowing $T$. The kernel's service decides when the task is entered again, so $T$ is decided on the Priestess
side, and so is what can be known of it before the task returns. Without that knowledge, the deterministic policy with the best guarantee keeps
the state for $R/\mu$, until keeping has cost what rebuilding would; the factor is exact while the cost of keeping is constant over
the interval, which is what the proposition assumes. A randomized policy brings the expected ratio down to $e/(e-1)$,
and no lower~\cite{karlin1994competitive}. A prediction of $T$, such as one built from a running tool's report~\cite{liu2026ask},
lets a policy come close to $\mathrm{opt}$ when the prediction is good and keep a bound when it is
not~\cite{purohit2018improving}. Swapping adds a third choice, which the proposition leaves out. In the Workflow form the same interval lies between
two calls that share a prefix, and the program's schedule holds what $T$ stands for~\cite{pan2025kvflow}.

A cache miss of a language model is a prefill, far costlier than a miss on a
processor, so cache affinity weighs more than it does for a CPU. By Definition~\ref{def:store}, an Oracle-side scheduler can still
borrow from CPU scheduling once it
models the hit and miss costs of its levels. The decisions that use the relation between the sides are the Priestess side's,
because it holds the next keys. With the Priestess in view, the server can make part of its state visible to the client, or the
client can track a list of keys, to raise the hit rate. Program-aware servers~\cite{lin2024parrot,luo2026agentix,kang2026thunderagent} and program-agnostic
ones~\cite{kwon2023efficient} are the two halves. SGLang stands between them, since its frontend passes a program's forks to the
runtime~\cite{zheng2024sglang}. Table~\ref{tab:sched} sorts the systems that the field has built, by which side's scheduler they are and
which side they schedule. Appendix~\ref{app:eclipse} treats granularity,
several Oracles and finite capacity.

\begin{table*}[t]
\centering
\small
\begin{tabular}{@{}>{\raggedright\arraybackslash}p{0.1\textwidth}>{\raggedright\arraybackslash}p{0.42\textwidth}>{\raggedright\arraybackslash}p{0.42\textwidth}@{}}
\toprule
Scheduler & Schedules its own side & Schedules the other side \\
\midrule
Oracle side & the Oracle's computations: batching, paging and sharing of the state cache~\cite{kwon2023efficient,zheng2024sglang,ye2024chunkattention,juravsky2024hydragen,gim2024prompt}, its tiers~\cite{gao2024cost,yu2025stateful,qin2026mooncake,liu2025lmcache}, and its placement across devices~\cite{srivatsa2025preble} & when a program continues: scheduling each call by the program it belongs to~\cite{lin2024parrot,luo2026agentix,kang2026thunderagent}, and the state of a task kept, dropped or swapped while its tool call runs~\cite{abhyankar2024infercept,li2026continuum,liu2026ask}, and the shared prefixes of a workflow kept or prefetched by how soon each will be used~\cite{pan2025kvflow} \\
\rowrule
Priestess side & the program's own work, tools and environments: planned calls dispatched in parallel~\cite{kim2024llm,guerin2026kaiju}, tool resources~\cite{kang2026thunderagent}; in PRTS, the ready queue & the Oracle's work: which task enters O-mode, when, and for how long, in PRTS the slice; the strategy by which a program's calls are executed, changed by effect handlers~\cite{wang2025composable}; programs that drive the generation loop and the cache inside the server~\cite{gim2025pie} \\
\bottomrule
\end{tabular}
\caption{Cache and scheduling: what each side's scheduler schedules.}
\label{tab:sched}
\end{table*}

\subsection{Safety: the Boundary}
\label{sec:eclipse-safety}

Safety requires that something bad never happens~\cite{lamport1977proving,alpern1985defining}, and K1--K4 and
Corollary~\ref{cor:crossing} are safety properties. They say what an excursion and a P-mode call may touch, and how content passes
between the modes. A defence must also know where content reaches control. For the duality of Oracle and Priestess there are
two kinds of bad thing. \emph{Bad data is executed} when the Priestess writes malicious data for the Oracle to run, or wrongly
executes data the Oracle returned. \emph{A bad execution touches data} when the Oracle issues a malicious system call, or the
Priestess induces the Oracle to generate malicious code. Prompt injection~\cite{greshake2023not} is of the first kind. Like the
contract, and unlike the rest of this section, the theorem below holds for every Oracle.

\begin{theorem}[the safety boundary]
\label{thm:boundary}
For every Oracle, every program and every run of the \mach{O-2-PDA-TS}, with or without budgets:
\begin{itemize}
\item \textbf{In.} Whatever an Oracle call reads beyond the answers that its own mode appended to its operand, the Priestess wrote
  there. An O-mode transition reads only the stack its excursion entered on, which holds the answers appended to it in O-mode and
  what the Priestess wrote there, the initial content included. A P-mode call reads only its operand, a Priestess stack, which
  holds the answers of earlier P-mode calls on it and what the Priestess wrote there, the initial content included.
\item \textbf{Out.} The successor's mode and label, and whether there is a successor, depend on stack content at four kinds of
  transition only: the O-mode transition and three P-mode instructions. An O-mode transition depends on whether the encoder accepts
  the history and whether the answer yields. Its successor is in O-mode on the same stack or in P-mode at $\ell_{\mathrm{trap}}$.
  Of the P-mode instructions, a $\mathtt{jeq}$ depends on the top of its operand, a $\mathtt{pop}$ on whether its operand is empty,
  and an Oracle call on whether the encoder accepts its operand, whatever the answer. No content becomes a label or instruction
  text.\qedhere
\end{itemize}
\end{theorem}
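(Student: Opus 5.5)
The proof will be a case analysis over the transition rules of Definition~\ref{def:t} (and, for the budgeted variant, Definition~\ref{def:budget}), in the style of the proof of Theorem~\ref{thm:contract}. It will reuse K1, K3, K4 and Lemma~\ref{lem:locality} rather than rederive them. Neither half makes any assumption on the Oracle. Each case inspects only which stack a rule reads and which components of the successor configuration it fixes, so the quantifier ``for every Oracle'' comes for free. Budgets change only the remaining-budget component and the point at which an excursion traps (as in the proof of Proposition~\ref{prop:k5}), so each case goes through unchanged with budgets.

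\textbf{In.} There are two kinds of Oracle call. For an O-mode transition from $\langle \mathsf O, u, \sigma\rangle$, the rule reads $\sigma(u)$ alone. K2 already says what $\sigma(u)$ can contain. To make ``the Priestess wrote it'' precise, I will induct along the run on the content of $u$ and classify every symbol by the transition that placed it there. By Lemma~\ref{lem:locality} and K1, a symbol reaches $u$ in one of three ways: an O-mode append from an excursion on $u$, a Priestess instruction with operand $u$ (a $\mathtt{push}$), or the initial store. A P-mode Oracle call cannot be one of these, because by S together with the P-mode rule of T such a call is admitted only on Priestess stacks, whereas $u$ is an Oracle stack, since $\mathtt{uret}$ is typed as a force. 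For a P-mode call on a Priestess stack $d$, the same induction shows that the symbols of $d$ come from earlier P-mode calls on $d$, from Priestess pushes, or from the initial store. No excursion writes $d$, by K1. Symbols moved inside split blocks keep their logical stack by Definition~\ref{def:colouring}, so the observable-step view is sound.

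\textbf{Out.} I will go through every rule and record which parts of the successor (mode, label, existence) depend on $\sigma$. The $\mathtt{push}$ and $\mathtt{uret}$ rules have successors independent of content. A $\mathtt{jeq}$ depends only on $\mathrm{top}(\sigma(s))$. A $\mathtt{pop}$ depends only on whether $\sigma(s)=\lambda$, since an empty operand makes the machine stuck. A P-mode Oracle call goes to $\ell+1$ or to $l_e$ according to membership in $\mathrm{dom}(E)$, independently of $r$, which is K4. An O-mode transition depends on acceptance (a fault, that is, no successor, on a machine that is not good) and on whether $r$ ends with $\natural$, which selects between $\langle\mathsf O,u,\cdot\rangle$ and $\langle\mathsf P,\ell_{\mathrm{trap}},\cdot\rangle$. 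With budgets it also depends on the remaining budget, which is not stack content. The final clause, that no content becomes a label or instruction text, is the label induction of K4: every label is $1$, $\ell_{\mathrm{trap}}$, $\ell+1$ or a literal of $\Pi$, and no rule writes $\Pi$.

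The main obstacle is the phrase ``beyond the answers that its own mode appended'' in \emph{In}. The Priestess can copy an excursion's answers onto a Priestess stack, or copy P-mode answers onto a task stack, and then those symbols count as written by the Priestess. I will therefore define authorship by the transition that placed each symbol on that stack, not by its origin, and state this explicitly before the induction. With that definition, the mixed-mode case (O-mode answers on a Priestess stack) is ruled out directly by K1 and the admissibility of P-mode calls only on Priestess stacks.
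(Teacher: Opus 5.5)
Your proposal is correct and follows the paper's proof essentially step for step. \emph{In} comes from K2 with the crossing invariant for O-mode transitions and, for P-mode calls, from K1 and the locality lemma; \emph{Out} comes from inspecting each rule, with the label induction of K4 for the final clause. One attribution is wrong: no excursion writes a Priestess stack $d$ because S restricts $\mathtt{uret}$ to Oracle stacks, so no excursion is entered on $d$, and then K1 applies; K1 alone does not give this, and neither does the fact that P-mode calls are admitted only on Priestess stacks (that fact is what keeps P-mode answers off task stacks). This is the one place where the boundary needs S.
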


\begin{proof}
\textbf{In.} For an O-mode transition this is K2 with Corollary~\ref{cor:crossing}, which hold with budgets by
Proposition~\ref{prop:k5}. A P-mode call on $d$ takes $\sigma(d)$ alone (Definition~\ref{def:t}), and $d$ is a Priestess stack.
Under S an $\mathtt{uret}$ enters only Oracle stacks, so by K1 no excursion writes $d$, and, by the locality lemma and K4, $d$ changes
only by Priestess instructions with $d$ as operand and by P-mode calls on $d$. \textbf{Out.} Immediate from the rules of
Definitions~\ref{def:2pda}, \ref{def:o2pda}, \ref{def:t} and~\ref{def:budget}. A configuration at $n+1$ is final and a
$\mathtt{push}$ continues at $\ell+1$, whatever the content. An $\mathtt{uret}$ enters O-mode on the stack it names, with the budget
it names. A P-mode Oracle call continues at $\ell+1$ or at $l_e$, by acceptance alone (K4). A $\mathtt{jeq}$ continues at its label
or at $\ell+1$, by the top of its operand, and a $\mathtt{pop}$ continues at $\ell+1$ unless its operand is empty, when there is no
successor. An O-mode transition has a successor only if the encoder accepts the history, and that successor is at
$\ell_{\mathrm{trap}}$ or in O-mode on the same stack, according to whether the answer yields or, with budgets, the budget is spent.
No content becomes instruction text, since no rule writes $\Pi$, and none becomes a label, since every P-mode label is $1$,
$\ell_{\mathrm{trap}}$, $\ell + 1$ or a label written in $\Pi$ (K4).
\end{proof}

The two items draw one line between the two computations, a line we call the \textbf{safety boundary}. On the way in, the line is
the Priestess's writes to an operand. On the way out, it is the trap, whose target is fixed and whose moment is decided by the
Oracle's answers or the budget. The line is also the three Priestess instructions that read content, which we call the \textbf{gates}.
\textbf{The machine fixes where content crosses between the two computations, and the code at the crossing decides what may cross.}

Each bad thing crosses the boundary. Malicious data for the Oracle to run reaches the Oracle only in a Priestess write, the initial
content counting as one. An instruction injected into a tool's result, for instance, enters a task as part of the kernel's reply. A
malicious system call is an answer, and it reaches the Priestess first at $\ell_{\mathrm{trap}}$. Data that a P-mode call returns,
including any malicious code, acts on the Priestess's control only through gates, and runs on the Oracle only after a Priestess
write and an $\mathtt{uret}$. A Priestess that induces the Oracle to generate malicious code does so through the operand it writes. The code
comes back at the trap or as a P-mode answer, and it acts on control only through the gates. So no bad thing of either kind passes
around the boundary, and a check placed there sees every one, provided the initial content is checked with the rest. The rules thus
fix where complete mediation~\cite{saltzer1975protection} must be placed. Whether each crossing is
checked there is decided by the code.

\begin{proposition}[where, not which]
\label{prop:where-not-which}
Fix a program and an Oracle $(A, E, D)$ of the \mach{O-2-PDA-TS}, with or without budgets. Let a run reach a configuration $c$ that
calls the Oracle on a history $h \in \mathrm{dom}(E)$: an O-mode configuration on a stack that holds $h$, or a P-mode configuration
at an Oracle instruction whose operand holds $h$. Let $u = D(y)$ be non-empty, for some $y \in Y_A$, and let
$A' = A \cup \{(E(h), y)\}$. Then the Oracle $(A', E, D)$ has the same codec, every run with $(A, E, D)$ is a run with
$(A', E, D)$, and $c$ has a successor with $(A', E, D)$ that appends $u$ to $h$. So a property of runs that holds for every Oracle
with this codec holds of some run in which $u$ is appended at $c$, and no such property excludes $u$ there.
\end{proposition}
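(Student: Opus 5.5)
We check the clauses of the statement one at a time, in the order stated, against the definitions of Oracle and machine-side answer map (Definition~\ref{def:oracle}).

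First, $(A', E, D)$ is an Oracle with the same codec. The alphabets, $E$ and $D$ are unchanged, so only the answer relation needs checking. We have $E(h) \in X_A$ because $h \in \mathrm{dom}(E)$, and $y \in Y_A$ by hypothesis, so $A' \subseteq X_A \times Y_A$. Every query keeps at least one answer, since $A \subseteq A'$. The convention of non-empty answers still holds because the only new decoded answer is $u = D(y)$, which is non-empty. By construction $O'(w) = O(w)$ for $w$ with $E(w) \ne E(h)$, and $O'(w) = O(w) \cup \{u\}$ otherwise. So $O(w) \subseteq O'(w)$ for every $w \in \mathrm{dom}(E)$, and rejection is unchanged because $\mathrm{dom}(E)$ is unchanged.

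Second, every run with $(A, E, D)$ is a run with $(A', E, D)$. We take the transition rules of Definitions~\ref{def:2pda}, \ref{def:o2pda}, \ref{def:t} and, with budgets, \ref{def:budget}, and note that the Oracle enters each rule in one of two ways: through membership in $\mathrm{dom}(E)$, or through a choice $r \in O(\sigma(s))$. The side conditions on $r$ (whether it ends with $\natural$, the remaining budget) depend only on $r$ and the configuration. Since $\mathrm{dom}(E)$ is unchanged and $O \subseteq O'$ pointwise, every step $c_1 \to c_2$ under $(A, E, D)$ is a step under $(A', E, D)$. Induction along the run gives the claim; both machines start from the same initial configuration.

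Third, $c$ has a successor that appends $u$ to $h$. The run reaching $c$ is a run with $A'$ by the second step. We split on the mode of $c$.
\begin{itemize}
\item In P-mode at $\mathtt{oracle}\ s\ l_e$ with $\sigma(s) = h \in \mathrm{dom}(E)$, Definition~\ref{def:o2pda}, under the P-mode rule of Definition~\ref{def:t}, gives the successor $(\ell+1, \sigma[s \mapsto h\,u])$, because $u \in O'(h)$.
\item In O-mode on $u_0$ with $\sigma(u_0) = h$, whether or not $u$ ends with $\natural$ and whatever the remaining budget, exactly one O-mode case of Definition~\ref{def:t} or \ref{def:budget} applies to $r = u$, and it sets the stack to $h\,u$.
\end{itemize}
The final sentence of the statement follows directly. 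If a property holds of all runs for every Oracle with this codec, it holds for $(A', E, D)$, and hence of the extended run in which $u$ is appended at $c$.

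The main obstacle is the second step. It requires confirming that no rule depends on the Oracle other than monotonically through $O$ and $\mathrm{dom}(E)$. The place to check is the rejection branch: it is governed by $E$ alone and not by $A$, so enlarging $A$ cannot remove a successor. Observe also that $h$ being reachable is not needed for the first step; reachability is used only in the third.
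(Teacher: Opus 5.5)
Your proposal is correct and follows the paper's proof: you show $O(w) \subseteq O'(w) \subseteq O(w) \cup \{u\}$ with $\mathrm{dom}(E)$ unchanged, observe that the rules depend on the Oracle only monotonically through $\mathrm{dom}(E)$ and $O$, lift runs by induction, and read off the appending successor at $c$ by cases on the mode. Your explicit check that $A' \subseteq X_A \times Y_A$ makes $(A', E, D)$ an Oracle is a detail the paper leaves implicit.
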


\begin{proof}
The answer map grows: $O(w) \subseteq O'(w) \subseteq O(w) \cup \{u\}$ for every $w \in \mathrm{dom}(E)$, so every query keeps an
answer and every answer is non-empty, and $\mathrm{dom}(E)$ is unchanged. The rules of Definitions~\ref{def:o2pda}, \ref{def:t}
and~\ref{def:budget} depend on the Oracle only through $\mathrm{dom}(E)$ and $O$, and every successor they give with $O$ they also
give with $O'$. By induction, every run with $(A, E, D)$ is a run with $(A', E, D)$, the run that reaches $c$ among them. At $c$,
$u \in O'(h)$, so the rule of the call gives a successor that appends $u$ to $h$. A P-mode call continues at $\ell + 1$, and an
O-mode transition traps or stays in O-mode according to whether $u$ ends with $\natural$ and to the budget.
\end{proof}

Take $u$ to be a malicious request: no guarantee that holds for every Oracle prevents it from crossing. Which request crosses is
therefore a property of the Oracle on that history, a judgement over content that no rule of the machine makes. That it crosses
at the boundary is a property of the machine (Theorem~\ref{thm:boundary}). A guarantee that holds for every Oracle can say where
content crosses, but not which content crosses. This is all that the machine can promise to a defence outside the Oracle: the
place where the defence belongs. The proposition fixes the codec, so a decoder that cannot write $u$ does exclude it, as a
property of the codec, not of the machine.

A task's boundary has two placements, as its program does. In the Agent form the Priestess is the task's operating system, and
the boundary lies in the kernel. Whatever a task reads beyond its own answers is either its initial content (the declaration included)
or one of the kernel's writes (its replies among them). Whatever the task writes reaches the kernel first at the service entry, at a yield
trap as a request. These are the return and the force of Corollary~\ref{cor:crossing}. That every system call is answered before
the task resumes is the kernel's protocol, not the machine's, since the machine would equally resume an unanswered task. In the
Workflow form no operating system mediates the task, and the boundary lies in the program: in the operands it builds and in the
code that reads each answer, where validating an answer before a promotion is the program's obligation. A hybrid has a boundary for
each part, and moving a task between the forms moves its boundary. When an Agent task is run through the interpreter of
Proposition~\ref{prop:interpreter}, its boundary lies in the interpreter's code, which then supplies what T and the
locality lemma supply in the machine. Appendix~\ref{app:further-safety} examines how a harness can escalate privilege, and
places the defences built for agents at the three positions that the boundary leaves them: on the way in, inside the Oracle, and
on the way out, at the trap or at the gates.

Treat the Oracle as a user-mode program and the Priestess as a kernel, and the Agent's boundary is the one between user mode and the
kernel (Table~\ref{tab:protection}, Appendix~\ref{app:eclipse}). A task writes only its own stack, as a user process writes only its
address space. Its requests enter only at $\ell_{\mathrm{trap}}$, as traps enter at a fixed vector. And nothing protects the task
from the kernel, which is verified~\cite{klein2009sel4,gu2016certikos} or trusted. The two restrictions on the Oracle, that it runs
no instruction of its own and only appends, are not properties of user mode and do not transfer.

The theorem is about runs of the machine. As security analyses do, we assume a system to start in an intended state and an attack to
be a departure from it. For the Priestess, the intended state includes that its code does not execute memory the Oracle writes,
except by a deliberate interpretation or promotion. In a von Neumann realization one further path leads from content to control, around
the boundary: the Priestess's processor fetching instructions directly from memory the Oracle writes. No rule has such a
transition, so an execution that contains one leaves the typing. Such an execution is a \textbf{type bypass}, an attack on the
intended state. \S\ref{sec:query} states the condition a realization keeps to exclude it. Appendix~\ref{app:eclipse} discusses
protections that also restrict the trusted side (such as SMEP and SMAP) and what a task can observe. It also discusses
replay, which on the machine needs a rewrite by the kernel in the Agent form.

The two computations are now linked where it matters for a cache. When the encoder factorizes at $\natural$, the encoding never
backs up across a Frame. If, in addition, the Oracle is Frame-faithful and called through the Frame interface, and the kernel's
writes end with $\natural$, the Oracle holds the encoding of the machine's history after every Frame. Under the same hypotheses,
within an append-only epoch a private state cache that is never evicted computes each symbol once, under the same key as the
encoding cache. Both sides are systems of one kind, and the side that holds the next keys is the Priestess's. The two meet at the boundary
of Theorem~\ref{thm:boundary}, an invariant of the machine. The machine itself is still abstract, and \S\ref{sec:query}
builds it.

\section{In Query We Compute, Agents: Building the Machine}
\label{sec:query}

A physical \mach{O-2-PDA-TS} raises three questions that the abstract machine could leave open. \emph{Who can be the Oracle}:
which Oracles the machine runs in O-mode, and which only in P-mode. \emph{Where the Priestess is}: on what computer the
Turing-complete computation runs, and how the stacks and the two modes are laid out in its memory and its processor. And \emph{how
the two exchange messages}: what a system call asks of the Oracle, and what the Priestess can do to make the call succeed more often.
The answers to the first two make up one construction, \textbf{V}. It consists of the rule of who runs in O-mode
(\S\ref{sec:query-oracle}), together with the segmented storage and the single Oracle instruction that make the machine
compatible with a von Neumann computer (\S\S\ref{sec:query-priestess} and~\ref{sec:query-control}). The
\mach{O-2-PDA-TS} with V is the \textbf{\mach{O-2-PDA-TSV}}.

\subsection{Who Is the Oracle?}
\label{sec:query-oracle}

A physical machine is built for Oracles of several \textbf{classes}, such as language models, people and models that are not
autoregressive. At each call it uses one \textbf{entity} of a class, a particular model with its codec. Being autoregressive
inside is a property of the entity. Since the codec is ours to define, any Oracle can be given one that fits
the autoregressive form the machine's transitions have when the program only appends to the stack
(Proposition~\ref{prop:ar-form}). But an
excursion feeds the Oracle its own answers with no Priestess code between them: after an answer $r$ on the history $w$, the next
query is $w \cdot r$ (Definition~\ref{def:t}). What O-mode runs is therefore an iteration of the Oracle on the task's history, and
that iteration is the Oracle's own computation when the Oracle is internally autoregressive, for then each O-mode transition is a
Step or a Frame of the Oracle's own generation (\S\ref{sec:eclipse}). At Step granularity the encoding may still back up between
Steps, within the open Frame (\S\ref{sec:eclipse-codec}).

\label{sec:query-ar}
\paragraph{Autoregressive Oracles.} A language model is an $A_{\mathrm{AR}}$ Oracle: used through a conversation, it has the
autoregressive form (\S\ref{sec:machine}), and, under the assumption of \S\ref{sec:eclipse}, its interior has that form too. Its
output alphabet is included in its input alphabet, as Definition~\ref{def:int-ar} requires. Without the inclusion, feeding the model its own
output would produce an illegal input. Three of its properties matter to the machine. Its decoding is \textbf{non-deterministic}:
a sampled answer is one of the possible answers, which form the set that the machine reasons about (Definition~\ref{def:oracle});
sampling the same query again is what the replay of Appendix~\ref{app:eclipse-replay} exploits. It is \textbf{bounded}. An Oracle may accept unbounded
input, but under finite time and finite control the input it is given is finite. A real model, however, requires bounded input,
and an input beyond the bound is rejected by the encoder (Definition~\ref{def:oracle}). A model with bounded input is therefore at
most a finite automaton~\cite{schuurmans2023memory,wiedermann2024large}, as a real von Neumann computer is. And it can be \textbf{total} in the sense of \S\ref{sec:machine}: a
model returns a non-empty answer in finite time if it has a limit on its output tokens, a codec that converts the two alphabets
fully and rejects over-long inputs, and no hardware or network faults. A person's speech, a monkey at a keyboard and an
action-conditioned world model that generates frames autoregressively~\cite{bruce2024genie} may also be of this kind. An $A_{\mathrm{AR}}$ Oracle can
run in both modes: in O-mode every continuation is a legal input under the conditions of Proposition~\ref{prop:closure}, and in
P-mode the Priestess builds each input.

\label{sec:query-nar}
\paragraph{Non-autoregressive Oracles.} An Oracle may also be non-autoregressive, and then its output alphabet usually is not
included in its input alphabet. Jev, a recent commercial model from TypeSafe, returns typed probabilistic
decisions~\cite{almeida2026introducing}. Its output is JSON text that contains a well-formed probability result. Although that
text can still be encoded as the next input, Jev is usually more meaningful as a non-autoregressive Oracle than as an
autoregressive one. A diffusion model generates an image by iterated denoising~\cite{ho2020denoising}. One that also takes images as
input does have $\Gamma_R \subseteq \Gamma_Q$, and its output can be fed back. But it has no internal autoregressive structure, so the analysis of \S\ref{sec:eclipse}, which
relies on a correspondence between internal states, has nothing to correspond to. Iterated in O-mode, the first kind leaves the
encoder's domain whenever an answer does not yield and contains a symbol outside $\Sigma_Q$, and the machine stops being good
(Proposition~\ref{prop:codec-good}(1)); the second iterates a function that is not its own transition. In P-mode
neither problem arises, since the Priestess builds every input.

\begin{proposition}[goodness depends on the codec]
\label{prop:codec-good}
Let $(A, E, D)$ be an Oracle with $\mathrm{dom}(E) = \Sigma_Q^*$, run in O-mode from histories in $\Sigma_Q^*$.
\begin{enumerate}
\item If a reachable O-mode transition that does not trap appends an answer that contains a symbol outside $\Sigma_Q$, the next query
  is rejected: the machine faults, and it is not good.
\item If $\natural \in \Sigma_R \cap \Sigma_Q$ and $|\Sigma_R| \le |\Sigma_Q|$, let $\rho : \Sigma_R \to \Sigma_Q$ be injective with
  $\rho(\natural) = \natural$. The wrapped Oracle $(A, E, \rho \circ D)$ makes the machine good, whatever $A$.
\item The wrapped Oracle is internally autoregressive exactly when $(A, E, D)$ is.\qedhere
\end{enumerate}
\end{proposition}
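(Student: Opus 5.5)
The plan is to treat the three items separately, each by unfolding the O-mode rule of Definition~\ref{def:t} and the definition of the answer map $O$ in Definition~\ref{def:oracle}.

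\textbf{Item 1.} Take the reachable non-trapping transition from $\langle \mathsf O, u, \sigma\rangle$ with $w = \sigma(u)$. By the rule it goes to $\langle \mathsf O, u, \sigma[u \mapsto w\cdot r]\rangle$, where $r$ contains a symbol outside $\Sigma_Q$. Then $w\cdot r \notin \Sigma_Q^* = \mathrm{dom}(E)$, so the next O-mode query is rejected. Definition~\ref{def:t} gives no successor there, so the machine faults. Since the faulting configuration is reachable, the machine is not good by definition. No budget subtlety arises: if the budget had run out, the transition would have trapped, which the hypothesis excludes.

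\textbf{Item 2.} Write the wrapped answer map as $O_\rho(w) = \{\rho(D(y)) : (E(w),y)\in A\}$, extending $\rho$ letterwise to words. The wrapped machine-side reply alphabet is then $\rho(\Sigma_R) \subseteq \Sigma_Q$. The argument follows Proposition~\ref{prop:closure}. By induction on the excursion, every history stays in $\Sigma_Q^*$: it starts there, and each appended answer lies in $\Sigma_Q^*$. So every O-mode query is accepted and the machine is good, whatever $A$. Two checks remain to show the wrapper is a legal Oracle. First, answers stay non-empty, because $\rho$ is letterwise and maps each letter to a letter. Second, yields are preserved: $\rho(D(y))$ ends with $\natural$ iff $D(y)$ does. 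This holds because $\rho(\natural)=\natural$ and $\rho$ is injective, so no other letter maps to $\natural$. Hence the trap behaviour is unchanged. The only place I expect friction is bookkeeping. $\rho$ is defined on $\Sigma_R$ and lands in $\Sigma_Q$, so the codec's declared reply alphabet must be taken as $\rho(\Sigma_R)$ (a subset of $\Sigma$). This is legitimate because Definition~\ref{def:oracle} only asks $\Sigma_R\subseteq\Sigma$.

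\textbf{Item 3.} Definition~\ref{def:int-ar} constrains three things. The first is the Oracle-side data: the inclusion $\Gamma_R\subseteq\Gamma_Q$, the terminator, $\nu$, and the equation describing $A$. These are untouched, since $A$ and $E$ are the same. The second is that the decoder is symbol by symbol. Composing with a letterwise $\rho$ preserves this, because $(\rho\circ D)(y_1\cdots y_k) = \rho(D(y_1))\cdots\rho(D(y_k))$. Conversely, $\rho$ is injective letterwise, hence injective on words and length-preserving, so $D$ is symbol by symbol whenever $\rho\circ D$ is. The third is the condition on $\natural$: $D(\tau)=\natural$ iff $\rho(D(\tau))=\natural$, and $\natural$ occurs in $D(y)$ iff it occurs in $\rho(D(y))$. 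Both equivalences use $\rho(\natural)=\natural$ together with injectivity. So the wrapped Oracle satisfies the definition exactly when the original does. The main obstacle is this converse direction, where injectivity of $\rho$ does the work.
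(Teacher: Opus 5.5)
Your proposal is correct and follows the paper's proof essentially step for step. Item~1 unfolds the O-mode rule to a next query outside $\mathrm{dom}(E)$; item~2 reduces to the closure argument of Proposition~\ref{prop:closure} once $\rho\circ D$ writes only symbols of $\Sigma_Q$; and item~3 notes that $A$ and $E$ are unchanged, so that a letterwise injective $\rho$ fixing $\natural$ preserves and reflects the decoder conditions of Definition~\ref{def:int-ar}. Your extra checks are sound additions that the paper leaves implicit: non-emptiness of answers, preservation of yields, and taking $\rho(\Sigma_R)$ as the new reply alphabet.
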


The proof is in Appendix~\ref{app:proofs}. So any Oracle that meets the conditions of item 2 can be wrapped to make the machine good.
What the wrapping cannot change is the interior. For an $A_{\mathrm{nAR}}$ Oracle the wrapped iteration runs, but for the two kinds
above it achieves little: the first loses the meaning of its answers and the second iterates a function that is not its own. \textbf{So V runs only
$A_{\mathrm{AR}}$ Oracles in O-mode, and P-mode may call any Oracle.} The constraint protects no part of the contract or of the
safety boundary, which are invariants of the machine (Theorem~\ref{thm:contract}, Theorem~\ref{thm:boundary}). What it keeps is that O-mode
iterates the Oracle's own computation, so that the link through the codec and the cache model of \S\ref{sec:eclipse} apply to every
task the kernel enters. Its price is that an $A_{\mathrm{nAR}}$ Oracle is called only in P-mode, and so is a person, unless a
mechanism of generation is declared for the person (\S\ref{sec:query-message}). \S\ref{sec:query-control} makes this a rule of the processor.

\subsection{Memory: Stacks as Segments}
\label{sec:query-priestess}
\label{sec:query-memory}

We build the machine on a von Neumann computer: we split the Oracle stack and
the Priestess stack into as many stacks as needed, and lay them all out on one random-access memory, each in a segment of its own
and each with the colour of the stack it was derived from. On two stacks a random access costs moves of the head, whereas here it
costs one access. Abstract machines have been built physically, as Forth chips were~\cite{koopman1989stack}, yet most real computers are register
machines with random-access memory. Other computing units, from an ALU to an accelerator, are attached to them as coprocessors.
For an interpreter, which runs a stored program by a fetch--decode--execute loop, register designs need fewer dispatches than
stack designs~\cite{shi2008virtual}. Random-access memory reaches any address directly, without
passing over the cells between~\cite{cook1973time}.

\begin{definition}[segmented realization]
\label{def:segmented}
Let $\mathrm{Mem}$ be a random-access memory.
\begin{itemize}
\item A \textbf{layout} assigns to each stack $s$ a segment $[b_s, b_s + \mathit{cap}_s)$ of addresses, with capacity $\mathit{cap}_s$, and a length register
  $\mathit{len}_s$. The segments are pairwise disjoint, and disjoint from the \textbf{code segment}, which
contains the program, stored as in a von Neumann machine (\S\ref{sec:machine}).
\item A memory state \textbf{realizes} a store $\sigma$ if, for every $s$, $\mathit{len}_s = |\sigma(s)| \le \mathit{cap}_s$ and
  $\mathrm{Mem}[b_s + i] = \sigma(s)_i$ for every $i < \mathit{len}_s$.
\item The \textbf{realized Oracle rules} are those of Definitions~\ref{def:o2pda} and~\ref{def:t} with bounded appends. The query on $s$
  is the word $\mathrm{Mem}[b_s], \dots, \mathrm{Mem}[b_s + \mathit{len}_s - 1]$. An answer $r$ is written at $b_s + \mathit{len}_s$, and
  $\mathit{len}_s$ grows by $|r|$, only if $\mathit{len}_s + |r| \le \mathit{cap}_s$. Otherwise the append is refused and writes nothing, and the
  realization raises an exception: in O-mode a trap to $\ell_{\mathrm{trap}}$, in P-mode a jump to $l_e$, with a cause the kernel can
  read.\qedhere
\end{itemize}
\end{definition}

A stack $s$ thus has three pointers: its base $b_s$, its top $b_s + \mathit{len}_s$ and its limit $b_s + \mathit{cap}_s$. The code
is not a third colour, and no Oracle rule writes it. The hardware checks an operation against the window it is given, but the
disjointness of the windows is a property of the layout, which the kernel chooses.

\begin{proposition}[segment locality]
\label{prop:segment-locality}
Every realized Oracle step writes no memory cell outside its operand's segment and changes no length register but its operand's, or
writes nothing. If a memory state realizes $\sigma$ and the append of an answer $r$ on $s$ fits, the memory state after it realizes
$\sigma[s \mapsto \sigma(s) \cdot r]$.
\end{proposition}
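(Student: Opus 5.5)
The proof is a case analysis on the two outcomes of the bounded append in the realized Oracle rules of Definition~\ref{def:segmented}, followed by a direct comparison of memory states against the realization relation. I would first fix the operand $s$, the query $w = \mathrm{Mem}[b_s], \dots, \mathrm{Mem}[b_s + \mathit{len}_s - 1]$ and an answer $r$. I cover the P-mode call and the O-mode transition at once, since both use the same append rule and differ only in the control successor, which does not matter for memory.

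\emph{Refused append.} If $\mathit{len}_s + |r| > \mathit{cap}_s$, the rule writes nothing and leaves every length register unchanged. The step then falls under the disjunct ``or writes nothing''. Raising the exception (a trap to $\ell_{\mathrm{trap}}$ or a jump to $l_e$) changes only the mode, the label or a cause register, and no memory cell of any stack.

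\emph{Fitting append.} The rule writes $r_j$ at address $b_s + \mathit{len}_s + j$ for $j < |r|$ and sets $\mathit{len}_s := \mathit{len}_s + |r|$. These addresses satisfy
\[ b_s \le b_s + \mathit{len}_s + j < b_s + \mathit{len}_s + |r| \le b_s + \mathit{cap}_s, \]
so they all lie in $[b_s, b_s + \mathit{cap}_s)$. The rule touches no other length register. This gives the first sentence of the proposition.

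\emph{Realization.} Assume the memory state realizes $\sigma$ and the append fits. For every $s' \ne s$, its cells and $\mathit{len}_{s'}$ are unchanged, because the segments are pairwise disjoint by the layout. Hence the conditions $\mathit{len}_{s'} = |\sigma(s')| \le \mathit{cap}_{s'}$ and $\mathrm{Mem}[b_{s'} + i] = \sigma(s')_i$ still hold. For $s$ itself, the cells below the old $\mathit{len}_s$ are untouched and still spell $\sigma(s)$. The new cells spell $r$, and the new length is $|\sigma(s)| + |r| = |\sigma(s)\cdot r| \le \mathit{cap}_s$. So the new state realizes $\sigma[s \mapsto \sigma(s)\cdot r]$, which parallels Lemma~\ref{lem:locality} at the level of the abstract machine.

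The only delicate step is the role of disjointness. The hardware checks only the operand's own window, so locality with respect to the \emph{other} stacks, and to the code segment, depends on the layout hypothesis of Definition~\ref{def:segmented} and not on the rule. I would state this use explicitly, since the paper notes that the kernel chooses the layout. The bound $\mathit{len}_s \le \mathit{cap}_s$ before the step follows from the realization hypothesis. For the first sentence alone, without that hypothesis, it follows from the fit check itself.
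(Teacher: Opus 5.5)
Your proof is correct and takes the paper's approach: the write happens only after the check $\mathit{len}_s + |r| \le \mathit{cap}_s$ and spans the addresses $b_s + \mathit{len}_s, \dots, b_s + \mathit{len}_s + |r| - 1$ inside the operand's segment. Disjointness of the layout then leaves every other stack unchanged. Your explicit treatment of the refused case, and your remark that disjointness comes from the layout and not from the rule, spell out what the paper's shorter proof and the remark after it leave implicit.
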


\begin{proof}
A write happens only after the check $\mathit{len}_s + |r| \le \mathit{cap}_s$, and it then spans the addresses
$b_s + \mathit{len}_s, \dots, b_s + \mathit{len}_s + |r| - 1$, inside the operand's segment, and sets $\mathit{len}_s$ to
$\mathit{len}_s + |r| = |\sigma(s) \cdot r|$. The segments are disjoint, so no other stack changes, and the cells written are exactly the
new cells of $\sigma(s) \cdot r$.
\end{proof}

This is the locality lemma (Lemma~\ref{lem:locality}) in memory: on two stacks it needed the construction of
Lemma~\ref{lem:split}, and with a layout it is a bounds check, which the hardware performs. With it, K1 carries over to a
realization as the statement that an excursion writes only inside its task's segment, provided that the kernel chooses a layout
whose segments are disjoint. K2, K3 and K5 apply to the O-mode rule and the Oracle rules and carry over in the same way, as does the
first part of K4, on the P-mode call. In a real instruction set, Priestess instructions address memory directly, so for the
Priestess locality holds only as a property of its code, as it does for every kernel. The rest of K4, that every P-mode label is
literal, does not carry over, and the fetch condition below says what replaces it.

The machine's stacks have no fixed capacity. A particular Oracle bounds its input through its encoder, and a realization bounds
its segments. In the same way a real computer (a finite automaton) bounds an ideal von Neumann machine, which is Turing complete.
The machine uses a segment's capacity only through Proposition~\ref{prop:segment-locality}: an append that does not fit writes
nothing outside its segment. What the kernel does next is its policy: it may grow the segment and call again, end the task,
or compress the context. On random-access memory the isolation of \S\ref{sec:os} needs only a simple layout, which
allocates O-mode its own space for Oracle stacks and uses the space of Priestess stacks in P-mode. On a current processor this layout is virtual
memory. A page fault during an Oracle call can be handled in two ways. One is a second page table for Oracle segments with a
delegated handler, which is little used and costly. The other uses Priestess addresses with enough memory committed before each
call, so that no fault arises and an answer that does not fit raises an exception that the Priestess handles. The second is simple
and reuses existing hardware and software. When the Oracle's computation needs little memory, lazily committed pages make a large
segment cheap, and on overflow the segment grows like a vector and the call is retried.

\paragraph{The fetch condition.} Corollary~\ref{cor:crossing} is a statement about the rules: content crosses between the modes
only by a force or a return. On a von Neumann machine, code and data share one memory, and the program counter can be set from
data, so a realization has to separate code and data. The realized Oracle rules set the program counter only to the next
instruction, the rejection label or $\ell_{\mathrm{trap}}$, and write only inside the operand's segment, which is disjoint from the
code segment (Proposition~\ref{prop:segment-locality}). Both are properties of the rules, which the hardware implements. A
third condition is the kernel's obligation: no instruction may be fetched from a segment that an Oracle rule can write. We call
it the \textbf{fetch condition}. Page permissions that prevent memory from being both writable and executable are the usual way to
keep it, and an execution that breaks it is the type bypass of \S\ref{sec:eclipse-safety}. In the abstract machine, K4 holds
without conditions, because jumps encode literal labels and no rule writes the program. A real instruction set, however, has
indirect jumps, so Priestess code can compute a jump target from data, for instance from a jump table indexed by an answer.
Priestess code decides for itself which code it reaches that way, on the same footing as promotion (K4). It also decides whether
to write the code segment, as a JIT compiler does. The fetch condition restricts which segment an instruction is fetched from, not how its bytes
got there: code that the Priestess has written into the code segment is fetched legitimately, and an Oracle-writable segment is never
fetched from. What a realization preserves of the rest of K4 is therefore that no Oracle rule sets the program counter from an answer,
and that no instruction is fetched from where an Oracle rule writes.

\subsection{Control: One Instruction, No Code}
\label{sec:query-control}

The program-style \mach{2-PDA} has, in structure, the fetch--decode--execute form. It reads a program from a tape, and that tape
need not be a stack. With the program in the code segment and a program counter, it is a stored-program computer. With the
privilege separation of \S\ref{sec:os}, the processor gains an O-mode beside its P-mode, stored
in a privilege register or descriptor. P-mode contains the processor's existing privilege levels. Both modes can execute the Oracle operation, and they differ
at a yield: in O-mode a yield traps to $\ell_{\mathrm{trap}}$, while in P-mode the instruction simply completes. The memory of
O-mode is a stack that only grows, so it can be given stack space as a user process stack is, with a base, a top and a limit. The
memory of P-mode follows the virtual-memory view of the processor and the operating system, and a P-mode Oracle operation is given
a contiguous region on the stack or the heap, with the same three pointers.

On the tape, the Oracle operation reads a contiguous region and writes new symbols next to it. On a real instruction set, a call returns
its answer to a register, and continued autoregressive transitions could then be written as
an Oracle call, a push and a few $\mathtt{jeq}$, as the interpreter of Proposition~\ref{prop:interpreter} does. That has three costs.
(a)~The answer must first be written to storage other than the history. In O-mode, which runs no instruction of its own, that
storage would be outside the task stack, and K1 would have to count it as part of the task stack. (b)~An Oracle transition should
be atomic. A trap in the middle would leave part of an answer in the history, a state that means nothing and that K1 and the
append-only epoch of \S\ref{sec:eclipse-memory} would have to admit. (c)~If O-mode ran code of its own, that code would put a
programmable state machine on top of the Oracle's autoregression, and what O-mode runs would no longer be the Oracle's own transition.

\insight\label{ins:one-instruction} For a physical \mach{O-2-PDA-TS} the Oracle operation is one instruction. In O-mode there is no program form: the
processor's fetch--decode--execute loop drives the transitions, deciding each with no code. In P-mode the same instruction is called like
any other, and it can drive an asynchronous transfer. This is close to the intuition of operating-system design. O-mode is a privilege level whose transitions are computed by the Oracle,
a computation that the Priestess's instructions need not be able to perform. Its rights are also the most restricted: it runs no
instruction of its own, and it only writes new data, never changing existing data. Fine-grained control of O-mode therefore
is implemented in the processor's microarchitecture or microcode.

Concretely, in P-mode the instruction is called on an operand segment, given by its three pointers, and completes with the answer
appended. In O-mode the fetch--decode--execute loop queries the whole task segment, appends the answer, tests for the yield
symbol, and traps to $\ell_{\mathrm{trap}}$ at a yield or when the budget runs out (Definition~\ref{def:budget}). Writing that
loop as Priestess code instead is the interpreter, whose guarantees are those of its code. The machine is synchronous, and its
answers are whole at the granularity of the interface. A realization chooses how it publishes a whole answer (including on an
exception) and whether it departs from the machine's synchrony (\S\ref{sec:archnights}).

\paragraph{The O constraint.} The instruction applies the conclusion of \S\ref{sec:query-oracle}. The processor executes the
Oracle operation in O-mode only for an $A_{\mathrm{AR}}$ Oracle, and refuses it for any other. In P-mode it executes the operation
for every Oracle. Whether an Oracle entity is an $A_{\mathrm{AR}}$ Oracle is part of the interface it declares to the processor.
\S\ref{sec:archnights} shows how ArchNights declares it.

\insight\label{ins:user-kernel} The reading of \S\ref{sec:eclipse-safety}, which takes the Oracle as a user-mode program and the Priestess as a
kernel, is also a design heuristic. It says what to copy from existing processors:
\begin{itemize}
\item the Oracle segment, with its base, top and limit, is managed like a user stack;
\item the yield trap to $\ell_{\mathrm{trap}}$ is a fixed trap vector, like a system call;
\item $\mathtt{uret}$ returns into O-mode as an xRET returns into a lower mode;\footnote{By the RISC-V convention an xRET is named after
  the mode of the trap handler it returns from~\cite{waterman2021risc}. A realization that puts O-mode just below user mode, as ArchNights
  does (\S\ref{sec:archnights}), handles a yield trap one level up, in user mode, hence the name $\mathtt{uret}$. The ratified manuals
  define no instruction of that name~\cite{waterman2019risc,waterman2021risc}.}
\item the Priestess does not execute memory that O-mode writes, as SMEP keeps a kernel from executing user pages~\cite{intelsdm} (the fetch condition).
\end{itemize}
As \S\ref{sec:eclipse-safety} noted, the two restrictions on the Oracle do not transfer. The Priestess computation (its
arithmetic, logic, control transfer, data transfer and system instructions) can reuse a modern von Neumann processor entirely. For
analysis, its instructions can be simulated by the \mach{2-PDA} in small steps, or treated as hardware black boxes in big steps.

\subsection{Oracle-Priestess's Message}
\label{sec:query-message}

O-mode acts on P-mode by a system call, a yield trap that forces the service entry $k$; P-mode acts on O-mode by a return, its
reply (Corollary~\ref{cor:crossing}). The Priestess's reply is written by its code, but the request is what the Oracle
writes, and the kernel serves it at $\ell_{\mathrm{trap}}$. A correct system call becomes possible when the kernel places the declaration
$\delta_\Lambda$ on every task stack. It does not make a correct call certain: whether the Oracle's answer lies in $L_\Lambda$ is
a property of the Oracle on that history, and the kernel contract holds whatever the answer is. With current Oracles the outcome
also depends on sampling and on attention, and longer histories have costs. In the settings tested, accuracy depends on where the
relevant information sits in the context. It is highest at the beginning or the end and lowest in the middle, long-context models
included~\cite{liu2024lost}. For calls in particular, accuracy falls with the length of the conversation and of the tools'
responses~\cite{kate2025longfunceval}. The general form of these observations is that the longer an autoregressive
Oracle's history, the less likely its next call is correct. We take it as a working hypothesis, the \textbf{Lynchpin
hypothesis}. It is not a result of the machine, and nothing below depends on it except to motivate semantic delegation. Why it should hold, and
under which conditions, is not understood; Appendix~\ref{app:further-oracles} reviews
the evidence and the theories that may bear on it. One remedy is to make fewer calls, packing several requests into one so that one system call
suffices. In principle the packing happens in O-mode, but when the semantics of the packing is blockwise, it can be moved to P-mode.

\insight\label{ins:delegation} When a system call needs the result of a computation, that computation reaches the Priestess either as a
value or as a name. Without delegation it is evaluated before the call, on the Oracle side. The
Oracle can be asked to compute anything, and it computes from its history: an interface that asks the Oracle to compute inserts the computation's definition
into the history as text and evaluates it in place. The call then passes a value to the Priestess (\emph{call by value}), and the
history grows by the definition and by every intermediate step. A \textbf{semantic delegation} call passes instead how the value is
to be computed, a computation the Priestess evaluates when the call forces it (\emph{call by name}). The machine's force expands
nothing into the history: it enters the kernel, which evaluates the computation in P-mode and appends only its result, and the
handler's code never appears there. Semantic delegation is a design choice, motivated by the observations above. Each packed
operator has the obligation to perform the same services as the sequence of calls it replaces. What returns to the task must be
what the sequence would have produced, while the history may differ and is shorter. \S\ref{sec:archnights} describes a system
library that implements such operators.

Language models are the autoregressive Oracles that produce requests today. Any autoregressive Oracle whose alphabet admits a
semantics can in principle produce requests under that semantics; what they mean is fixed by the semantics, not by the alphabet.
For example, a vision-language-action model is autoregressive inside, and its actions reach P-mode as requests that control a
robot~\cite{brohan2023rt,kim2024openvla}. This
answers the question of the Introduction. As Oracles, a person and a language model are alike, provided a proper semantics can be
defined for parsing the person's words. What separates their outputs is how each is called. Under the assumption of
\S\ref{sec:eclipse}, a language model is an $A_{\mathrm{AR}}$ Oracle, declares it, and may run in O-mode, where its answers
continue the program that is the task's history. A person's output is usually taken as data: a person is usually called in P-mode,
and running one in O-mode would need a declared mechanism of generation in the sense of Definition~\ref{def:int-ar}.

The \mach{O-2-PDA-TSV} is then complete: its stacks are segments of a von Neumann memory, its Oracle operation is one instruction with
no code in O-mode, O-mode runs only $A_{\mathrm{AR}}$ Oracles, and the two sides pass messages by force and return. It
fixes what a realization must preserve, and leaves asynchrony, finite capacity and context compression to the realization. \S\ref{sec:archnights}
builds one.

\section{ArchNights: Implementation on gem5}
\label{sec:archnights}

ArchNights (Agent Runtime arCH iN Gem5 arcHiTectural Simulator) is a prototype machine that follows the \mach{O-2-PDA-TSV} on gem5~\cite{binkert2011gem5,lowepower2020gem5}
(Figure~\ref{fig:construction}, right). In gem5's syscall-emulation (SE) mode the host intercepts the system calls of the
simulated program, and in its full-system (FS) mode the whole system is simulated. ArchNights-SE runs in the first. ArchNights-FS
is a design for the second; it has not been built, and it has not been validated. Both share one
instruction set and one distribution, and differ in the kernel, PRTS or ZOOT. Where the machine fixes a behaviour, ArchNights
implements it (\S\ref{sec:an-obligations}). Where it leaves a choice, ArchNights mostly takes Linux's, and what is new is the set of
operators that the system library offers to the Oracle, together with the choices that ArchNights makes for the three questions
\S\ref{sec:query} left to a realization: asynchrony, storage of finite capacity, and context compression.

\subsection{What ArchNights Keeps of the Machine}
\label{sec:an-obligations}

\begin{table*}[p]
\centering
\small
\setlength{\tabcolsep}{4pt}
\begin{tabular}{@{}>{\raggedright\arraybackslash}p{0.045\textwidth}>{\raggedright\arraybackslash}p{0.39\textwidth}>{\raggedright\arraybackslash}p{0.33\textwidth}>{\raggedright\arraybackslash}p{0.17\textwidth}@{}}
\toprule
 & What a realization keeps & In ArchNights & Lost if it fails \\
\midrule
\multicolumn{4}{@{}l}{\textit{Ia. Rules kept by the processor, whatever the kernel: K1--K4, Corollary~\ref{cor:crossing}, Theorem~\ref{thm:boundary}}} \\
\rowrule
I-1 & An Oracle operation appends only inside the window it is given, and an answer that does not fit is refused and writes nothing (Definition~\ref{def:segmented}, Proposition~\ref{prop:segment-locality}) & the controller checks the window before evaluation; an answer that does not fit is refused at publication; the descriptor lies inside the window, past the top, so it changes no store realized & K1 \\
\rowrule
I-2 & O-mode fetches no instruction and has no addressable code (Definition~\ref{def:t}, Insight~\ref{ins:one-instruction}) & the Oracle mode, run by microcode & K1, K2 \\
\rowrule
I-3 & Each transition, a Step or a Frame, is published whole, and an exception publishes no part of one (\S\ref{sec:query-control}) & the controller publishes after evaluation; an exception keeps only what was published before it & K1, whose appends are whole answers \\
\rowrule
I-4 & Control leaves O-mode at a place that no answer can select (Definition~\ref{def:t}) & $\mathtt{uret}$ completes, and the processor continues at the next instruction, with the cause in the destination register & K3 \\
\rowrule
I-5 & A P-mode operation continues at the next instruction whatever its answer, and no Oracle operation sets the program counter (Definition~\ref{def:t}, \S\ref{sec:query-priestess}) & STEP or RUN in P-mode, issued by \texttt{ask}, by the session reading a line from the person and by the summary of a swap; a rejection shows in the status, and a branch on it is the program's own, as a $\mathtt{jeq}$ is & the first part of K4 \\
\grouprule
\multicolumn{4}{@{}l}{\textit{Ib. Rules a realization moves to the kernel, kept by PRTS by design}} \\
\rowrule
I-6 & The layout realizes S: segments are pairwise disjoint and disjoint from the code, and the kernel enters with $\mathtt{uret}$ only the segment of a task's history (Definitions~\ref{def:s} and~\ref{def:segmented}) & one segment per history in the memory descriptor, reserved when the task is created & K1 (Proposition~\ref{prop:segment-locality}); S \\
\rowrule
I-7 & The fetch condition: no instruction is fetched from a segment an Oracle operation can write (\S\ref{sec:query-priestess}) & history segments are mapped readable and writable, not executable, and the kernel never directs the program counter into one & the rest of K4; a break is a type bypass \\
\rowrule
I-8 & No page fault during an Oracle operation (\S\ref{sec:query-priestess}) & before each entry the kernel commits the pages the operation may write & a fault inside a transition, a path the machine lacks \\
\rowrule
I-9 & One runner: a task is in O-mode on at most one processor, and no processor writes it meanwhile (Appendix~\ref{app:archnights}) & trivial in ArchNights-SE, with one dispatch point on one processor; a scheduler invariant for a kernel on several processors, not yet built & Corollary~\ref{cor:crossing}, P to O only by a return; K1 \\
\rowrule
I-10 & One trap label: the kernel issues $\mathtt{uret}$ at one place (Definition~\ref{def:t}) & PRTS issues it from one function, which its dispatch loop calls & K3 holds at each place, not at one \\
\grouprule
\multicolumn{4}{@{}l}{\textit{II. The return of control: K5 (Proposition~\ref{prop:k5})}} \\
\rowrule
II-1 & The budget is checked at the boundaries of transitions, Steps or Frames, and counted in the Oracle's steps (Definition~\ref{def:budget}) & the controller; the budget is soft, and an Oracle that offers only RUN overruns it by at most the Frame it is in & the bound of K5 \\
\rowrule
II-2 & A fault ends the operation and traps, in place of a good machine (Definition~\ref{def:t}) and a total Oracle (\S\ref{sec:machine-oracle}) & every exception of a unit reaches the kernel at the entry of a yield, with its cause in the destination register (Appendix~\ref{app:archnights-exc}) & K5 on a machine that is not good \\
\bottomrule
\end{tabular}
\caption{What a realization of the \mach{O-2-PDA-TSV} must keep, and where ArchNights keeps it, by design: layers I and II, the
behaviours the machine fixes. The rows are grouped by what is lost when one fails, and the last column names the result that then
no longer carries over to the realization.}
\label{tab:obligations}
\end{table*}

\begin{table*}[t]
\centering
\small
\setlength{\tabcolsep}{4pt}
\begin{tabular}{@{}>{\raggedright\arraybackslash}p{0.045\textwidth}>{\raggedright\arraybackslash}p{0.39\textwidth}>{\raggedright\arraybackslash}p{0.33\textwidth}>{\raggedright\arraybackslash}p{0.17\textwidth}@{}}
\toprule
 & What a realization keeps & In ArchNights & Lost if it fails \\
\midrule
\multicolumn{4}{@{}l}{\textit{III. Hypotheses of the conditional results of \S\ref{sec:eclipse}, which the machine does not fix}} \\
\rowrule
III-1 & O-mode runs only $A_{\mathrm{AR}}$ Oracles, the O constraint (\S\S\ref{sec:query-oracle} and~\ref{sec:query-control}) & PROBE reports it, and O-mode operations on other Oracles are refused & the cache model for that task, not the contract \\
\rowrule
III-2 & The encoder accepts every word over $\Sigma_Q$, $\Sigma_R \subseteq \Sigma_Q$, and the kernel writes only words over $\Sigma_Q$ (Proposition~\ref{prop:closure}) & not met by a language model: its input is bounded (\S\ref{sec:query-oracle}), and some histories, such as a request without its reply, fail to encode & a good machine, which II-2 replaces for K5 \\
\rowrule
III-3 & Append-only epochs: within one, the kernel changes a history only by $\mathtt{push}$ (\S\ref{sec:eclipse-memory}) & kernel code changes a history only through push, pop and its metadata, and a direct write, not prevented, is undefined behaviour; exec, a swap and a back-off start new epochs & Proposition~\ref{prop:prefix-chain}; the coherence of the encoding cache \\
\rowrule
III-4 & Every word the kernel pushes onto a history ends with $\natural$ (Proposition~\ref{prop:prefix-chain}) & kept for an Oracle with STEP disabled, which is called through the Frame interface & Proposition~\ref{prop:prefix-chain} \\
\rowrule
III-5 & The encoder factorizes at $\natural$, and the Oracle is Frame-faithful (Definition~\ref{def:factor}, Theorem~\ref{thm:run-sync}) & not claimed: properties of each Oracle and its codec, which ArchNights-SE does not yet analyse (\S\ref{sec:future}) & the Frame-boundary case of Proposition~\ref{prop:excess}; Proposition~\ref{prop:prefix-chain} \\
\grouprule
\multicolumn{4}{@{}l}{\textit{IV. Choices the machine leaves to the kernel, made by PRTS}} \\
\rowrule
IV-1 & Every request is answered before its task runs again, the kernel's protocol (\S\ref{sec:eclipse-safety}) & the scheduler preempts only before an Oracle operation, and serves every call of an answer first & no result of the machine \\
\rowrule
IV-2 & The library: the declaration $\delta_\Lambda$ on every task stack, and requests outside $L_\Lambda$ handled as errors (Definition~\ref{def:os}) & the first record of every history, generated with the check from one table of operators; an error reply that names what was wrong & no result of the machine \\
\rowrule
IV-3 & Services that rebuild a history, through which a query can recur (\S\ref{sec:eclipse-memory}, Appendix~\ref{app:eclipse-replay}) & fork builds the prefix tree; exec and a swap start new epochs, a swap only at a quiet point; a replay policy can limit them & no result of the machine \\
\rowrule
IV-4 & One identifier for a task on both sides (\S\ref{sec:eclipse-control}) & $\mathtt{uret}$'s registers tag each Oracle operation with the process identifier & no result of the machine \\
\bottomrule
\end{tabular}
\caption{Layers III and IV: hypotheses of the conditional results of \S\ref{sec:eclipse}, which the machine does not fix, and
choices it leaves to the kernel. The columns are those of Table~\ref{tab:obligations}.}
\label{tab:conditions}
\end{table*}

We say that ArchNights implements the \mach{O-2-PDA-TSV} in this sense: for every behaviour the machine fixes,
Table~\ref{tab:obligations} names the obligation it puts on a realization and where ArchNights keeps it; for every hypothesis of a
result that holds only under it, Table~\ref{tab:conditions} states whether ArchNights meets it; and where the machine leaves a choice, ArchNights makes
one. The verb \emph{realizes} is that of Definition~\ref{def:segmented}, where a memory state realizes a store. By design, and
informally, we extend it to the rest of a configuration and to transitions. A state of ArchNights realizes a configuration when its
memory realizes the store, the processor is in the Oracle mode exactly when the configuration is in O-mode, and in O-mode the
registers of $\mathtt{uret}$ describe the task stack $u$ and its budget. Definition~\ref{def:segmented} lays a stack out growing up.
A segment may as well grow down: which way it grows is an arbitrary choice, fixed in O-mode by the system's configuration, as the
direction of a process stack is fixed (Appendix~\ref{app:archnights}). An Oracle operation that publishes an answer realizes an
Oracle transition, a Step or a Frame; $\mathtt{uret}$ realizes the entry, and its completion the trap. Priestess code between them
is taken in big steps, as a black box (\S\ref{sec:query-control}). We claim no simulation between ArchNights and the machine. A
refinement would be a separate undertaking, and on this machine a delimited one. Its
abstraction is the relation \emph{realizes} above, with Priestess code taken in big steps, and each row of layers I and II of
Table~\ref{tab:obligations} is a lemma it must prove. The results that it would carry over hold for every Oracle, so it would cover
the processor and the kernel, and never the Oracle.

The obligations are derived from the machine, not from ArchNights. Each transition rule says what a realization must not do: an
O-mode transition only appends to its stack, so an Oracle operation writes nothing outside its window. Each result says what it
takes from the rules, and that is what a realization has to supply: K4 requires jumps to literal labels and a program that no rule writes, and
a von Neumann computer replaces them by the fetch condition. And each path that a realization has and that the machine lacks is
eliminated, absorbed into a transition the machine has, or left outside the model. A page fault during an Oracle operation is
eliminated, an indirect jump is the Priestess's own decision, as a promotion is (\S\ref{sec:query-priestess}), and a clock is
outside the model.

Tables~\ref{tab:obligations} and~\ref{tab:conditions} group the obligations into layers by what is lost when one fails. Layers I and II are the behaviours the machine
fixes, and they carry the kernel contract, the crossing invariant and the safety boundary over to a realization. The first five
rows fall to the processor, whatever kernel runs on it. The next five depend on what the kernel does with the processor: its layout
(which realizes S, because the hardware knows no colours), its page mappings, where it dispatches tasks and where it issues
$\mathtt{uret}$. A realization moves them from the rules to the kernel (\S\ref{sec:query-priestess}), so on ArchNights they carry
over only on a kernel that keeps them, and PRTS keeps them by design, not by proof; on the machine they hold for every kernel. Layer III
lists the hypotheses of the results of \S\ref{sec:eclipse} that hold only under them. The machine does not fix them, and a refinement would extend only to the kernel's part of them; the rest are properties of an Oracle
and its codec. Layer IV lists the choices the
machine leaves to the kernel, which PRTS makes.

One row departs from the machine. ArchNights is not a good machine in general: a language model accepts only bounded input, and its
encoder rejects a history that outgrows the bound, or one it cannot accept, such as a request without its reply
(Appendix~\ref{app:archnights-exc}). ArchNights keeps the return of control by absorbing faults instead (row II-2). In O-mode every
exception of an Oracle unit ends the operation, publishes no part of a transition (row I-3), and reaches the kernel at the same
entry as a yield, with its cause in the destination register. On the machine, this adds an O-mode transition that,
when the encoder rejects the history or the Oracle fails, goes to $\ell_{\mathrm{trap}}$ and leaves the store as it was. Every trap still
goes to $\ell_{\mathrm{trap}}$ and no O-mode transition writes outside $u$, so the proofs of K1--K4 and of
Corollary~\ref{cor:crossing} apply unchanged. Every O-mode configuration then has a successor, which traps or lowers the budget,
so K5 holds without the hypothesis of a good machine.

The tables end where the machine's model ends. K5 bounds transitions, not time, and the machine has no clock, so the latency of a
call, a channel that a realization adds, lies outside the model (Appendix~\ref{app:eclipse}), as do the side channels of
speculation. The machine has one processor, and concurrency enters it only as far as row I-9 brings it back to
the synchronous semantics. Its statements are about sets of possible runs, not their probabilities, and it says nothing about the
Oracle's interior (a retry inside a unit included), of the finite capacity of caches, or of users inside the Priestess. None of
these yields an obligation (\S\ref{sec:future}).

\subsection{The Instruction Set}
\label{sec:an-isa}

The instruction set of ArchNights extends the RV64 ISA~\cite{waterman2019risc,waterman2021risc} with a privilege level, the Oracle mode, below user
mode; the instruction $\mathtt{uret}$ that enters it, with the registers and descriptors that go with it; and the Oracle instruction,
which performs the Oracle operation (Table~\ref{tab:isa}). The Oracle mode realizes O-mode and has no addressable code:
in O-mode the processor fetches no instruction, and its fetch--decode--execute loop drives the Oracle's transitions, as Insight~\ref{ins:one-instruction}
requires. ArchNights' simulated processor runs the Oracle mode in its microcode.

\begin{table*}[t]
\centering
\small
\begin{tabular}{@{}>{\raggedright\arraybackslash}p{0.1\textwidth}>{\raggedright\arraybackslash}p{0.16\textwidth}>{\raggedright\arraybackslash}p{0.27\textwidth}>{\raggedright\arraybackslash}p{0.39\textwidth}@{}}
\toprule
Instruction & Encoding & Operands & What it does \\
\midrule
PROBE & custom-0, R-type & none & reports the profiles the unit implements, whether answers carry a descriptor, and what the bound Oracle can do \\
\rowrule
STEP & custom-0, R-type & a segment, its top and a bound on what one step may write & in P-mode, one step of the Oracle's own autoregression, appended past the top \\
\rowrule
RUN & custom-0, R-type & a segment, its top and a budget & in P-mode, a Frame, appended past the top \\
\rowrule
$\mathtt{ecall}$ & none; O-mode fetches no instruction & none & absorbed into O-mode's transitions: an answer that ends with the yield symbol traps to the kernel as an $\mathtt{ecall}$ would \\
\rowrule
$\mathtt{uret}$ & custom-1 & none & enters O-mode on the task its registers describe; completes when the excursion ends \\
\grouprule
\textbf{Result} & the destination register & & how the operation ended, how many of the Oracle's steps it took, how many bytes it published \\
\bottomrule
\end{tabular}
\caption{ArchNights' Oracle instructions.}
\label{tab:isa}
\end{table*}

\paragraph{The Oracle instruction.} The Oracle instruction is a RISC-V custom instruction whose interface is compatible with RoCC,
Rocket Chip's interface for custom coprocessors~\cite{asanovic2016rocket}. It is an R-type instruction in the custom-0 opcode
space, and its fields select the operation, the direction in which the stack grows, and a profile that fixes the widths of the
fields packed into the registers (Appendix~\ref{app:archnights}). Each operation is self-contained, fixed by the instruction and
its two source registers, so a context switch has nothing of the unit to save. The operand is a segment with the three pointers of
\S\ref{sec:query}. The instruction reads the content, writes the answer past the top, and never changes what is already there. A
descriptor written past the answer, and not counted in its bytes, records the Oracle's own measures (for a language model, the
numbers of input and output tokens). STEP and RUN are the two granularities of \S\ref{sec:eclipse}: RUN generates one Frame, and STEP
performs one step of the Oracle's own autoregression, aligned with its internal state, so that a debugger on the Priestess side can
single-step the Oracle. PROBE reports what the bound Oracle can do, including whether it is internally autoregressive, which
decides whether it may run in O-mode. This report is the declared interface of the O constraint. An O-mode operation on an Oracle
that does not allow it is refused before evaluation, as is a STEP on an Oracle that has none.

\paragraph{Entering and leaving O-mode.} $\mathtt{uret}$ takes no operands, as an xRET takes none. The task it enters is described by
the registers that go with it: the task's segment with its top, the budget and the Oracle entity to run. The kernel writes them
before entering a task, and they tag each of the unit's operations with the task's identifier, so that one identifier serves both
sides (\S\ref{sec:eclipse-control}). $\mathtt{uret}$ is synchronous: it completes when the excursion ends (at a yield, when the
budget is spent, or on an exception), and the destination register records which. The processor then continues at the instruction
after that $\mathtt{uret}$. That instruction plays the part of $\ell_{\mathrm{trap}}$: the kernel's code there is the service entry. Its
address is fixed by the kernel's code, and no answer of the Oracle can select it (K3). Where Insight~\ref{ins:user-kernel} has one trap
vector, ArchNights resumes after the $\mathtt{uret}$ that entered the task. A kernel thus has one entry for each site where it issues
$\mathtt{uret}$. PRTS issues it from one function, which its dispatch loop calls, and so has one entry, as Definition~\ref{def:t}
has one $\ell_{\mathrm{trap}}$.

\paragraph{The budget and the Oracle unit.} The budget is counted in the Oracle's own steps, which for a language model are output
tokens. It is soft: the unit checks it at each boundary where it can hand control back, which is after a Step for an Oracle that
offers STEP and after a Frame for one that offers only RUN. Under Definition~\ref{def:budget}, each such boundary is an O-mode
transition, and the excursion traps at the first boundary after the budget is spent. An Oracle that offers only RUN is therefore
preempted between its Frames, never inside one, and overruns its budget by at most the Frame it is in. In hardware an Oracle is a
separate unit, and a processor may have several. A unit has the encoder, the compute part and the decoder of
Definition~\ref{def:oracle}, together with a controller. Before evaluation, the controller checks each operation against its window
and against what the Oracle allows; after evaluation, it publishes the answer whole. An operation's result in the destination register
reports what the operation has published, also when an exception ends it. An answer that does
not fit is refused and writes nothing (Proposition~\ref{prop:segment-locality}). Each part may raise exceptions of its own, from an illegal argument to the loss
of a model's service (Appendix~\ref{app:archnights-exc}). Apart from the budget trap, they are what a real Oracle adds to the machine,
in which the Oracle is total and faults were set aside in \S\ref{sec:os}. In ArchNights they end the operation and reach the kernel at the same
entry as a yield, with the cause in the destination register. ArchNights implements two classes of Oracle. The entities of the LLM
class are language models, which are $A_{\mathrm{AR}}$ Oracles (\S\ref{sec:query-oracle}). The Human class is a person at the
terminal. It is called in P-mode, and its answers are lines of text that arrive one character at a time.

\paragraph{Synchrony.} The Oracle operation in ArchNights is synchronous, as an Oracle transition is one step of the machine: while it runs,
the unit is occupied, and so is the CPU that issued it. An asynchronous realization can preserve that semantics and implement the asynchrony in
the microarchitecture, behind an abstraction of simultaneous multithreading~\cite{tullsen1995simultaneous}, or in the operating system,
provided no task is in O-mode on two processors at once.

\subsection{PRTS: the Kernel of ArchNights-SE}
\label{sec:an-prts}

PRTS (Primitive Reference of agenTic System), booted by its firmware, provides the system calls of the simulated machine, and
its design follows Linux~\cite{linuxkernel}. Its core does not depend on the Oracle: code that does is a driver for the Oracle's
class, instantiated for each entity, and the semantic delegation calls of \S\ref{sec:query} are loaded into the kernel as modules.
By design PRTS runs in P-mode, where it issues $\mathtt{uret}$ and serves each yield trap at the instruction after it.
ArchNights-SE realizes P-mode by the user mode of the simulated processor, in which gem5's SE mode runs every program. The
services that touch the host's resources are system calls that gem5 intercepts and the host serves in place of a supervisor. PRTS
keeps the Unix philosophy that everything is a file~\cite{ritchie1974unix}, but wraps its descriptors in strings. A task names
each file it opens and refers to it by that name. The name stands for a concrete file to the Oracle and for a single descriptor to
the kernel, and it is usually harder to mistake for another argument of a call than a number is. Below, we discuss the parts of Linux that a
machine with an O-mode changes.

\paragraph{Process and execution.} Four parts of Linux's process management change on a machine with an O-mode: the process
itself, fork, exec and the scheduler.

\subparagraph{A process is a history.} A program in ArchNights is text, and the Oracle runs it by continuing a history. PRTS's
\texttt{task\_struct} has two halves. The Oracle half is the task stack, the history the Oracle runs. The Priestess half is what
the kernel keeps for the task: its descriptors, working directory, credentials and scheduling state. The Oracle reaches the
Priestess half only through the interface (its own arguments and environment, for instance) or not at all. One process identifier
names both halves (\S\ref{sec:eclipse-control}). The memory descriptor, \texttt{mm\_struct}, gains the history's segment. The segment
is reserved with its three pointers when the task is created, and committed lazily: before each entry the kernel commits the pages
the operation may write. This is the second scheme of \S\ref{sec:query}, in which the segment occupies the Priestess's virtual
addresses and no page fault arises during an Oracle call. The segment is mapped readable and writable but not executable, and the
kernel never directs the program counter into it, which keeps the fetch condition. Kernel code changes a history only through push,
pop and the operations on its metadata. Reading or writing a history's memory directly is not prevented, but is declared undefined
behaviour.

\subparagraph{Fork.} PRTS's fork copies the Priestess half as Linux's does, without the credentials'
administrative rights and with fresh scheduling state, and copies the history into a new segment. Both tasks then receive a reply
to the same request: the parent receives the child's identifier, and the child receives zero. Fork's double return is here two
returns to one force, and the child's history is the parent's up to that request, followed by its own reply, so fork is a kernel
service that builds the prefix tree of \S\ref{sec:eclipse}. If the answer that called fork contained other calls, they are
cancelled in both tasks, since otherwise each would be served twice. In place of inherited pipe ends
(Appendix~\ref{app:archnights-prts}), fork creates a result channel. The child holds the write end, which never blocks, and the parent
holds the read end. Nothing the child writes reaches the parent's history unless the parent reads it.

\subparagraph{Exec.} Linux's exec lays out an address space and an initial stack; PRTS's exec lays out a history.
The loader reads one of three inputs: a text file, a directory with the program's text and a list of the libraries it uses, or a
saved history. It
frees the old history and builds a new one, whose first three records replace the initial stack. The \emph{declaration}
lists the operators the task's credentials allow, as the auxiliary vector says what the system offers. The \emph{system record} holds
the system's conventions, the program's text and the declarations of the libraries it names. The \emph{user record} holds the
program's path, arguments, working directory and standard streams. A saved history is loaded as it was, with a fresh declaration,
so that every history declares exactly the operators its kernel serves. As in Linux, exec keeps the process identifier, the
working directory and the descriptors not marked to close on exec, keeps only the administrative rights its caller hands on, and
on failure leaves the old program in place. The new history begins a new prefix chain, and shares with other histories only its
first records.

\subparagraph{The scheduler.} PRTS's scheduler is the Priestess-side scheduler of \S\ref{sec:eclipse-control}. From one ready queue
at one dispatch point, it chooses the task to enter with $\mathtt{uret}$ and the slice to enter it with, which is the budget. A
slice is counted in the Oracle's own steps, which the unit's descriptor reports, and the cost of a task's P-mode calls (a read
from the terminal, or a question to another Oracle) is charged to it as well. A task that blocks keeps what is left of its slice,
which is refilled only when it is used up, and a task that wakes joins the end of the queue without preempting the one that runs.
When no task is ready, the kernel idles until the next sleeper wakes, and when no task is left, it shuts the system down.
Preemption happens only before an Oracle
operation, and a task whose answer carries calls has all of them served before it can be preempted, so every request is answered
before its task runs again, as the kernel's protocol of \S\ref{sec:eclipse-safety} requires. Because the Oracle operation is
synchronous, a read from the terminal blocks the whole machine until the person answers. A shell about to read the terminal
therefore yields while other tasks are ready, and the person's input has the lowest priority.

\paragraph{Exception handling.} An exception is handled as close as possible to where it arises, by the side that
can handle it, and the machine admits an exception into a task in only one way. O-mode has no program counter, so it has no handler to
jump to and no signal can be delivered into it. By the crossing invariant, the Priestess reaches a task only by a return. If
the Oracle is to know of an exception, the exception must therefore reach it as content appended to its history. The Oracle sees an exception exactly
when the Priestess sends a notice, and the Priestess sends one when the Oracle can act on it, as on most errors in the arguments of
a system call. Other exceptions are handled in the Priestess, and the task is not told (Appendix~\ref{app:archnights-exc}).
Communication between tasks keeps to the same rule: content from other tasks enters a history only through an explicit, bounded
read by the task that owns it (Appendix~\ref{app:archnights-comm}).

\paragraph{Virtual file system.}
\label{sec:an-vfs}
In SE mode the guest's file-system calls reach the host's file system directly, so PRTS gives O-mode a virtual one. At boot a
folder of the host is mounted as the root, and other directories, a workspace and a temporary directory among them, are mounted
beside it. Like Unix with its ordinary user and root, PRTS has two O-mode users: an ordinary user (identifier 1000) and a privileged
user (identifier 0). Access is checked by Unix's discretionary access control, so that with the ordinary user's rights every
program operates only on the virtual file system. Only the privileged user may escape it, through the host's shell Bash. The way
to the privileged user is $\mathtt{sodo}$, a core tool of the distribution (Appendix~\ref{app:archnights-dist}). The host's shell
is outside the machine, and to the machine it is an Oracle: it is called in P-mode, and its answers are data (K4).

\paragraph{Checkpointing system.}
\label{sec:an-checkpoint}
The checkpointing system saves a task's history, with a metadata file that describes the task, as a folder that the loader
can load. It supports recovery from faults, debugging and swap, and for debugging the metadata may also record a
limited part of the task's P-mode state. For recovery, a process ended by an unrecoverable error
is saved as a checkpoint, and swap compresses a history that its Oracle can no longer accept.
Loading a checkpoint gives a new task a history equal to an old one, so it is one of the kernel services through which replay can
be constructed, and a replay policy can limit it. A history does not swap as a page does: a page is swapped out whole and faults
back in when it is touched, but the Oracle reads its whole history at every step, so no part goes untouched. What can be swapped
is a part the Oracle no longer needs to read every time. The kernel swaps at a quiet point, when every request has its reply, so
that no request is cut off from its reply. It keeps the head and the tail. The head consists of the declaration and the first
records and says what the task is, and the tail runs from the last answer on and says where it is. The middle moves to a
checkpoint, and in its place the kernel leaves a short note. The note is a thunk of the memory moved out: it records what was there
and where it went, and each entry of the note records the location of its checkpoint, so bringing one back takes the Oracle a single
read, a force prepared in advance. The compressor either keeps everything but the chain of thought, a removal by the kernel in
the sense of Appendix~\ref{app:eclipse-reasoning}, or summarizes the middle with
one Oracle call in P-mode, which states what the task is, what has been done (with its exact facts) and what is left. That call is
the kernel's own, and its answer is data (K4) that enters the history only because the
kernel writes it. This is the context compression the machine left out, and \S\ref{sec:eclipse} names its price: a swap rewrites
the task, the append-only epoch ends, the prefix chain breaks after the head, and the caches keyed by the old history serve
nothing past it. So the policy, a module of the distribution, swaps rarely, at a threshold, and not again until the history has
grown well beyond it.

\subsection{The Distribution}
\label{sec:an-dist}

ArchNights has a usable distribution: the PRTS kernel, a system library, an init system, a shell and core tools. It is designed
to be reused across kernels: PRTS is meant to be replaceable by ZOOT with only the system library's kernel module reinserted. Its
library and programs each have two parts: programs that run in O-mode and do not depend on the kernel, and delegation calls that
run in P-mode. The kernel and the system library together are the operating system of Definition~\ref{def:os}.

\paragraph{System library.} The system library serves O-mode, in the place glibc~\cite{gnuglibc} has in Linux, with one
difference: its system-call wrappers and semantic wrappers are semantic delegation calls (\S\ref{sec:query}), implemented by a
module the library inserts into the kernel. To O-mode an operator is a signature, as a tool call is; to P-mode it is code that
merges several of PRTS's internal system calls into one operator. Table~\ref{tab:operators} lists the operators, and
Appendix~\ref{app:archnights-dist} says why the Oracle sees them and not those calls. The system library is the library
$\Lambda$ of Definition~\ref{def:os} and has exactly its parts. Its declaration $\delta_\Lambda$, the first record of every
history, lists the operators with their signatures. Its request language $L_\Lambda$ consists of two kinds of word: the
well-formed sequences of operator calls, which together are the request of one trap; and an answer that calls nothing, which asks
the kernel to deliver its text as the task's result. Its Priestess-side part is the module in the kernel. The declaration, the calls and the check of each
call's arguments are generated from one table of operators, so the declaration aligns with the check. A semantic delegation call
can also be registered in a kernel module and called as an operator, as \texttt{fb\_play} in Figure~\ref{fig:pv} is. The libraries
that the system library loads are text, linked lazily as a dynamic linker binds a function at its first call. A library's short
declaration is in the system record from the start, and its full text enters the history only when the program loads it, with one
call. Agent harnesses call such libraries skills.

\begin{table*}[t]
\centering
\small
\begin{tabular}{@{}>{\raggedright\arraybackslash}p{0.15\textwidth}>{\raggedright\arraybackslash}p{0.27\textwidth}>{\raggedright\arraybackslash}p{0.52\textwidth}@{}}
\toprule
Family & Operators & What one call does \\
\midrule
Standard streams & \texttt{input}, \texttt{print} & read standard input, a named file or a result channel, as far as there is data, with end of file reported together with the last data; write a whole text to standard output or to the parent \\
\rowrule
Programs & \texttt{run}, \texttt{fork}, \texttt{execve}, \texttt{waitpid}, \texttt{exit}, \texttt{yield} & run a command line as a pipeline; the process calls of Unix \\
\rowrule
Names & \texttt{open}, \texttt{close} & give a file or a directory a name, or drop it \\
\rowrule
Editing & \texttt{file\_read}, \texttt{file\_write}, \texttt{file\_patch} & read, write or patch a file in one call, guarded by a stamp of the version read \\
\rowrule
Paths & \texttt{stat}, \texttt{list}, \texttt{mkdir}, \texttt{remove}, \texttt{rename}, \texttt{chdir} & one file-system operation each \\
\rowrule
Information & \texttt{context}, \texttt{time}, \texttt{sleep} & the task's own arguments and environment; the simulated clock \\
\rowrule
Oracles and libraries & \texttt{ask}, \texttt{load\_skill} & call a person or a language model in P-mode, for an answer that is data; load a library's definition \\
\rowrule
Host & \texttt{bash} & run a command on the host, with the semantics of $\mathtt{sodo}$: every call goes through $\mathtt{sodo}$'s check \\
\bottomrule
\end{tabular}
\caption{The operators of the system library. \texttt{fb\_play} in Figure~\ref{fig:pv} is a further semantic delegation call,
registered in a kernel module.}
\label{tab:operators}
\end{table*}

\paragraph{Init system.} The init system, the first program to run in O-mode, initializes the system, starts the shell and starts
it again whenever it ends while there is a terminal to read, and adopts every orphan process. The right to shut the system down
belongs to the shell the init system starts, not to the shell's program, since a program is text that any task may run. In
ArchNights-SE the init system's loop is a semantic delegation call, but an init system that runs entirely in O-mode is equally
legitimate.

\paragraph{Shell.} The shell is the second program to run in O-mode, the system's interface to its
user, and it follows the logic of mainstream shells. It is a text that the Oracle runs as a task and that says how to be a shell.
What makes it a shell is a P-mode part, its session, which the kernel runs when the shell waits for a line or a job. The shell
is therefore a hybrid in the sense of Definition~\ref{def:forms}. Its Oracle part is the Oracle with the operators, and its Priestess
part reads the person, runs builtin commands, and starts and collects jobs, with no Oracle in its loop. The session reads a line
by calling the Human class in P-mode, so a line is an answer, and by K4 it is data. It becomes a request to the Oracle part only
because the session writes it into the history, as the next user record. The shell's state is its conversation: a line in
natural language runs in the same task and history, where the context accumulates until the person clears it; a program runs as a
child, isolated from the shell. A line that begins with \texttt{/} is a builtin, run without the Oracle. A line that begins with
\texttt{!} runs on the host, through $\mathtt{sodo}$, and only if the person typed it at the prompt: lines from scripts, profiles or
a program's input never do. Any other non-empty line is a request in natural language. What the person does at the prompt reaches the Oracle only as one-line events, with no output: the
Oracle learns that the person ran a program and how it ended, not what it printed. Appendix~\ref{app:archnights-dist} gives the
builtins, the session's loop, one history record by record, and the core tools with $\mathtt{sodo}$'s check.

\begin{figure*}[t]
\centering
\begin{tikzpicture}
  \node[anchor=south west, inner sep=0] (img) at (0,0) {\includegraphics[width=0.82\textwidth]{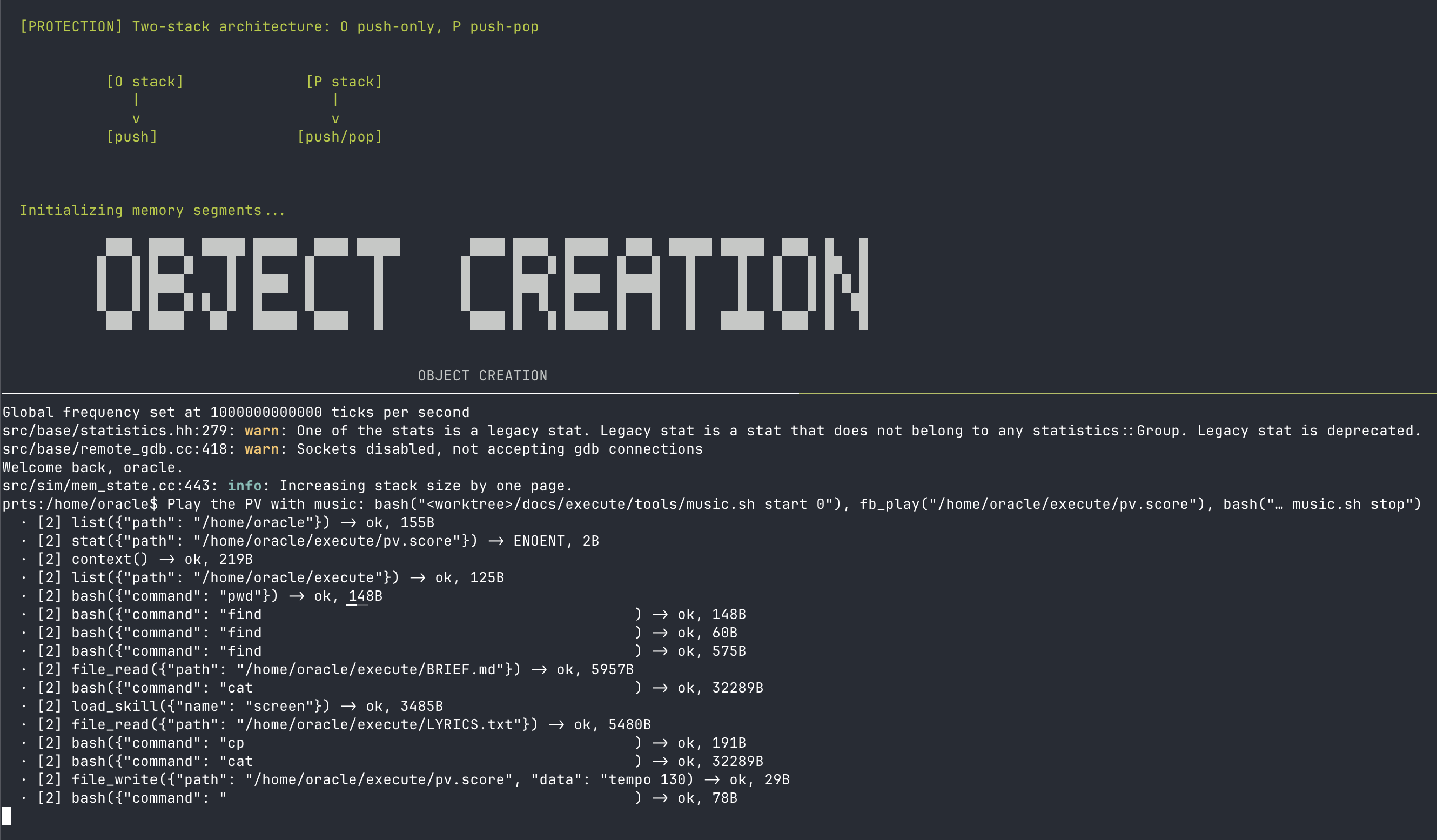}};
  \begin{scope}[x={(img.south east)}, y={(img.north west)},
                ph/.style={anchor=west, inner sep=0, text=black!20, font=\fontsize{5.2}{6}\ttfamily}]
    \foreach \px/\py/\t in {488/1141/{<host path>}, 488/1175/{<host path>}, 488/1209/{<host path>},
                            470/1277/{<host path>}, 452/1379/{<host path>}, 470/1413/{<host path>},
                            420/1481/{<host command>}}
      \node[ph] at ({(\px+10)/2660}, {1-\py/1555}) {\t};
  \end{scope}
\end{tikzpicture}
\caption{A session of ArchNights-SE making the music video of \emph{world.execute(me);}. Above: the simulated display
showing the video; its text and diagrams, such as the two stacks, belong to the video, not to this paper. Below: gem5's
messages and a session of the shell on PRTS, with a request at the prompt, in natural language, that names the calls it wants
(\texttt{fb\_play} among them), and the operator calls the task makes, each with its result. The session's user \texttt{oracle}
is the ordinary O-mode user (\S\ref{sec:an-vfs}). Every call of \texttt{bash} goes through $\mathtt{sodo}$, whose audit is
turned off in this run, as in all our tests, and \texttt{fb\_play} is a semantic delegation call registered in a kernel module
(\S\ref{sec:an-dist}). Host paths and host commands are replaced by placeholders.}
\label{fig:pv}
\end{figure*}

\subsection{ZOOT: the Kernel of ArchNights-FS}
\label{sec:an-zoot}

\begin{figure}[t]
\centering
\begin{tikzpicture}[x=1pt, y=1pt, font=\footnotesize,
  name/.style={anchor=base west, inner sep=0pt},
  val/.style={anchor=west, inner sep=0pt},
  head/.style={anchor=base west, inner sep=0pt},
  grid/.style={rulegray, line width=0.3pt},
  ref/.style={black!50, line width=0.5pt, dash pattern=on 1.6pt off 1.2pt}]
  \def\X#1{78+9.5*(#1-76)}
  \def\B#1{#1-2.6}
  \foreach \v in {76,78,...,92} {
    \draw[grid] ({\X{\v}},-5) -- ({\X{\v}},-73);
    \draw[grid] ({\X{\v}},-95) -- ({\X{\v}},-186);
  }
  \node[head] at (0,0) {\textbf{This paper} (same model, same setting)};
  \node[name] at (4,{\B{-13}}) {ArchNights-SE};
  \foreach \y/\lab/\v in {-24/{minimal, extra}/83.1, -35/{minimal}/77.5, -46/{full}/77.5} {
    \node[name] at (10,{\B{\y}}) {\lab};
    \fill[pcol, draw=white, line width=0.7pt] ({\X{\v}},\y) circle (2.6pt);
    \node[val] at ({\X{\v}+4.5},\y) {\v};
  }
  \foreach \y/\lab/\v in {-57/{Terminus 2}/78.7, -68/{mini-SWE}/80.9} {
    \node[name] at (4,{\B{\y}}) {\lab};
    \fill[black!70, draw=white, line width=0.7pt] ({\X{\v}},\y) circle (2.6pt);
    \node[val] at ({\X{\v}+4.5},\y) {\v};
  }
  \node[head] at (0,-89) {\textbf{Vendor-reported}, in the vendor's settings};
  \foreach \y/\lab/\v in {-101/{mini-SWE}/90.3, -112/{DSH Minimal}/90.6, -123/{Claude Code}/88.0,
                         -134/{Pi}/86.1, -145/{DSH Standard}/85.8, -156/{DSH PTC}/85.8,
                         -167/{OpenCode}/85.0, -178/{Codex}/84.1} {
    \node[name] at (4,{\B{\y}}) {\lab};
    \draw[black!55, fill=white, line width=0.6pt] ({\X{\v}},\y) circle (2.2pt);
    \node[val] at ({\X{\v}+4.5},\y) {\v};
  }
  \draw[ref] ({\X{80.9}},-8) -- ({\X{80.9}},-65.4);
  \draw[ref] ({\X{80.9}},-70.6) -- ({\X{80.9}},-79) -- ({\X{90.3}},-79) -- ({\X{90.3}},-98.8);
  \draw[ref] ({\X{90.3}},-103.2) -- ({\X{90.3}},-184);
  \node[fill=white, inner sep=1pt, font=\scriptsize] at ({\X{85.6}},-79) {same harness, $+9.4$};
  \draw[black!40, line width=0.4pt] ({\X{76}},-186) -- ({\X{92}},-186);
  \foreach \v in {76,78,...,92} {
    \draw[black!40, line width=0.4pt] ({\X{\v}},-186) -- ({\X{\v}},-188.5);
    \node[anchor=north, inner sep=0pt, font=\scriptsize] at ({\X{\v}},-191) {\v};
  }
  \node[anchor=north, inner sep=0pt] at ({\X{84}},-201) {Pass@1 (\%)};
\end{tikzpicture}
\caption{Terminal-Bench 2.1, Pass@1 by harness, with deepseek-v4.1-flash as the Oracle. This paper (filled dots,
ArchNights-SE in blue, in the configurations of Table~\ref{tab:tb}): one trial of each of the 89 tasks, with the same model in
the same setting. Minimal, extra: the minimal system with the extra prompt; otherwise the minimal and the full system use the
default prompt. Vendor (hollow dots): as reported in~\cite{deepseekai2026deepseek}, Table 4, in the settings given there. In
each group the dashed line marks the score of mini-swe-agent, the harness both evaluations run, and it joins the two scores.
mini-SWE is mini-swe-agent, and DSH is DeepSeek Harness.}
\label{fig:tb}
\end{figure}

ZOOT (Zeroed Ordered Oracle sysTem), booted by its firmware, is a design. Its functions are nearly those of PRTS, but where
PRTS cannot reach a real kernel, ZOOT is to run together with one. The design follows the principle that several privilege levels
of a machine multiplex one task and several tasks multiplex one kernel. So ZOOT runs in U-mode, one instance per user, as the
kernel of the O-mode below it, with Linux in S-mode and the firmware in M-mode. O-mode may then have several users, which brings
back, one level down, the old question of how the users of one privilege level relate to those of another. ZOOT answers it as
kernel-isolating virtualization~\cite{barham2003xen} does for the users of U and S: each user has a U-mode kernel of its own, the
kernels share part of their state, and operations on the shared state need a higher privilege level. In the terms of
Definition~\ref{def:os} this is one operating system $(K, \Lambda)$ for each user, with Linux in S-mode as the arbiter above them.
What the kernels must share is Priestess-side state, which K1 and K2 already keep out of every task. Appendix~\ref{app:archnights}
gives the alternatives and ZOOT's design of tasks and memory, its file system, its boot and its bridge to Linux.

\subsection{Evaluation of ArchNights-SE}
\label{sec:an-eval}

We evaluate ArchNights-SE to show that the \mach{O-2-PDA-TSV} can serve as the basis of an agent harness, and we compare the
results with those reported by DeepSeek, the vendor of the model we use as the Oracle. DeepSeek evaluates Terminal-Bench 2.1
without network access~\cite{deepseekai2026deepseek}, and to narrow the gap between the two evaluations we come as close to that as
the tasks allow. Some of the settings still cannot be matched, so their effect can be shown only through the one harness that both
evaluations run, mini-swe-agent. The results show that a prototype built on the machine solves tasks as an agent harness does.

\paragraph{Common assessment.}
\label{sec:an-bench}
On Terminal-Bench 2.1~\cite{merrill2026terminal}, with deepseek-v4.1-flash as the Oracle, ArchNights-SE scores between 77.5 and
83.1 in Pass@1, depending on its configuration. With the same model and the same setting, mini-swe-agent~\cite{minisweagent2025},
a harness with a single bash tool, scores 80.9, and Terminus~2, the benchmark's own neutral agent~\cite{merrill2026terminal}, scores
78.7 (Figure~\ref{fig:tb}). Each score is the Pass@1 of one trial of each of the 89 tasks, so one task is 1.1 points. In that
setting the model reasons at maximum
effort (a parameter passed by default) and samples at the default temperature of 1.0. The timeouts are those of Terminal-Bench
2.1, and, as in the benchmark's standard evaluation, ArchNights-SE runs inside the task's container together with its simulator.
Most tasks run fully offline: the data and sources that they need are staged in their images, and their packages are served from local
mirrors. A few run semi-online, through a strictly controlled proxy. As in all our tests, the audit of $\mathtt{sodo}$ is turned off,
so every call of \texttt{bash} passes it, and the task's container keeps the network isolated. A harness that runs tasks with no
person watching behaves the same way: an authorization prompt would stop a task at its first privileged call, so the setting is the
usual one for an unattended benchmark, and the run exercises the kernel's checks instead of being refused by them.

ArchNights-SE is evaluated in the three configurations of Table~\ref{tab:tb}. The minimal system is the distribution reduced to one process,
with the operators \texttt{file\_read}, \texttt{file\_write}, \texttt{file\_patch}, \texttt{list}, \texttt{stat}, \texttt{bash} and
\texttt{exit}. The others of Table~\ref{tab:operators}, those for processes among them, are removed. It keeps the system's default
prompt, which asks a task to stop only when its work is complete. When Terminal-Bench evaluates Terminus~2, the agent's own
prompts begin every turn with an analysis of what has been accomplished and what still needs to be done. That prompt template is
supplied by the benchmark's harness and not stated in its paper~\cite{merrill2026terminal}.
An external harness such as ArchNights-SE has no such prompt, so we carry it over as far as the system allows. The extra prompt
appends two rules to the task. The first rule is to write what the task asks for as soon as there is a first version and then
improve it, since only what is written counts if the work is cut short. The second is to ask after each step whether the result
already meets what the task asks, and to finish when it does. The minimal system with the extra prompt resembles the Minimal mode
of DeepSeek Harness, which has a single bash tool and in which DeepSeek evaluates its model as a code
agent~\cite{deepseekai2026deepseek}. With its default prompt the minimal system scores 77.5, as the full system does, and with the
extra prompt it scores 83.1.

\begin{table}[t]
\centering
\small
\setlength{\tabcolsep}{4pt}
\begin{tabular}{@{}>{\raggedright\arraybackslash}p{0.39\columnwidth}rrr@{}}
\toprule
 & All & Solvable & Other \\
Configuration & 89 & 79 & 10 \\
\midrule
ArchNights-SE, minimal, extra prompt & 74 (83.1) & 66 & 8 \\
\rowrule
mini-swe-agent & 72 (80.9) & 65 & 7 \\
\rowrule
Terminus~2 & 70 (78.7) & 62 & 8 \\
\rowrule
ArchNights-SE, minimal, default prompt & 69 (77.5) & 61 & 8 \\
\rowrule
ArchNights-SE, full, default prompt & 69 (77.5) & 61 & 8 \\
\bottomrule
\end{tabular}
\caption{Terminal-Bench 2.1 with deepseek-v4.1-flash, one trial of each task: tasks passed (Pass@1, \%). Solvable: the 79 tasks
whose reference solution passes in our setting; Other: the remaining 10. The proxy lets only specific network accesses through, so a
harness may pass a task whose reference solution does not.}
\label{tab:tb}
\end{table}

The shared harness, mini-swe-agent, aligns the two evaluations in kind, not strictly in score: with the same model it scores 80.9
in ours and 90.3 in DeepSeek's (the dashed line in Figure~\ref{fig:tb}). For reference only, DeepSeek reports between 84.1 and
90.6 in its own settings for eight harness configurations. The highest is with its own harness, DeepSeek Harness in its Minimal
mode~\cite{deepseekai2026deepseek}. The vendor scores its model highest in the
configurations that check the least, its own Minimal mode and mini-swe-agent among them, and the harnesses whose prompts ask for a
task to be inspected and carried through to the end score below them, Claude Code among them~\cite{claudecode}. A harness of the second kind loses
tasks that a harness stopping at the model's own word keeps, and the harness compared here is of the first kind. We cannot say
how much of the 9.4 points this is, since the vendor does not report the prompts its harnesses use.

\paragraph{General compute.}
\label{sec:an-usage}
With a virtual display and audio output added to the simulated computer, ArchNights-SE produced a music video of
\emph{world.execute(me);} (Figure~\ref{fig:pv}). The run shows that the \mach{O-2-PDA-TSV} is two kinds of system at once. It is
an agentic system, an agent harness that carries a task through, and it is a computer system, a general-purpose computer that runs
programs on its devices. The task runs as the ordinary user \texttt{oracle} and reaches the host through \texttt{bash}, each call
through $\mathtt{sodo}$, whose audit is turned off here as in the benchmark. It plays the score through \texttt{fb\_play}, a semantic delegation call registered in
a kernel module.

\section{Related Work}
\label{sec:related}

Quine is the closest work in structure~\cite{ke2026quine}: it realizes agents as POSIX processes, with a deterministic host driving
a stateless model and the host's kernel keeping isolation. This paper constructs the abstract machine first, derives the operating
system on it, and then places the designs of the field on that machine. \S\ref{sec:related-parallel} takes the work that has modelled agentic
systems before us; \S\S\ref{sec:related-agents}--\ref{sec:related-serving} take the work that our derivation rests on or draws
from, in five fields: agents and harnesses, the theory of computation, programming languages, operating systems and computer
architecture, and inference systems and agent safety. Appendix~\ref{app:further} takes up what the body leaves open: case
studies that apply the models of \S\ref{sec:eclipse} to the encoders, caches, schedulers and defences; the
Oracles other than language models; and the Lynchpin hypothesis.

\subsection{Other Foundations for Agentic Systems}
\label{sec:related-parallel}

Agentic systems have acquired foundations as language models have grown. Some, as the Introduction notes, model an agentic
system by analogy, most often with an operating system. Others go deep in one direction: processes, trust, languages, the oracle,
the instruction set. Each has produced working systems or useful theory. This paper takes a different route: it constructs an
abstract machine and derives an operating system on it. The derivation optimizes nothing by itself; what it offers is a
model on which the optimizations of other work can rest. Each paragraph below describes one line of work and then locates it on
the machine.

\paragraph{Agent operating systems.} Agent operating systems extend the abstractions of an operating system to agents by
analogy, and the analogies disagree on where the model stands. For AIOS the model is a resource. AIOS moves the model and the
tools out of agent applications into a kernel, with managers for scheduling, context, memory, storage and access
control~\cite{mei2025aios}. For a vision paper and for one of the papers named AgentOS the model is the kernel itself, a reasoning
kernel over its context~\cite{ge2023llm,li2026architecting}. A second AgentOS proposes a personal agent kernel with skills as
modules~\cite{liu2026agentos}. Others read the model with its context as a computer. MemGPT pages tiers of memory into the
context window as an operating system pages memory~\cite{packer2023memgpt}, and two frameworks carry the von Neumann
architecture~\cite{mi2025building}, or computer architecture as a whole~\cite{lin2026model}, over to agents. Lin et al.\ reconcile
the positions with two planes, a probabilistic one that executes and a deterministic one that controls~\cite{lin2026model}, and
illustrate the separation with published parameter data. A fourth group does not place the model but asks what an operating system guarantees. The first AOS paper defines an
agentic control plane and maps it onto Linux and Windows~\cite{sharma2026aagent}, and the reference architecture that follows
it leaves the host system outside its boundary~\cite{sharma2026bagent}. A blueprint asks for operating-system guarantees for
agents~\cite{koubaa2025agent}, and an essay proposes primitives extended from those of operating systems and clouds, as POSIX and
Kubernetes gave their waves a small set of stable abstractions with defined semantics~\cite{steinder2026towards}.

An analogy places the model where the borrowed concept places it. On the machine the placement follows from S and T, which
order the two sides: an operating system mediates a task exactly when the side that holds the task's program is not the side that T
privileges (Definition~\ref{def:sides}). In the Agent form the Oracle is then the processor of the task's program and the Priestess
its operating system, which are Lin et al.'s two planes. The model with its context as a computer is the O-mode reading, in which
the history is at once the Oracle's program and its memory (\S\ref{sec:related-agents}), and MemGPT's paging is a rewrite of the
task, which starts a new epoch (Appendix~\ref{app:further-cache}). The model as the kernel puts privilege on the Oracle's side.
That is the reversed ordering, the bottom row of Table~\ref{tab:quadrants}. This paper orders the sides for an untrusted Oracle
such as a language model. On the machine with the ordering reversed (Appendix~\ref{app:further-reversed}), a model as the kernel
still confines the programs it runs, but every request it serves becomes part of what it runs next, and its decisions come from an
answer relation that no proof inspects. The model as a resource puts the calls in the other direction.
AIOS's kernel manages the model for the applications whose code calls it; the kernel of Definition~\ref{def:os} manages what the
model runs, a history whose system calls are the model's own answers, served at a yield trap. The fourth group's operating
system, like Quine's below, stands between processes on the Priestess's side. Among the abstractions with defined semantics that
the essay asks for, those at the boundary between the model and the program that it runs are the task, the system call and the
library of Definition~\ref{def:os}. The contract gives them their semantics, derived from the rules and not extended. One name
needs care: UFO2
uses AgentOS for a desktop agent that operates the applications of a host, which Table~\ref{tab:quadrants} calls an Agent
operating a computer~\cite{zhang2025ufo2}.

\paragraph{Processes, virtualization and unification.} A second line takes one abstraction of an operating system and asks how
far it carries, and part of it poses the unification problem of the Introduction. Quine takes the process abstraction furthest of
these, and gives the model tools to run a shell, to fork, to exec and to exit. FMOS states the
unification problem most clearly~\cite{bhattacharya2026position}. It treats workflows and agentic loops as two execution forms over
shared primitives. Its guarantees are conservative and rest on a virtualization condition after Popek and
Goldberg~\cite{popek1974formal}, which FMOS assumes at its interface, while the partition is learned at run time. Two harness designs
take the two poles in a language. Harness as a Language has a single primitive, an invocation whose body the model writes as code
that may invoke the model again. With it, the design expresses both the agent loop, as a tail-recursive invocation, and more
elaborate workflows~\cite{li2026harness}. LLM-as-Code lets deterministic code own the control flow and calls the model only to reason or
generate~\cite{qi2026llm}. Self-programmed execution states the first pole formally, on an agentic machine that makes one model call
per transition, after Turing's oracle machines. An evaluator that runs each completion as a program reaches every state of such a
machine, so what distinguishes the architecture is which entity expresses the orchestration policy~\cite{oconnor2026self}. On the machine Quine's operating system is the host's and stands between processes, inside the
Priestess; PRTS defines the same calls one level down, between the Oracle and the Priestess (\S\ref{sec:an-prts}). FMOS's
virtualization condition holds by the rules of the \mach{O-2-PDA-TS}, while FMOS assumes it at an interface (\S\ref{sec:related-os}). The loop
of the first design is an Agent task whose harness runs the code that each answer carries, a program that the Priestess interprets
(Table~\ref{tab:designs}). The second design calls the model in P-mode, in the Workflow form. Self-programmed execution also
evaluates each completion in the Priestess, and its question, which entity expresses the orchestration, is one of paradigm. The two poles thus differ in form,
the placement of a task's program, as well as in paradigm, the question of whose output decides the next step
(\S\ref{sec:os-agentic}).

\paragraph{Kernels that put trust first.} A third line builds the operating system for agents around trust, and makes its checks
complete by design. AgentKernel argues that governance that shares the agent's process trust boundary is not enough, and makes
identity, input mediation, memory governance and execution control mandatory services that cannot be
bypassed~\cite{zou2026agentkernel}. AgenticOS turns the operating system into a filter of declared intents that builds a
least-privilege environment~\cite{zhao2026agenticos}. KAIJU separates the model's reasoning from execution in an executive kernel
with an intent gate~\cite{guerin2026kaiju}. Sovereign Agentic Loops validate the model's outputs as intents in a control plane and
keep an evidence chain for audit and replay~\cite{he2026sovereign}, and a runtime sets budgets for irreversible effects across agents and
workflows~\cite{mohammadi2026irreversibility}. Agent libOS lets a self-evolving agent change what it can ask for, but not what it is
authorized to affect~\cite{zhang2026agent}. Each design derives completeness from a mediator that cannot be bypassed.
AgentKernel requires every interaction to traverse its kernel, through adapters assumed not to be bypassable. The sovereign loops
assume that the control plane mediates every change of state, and AgenticOS makes its gateway the only controlled boundary, given
isolation in hardware. On the machine their separation of reasoning from execution is the crossing invariant, and their checks and
budgets sit on the safety boundary: a check of a request at $\ell_{\mathrm{trap}}$, and a mediation of inputs at the kernel's writes
on the way in, AgentKernel's among them (Appendix~\ref{app:further-safety}). Agent libOS's principle is K4's: an answer selects
only among the successors that the kernel's code allows, and grants itself no right. That every request passes the service entry is an
invariant of the machine (K3, Corollary~\ref{cor:crossing}), not an assumption about a mediator. What remains to assume is that a
realization keeps to runs of the machine. For the construction of \S\ref{sec:query}, it is segment locality, the locality of the
Priestess's own code and the fetch condition; for a harness on an ordinary computer, it is that the harness implements the
machine faithfully. Either is one assumption about the machine, not one about each check. The machine guarantees that every
request reaches the code where the policy sits. Whether that code consults the policy, and what the policy decides, is the
kernel's.

\paragraph{Languages for agents.} Calculi and languages for agents fix a language and prove what its programs cannot do.
$\lambda_A$ extends the simply typed lambda calculus with an oracle term and a bounded fixpoint for the ReAct loop, and mechanizes
type safety and termination~\cite{liu2026lambda}. LLMbda proves a probabilistic noninterference theorem for every program of a
calculus whose state is a conversation that the programs extend, fork and clear. LLMbda's verified interpreter is itself the
harness that calls the model~\cite{garby2026llmbda}. ZipperGen projects global workflows into local programs that cannot deadlock whatever
the model answers~\cite{bollig2026provable}. The machine fixes no language, and what an operating system on it guarantees is an
invariant of the machine, one that programs in any of these languages preserve. In $\lambda_A$ the oracle term returns a value that
the program binds, as a P-mode call returns data (K4), and the loop around it is the program's; the Oracle instruction also has an
O-mode, in which the Oracle runs its history as a program and a kernel arbitrates at $\ell_{\mathrm{trap}}$ (K3). LLMbda's labels
track flows that the contract does not see, while the contract arbitrates space and control, so the two constrain different
things. Since these works define no operating system and we define no language, the two combine. A program in one of these
languages can be the task code of a Workflow or the kernel's service policy, and its guarantees then sit beside the contract.
AgentSpec's rules are such a service policy~\cite{wang2026agentspec}: they fire before each action and when the agent finishes,
at the trap, and on each change of state, on the way in. Other formal works fit in the same way: a
choreography~\cite{gopinathan2026pact} or a Lean model of a workflow~\cite{wang2026lean4agent} describes task code, the
Calculus of Intelligence, in which a task is a space of valid plans whose local parts compose when their declared overlaps
agree, describes task code one level up~\cite{yuan2026calculus}, a process
calculus for tool protocols describes the declaration and the requests it admits~\cite{schlapbach2026formal}, and runtime
monitors~\cite{bollig2026causal,wang2025bprobguard} and an automaton that supervises the generation of a plan~\cite{li2024formal}
are Priestess code.

\paragraph{Models as oracles and as automata.} Another line models the language model, as we do, as the oracle of an oracle
machine, and asks what such a computation achieves, at what quality and cost. AI-oracle machines form each query by an algorithm
that may decompose the task, and check each answer against reference material that the user supplies~\cite{wang2024ai}. The
stochastic-oracle Turing machine (SOTM) makes this precise. A probabilistic Turing machine writes a query on a query tape, and the
oracle, a family of response distributions indexed by the query, writes a response on a response
tape~\cite{wang2026btoken,wang2026dtoken}. The series proves what such a machine reaches and at what token cost. The least expected
cost of reaching a given expected score is monotone and convex in the score~\cite{wang2026btoken}, and certifying with error at most
$\varepsilon$ that an oracle's reliability is at least $p_1$ rather than at most $p_0$ takes, to leading order,
$\ln(1/\varepsilon)/D(p_1 \,\|\, p_0)$ turns~\cite{wang2026dtoken}. When the distributions also depend on the transcript so far,
an oracle that caches its answers bounds what any adaptive machine can identify, and fresh answers to a repeated query lower the
error at the Chernoff rate~\cite{wang2026acomputing}. An agentic oracle maintains a hidden state and may act on an environment. Part of
its cost is spent inside it, where the caller does not see it, and retained state can make the cost of a task linear in its steps
whereas an oracle sent the whole prefix at each step pays a quadratic cost~\cite{wang2026ccomputing}. Chains of thought have been formalized as Turing
reductions~\cite{rafiuddin2025formal}. Agents have been classified as automata by their memory, with hierarchical agents as pushdown
automata when calls return in order and memory is scoped to the stack, so that some of their verification questions are
decidable~\cite{koohestani2025agents}. In programs with choice points that an oracle resolves, a strategy is separated from the policy
that searches it~\cite{laurent2025oracular}, and agents have been studied as universal task solvers within algorithmic
information~\cite{achille2025ai}. These works ask what the oracle's answers achieve. We ask who decides at the switches, and what a
system guarantees when it is built on the answer. The questions meet in places. A stochastic oracle with non-empty answers defines
an answer relation by the supports of its distributions, so the contract and the safety boundary hold on every run with it, while
none of the quantitative results of the series can be stated for a relation alone. Our query is the whole stack, the case of the
SOTM in which every query includes the whole transcript~\cite{wang2026ccomputing}. Within an append-only epoch of an Agent task no
query repeats, since answers are non-empty, so the difference between cached and fresh answers does not arise there. It arises when
a kernel restores a history and enters it again. An agentic oracle encapsulates its agent loop, its tools and its access to the
environment inside the black box. On the machine that loop is an Agent task and the harness that serves it is the kernel, so its
tool calls are requests that reach the kernel at $\ell_{\mathrm{trap}}$ (K3). What stays hidden is the Oracle's computation inside
one call, and through a Frame interface even the number of its Steps can be (Proposition~\ref{prop:steps-invisible}). The linear cost of
retained state has a counterpart in the state cache. Under the conditions of Proposition~\ref{prop:prefix-chain}, a private cache
that keeps every state computes each token of a history once while the kernel only appends to it, although every query is the
whole history. The two stacks of the \mach{O-2-PDA} have the power that Koohestani et al.\ assign to read/write
memory, so the questions that they make decidable by restricting an agent to one stack are undecidable on the machine in general. Such a
restriction would be a property of a kernel or a program, not of the machine.

The contrast is the line that places universality inside the model, or in a loop around it. Transformers are Turing complete under
idealized assumptions~\cite{perez2019turing,perez2021attention}. Chains of thought raise their expressive
power~\cite{feng2023towards,nowak2024representational}, with a power that grows with the number of
steps~\cite{merrill2024expressive,li2024chain}. A looped transformer runs programs given as its input~\cite{giannou2023looped}, and
a fixed transformer computes every computable function under a suitable prompt~\cite{qiu2024ask}. Around the model, neural
controllers were coupled to external memories~\cite{graves2014neural}, a model has been used as the processor of a stored-program
computer~\cite{holt2024l2mac}, and a prompted model simulates a universal machine with an associative read/write
memory~\cite{schuurmans2023memory} or under extended autoregressive decoding~\cite{schuurmans2024autoregressive}. A transformer that
manages its transcripts as channels realizes only finite-state transductions as long as the channels are append-only and it sees a
bounded suffix of them, and becomes universal with two channels that admit pop~\cite{salishev2026transcript}. A result that holds for every
Oracle cannot rest on an Oracle's power, so the \mach{O-2-PDA} places universality in the Priestess, and the Oracle instruction adds
no power beyond the Oracle's (\S\ref{sec:machine}). On the machine a loop around a fixed model is a Priestess program in the
Workflow form, and universality inside the model would make O-mode universal by itself. The machine neither needs nor excludes
either.

\paragraph{Instruction sets for Oracle calls.} Instruction sets for Oracle calls predate ArchNights, and they fix the meaning of a call
in the instruction set. Arbiter-K encapsulates the model as an untrusted probabilistic processing unit, reifies its
outputs into a semantic instruction set, and enforces security in a neuro-symbolic kernel with taint propagation and rollback, in
a Python prototype~\cite{wen2026craft}. Poon proposes cognitive instructions (THINK, RECALL and ADAPT) whose permissions the
operating system controls~\cite{poon2025toward}. A learned function behind instructions of a general-purpose ISA is older: the
neural processing unit replaces code that the programmer marks approximable, is invoked through extensions to the ISA, and may
introduce small errors~\cite{esmaeilzadeh2012neural}. It computes a fixed function, with no history and no mode of its own. ArchNights has
one Oracle instruction and an O-mode with no addressable code, entered by $\mathtt{uret}$ (\S\ref{sec:archnights}); the
instruction lies in the RISC-V custom space and is compatible with RoCC. The meaning of a request is fixed by the history and by $L_\Lambda$, not by the instruction
set, so one instruction serves every Oracle and every library, and the contract does not depend on what a request means.

\subsection{Agents and Harnesses}
\label{sec:related-agents}

The two forms of \S\ref{sec:os} name a distinction drawn in practice by agent programming and by the harnesses built since language
models arrived.

\paragraph{Agents before language models.} Agent-oriented programming characterized agents by their beliefs, commitments and
capabilities~\cite{shoham1993agent}, and the BDI model described them in terms of beliefs, desires and
intentions~\cite{rao1995bdi}. Agent programming gave agent programs languages with operational
semantics~\cite{rao1996agentspeak,hindriks1999agent} and verified them, by a framework for programs with declarative
goals~\cite{deboer2007verification} and by model checkers for agent programs and multi-agent
systems~\cite{dennis2011model,lomuscio2015mcmas,dennis2018mcapl}. Surveys cover its theories and
architectures~\cite{wooldridge1995intelligent,desilva2020bdi}. Its guarantees are those of a program in a language. The recent
bridges leave such a program in control while a model answers its calls~\cite{gatti2025chatbdi,alowayyed2025controlled,collier2026astra},
and a survey catalogues how machine learning has entered BDI agents~\cite{agiollo2025integrating}. On the machine these bridges are
in the Workflow form: the program is Priestess code, and the model answers in P-mode, with answers that are data (K4). The machine
says nothing of an agent's mental state. The Oracle is a black box, and the Priestess's record of it is its history.

\paragraph{Harnesses and their terms.} Advances from neural language models~\cite{bengio2003neural} through the
Transformer~\cite{vaswani2017attention} to large models capable of performing tasks from context~\cite{brown2020language} provided
a practical executor for open-ended tasks. With models as executors came ReAct, which interleaves reasoning with tool actions in one
loop~\cite{yao2023react}, and architectures that organize a language agent by its memories, its actions and a decision
loop~\cite{sumers2024cognitive}. Whatever memories an architecture names, what the Oracle reads in O-mode is its history (K2), and an
external action is a request served at the trap. Programming systems put model calls on code paths fixed in advance: query programs
with constraints~\cite{beurerkellner2023prompting}, declarative modules compiled into pipelines~\cite{khattab2024dspy}, compositions
of calls read as probabilistic programs~\cite{dohan2022language}, flows of calls in which each call samples a message and appends it
to the context~\cite{mandel2026ppdl}, and algorithms that call the model at fixed
points~\cite{schlag2023large}. On the machine they have the Workflow form, and ReAct's loop the Agent form. Multi-agent frameworks
compose conversable agents~\cite{wu2024autogen}, and planners emit graphs of calls for an executor~\cite{kim2024llm}. A conversable
agent is one more task, which a kernel service spawns or forks. Building Effective Agents separates Workflows from Agents by
whether code or the model directs the process~\cite{anthropic2024building}. We keep its terms and split them in two. On the machine
the forms are placements of a task's program, and the distinction drawn in Building Effective Agents is the paradigm, so a planned
graph that Priestess code executes is a Workflow in form and an Agent in paradigm (\S\ref{sec:os-agentic}). Taxonomies elsewhere
draw a different line, between AI
agents and agentic AI, marked by multi-agent collaboration among other traits~\cite{sapkota2025ai}. An early essay describes a model
with its scaffolding as a computer, with the model as its processor and the context as its memory~\cite{millidge2023scaffolded}.
This is the O-mode reading, with two differences: the Oracle's program is its memory, with no code beside it, and within an
append-only epoch, that memory only grows. \S\ref{sec:os-agentic} places a harness and a scaffold in these terms.

\paragraph{Existing designs in this structure.} For most designs, the structure gives a placement, which says where a design's
guarantees come from and what the design costs elsewhere (Table~\ref{tab:designs}). Some rows are more than a placement.
Compaction and paging have a price that the structure names: the break of the prefix chain at the rewrite.
Leyline's directives~\cite{ma2026leyline} lower exactly that price (Appendix~\ref{app:further-cache}). Constrained decoding is the Oracle's side of the declaration: the
declaration supplies the thunks, and the constraint restricts every request to $L_\Lambda$.

\begin{table*}[t]
\centering
\small
\begin{tabular}{@{}>{\raggedright\arraybackslash}p{0.22\textwidth}>{\raggedright\arraybackslash}p{0.39\textwidth}>{\raggedright\arraybackslash}p{0.33\textwidth}@{}}
\toprule
Design & In this structure & What the paper says of it \\
\midrule
A tool-calling loop~\cite{yao2023react} & an excursion that yields with a request; the tool's result is the reply, appended before the next $\mathtt{uret}$ & the Agent form (\S\ref{sec:os}) \\
\rowrule
Several calls in one answer; a planned graph of calls~\cite{kim2024llm} & several calls in one answer, one request, all served before the task runs again; a plan that Priestess code executes is an answer the Priestess interprets & several calls in one answer are one request of an Agent task; a planned graph that Priestess code executes runs in Workflow form, with the Agent paradigm, since the model wrote the plan (\S\ref{sec:os-agentic}); its parallelism is the Priestess's \\
\rowrule
Code as the action~\cite{wang2024executable,li2026harness} & the answer is a program that Priestess code runs: an interpretation by the Priestess & the answer gains no execution right by itself (K4); checking it first is the Priestess's obligation \\
\rowrule
Sub-agents and delegation~\cite{wu2024autogen,ke2026quine} & spawn or fork; fork answers one request twice, and the two histories share their prefix & fork is one kernel service that builds the prefix tree (\S\ref{sec:eclipse}); the child's result reaches the parent only through the parent's read (\S\ref{sec:archnights}) \\
\rowrule
Compaction of the context & the kernel rewrites the task, and the summary, the answer of a P-mode call, is data the kernel writes & a new epoch: the prefix chain breaks after the part of the history the rewrite keeps (\S\ref{sec:eclipse}); in PRTS, a swap that keeps the head (\S\ref{sec:an-checkpoint}) \\
\rowrule
Memory tiers paged into the context~\cite{packer2023memgpt} & moving content out is a rewrite by the Priestess; bringing it back is a read the Oracle requests & as with compaction; an interrupt becomes content appended at a switch \\
\rowrule
Skills~\cite{anthropic2025equipping} & dynamic libraries linked lazily: a short declaration in the history from the start, the definition loaded by one call & the system library of ArchNights (\S\ref{sec:archnights}) \\
\rowrule
Tool protocols such as the Model Context Protocol~\cite{anthropic2024introducing,schlapbach2026formal} & a tool's description is part of the declaration $\delta_\Lambda$, and a call is a request in $L_\Lambda$ that a kernel service serves & the declaration makes a call possible, not certain (\S\ref{sec:query}) \\
\rowrule
Constrained decoding~\cite{beurerkellner2023prompting,beurerkellner2024guiding,zheng2024sglang} & the Oracle's answers are restricted so that every request lies in $L_\Lambda$, or every P-mode answer has a given form & membership becomes certain; that the request is the right one does not \\
\rowrule
Prefix and prompt caching~\cite{kwon2023efficient,zheng2024sglang,gim2024prompt} & a cache of the Oracle's computation keyed by the history; histories of the same program share their first records, the declaration among them & the state cache, the prefix chain and its shared key (Proposition~\ref{prop:prefix-chain}); the shared first records are the root of a prefix tree; reuse of modules at reserved positions~\cite{gim2024prompt} falls outside the state cache (Appendix~\ref{app:further-cache}); their scheduling is in Table~\ref{tab:sched} \\
\rowrule
Checkpoint, rewind and restore~\cite{zheng2026when} & a restore gives a new task a history equal to an old one, by a write of the kernel & a channel for replay, which a policy at the service can limit (Appendix~\ref{app:eclipse-replay}); the checkpoints of PRTS (\S\ref{sec:archnights}) \\
\rowrule
Workflow programs and engines~\cite{khattab2024dspy,beurerkellner2023prompting} & the Workflow form: a model node is a P-mode call, and a node that the model writes runs only by the Priestess's interpretation & no operating system mediates the task (Definition~\ref{def:forms}) \\
\rowrule
Approval of a privilege expansion by a person~\cite{shi2025progent,wang2026agentspec} & the person is an Oracle called in P-mode, and the approval is data the kernel acts on & in the shell of ArchNights, a line acts with privilege only if the person typed it (\S\ref{sec:archnights}) \\
\bottomrule
\end{tabular}
\caption{Existing designs in this structure.}
\label{tab:designs}
\end{table*}

\subsection{Theory of Computation}
\label{sec:related-theory}

The machine is a construction of computability theory. Its two parts, the oracle and the two stacks, are standard, and so is a
computation that exchanges data with a party outside it. Each paragraph says what the machine shares with one line of this work and
where it departs from it.

\paragraph{Oracle machines and relativization.} Oracle machines go back to Turing's o-machines~\cite{turing1939systems} and Post's
reducibilities~\cite{post1944recursively}. Relativization has a known limit: some questions, P against NP among them, cannot be
settled by proofs that hold relative to every oracle~\cite{baker1975relativizations}. Our results are relativized in the same
sense. They hold for every Oracle that meets their hypotheses (in \S\ref{sec:os}, every Oracle, except for K5, which needs an
Oracle that makes the machine good), and so say nothing about what a particular Oracle does. Statements about language models, the Lynchpin
hypothesis among them, are therefore discussion, not theorems of the machine.

\paragraph{The oracle elsewhere.} Many fields use an oracle, and we claim nothing new in choosing one. In cryptography all parties
share a public random oracle, which a hash function replaces in practice~\cite{bellare1993random}. In query learning a teacher
answers membership and equivalence queries~\cite{angluin1987learning}, and oracle-guided inductive synthesis defines its oracle as
a nondeterministic map from the dialogue so far and a query to a response, which it assumes to be sound~\cite{jha2017theory}.
Optimization counts a method's calls to a black-box oracle~\cite{nesterov2004introductory}, and software testing calls the
difficulty of judging an output the oracle problem~\cite{barr2015oracle}. Other lines do not trust the oracle. An interactive
proof must reject a false claim whatever an all-powerful prover answers~\cite{goldwasser1989knowledge,babai1985trading}, a program
checker uses the program it checks as its oracle~\cite{blum1995designing}, and a robust oracle machine accepts the same language
with every oracle and runs fast with a helping one~\cite{schoning1985robust}, much as K1--K4 hold for every Oracle and K5 needs a
good one. Language models now take several of these roles: fallible membership oracles~\cite{chen2024probabilistic}, test
oracles~\cite{molina2025test} and provers checked by a verifier~\cite{amit2025models}. Our Oracle is untrusted. It may
answer a query in more than one way, and the contract holds whatever it answers. Complexity theory separates an oracle, whose
answer depends on the query alone, from a prover, whose answer may depend on all earlier queries~\cite{goyal2010interactive}. Our
Oracle is an oracle in this sense, but its query is the whole stack, so within an append-only epoch it sees what a prover sees.

\paragraph{Interactive computation.} Interactive computation studies machines that exchange data with an environment while they
compute~\cite{goldin2006interactive}, as reactive systems interact continuously with theirs~\cite{harel1985development}.
Persistent Turing machines keep a work tape across interactions, and their class is isomorphic to a general class of effective
transition systems~\cite{goldin2004turing}. Interactive Turing machines translate infinite streams~\cite{vanleeuwen2006theory}, and
with advice they are equivalent to computers whose hardware and software are upgraded over time~\cite{vanleeuwen2001turing}.
Ordinary interactive algorithms query their environment within a step and finish the step only when every query has been
answered~\cite{blass2006ordinary}. An online process can also be read as an oracle machine whose oracle is its
environment~\cite{soare2009turing}. The \mach{O-2-PDA} is a machine of this kind. A call waits for its answer, the stacks persist
across calls, the Oracle may return more than one answer to a query, and nothing else is assumed of it. In its choice of Oracle the
paper therefore adds nothing to interactive computation. Whether interaction adds power beyond Turing machines has been
disputed~\cite{goldin2008interactive,cockshott2007there}, and lineages of language models have been argued to be equivalent to
interactive Turing machines with advice~\cite{wiedermann2024large}. Our results do not depend on that question, since with a
deterministic Oracle the \mach{O-2-PDA} computes what an oracle Turing machine with the same Oracle computes
(\S\ref{sec:machine-properties}). The machine departs from these models in two places. The first is a form. A persistent Turing
machine keeps its memory in the device and reads one input token per interaction. The \mach{O-2-PDA} keeps the record of the
interaction on its own stack, sends the whole stack as the query and appends the answer to it (\S\ref{sec:machine-ar}), so an
Oracle with no memory of its own still answers the whole history, and the program decides which history that is. The second is the
breaking. In these models the machine runs its program and the environment only supplies inputs or
replies~\cite{goldin2004turing,vanleeuwen2006theory,blass2006ordinary}. In O-mode the Oracle runs a program of its own on one
stack, and control returns to the Priestess only at the trap label. That is the symmetry breaking from which \S\ref{sec:os}
derives an operating system.

\paragraph{Two stacks, programs and random access.} The machine of \S\ref{sec:machine} is a two-stack pushdown automaton written
as a program in the style of register machines~\cite{shepherdson1963computability}, and it computes what a Turing machine
computes~\cite{minsky1967computation,hopcroft1979introduction}. Pushdown automata with oracles have been used to study
reducibilities among
context-free languages~\cite{yamakami2013oracle}. Those are one-stack devices, and the second stack is what makes ours Turing
complete; the two stacks also carry the two colours that \S\ref{sec:os} orders. In systems of communicating pushdown automata a
query symbol transfers the whole stack of the component it names onto the stack of the component that asks, but the parties are
pushdown automata, not an oracle~\cite{csuhajvarju2000parallel}. Stack computers, Forth chips among them, are a line
of their own~\cite{koopman1989stack}. \S\ref{sec:query-memory} keeps the stacks as the machine's semantics and, for the reasons
given there, lays them out as segments of one von Neumann memory~\cite{vonneumann1945first,cook1973time,shi2008virtual}.

What is new is the combination: an oracle instruction whose query is the whole content of one stack and whose
answer is appended to it, an operating system and its contract derived by breaking the symmetry of the stacks' colours, and that
machine laid out on a von Neumann computer with one instruction for the Oracle and a mode for it below user mode. Each ingredient has a
precedent in this section or in \S\ref{sec:related-parallel}: oracles that see the whole history, oracles that are not trusted, an
untrusted model under a kernel, a learned function behind an instruction. The combination has no precedent we know of.

\subsection{Programming Languages}
\label{sec:related-pl}

From programming languages the paper takes one vocabulary, call-by-push-value, and one reading of effects, and it declines one
reading, duality.

\paragraph{Call-by-push-value.} We use a fragment of call-by-push-value~\cite{levy2003call} to type the cells of the machine and to
name its crossings. The stack machine of call-by-push-value pushes and pops continuation frames and values, thunks
among them~\cite{levy2006call}. The calculus has adjunction models with stacks~\cite{levy2005adjunction}, and it decomposes Moggi's
monads~\cite{moggi1991notions}. We use none of
this metatheory (\S\ref{sec:os-cbpv}), only a vocabulary for when stored content is data and when it runs. The fragment is a simplification of Levy's calculus,
chosen so that the definition of an operating system, and the concepts derived from it, remain clear without the rest of the calculus.
It keeps the two shifts $U$ and $F$, and its one ground type $\mathsf D$ is, in Levy's syntax, a countable sum $\Sigma_w 1$ over the
words. It leaves out products, sums other than $\mathsf D$, and the function type $A \to B$, whose application pushes an argument
and gives the calculus its name. No step of the machine passes an argument, since the Oracle reads its operand from storage and
$k$, the code at the trap label, takes none. A formal analysis of the machine in call-by-push-value would instead introduce the full calculus
and derive them rigorously in it.

\paragraph{Not a duality.} Call-by-push-value sits in a rich theory of computational duality. Values and continuations are
dual~\cite{filinski1989declarative}. Call-by-value and call-by-name are exchanged by a syntactic
involution~\cite{curien2000duality,wadler2003call}, their models form dual categories~\cite{selinger2001control}, and polarized
calculi unify them~\cite{zeilberger2008unity,curien2010duality,downen2018tutorial}, building on focusing in linear
logic~\cite{andreoli1992logic}. The pairing of Agents and Workflows in this paper is not one of these dualities. A duality relates
evaluation orders, or a term and its context, within one calculus or between its models. The two forms are two placements of a
task's program on one machine, related by an interpreter and a compiler (\S\ref{sec:os-agentic}). The two readings of an
interaction, a force from one side and a return from the other, are readings of one transition, not an exchange of programs.

\paragraph{Effects and handlers.} One reading of effects does carry over. Model calls have been treated as algebraic effects that
handlers discharge~\cite{wang2025composable}, and the Workflow form treats an Oracle call as such an effect. The Agent form inverts
the roles: the Oracle's history issues the requests, and the kernel at $\ell_{\mathrm{trap}}$ is their handler. A handler that
changes how a program's calls are executed is, in the Workflow form, a Priestess-side scheduler of the Oracle's work
(Appendix~\ref{app:further-sched}).

\subsection{Operating Systems and Computer Architecture}
\label{sec:related-os}

Definition~\ref{def:os} and the contract are stated against the operating systems and processors they borrow from. Each paragraph
shows what the machine retains and what it changes.

\paragraph{What an operating system is.} Textbooks define an operating system as a virtualizer, a standard library and a resource
manager~\cite{arpacidusseauoperating}, as an extended machine and a resource manager~\cite{tanenbaum2014modern}, or as a resource
allocator and control program~\cite{silberschatz1994operating}. One tradition asks a neighbouring question from the other side and
answers it by minimality: a concept belongs in a kernel only if moving it out would prevent the functionality the kernel must
provide~\cite{liedtke1995microkernel}. Definition~\ref{def:os} is narrower: it says what makes one program
the operating system of another on one machine, and leaves to its code what that program manages. It is closest to the exokernel
and the library operating system, in which a kernel multiplexes and libraries carry the
abstractions~\cite{engler1995exokernel,kaashoek1997application}, with descendants in the library operating systems and unikernels
of later work~\cite{porter2011rethinking,madhavapeddy2013unikernels}. A library operating system moves abstractions out of the
kernel into the application, where they can be specialized and need fewer crossings into the kernel. ArchNights reverses the move: on
the \mach{O-2-PDA-TS} the crossing that costs is the Oracle's round trip, so semantic delegation moves abstractions out of the
Oracle's history into the kernel. The interpreter of Proposition~\ref{prop:interpreter}, which does the work of an operating system
without privilege, is in the position of a unikernel's library: whatever isolation its tasks have is a property of its code.

\paragraph{Protection and privilege.} Protection has its principles, least privilege and complete mediation among
them~\cite{saltzer1975protection}, its hardware rings~\cite{schroeder1971hardware} and its access matrix~\cite{lampson1974protection}.
That matrix is kept by column, as an access-control list for each object, or by row, as the capabilities each subject holds,
and capability machines carry the second into hardware~\cite{watson2015cheri}. What S restricts is a stored content's right to
run, and S attaches that right to the colour of the stack the content is on, as a page table attaches the right to execute to a
page.
On the machine a privilege level is a mode of T, the transition breaking of \S\ref{sec:os}. Complete mediation has its place at the
safety boundary of Theorem~\ref{thm:boundary}: by K3 every return of control to the Priestess is a trap to one label, so every
request reaches the kernel there first. Singularity's isolation by language, and not by
hardware~\cite{hunt2007singularity,hunt2007sealing}, is the static, type-based form of the isolation by check in \S\ref{sec:os}.
Appendix~\ref{app:eclipse} discusses side channels of speculative execution~\cite{lipp2018meltdown,kocher2019spectre} and
protections that also restrict the trusted side, as SMEP and SMAP do~\cite{intelsdm}. Information-flow control, for
agents as elsewhere, labels values with a lattice of security levels~\cite{denning1976lattice}. Once both breakings are chosen, they
order the two sides, P above O. That order is one of privilege between the sides, not of information flow, which is why LLMbda's
labels and the contract impose different constraints (\S\ref{sec:related-parallel}).

\paragraph{Virtualization and several kernels.} A virtual machine monitor can be built when every sensitive instruction is
privileged, so that it traps~\cite{popek1974formal}. The machine has this property by its rules: O-mode executes no instruction of its own
and so has no sensitive instruction to run unprivileged; the Oracle can only append to its own task; and every way out of O-mode
is a trap to one entry (Theorem~\ref{thm:contract}). With ZOOT, a question returns one level down (\S\ref{sec:an-zoot}): how the
users of one privilege level interact with those of another. Operating systems have answered it in three ways. Kernel-isolating
virtualization runs several kernels on a monitor below them, each kernel the single supervisor of its guest~\cite{barham2003xen}, and RISC-V's hypervisor
extension gives that monitor a level of the privileged architecture~\cite{waterman2021risc}. Kernel-sharing virtualization keeps one
kernel and divides it among its users, by security contexts in Linux-VServer~\cite{soltesz2007container} and by namespaces and
control groups in the containers that followed. A third way keeps several
principals at the supervisor level and moves the checks between them to machine mode: Keystone's security monitor runs in M-mode and
isolates enclaves by physical memory protection, each enclave with its own runtime~\cite{lee2020keystone}. In the terms of
Definition~\ref{def:os}, kernel isolation and a security monitor place several operating systems $(K, \Lambda)$ under an arbiter
above them, and containers divide one operating system among its users. The contract of Theorem~\ref{thm:contract} holds for each
of them whatever the placement. What the kernels share is Priestess-side state, which K1 and K2 already keep out of every task.
Isolation among the users inside the Priestess is the arbiter's work, not the machine's.

\paragraph{Verified kernels.} seL4 is proved to refine its abstract specification down to its C code~\cite{klein2009sel4},
CertiKOS stacks certified abstraction layers into a concurrent kernel~\cite{gu2016certikos}, and Hyperkernel designs its interface
for push-button verification~\cite{nelson2017hyperkernel}. They verify a kernel against a specification. Some of this work also
builds the specification in layers: CertiKOS specifies each certified abstraction layer by hand, upward from the machine's assembly
layer~\cite{gu2015deep}, and one recent line starts from a machine semantics already decomposed by privilege
level~\cite{malecha2025modular}. In both, a person writes the specification and the kernel's code is proved against it; what is
derived here is the contract, from the rules of the machine, and the code that keeps it is a separate question. This paper's contract is what every
kernel on the machine can rely on; the correctness of a particular kernel, its service policy
included, remains a property of that program. A kernel for the \mach{O-2-PDA-TS} could be verified in their manner, and the
contract is the part of its specification that the machine already provides. CertiKOS's layered abstract machines are closest in
spirit to an operating system defined on an abstract machine.

\paragraph{The instruction set.} ArchNights builds on the von Neumann architecture~\cite{vonneumann1945first}, on RISC-V's privileged
architecture, whose xRET it follows to enter O-mode~\cite{waterman2021risc}, and on RoCC, Rocket Chip's interface
for custom coprocessors~\cite{asanovic2016rocket}. Everything else that the Priestess computes is left to an ordinary processor
(\S\ref{sec:query}), and a characterization of agentic workloads measures that work around the model~\cite{raj2025cpu}.

\subsection{Inference Systems and Agent Safety}
\label{sec:related-serving}

Three fields that \S\ref{sec:eclipse} touches have grown large: the encoders of language models, the caches and schedulers of the
systems that serve them, and the safety of agents. We summarize what the paper takes from each and what it adds.
Appendix~\ref{app:further} places their work on the machine.

\paragraph{Encoders.} Tokenizers have a formal theory, from encoders of bounded variation~\cite{gastaldi2025foundations} to the
finite lookahead of byte-pair encoding~\cite{berglund2023formalizing}, and the boundary between a prompt and its continuation is a
known source of bias~\cite{lundberg2023art,phan2024understanding}. \S\ref{sec:eclipse} adds a use; it does not add a theory. A separator
at the end of every Frame makes the encoder's loss zero there. The separator synchronizes the two processes at every Frame boundary
when, in addition, the Oracle is Frame-faithful and called through the Frame interface, and the kernel's writes end with
$\natural$. This synchronization grounds the cache results (Appendix~\ref{app:further-encoders}).

\paragraph{Caches and schedulers.} Serving systems share and page the KV cache~\cite{kwon2023efficient,zheng2024sglang}, and
program-aware servers schedule each call by the program it belongs to~\cite{lin2024parrot,luo2026agentix,kang2026thunderagent}.
\S\ref{sec:eclipse} states these mechanisms against the machine. Appendix~\ref{app:further-cache} explains where reuse beyond prefixes
steps outside the model of \S\ref{sec:eclipse} and why compaction and paging start a new epoch, and
Appendix~\ref{app:further-sched} sorts the schedulers by the side they belong to and the side they schedule.

\paragraph{Prompt injection and its defences.} The questions this raises are older than the models that raise them: confining a
process whose interior is not assumed~\cite{lampson1973confinement}, a monitor that mediates every
crossing~\cite{anderson1972computer}, non-interference between what one party does and what another
observes~\cite{goguen1982security}, and the separation of a policy from the mechanism that enforces
it~\cite{levine1975policy}. The untrusted party is now a language model, which reads data and instructions in one history; the
machine fixes where a monitor's mediation takes place, at the safety boundary of \S\ref{sec:eclipse-safety}, and leaves the policy
to the kernel.
Prompt injection, in which data that an agent reads is obeyed as an
instruction~\cite{greshake2023not}, made the safety of agents a field of its own, with benchmarks for attacks and
defences~\cite{debenedetti2024agentdojo}. A systematization separates crossings mediated over provenance, which admit a
deterministic check, from crossings mediated over the meaning of content~\cite{zhang2026when}. The contract is of the first kind.
Whether a request is the right one is of the second, which the contract leaves to the kernel's policy. Appendix~\ref{app:further-safety}
places the defences built for agents at the three positions the safety boundary leaves them: on the way in, at the Priestess's
writes; inside the Oracle, where no rule of the machine can certify a defence; and on the way out, at the trap or, in the Workflow
form, at the gates. It also examines, on the machine, how a harness can escalate privilege.
The boundary gives each defence its position and says what the defence sees there. It is not itself a
defence against prompt injection: which request crosses is the Oracle's answer, and no rule of the machine inspects it.

\section{Conclusion}
\label{sec:conclusion}

This paper set out to give agentic systems what the two clouds of \S\ref{sec:intro-clouds} deny them: a common ground and a
unified view. The answer is one construction, in three parts. The \mach{O-2-PDA} adds a single Oracle instruction to a two-stack
machine; the symmetry breakings S and T derive an operating system on it, with the kernel contract as an invariant of the machine
and Agents and Workflows as two placements of a task's program; and the construction V fits the machine to a von Neumann computer,
which ArchNights demonstrates can be built and can serve as the basis of an agent harness. The statements this machine yields are
collected in \S\ref{sec:intro-contrib}.

\paragraph{What the machine leaves open.}
\label{sec:future}
A construction that holds together also shows where it ends, and each question below is askable because the machine exists.

\emph{Within the theory:} a refinement from the \mach{O-2-PDA-TSV} to ArchNights' instruction set and kernel, which would carry the
contract and the safety boundary over to the prototype and whose lemmas are the obligations that \S\ref{sec:an-obligations} lists;
the full call-by-push-value metatheory of the machine; and its complexity, in transitions or in tokens. \emph{Beyond these:} safety under asynchrony and concurrency, several Oracles and both orderings of privilege
combined, replay through rebuilt histories, isolation among users inside the Priestess, reuse of cache beyond prefixes, finite
capacity, and Oracles that are not autoregressive inside.

\emph{On the prototype:} an asynchronous realization, in the microarchitecture or in the operating system; a kernel on several
processors; and ArchNights-FS with ZOOT, built and validated, which needs a way to run the instruction set in the full system. One
candidate is KVM~\cite{kivity2007kvm} modified to support Oracle hardware run by the host.

\emph{About language models:} factorization at the yield symbol and Frame-faithfulness, checked per tokenizer and per model; the
Lynchpin hypothesis, approached through the operators of the system library or in the stochastic-oracle setting
(Appendix~\ref{app:further-oracles}); and when semantic delegation helps. These questions are empirical, and the companion
experimental study, AgentNights, takes them up together with a broader evaluation of harness quality.

The next step is AgentNights: the machine, the kernel and the system library implemented in host software, without a simulator,
drawing on language models' own understanding of operating systems for a more flexible and efficient agent harness. Agentic
systems have been built faster than they have been understood. The machine is offered as the place where understanding
accumulates---where a design choice is argued from invariants and bounds, once for both forms.

Good night, Kristen.

\bibliographystyle{plain-etal}
\bibliography{references}

\clearpage
\appendix
\section{Glossary and Notation}
\label{app:glossary}

\begin{description}
\item[Agentic system.] A system that uses language models to carry out concrete tasks. It has two forms, Workflows and Agents
  \cite{anthropic2024building}. On the \mach{O-2-PDA-TS} they are the Workflow form and the Agent form of
  Definition~\ref{def:forms}.
\item[Agent harness.] A system that runs Agents. In this paper: the machine, the operating-system kernel and the system
  library. A harness on an ordinary computer implements the machine, the kernel and part of the library in one program
  (\S\ref{sec:os}).
\item[Agent runtime.] The infrastructure of an Agent harness: the machine and the kernel.
\item[Agent scaffold.] The system library together with the programs a task may run: the kernel-and-library side of the
  machine, $\delta_\Lambda$, $L_\Lambda$ and the handlers. It exposes its operations to the task as thunks
  (\S\ref{sec:os-agentic}).
\item[Workflow engine.] A system that runs Workflows. In this paper: the machine and programs that run on the bare machine,
  with no operating system between them and the Oracle.
\item[Operating system.] A kernel and a system library (Definition~\ref{def:os}). The kernel runs in P-mode. The library's
  declaration sits in the task's history, and the code that serves its requests is in the kernel. Where no confusion arises
  we call the kernel the operating system.
\item[Stack, colour, type, mode, computation.] Five different notions (\S\ref{sec:os}). A stack is storage, and its colour, O or
  P, is fixed (Definition~\ref{def:colouring}). A type, $\mathsf D$ or $U\,B$, says whether a stored element is data or a
  suspended program. A mode, O-mode or P-mode, says whether the Oracle's answers or the Priestess's code choose the next
  transition. A computation, the Priestess computation or the one inside the Oracle, is one of the two computations of a run.
\end{description}

Table~\ref{tab:terms} collects the terms and symbols of \S\ref{sec:os}.

\begin{table*}[t]
\centering
\small
\begin{tabular}{@{}>{\raggedright\arraybackslash}p{0.3\textwidth}>{\raggedright\arraybackslash}p{0.52\textwidth}>{\raggedright\arraybackslash}p{0.12\textwidth}@{}}
\toprule
Term or symbol & Meaning & Defined in \\
\midrule
Oracle stack, Priestess stack & a stack coloured $O$, $P$ & Def~\ref{def:colouring} \\
\rowrule
P-mode $\langle \mathsf P, \ell, \sigma\rangle$, O-mode $\langle \mathsf O, u, \sigma\rangle$ & the two modes of T, each with the form of its configurations & Def~\ref{def:t} \\
\rowrule
$\natural$, yield & the yield symbol; an answer that ends with it & Def~\ref{def:t} \\
\rowrule
$\ell_{\mathrm{trap}}$ & the trap label, where every trap continues & Def~\ref{def:t} \\
\rowrule
$\mathtt{uret}\ u$ & enters O-mode on the stack $u$, which under S is an Oracle stack & Def~\ref{def:t} \\
\rowrule
trap, yield trap, excursion & a transition from O- to P-mode; one caused by a yield; a maximal O-mode segment with its $\mathtt{uret}$ and trap & Def~\ref{def:t} \\
\rowrule
$\mathtt{uret}\ u\ \beta$, budget trap & entry with a budget of $\beta$ O-mode transitions; the trap when it is spent & Def~\ref{def:budget} \\
\rowrule
K1--K5 & the kernel contract & Thm~\ref{thm:contract}, Prop~\ref{prop:k5} \\
\rowrule
$k$, service entry & the kernel's code at $\ell_{\mathrm{trap}}$, as the thunk $\mathtt{thunk}(\ell_{\mathrm{trap}})$ & Defs~\ref{def:t}, \ref{def:os} \\
\rowrule
task, task stack, history & a unit of work; the Oracle stack the kernel enters for it; that stack's content & Def~\ref{def:os} \\
\rowrule
request, reply, system call & what a yield appended since the last write; what the kernel writes back; a request in $L_\Lambda$ & Def~\ref{def:os} \\
\rowrule
$\Lambda = (L_\Lambda, \delta_\Lambda)$ & the library: request language and declaration & Def~\ref{def:os} \\
\rowrule
$K$, $C$, $\Delta$ & kernel code, task code, task data & Def~\ref{def:sides} \\
\rowrule
a task's program & its executable part: history content the Oracle runs, or task code & Def~\ref{def:sides} \\
\bottomrule
\end{tabular}
\caption{Terms and symbols of Section~\ref{sec:os}.}
\label{tab:terms}
\end{table*}

\section{Proofs}
\label{app:proofs}

This appendix contains the proofs omitted from the body, in the order of the results. The other proofs are short enough
to sit beside their statements.

\begin{proof}[Proof of Lemma~\ref{lem:split}]
Let $s$ be a segment of $a$. The case of $b$ is symmetric. The block moves the segments above $s$ from $a$ onto $b$, one symbol
at a time, keeping their markers, until $s$ is the top segment of $a$. The instruction is then performed on the top of $a$. A
marker at the top means that $s$ is empty: a $\mathtt{jeq}$ for $\bot$ succeeds there, and a $\mathtt{pop}$ leads to a stuck
configuration, as it would on a stack of its own. The symbols moved aside are then transferred back in the reverse order, which
restores their order. The segments below $s$ and those already on $b$ were never touched. Each block moves and restores a finite
prefix, so it takes finitely many steps.
\end{proof}

\paragraph{The split lemma and the Oracle instruction.} Lemma~\ref{lem:split} extends to the Oracle instruction on a logical
stack $s$. The block rearranges the tape held by $a$ and $b$ so that $s$ lies alone on $a$, in order, and every other logical
stack, with its markers, lies on $b$. This is a computable rearrangement of a finite tape, which the \mach{2-PDA} performs by
simulating a Turing machine on its two stacks, ending with the head between $s$ and the rest. The block then executes the Oracle
instruction on $a$, and rearranges the tape back, with $s$ now longer by the answer. After a rejection it rearranges the tape
back and continues at $l_e$. This extension is what the proof of Lemma~\ref{lem:locality} uses.

\begin{proof}[Construction for Proposition~\ref{prop:interpreter}]
Rename each task stack $u$ to a fresh Priestess stack $d_u$, and relocate the program's jump targets to account for the code
inserted below. Replace each $\mathtt{uret}\ u$ by code that records $u$ as one fresh symbol on a fresh Priestess stack and then
loops: call the Oracle on $d_u$, with a rejection label at which the run is stuck (a $\mathtt{pop}$ on a fresh empty stack); if
the top of $d_u$ is $\natural$, jump to $\ell_{\mathrm{trap}}$; otherwise loop. Since every answer is non-empty
(\S\ref{sec:machine}), the top of $d_u$ after an accepted call is the last symbol of the new answer, so the test is exactly
whether the answer is a yield, and the loop runs exactly the excursion that $\mathtt{uret}\ u$ would have run. A rejected query
leaves both runs stuck: the run of $K$ because Definition~\ref{def:t} gives no successor, and the run of $K^\flat$ at its
rejection label. The rest of the program is unchanged. With budgets (Definition~\ref{def:budget}), the code also empties a fresh
Priestess stack, pushes $\beta$ symbols onto it, pops one after each call that does not yield, and jumps to
$\ell_{\mathrm{trap}}$ when it is empty.
\end{proof}

The convention of non-empty answers (\S\ref{sec:machine}) ensures that the interpreter tests a stack top that belongs to the new
answer. With an empty answer, a history already ending in $\natural$ would retain that symbol, and the interpreter would
incorrectly terminate the current excursion.

\begin{proof}[Proof of Proposition~\ref{prop:forced}]
A task in Agent form is entered with $\mathtt{uret}$ on its task stack, and a task in Workflow form calls the Oracle in P-mode
(Definition~\ref{def:forms}), which gives the first claim. Let $U \cap C = \emptyset$, and let a P-mode call read its operand $d$.
The colour of $d$ lies in $C$ and so not in $U$, so no $\mathtt{uret}$ enters $d$. The O-mode rules are those of
Definition~\ref{def:t}, so K1 holds as proved, and no excursion writes $d$. By the locality lemma, $d$ then holds only what the
Priestess wrote there and the answers of earlier P-mode calls on it. Conversely, let $c \in U \cap C$, and let $s$ be a stack of
colour $c$, empty at the start. Take the program $\mathtt{uret}\ s$, $\mathtt{oracle}\ s\ 3$ with $\ell_{\mathrm{trap}} = 2$, and
the Oracle whose encoder and decoder are identities and whose only answer is $\natural$, so that $O(w) = \{\natural\}$ for every
$w \in \Sigma_Q^*$. The run enters O-mode on $s$, appends $\natural$ and traps to $\ell_{\mathrm{trap}}$, where the P-mode call reads
the $\natural$ that O-mode appended. Finally, with two colours the only non-empty disjoint sets $U$ and $C$ are $\{O\}$ and $\{P\}$,
in either order, and the exchange $\varepsilon$ of the colours (\S\ref{sec:machine}) maps one machine to the other.
\end{proof}

\begin{proof}[Proof of Proposition~\ref{prop:codec-good}]
(1) Definition~\ref{def:t} gives O-mode rules only for accepted queries. The transition does not trap, so the next query is made in
O-mode, and it is the history followed by the answer, which is not in $\Sigma_Q^* = \mathrm{dom}(E)$. (2) $\rho \circ D$ writes only symbols of $\Sigma_Q$, so the wrapped Oracle satisfies
the hypotheses of Proposition~\ref{prop:closure}. (3) The wrapping changes neither $A$ nor $E$, and Definition~\ref{def:int-ar}
requires of the decoder only that it work symbol by symbol, that $\tau$ decode to $\natural$, and that $\natural$ occur in no other
decoding. The map $\rho$ is applied symbol by symbol, fixes $\natural$ and, being injective, maps no other symbol to it.
Conversely, since $\rho$ is injective and fixes $\natural$, $D$ has these properties when $\rho \circ D$ does.
\end{proof}

\section{More on Eclipse}
\label{app:eclipse}

This appendix collects the parts of \S\ref{sec:eclipse} that the body summarizes: encoders in more detail, the algebra of
hierarchies, the stores of the two sides, the coherence of the encoding cache, the prefix chain in the two forms, the schedulers, a
budget trap inside a Frame, where non-empty answers and Frame-faithfulness hold, what keeping, dropping or hiding a model's
reasoning changes, and the rest of the safety analysis.

\subsection{Encoders}

\paragraph{Bounded variation.} The loss of Definition~\ref{def:loss} is a Step-level form of bounded
variation~\cite{gastaldi2025foundations}, after~\cite{berstel1979transductions}: if words at left distance $k$ have encodings
at left distance at most $C_k$, then $\mathrm{loss}_E(w, v) \le C_{|v|}$, where the left distance of $x$ and $x'$ is
$|x| + |x'| - 2\,|\mathrm{lcp}(x, x')|$.

\paragraph{Separators in practice.} Splitting at whitespace gives factorization at whitespace, but whitespace is not the separator.
With a proper dictionary, BPE as implemented has a finite lookahead constant, which depends on the dictionary alone and beyond
which a prefix of the tokenization is stable~\cite{berglund2023formalizing}. Its Step backtrack is therefore finite and depends on the
dictionary. An encoder need not be so well behaved. With Oracle symbols $a$ and $aa$ and longest match from the right end,
the encodings $E(a^{2m}) = [aa]^m$ and $E(a^{2m+1}) = [a][aa]^m$ share no prefix, so one appended $a$ loses $m$ symbols, and the Step backtrack
is unbounded with a vocabulary of two symbols.

\paragraph{The decoder.} The decoder works symbol by symbol (Definition~\ref{def:int-ar}): the decoding of a prefix of an answer is a
prefix of its decoding. In practice, when the machine-side symbols are characters, byte-level vocabularies are the exception: a
token may end inside a multi-byte character. Such a decoder buffers an unfinished character until its last byte arrives, and
the decoding of a prefix of an answer is again a prefix of its decoding.

\paragraph{Observing Steps.} With multi-token prediction~\cite{gloeckle2024better} a step yields up to $k$ tokens, and $k$ may vary. The alphabet $\Gamma_R$
is then the set of such sequences, and a Step is one of them. An inference engine's \texttt{generate} interface can expose Steps,
so the Priestess side can observe the internal transitions even though the alphabets of the two sides differ.

\subsection{Composing Hierarchies}

From the processor down to the memory, the levels of Definition~\ref{def:store} form a chain. Below the memory, the keys may be
divided among several backing stores (a swap area and files, for instance). The bottom is then the function that takes each key to
the backing store that holds it, and from the memory down, the levels form a tree. A level may also be shared: the chains of
several processors meet at a shared cache, a memory or a backing store. A write to a key $k$ updates the value the store keeps for
$k$, at every level whose working set holds $k$. A working set is usually finite, and a key may be inserted into it or deleted
from it at any time, as long as the key is still held at a level below or can be rematerialized. Whether the caches are inclusive
or exclusive is a policy on the working sets of adjacent levels. A system may also rematerialize at a level above its bottom, in
place of the levels below. When both are possible, a policy chooses between them, for instance when a miss is too expensive.

Since $C \rhd f$ is again a function on keys, hierarchies compose as functions do. Below, $\circ$ binds more tightly than $\rhd$, and
$t_i$ is the lookup cost of the $i$-th level.

\begin{construction}[composing hierarchies]
\label{con:compose}
\leavevmode
\begin{enumerate}
\item \textbf{Splitting.} Let $a_i = C_i \rhd \cdots \rhd C_n \rhd f$ for $i \le n$, and $a_{n+1} = f$. Then
  \[ a_i = \langle a_{i+1}, C_i \rangle \qquad (i \le n): \]
  below each level is a function (the access from the next level down), and the level is its memo.
\item \textbf{Connection.} Let the bottom of a hierarchy call a second system, $f = h \circ b \circ \kappa$, where
  $b = C'_1 \rhd \cdots \rhd C'_m \rhd g$ is the access of the second system, $\kappa$ maps the keys of the first to those of the
  second, and $h$ computes on the value returned. Then
  \[ C_1 \rhd \cdots \rhd C_n \rhd h \circ (C'_1 \rhd \cdots \rhd C'_m \rhd g) \circ \kappa \]
  is one hierarchy, in which $\kappa$ is a translation level that holds nothing ($W = \emptyset$), costs nothing and only
  translates the key, as Definition~\ref{def:store} admits, and $h$ post-processes the bottom. A key that misses in every $C_i$ costs
  \[ \textstyle\sum_{i \le n} t_i + c(h) + c(b, \kappa(k)). \]
\item \textbf{Recursion.} Let $f_1, \dots, f_m$ have memos $C_1, \dots, C_m$, with working sets $W_1, \dots, W_m$, and let each
  of them call the next once, $f_j = h_j \circ f_{j+1} \circ \kappa_j$ for $j < m$. Coherence lets each $f_{j+1}$ be replaced by its
  memo, and applying item 2 at each call gives $\langle f_1, C_1 \rangle = g_1$, where
  \[ g_m = C_m \rhd f_m, \qquad g_j = C_j \rhd h_j \circ g_{j+1} \circ \kappa_j \quad (j < m). \]
  With $k_1 = k$ and $k_{j+1} = \kappa_j(k_j)$, and $c_j$ the cost of $h_j$,
  \[ \begin{array}{@{}l@{}}
  c(k) = t_1 + [\,k_1 \notin \mathrm{dom}(W_1)\,]\bigl(c_1 + t_2 \\
    \qquad {} + [\,k_2 \notin \mathrm{dom}(W_2)\,]\bigl(c_2 + \cdots\bigr)\bigr). \qedhere
  \end{array} \]
\end{enumerate}
\end{construction}

In words: a hierarchy is a function seen through its memos, and it can be cut at any level (1). A miss at the bottom of one
system is a call into another, so two systems connect into one hierarchy, with a translation level that holds nothing where their keys
differ (2). When each function of a chain calls the next, the chain with the memos of its functions is a hierarchy with one level per
function, and its cost is paid down to the first memo that hits (3).

\paragraph{The cost of a state presentation.} The cost of Definition~\ref{def:state-pres} is item 3 applied to $\hat s$ itself,
since the recurrence $\hat s(x\gamma) = \varphi(\hat s(x), \gamma)$ calls $\hat s$ on the prefix $x$. Take $\kappa$ to drop the last
symbol, take $h = \varphi(\cdot, \gamma)$, and use the one memo $C_{\hat s}$ at every call. With only applications of $\varphi$ counted, a miss then
applies $\varphi$ from the longest cached prefix of $x$ on. A truncatable entry for $z$ holds every prefix of $z$, which gives
$\mathrm{cost}(z, x)$.

\subsection{The Stores of the Two Sides}

\paragraph{The Oracle's store.} An insertion into the state cache adds the states of the symbols computed by prefill and decode;
a rewrite of the task starts a new epoch, whose keys are new; and a truncation to a prefix deletes entries, at no cost for a
truncatable presentation. When the store is organized as a prefix tree, each key is one node of the tree: naming it takes constant
space, and it holds only the states of the symbols its parent lacks. If the store is keyed by Step instead and each transition is
a Step, then two adjacent queries need not share a prefix, since the encoding of the longer history may segment its end differently
(\S\ref{sec:eclipse-codec}). The node of the earlier query is then not an ancestor of the later one.

\paragraph{The Priestess's store.} For a CPU, one computation is one transition, an instruction of the program $\Pi$ of
Definition~\ref{def:2pda}, and $\Pi$ computes $y = \Pi(x)$ as a sequence of them, just as a Frame is a sequence of Steps. The keys of
the CPU's store are the addresses an instruction reads and writes, its levels are the processor's caches and its memory, and its
backing store is a disk. It has in general no rematerialization, whereas the Oracle's state cache rematerializes a miss by a prefill.
Apart from that, its keys are addresses whereas the Oracle's are histories, and that is the only difference between the two, since
both are systems of Definition~\ref{def:store}.

\subsection{Coherence of the Encoding Cache}

For each stack $s$ on which the Oracle is called, the encoding cache keeps an entry $(|x|, E(x))$, where, at the last call, $x$ is
the longest prefix of $\sigma(s)$ that is empty or ends with $\natural$. If $E$ factorizes at $\natural$ and $\sigma(s) = x\,z$ at
the next call, then $E(\sigma(s)) = E(x)\,E(z)$ (Theorem~\ref{thm:run-sync}(1)), and the codec reads and encodes only $z$: the
unfinished Frame at the last call, followed by everything appended since. The condition $\sigma(s) = x\,z$, under which the
encoding cache reads and encodes only $z$, is coherence between memory and the codec, and it holds while the stack only grows. A
write other than a push can change $x$ without changing its length, and the codec cannot determine from the stack's length alone, so the
writer invalidates the entry: the kernel does so when it rewrites a task stack, and the program does so when it rebuilds an
operand. Alternatively the codec compares $x$ with the stack, or keeps a version counter, never reused, that is incremented by every write other
than an append. A hash gives only a probabilistic guarantee.

Since one key indexes both caches (Proposition~\ref{prop:prefix-chain}(3)), the start of a new epoch invalidates both at once. The
two caches lag differently: at each call the encoding cache reads and encodes $D(y)\,v$, while the state cache already holds $y$
from generation and processes only $E(v)$.

\subsection{The Prefix Chain in the Two Forms}

Table~\ref{tab:chain} sets side by side what \S\ref{sec:eclipse-memory} says of the chain in each form: a machine invariant within an
append-only epoch in the Agent form, a property of the program text in the Workflow form.

\begin{table*}[t]
\centering
\small
\begin{tabular}{@{}>{\raggedright\arraybackslash}p{0.2\textwidth}>{\raggedright\arraybackslash}p{0.37\textwidth}>{\raggedright\arraybackslash}p{0.37\textwidth}@{}}
\toprule
 & Agent form (generation in O-mode) & Workflow form (generation in P-mode) \\
\midrule
Key of the next Oracle access & $\sigma(u)$, known to the kernel when it dispatches & $\sigma(d)$, known when the call is issued \\
\rowrule
Prefix chain & a machine invariant within an append-only epoch & a property of the program; sufficient: no $\mathtt{pop}\ d$ \\
\rowrule
Encoding cache & valid within an epoch; the kernel invalidates it when it rewrites the task & valid while the program only pushes onto $d$; the program invalidates it when it rebuilds $d$ \\
\rowrule
Prefix tree & a kernel service that copies a history & calls whose operands share a pushed prefix, as in parallelization \\
\rowrule
What breaks the chain & the kernel rewrites the task; a Frame that does not re-encode to itself; a budget trap inside a Frame & the program rebuilds its operand; a Frame that does not re-encode to itself, when answers are fed back \\
\bottomrule
\end{tabular}
\caption{The prefix chain and the caches in the two forms.}
\label{tab:chain}
\end{table*}

\subsection{The Two Schedulers in More Detail}

The Oracle side estimates the deadlines and priorities of the requests it serves. The Priestess side estimates which entries are
resident and hence what each access costs (Definition~\ref{def:state-pres}), and it uses the estimate for cache affinity. Each
side knows the other exactly when the other is transparent to it.

\paragraph{Scheduling the Oracle's computations.} An Oracle-side scheduler handles sequences of computations, not independent ones.
For a language model the computations of such a sequence are closely linked: a task alternates turns (assistant, tool, assistant,
user), and each assistant turn is an Oracle computation on the history the earlier turns left. The amount of the cache that the call $r_{t+1}$ can reuse from the call
$r_t$ depends on the granularity at which the history grows between them: a Step, a Frame or something in between.
\begin{itemize}
\item \textbf{At Frame granularity}, with the hypotheses of Proposition~\ref{prop:prefix-chain}(2), the state cache already holds
  the whole earlier history when the next query arrives: the query costs only what the kernel pushed since the last Frame, and
  the key of each query is an ancestor of the next one's in the prefix tree.
\item \textbf{At Step granularity}, each Step re-encodes the history, and beyond the growth of the encoding the state cache
  recomputes up to the Step backtrack (Proposition~\ref{prop:excess}); for the pair $(w, v)$ the bound is the one of that
  proposition, $(|E(wv)| - |E(w)|) + L_E^G$, and it is $L_E^G + 1$ only when the encoding grows by at most one symbol. For BPE with a proper dictionary, that backtrack is finite
  (\S\ref{sec:eclipse-codec}). A budget trap inside a Frame leaves a history that Proposition~\ref{prop:prefix-chain} does not
  cover (below).
\end{itemize}
This is for one task on one Oracle with a private cache. With several Oracles for one task, a state belongs to one Oracle, its
codec and its presentation, so a task that is moved to another Oracle finds no state of its history there unless that Oracle holds one,
and may pay a prefill of the whole encoding. With several tasks on one Oracle, the tasks share its stores. Under finite capacity,
which the machine does not model, the states of one task may evict those of another, and the reuse of
Proposition~\ref{prop:prefix-chain}(2) is then an upper bound. The tiers of the KV cache turn the trade-off between affinity and
load balance into a comparison of hit and miss costs level by level. The usual metrics (utilization, throughput, turnaround, waiting and
response time, SLO attainment, fairness and energy) can be defined over the events of the model (excursions, traps and Oracle
transitions), weighted by a cost per transition. We do not develop them, beyond the cost of Proposition~\ref{prop:return}.

\subsection{A Budget Trap inside a Frame}

A budget trap can occur inside a Frame, and the history there need not end with $\natural$. Consequence (i) of
Theorem~\ref{thm:run-sync} still confines the encoder's loss to the open Frame. Frame-faithfulness, however, applies only to completed Frames,
and the state that the Oracle holds need not be the encoding of the history. For example, if the Oracle generates a symbol $t$ with
$D(t) = ab$ and the encoder writes $ab$ as $[a][b]$, a budget trap right after $t$ leaves the history $ab$, encoded $[a][b]$, while
the Oracle holds the state of $[t]$. Proposition~\ref{prop:prefix-chain} does not cover this case.

\subsection{Where Non-empty Answers and Frame-Faithfulness Hold}
\label{app:eclipse-scope}

Two conditions matter more than their statements suggest: the convention of non-empty answers (\S\ref{sec:machine})
and Frame-faithfulness (\S\ref{sec:eclipse-codec}). For each, this subsection says where it holds by construction, where a
realization must preserve it, which results use it and what fails without it. The contract and the safety boundary use neither.

\paragraph{Non-empty answers.} For an internally autoregressive Oracle called through the Frame interface, the convention is a
consequence of Definition~\ref{def:int-ar}, not an assumption: every Frame ends with the terminator, and $D(\tau) = \natural$, so
every decoded Frame ends with $\natural$. Through the Step interface it holds when no symbol of $\Gamma_R$ decodes to $\lambda$
(\S\ref{sec:eclipse-codec}). With characters as the machine-side symbols, a byte-level vocabulary can break this, since a token
that holds only the first bytes of a multi-byte character decodes to no character (the paragraph on the decoder, above).

For an Oracle that is not internally autoregressive, and for a model behind an interface, the convention is an obligation of the
codec. An interface may return an answer with no visible text: when the output limit is spent on reasoning that the interface does
not return, when a filter withholds the content, or when a refusal is reported apart from the content. A codec keeps the
convention in one of two ways. It can decode such an answer to a fixed non-empty word, $\natural$ alone for instance, which in
O-mode is an empty request at a yield trap and which the kernel handles as an error (Definition~\ref{def:os}). This makes the real
Oracle total in this respect (\S\ref{sec:machine}). Or it can treat the answer as a failure of the Oracle operation, which writes
nothing and leaves it to the kernel to decide whether to enter the task again (Appendix~\ref{app:eclipse-replay}). The Oracle is then not total,
and the realization handles the failure.

Four results use the convention. The kernel distinguishes a yield trap from a budget trap by whether the task stack ends with $\natural$
(Definition~\ref{def:budget}). The interpreter of Proposition~\ref{prop:interpreter} tests a stack top that belongs to the new
answer (Appendix~\ref{app:proofs}). Within an append-only epoch the queries on a task strictly grow, so a query recurs on a task
only after the kernel rewrites the task. And an O-mode iteration never returns to the configuration it
left (\S\ref{sec:machine}). Without the convention, the first two need a record of where each answer begins, which a kernel or an
interpreter can keep beside the stack. The last two fail: an empty answer repeats the query, which samples the Oracle again with no
write of the kernel, and an excursion can then run in place until a budget, if there is one, ends it. K5 does not use the convention.

\paragraph{Frame-faithfulness.} Frame-faithfulness is a property of an Oracle with its codec, on the Frames the Oracle can
generate. For a language model it holds when three conditions hold at once: the model generates only tokenizations that the
tokenizer produces, the decoder keeps every generated token, and the next query is encoded by the same tokenizer, with nothing
rendered around the answer. Each can fail.
\begin{itemize}
\item \emph{Sampling.} A model may generate a tokenization that the tokenizer never produces
  (Appendix~\ref{app:further-encoders}). Restricting the model to canonical encodings, by conditioning or by
  construction~\cite{vieira2025blanguage}, removes this cause.
\item \emph{Decoding.} A decoder that maps some generated symbols to $\lambda$ drops them from the history. The reasoning of a
  model whose interface returns it apart from the answer, and leaves it out of the next query, is an example
  (Appendix~\ref{app:eclipse-reasoning}). Then $E(D(y))$ is the
  encoding of a text without them, and for a tokenizer $E(D(y)) \ne y$ whenever such symbols were generated.
\item \emph{Rendering.} A chat interface renders the history through a template on the provider's side, and the template is
  then part of the encoder. A Frame is Frame-faithful only if the template renders a past answer as the tokens that were generated.
\end{itemize}
A budget trap inside a Frame (above) is a fourth situation, outside the definition, which concerns completed Frames only.

Theorem~\ref{thm:run-sync}(2) says exactly when Frame-faithfulness holds on a history that ends at a Frame boundary,
Proposition~\ref{prop:prefix-chain} assumes it, and the remark after Proposition~\ref{prop:steps-invisible}, that a Frame-faithful
Oracle shows its number of Steps, uses it. Theorem~\ref{thm:run-sync}(1) with its three consequences, Proposition~\ref{prop:excess},
the contract and the safety boundary do not. When it fails, the cost is bounded and local. The next query recomputes
$|E(D(y))| - |\mathrm{lcp}(y, E(D(y)))|$ symbols beyond the push, at most the re-encoding of the last Frame
(\S\ref{sec:eclipse-memory}). The completed Frames before it still agree (consequence (iii) of Theorem~\ref{thm:run-sync}), and the
difference does not outlast the next query, which is encoded afresh. Under hypotheses (i), (iii) and (iv) of
Proposition~\ref{prop:prefix-chain}, with the private state cache of its item~2, an Oracle that is never Frame-faithful therefore computes each answer once by generation and prefills its re-encoding at most once.

Both conditions can be checked on a realization: the first by counting the answers that the codec had to repair, the second by
comparing each generated Frame with the encoding of its decoding, which needs an interface that returns the Oracle's symbols
(\S\ref{sec:eclipse-codec}).

\subsection{Reasoning: Kept, Dropped or Hidden}
\label{app:eclipse-reasoning}

A model that reasons before it answers generates a Frame $y = y_r\,y_a\,\tau$, in which $y_r$ is the reasoning. Whether $y_r$
enters the history is a choice of the codec or of the kernel, not of the machine: an Oracle is the triple $(A, E, D)$
(\S\ref{sec:machine}), and each choice below leaves $A$ unchanged. K1--K4, the crossing invariant and the safety boundary hold for every
Oracle, and so for every choice, and K5 holds wherever the machine is good. What changes is which hypotheses of
\S\ref{sec:eclipse} hold, and what the Priestess stores.
\begin{itemize}
\item \emph{Kept.} $D$ writes the reasoning onto the stack, and $E$ reads it back with the rest. This is the codec the body adopts.
  The Oracle can then be Frame-faithful, and the Priestess holds the reasoning and finds it in each request.
\item \emph{Dropped by the decoder.} $D$ maps every symbol of $y_r$ to $\lambda$. Working symbol by symbol, it can distinguish such a symbol
  from the same token in the answer only if $\Gamma_R$ marks each token with the part of the Frame it belongs to, as a model's
  reasoning channel does. The step relation can always add the mark, since it sees whether the reasoning has ended. So the Oracle
  is still internally autoregressive, and the encoder is unchanged. Frame-faithfulness fails whenever reasoning was generated (the
  decoding case of Appendix~\ref{app:eclipse-scope}), at the bounded, local cost given there: the next query recomputes at most
  $|E(D(y))|$ symbols beyond the push, the encoding of the visible part of the answer, and the states computed for the reasoning
  are not reused. The Step interface is not an Oracle with this decoder, since a Step of reasoning decodes to $\lambda$. Since the
  history is all that one Step passes to the next, such a Step could not condition the next one either, so the reasoning is
  confined to one transition, at the Frame interface. It never reaches the Priestess: the request at a yield trap is
  $D(y_a)\,\natural$.
\item \emph{Dropped by the encoder.} $D$ keeps the reasoning, and $E$ leaves it out, as a chat template does when it renders past
  answers without their reasoning. If $E$ leaves out all reasoning, it can do so piece by piece between two $\natural$, since a
  Frame's reasoning contains no $\natural$, and it then factorizes at $\natural$ when the encoder it wraps does. As long as the
  kernel writes the same, and writes no reasoning itself, the Oracle receives the queries of the previous case, with the same
  failure of Frame-faithfulness (the rendering case) and the same cost. The difference lies on the Priestess side, which holds the
  reasoning, for a log or a checkpoint. A template that keeps the reasoning after the last user message and leaves out the rest
  does not factorize at $\natural$. Let the kernel push a user message $v$, ending with $\natural$, onto a history $w$ that ends
  with $\natural$ and contains reasoning after its last user message. Then $E(w)$ holds reasoning that $E(wv)$ leaves out, so
  $E(w) \not\sqsubseteq E(wv)$, although $(w, v)$ is a Frame pair. Between two user messages the prefix chain can hold. If the
  template is otherwise a separator, the next query costs, beyond the push, the encoding of $w$ from the start of that reasoning
  on, now without it: the chain breaks as at a rewrite, although the machine's history only grows.
\item \emph{Dropped by the kernel.} The kernel removes the reasoning from the task stack, at each trap or at a swap
  (\S\ref{sec:an-checkpoint}). This is a rewrite: it starts a new epoch, and its price depends on where it begins
  (Appendix~\ref{app:further-cache}). A kernel that removes an answer's reasoning at the trap after the answer can read the
  reasoning first. Its rewrite begins after the last query, so with the state of that query still cached it pays what the decoder
  above pays. In the Workflow form the program does the same when it rebuilds its operand.
\item \emph{Hidden.} An interface may return the reasoning as an opaque block, encrypted or signed, that the client must send back
  unchanged, or keep it on the provider's side and return a reference to it. Either is a codec: $D$ writes the block or the
  reference, and $E$ restores the reasoning from the block or looks it up. The machine does not model which code can interpret a
  symbol, so to its results the block is reasoning written in symbols that the Priestess's code does not interpret.
  Frame-faithfulness holds or fails as for kept reasoning, and the reasoning re-encodes to itself by construction if the block
  carries the generated symbols, not their text. Three properties change. The encoder is partial: a block that fails its check
  is rejected, so Proposition~\ref{prop:closure} does not apply, and a kernel that changes part of a block and enters the task
  again makes the machine fault (\S\ref{sec:os}); it may still remove a block whole. A block is made for its Oracle: the encoder
  of another Oracle may reject it or leave it out, and a task moved there then loses its reasoning as well as the states of its
  history. And a block of authenticated encryption depends on all of the reasoning, so the decoder does not work symbol by symbol,
  the Oracle with this codec does not meet Definition~\ref{def:int-ar} as stated, and the Step interface cannot return the
  reasoning. Theorem~\ref{thm:run-sync} uses the decoder only on whole Frames, and only to know that a decoded Frame ends with
  $\natural$ and contains it nowhere else, so its proof still applies at the Frame interface. So does that of
  Proposition~\ref{prop:prefix-chain}, with its hypothesis (iii), which the partial encoder does not meet, replaced by the
  acceptance of every query on $u$. A check at the boundary still sees every block cross (Theorem~\ref{thm:boundary}), but it can
  read an encrypted block only with its key. A reference into a store that expires its entries makes the encoder change over
  time, which the machine, whose Oracle is fixed, does not model. A realization can treat an expired reference as an encoder
  error (Appendix~\ref{app:archnights-exc}), and backing off from it is a rewrite.
\end{itemize}

\subsection{Protection, Channels and Attacks}

\begin{table*}[t]
\centering
\small
\begin{tabular}{@{}>{\raggedright\arraybackslash}p{0.09\textwidth}>{\raggedright\arraybackslash}p{0.08\textwidth}>{\raggedright\arraybackslash}p{0.12\textwidth}>{\raggedright\arraybackslash}p{0.305\textwidth}>{\raggedright\arraybackslash}p{0.305\textwidth}@{}}
\toprule
 & Program & Operating system & What protects the operating system from the program & What protects the program from the other side \\
\midrule
Agent form & Oracle & Priestess: kernel and library & T with the locality lemma: a task writes only its own stack, and requests enter only at $\ell_{\mathrm{trap}}$ (K1, K3) & nothing in the machine: the kernel is privileged, may rewrite the task, and is trusted, as in a conventional operating system \\
\rowrule
Workflow form & Priestess & none mediates the task & nothing to protect & S and T: answers are data, reach only their operand and gain no execution right at the call (K4); validating them before a promotion is the program's obligation \\
\bottomrule
\end{tabular}
\caption{Protection by form.}
\label{tab:protection}
\end{table*}

\paragraph{Further protections.} S restricts only the Oracle side, and the Priestess, the trusted side, may touch any stack.
Protections that also restrict the trusted side are further protections, as with SMEP and SMAP on current
processors~\cite{intelsdm}. Under the first, the Priestess does not execute memory the Oracle can write. Under the second, it
touches Oracle storage only through explicit operations, so that a kernel bug cannot read or write a task's history by accident.
When the Priestess is restricted in such ways, the branch to $l_e$ of a P-mode call still shows whether a word that it copies lies in
$\mathrm{dom}(E)$. This is an architectural channel, unlike the transient ones of Meltdown and
Spectre~\cite{lipp2018meltdown,kocher2019spectre}.

\paragraph{What a task observes.} By K1 and K2, the past of the run, another task or the Priestess stacks can change the possible
histories of a task only through two Priestess actions: the Priestess's writes to the task, and its choice of when, and how
often, to enter it. The machine opens no content channel into a task other than the kernel's writes, and the kernel's scheduling
of the task is a control channel. The statement is about sets of possible outcomes. A statement about probabilities needs, in
addition, that the distribution of answers depends only on the query. A realization in which Priestess code reads a clock adds
channels outside the model: the latency of a call reveals whether a prefix was resident in a cache shared across tasks or users.

\paragraph{Replay.}
\label{app:eclipse-replay}
If the Oracle is not deterministic, either side can call it again and again until it produces a vulnerable program, or a thunk of
one. A bad answer of probability $p > 0$ to $w$ appears among $n$ independent queries of $w$ with probability $1 - (1-p)^n$. On the
machine, answers are non-empty and an Agent's queries therefore strictly grow, so the Agent does not re-sample within its own epoch,
and a query recurs on a task only after the kernel rewrites the task. A service allows the same query again if it gives a task a
history equal to an earlier one (by copying a frozen history, restoring a saved one, or loading the same initial content again). Such
services are writes of the kernel, on the boundary of Theorem~\ref{thm:boundary}, and they are sufficient for replay but not
necessary, since different histories may extend to the same word. A copy that is followed by different replies gives histories that share a
prefix but differ, so one request is continued twice. A realization adds paths that the machine does not model. An Oracle operation
that fails or is refused writes nothing, so entering the task again issues the same query once more. On the machine, that is the
kernel's decision to return with an empty reply, whereas a retry inside the Oracle's unit is inside the Oracle. A replay
policy at those services and paths can limit them, or give a fixed answer per query. A fixed answer per query
returns answers within $O(w)$, as a recording does, so it still produces executions of the machine. Zheng et al.\ check the operations checkpoint, fork,
restore and merge, against an execution record whose log of authorizations is append-only, and reject an edit that would authorize an action twice,
break the policy, or discard a result the task still requires~\cite{zheng2026when}. That check sits at the same services and
applies to authorization, not the re-sampling of the Oracle. In the Workflow form the program may re-issue a query freely, and replay
is a property of its code. CaMeL's interpreter, for instance, prompts the privileged model up to ten times in all, again after each
exception, until the model's code runs~\cite{debenedetti2025defeating}, and LLMbda observes that this retry loop lets an adaptive
driver launder a secret one bit at a time~\cite{garby2026llmbda}.

\paragraph{Syscall and return.} A system call is a yield trap whose request lies in $L_\Lambda$ (Definition~\ref{def:os}), and the
return is the kernel's reply, appended to the task before the next $\mathtt{uret}$. Membership classifies the request, but whether
the kernel recognizes it and can serve it is the kernel's own judgement. The machine would equally resume an unanswered task, and
what the task does then is a property of the Oracle. That every system call is answered first is the kernel's protocol.

\section{More on ArchNights}
\label{app:archnights}

This appendix gives the parts of \S\ref{sec:archnights} that the body summarizes: the Oracle instruction and its asynchrony, more
on PRTS, the exceptions of an Oracle unit and how PRTS handles them, communication and system calls, more on the distribution, and
the rest of ZOOT's design.

\subsection{The Instruction Set in More Detail}

\paragraph{Encoding.} The Oracle instruction is a 32-bit R-type instruction in the custom-0 opcode space (Table~\ref{tab:isa-word}).
Its funct3 field is fixed at \texttt{111}, the value with which RoCC reads both source registers and writes the destination
register. Its funct7 field holds three subfields above two reserved bits: the operation, which is PROBE, STEP or RUN; the
direction in which the stack grows, towards its limit or towards its base; and a profile. PROBE has the single encoding
$\mathrm{funct7} = \mathtt{0x00}$. For STEP and RUN, $\mathrm{funct7} = (\mathit{profile} \ll 3) \mid (\mathit{op} \ll 1) \mid
\mathit{dir}$, so that, under P0, STEP is \texttt{0x02} downwards and \texttt{0x03} upwards, and RUN \texttt{0x04} and
\texttt{0x05}. An instruction is illegal if it has any other funct3, the reserved operation, non-zero reserved bits, any other
funct7 with the operation PROBE, or a profile the unit does not implement. The source registers are read before the destination
register is written, so an operation whose destination names a source register still sees the old operands, and a destination
\texttt{x0} discards only the result, not the operation.

\begin{table}[t]
\centering
\small
\begin{tabular}{@{}l l >{\raggedright\arraybackslash}p{0.56\columnwidth}@{}}
\toprule
Bits & Field & Value or meaning \\
\midrule
31:25 & funct7 & reserved[6:5] $= 0$; profile[4:3]: \texttt{00} P0, \texttt{01} P1, \texttt{10} P2, \texttt{11} P32;
  op[2:1]: \texttt{00} PROBE, \texttt{01} STEP, \texttt{10} RUN, \texttt{11} reserved; dir[0]: \texttt{0} down, \texttt{1} up \\
\rowrule
24:20 & rs2 & the offset of the top, with the budget (RUN) or the write bound (STEP) \\
\rowrule
19:15 & rs1 & the segment's base and its capacity in pages \\
\rowrule
14:12 & funct3 & \texttt{111}: read rs1 and rs2, write rd \\
\rowrule
11:7 & rd & the result \\
\rowrule
6:0 & opcode & \texttt{0001011} (custom-0) \\
\bottomrule
\end{tabular}
\caption{The fields of the Oracle instruction.}
\label{tab:isa-word}
\end{table}

\paragraph{Profiles and registers.} The profiles fix the widths of the fields packed into the registers (Table~\ref{tab:isa-profiles}),
and with them the largest segment and the largest budget that an operation can name. P0, P1 and P2 target RV64 and trade the width of the
offset against the widths of the budget and of the step count; P32 serves RV32 implementations. Under P0 the registers are
\[
\begin{aligned}
\mathrm{rs1} &= \{\,\mathit{base}[63{:}12],\ \mathit{cap\_pages}[11{:}0]\,\},\\
\mathrm{rs2} &= \{\,\mathit{budget}\ \text{or}\ \mathit{wlimit}[63{:}24],\ \mathit{offset}[23{:}0]\,\},\\
\mathrm{rd} &= \{\,\mathit{status}[63{:}60],\ \mathit{steps}[59{:}24],\ \mathit{bytes}[23{:}0]\,\}.
\end{aligned}
\]
The other profiles keep this frame with the widths of Table~\ref{tab:isa-profiles}. P1 and P2 split the capacity: its low twelve
bits stay in rs1, and its high bits sit in rs2 between the bound and the offset. P2 reserves four bits of rd above its step count,
and P32 four bits of rs2 above its budget. The base is rs1 with its low twelve bits cleared, so a segment is aligned to a 4~KiB
page; its capacity is $\mathit{cap\_pages} \cdot 4096$ bytes, and its limit is $\mathit{base} + \mathit{capacity}$. All the fields
are unsigned, and reserved bits are written as zero.

\begin{table}[t]
\centering
\small
\setlength{\tabcolsep}{4pt}
\begin{tabular}{@{}l r r r r r r r@{}}
\toprule
Profile & XLEN & offset & steps & budget & pages & bound & largest \\
\midrule
P0 & 64 & 24 & 36 & 40 & 12 & 40 B & 4095 \\
\rowrule
P1 & 64 & 28 & 32 & 32 & 16 & 32 B & 65535 \\
\rowrule
P2 & 64 & 32 & 24 & 24 & 20 & 24 P & 1048575 \\
\rowrule
P32 & 32 & 22 & 6 & 6 & 12 & 10 P & 1023 \\
\bottomrule
\end{tabular}
\caption{The widths, in bits, of the fields each profile packs into the registers: the offset of the top (also the width of the
published byte count), the step count of the result, the budget of a RUN, the capacity in pages, and the write bound of a STEP, counted in bytes (B) or in pages (P). Largest: the largest segment, in pages.}
\label{tab:isa-profiles}
\end{table}

\paragraph{Operands and direction.} The offset is measured from the base. A stack that grows up stores its content in
$[\mathit{base}, \mathit{base} + \mathit{offset})$, and an answer of $n$ bytes is written at $\mathit{base} + \mathit{offset}$,
after which the offset grows by $n$. A stack that grows down holds its content in $[\mathit{base} + \mathit{offset},
\mathit{limit})$, and the answer is written just below $\mathit{base} + \mathit{offset}$, after which the offset shrinks by $n$.
Either way the answer only extends the content and leaves what was there unchanged. The direction is an arbitrary choice: an
instruction names it, and O-mode, which runs no instruction, takes its direction from the system's configuration, as the direction of a
process stack is fixed. Before evaluation the unit checks that the
offset lies within the capacity, that the whole range can be addressed, and that it may be accessed. A write bound of zero forbids
a STEP to write, and the all-ones value sets no bound beyond the room left. The budget of a RUN is soft (\S\ref{sec:an-isa}).

\paragraph{The result register.} The result contains the status, the number of steps and the number of bytes published. The bytes are those of
the whole answer and of nothing else. For an Oracle that offers STEP, the steps are the Steps committed; for a language model they are its
output tokens, saturated at the largest value the field holds, while the descriptor keeps the full count. The
status reports how the operation ended (Table~\ref{tab:isa-status}): with the yield symbol (a Halt, also when the first step yields), with a Step that
did not yield (Continue), with the budget spent, or with an exception of one of the unit's parts (Appendix~\ref{app:archnights-exc}).
An exception keeps what was published before it and publishes no descriptor. Publication is mediated by rd: its byte count
delimits what the operation added to the segment, so the kernel that handles an exception knows exactly what the operation left.

\begin{table}[t]
\centering
\small
\begin{tabular}{@{}r l >{\raggedright\arraybackslash}p{0.5\columnwidth}@{}}
\toprule
Code & Part & Meaning \\
\midrule
0 & & Continue: a Step that did not yield \\
\rowrule
1 & & Halt: the answer ended with the yield symbol \\
\rowrule
4 & decoder & no room: the answer, or a STEP's write bound, does not fit \\
\rowrule
5 & controller & budget spent before a yield \\
\rowrule
6 & encoder & short input: the content is too short to form a request \\
\rowrule
8 & encoder & illegal argument, refused before evaluation \\
\rowrule
9 & encoder & the content cannot be encoded \\
\rowrule
10 & compute & timeout, recoverable \\
\rowrule
11 & compute & unavailable \\
\rowrule
12 & decoder & the answer is incomplete or malformed \\
\rowrule
2, 3, 7, 13--15 & & reserved \\
\bottomrule
\end{tabular}
\caption{The status field of the result.}
\label{tab:isa-status}
\end{table}

\paragraph{PROBE.} PROBE ignores its source registers, computes nothing and changes no stack. Bits 0 to 3 of its result say which
of P0, P1, P2 and P32 the unit implements, and bit 4 (DESC) whether its STEP and RUN answers come with a descriptor. Bits 5 and 6
report what the bound Oracle can do: bit 5 (AR) that it is internally autoregressive, and bit 6 (STEP) that it offers STEP. Bit 5
decides whether the Oracle may run in O-mode. The higher bits are reserved. The report applies to the current binding and is refreshed
when the binding changes. An Oracle that offers only RUN may overrun its budget by the Frame it is in, and the Oracle's own
output limit bounds that Frame.

\paragraph{The descriptor.} With DESC set, a STEP or RUN that ends normally writes a descriptor just past the answer, in the
direction of growth and inside the window. The descriptor is not counted in the bytes and does not move the top, so it is not part of the next query, and
the next answer may overwrite it. Its fixed header has 32 bytes, little-endian and with no alignment required: a format byte
(\texttt{0xFF} in this version); a checksum byte, chosen so that the bytes of the whole descriptor XOR to zero; a byte of flags, two
of which indicate whether the input and the total counts are valid, the rest left to the Oracle; the length of an extension after the
header, in blocks of 16 bytes; a 32-bit request number, increasing for each Oracle and wrapping around; and the Oracle's input,
output and total counts as 64-bit numbers, for a language model its tokens. The checksum detects a changed byte but authenticates
nothing.

\paragraph{Where the asynchrony goes.} An asynchronous realization preserves the synchronous semantics, and the asynchrony can be put
in one of three places. \emph{In the microarchitecture}, the Oracle instruction is an instruction of long latency, and an
out-of-order core can execute independent instructions while the unit computes. The instructions after it do not commit before it
completes, so the architectural state is the synchronous one. What speculation leaves behind is a side channel of the kind
discussed in Appendix~\ref{app:eclipse}. \emph{Through an SMT abstraction}, the wait is hidden as simultaneous multithreading
hides a long miss: the core runs another hardware thread while one thread waits for its answer. \emph{In the operating system},
the kernel issues the operation for one task and runs other tasks meanwhile, as it does around a disk request. It never runs one
task twice at once: a task is in O-mode on at most one processor, and no processor writes it meanwhile. With one dispatch point on
one processor this holds trivially, whereas a kernel on several processors makes it a scheduler invariant.

\subsection{More on PRTS}
\label{app:archnights-prts}

\paragraph{Fork in Linux.} In Linux, fork gives the child a copy of the parent's address space, copied on write, and a
copy of its descriptor table. The copied descriptors share the parent's open files, their offsets included, so a pipe whose ends
both processes inherit has two readers and two writers until each closes the ends it does not use. Fork also copies the working
directory, the credentials and the signal dispositions. The child has a new identifier, and its resource accounting starts at
zero.

\subsection{Exceptions of an Oracle Unit and in PRTS}
\label{app:archnights-exc}

\paragraph{The exceptions of an Oracle unit.} Each part of an Oracle unit may raise exceptions of its own:
\begin{itemize}
\item the encoder: an illegal argument, insufficient memory, an encoding failure. An operand whose top lies beyond its limit is an
  illegal argument, and a history that a language model cannot take as input (a request without its reply, for instance) fails to
  encode.
\item the compute part: a recoverable error, such as a timeout of a model's service, and an unrecoverable one, such as the loss
  of the service.
\item the decoder: insufficient capacity, when the answer does not fit the room left, and a decoding failure, when the answer is
  incomplete or malformed (an answer of zero bytes among them).
\item the controller: the budget, when it is spent before a yield. This is the budget trap of Definition~\ref{def:budget}, not a
  fault.
\end{itemize}

\paragraph{How PRTS responds to each exception.} An unrecoverable error usually ends the process, and the checkpointing system saves the process
as a checkpoint. A recoverable error is retried, inside the Oracle by its unit and inside the Priestess by the kernel, which enters
the task again. The Priestess answers an encoder error by backing off: the kernel pops what made the history impossible to encode, and
tries again. A decoder error is met by computing the answer again, a re-sample, which a replay policy
(Appendix~\ref{app:eclipse-replay}) can limit. An error in the arguments of a system call (an unknown operator, a malformed argument or a
missing right) is answered with an error reply that names what was wrong. For an unknown operator, the reply also gives the
nearest name the task may use. The other calls in the same answer still run. Two more exceptions come from the limits of a
realization. When an answer does not fit its segment, the kernel enlarges the segment and calls again, as a segment grows like a
vector in \S\ref{sec:query}. When a Frame is cut off before it yields and cannot be continued, as when a language model's answer
is cut off at its output limit, the kernel backs off the unfinished answer by the byte count in rd, compresses the history and tries again. In an
interactive session the person decides instead of the kernel.

\subsection{Communication and System Calls in PRTS}
\label{app:archnights-comm}

\paragraph{How do tasks communicate?} Through the mechanisms of Unix: files, pipes, the exit status a parent collects by waiting,
and the terminal. Files live in the virtual file system. PRTS changes two things. Fork connects a parent and its child by
default, with a result channel whose write end the child holds and whose read end the parent holds, so that a child's result
reaches its parent without the parent wiring a pipe. And the Oracle holds no end of a pipe: pipes are made by the operator that runs a pipeline, by the
shell, and by fork for its result channel. Apart from what the kernel writes itself (the declaration, the replies and the records
of a session), content from these mechanisms or from the system library's operators enters a history only through an explicit,
bounded read by the task that owns it.
Output shown to a person, and the output of a program that its caller did not ask to capture, are not recorded in any history.

\paragraph{System calls.} PRTS's internal system calls are designed much as Linux's are, as a set of kernel functions in groups
(Table~\ref{tab:syscalls}). They lie one level below the system calls of Definition~\ref{def:os}, a task's requests, and are not
exported to O-mode by default: what the Oracle sees is the operators of the system library, each of which issues one or more of
these calls.

\begin{table}[t]
\centering
\small
\begin{tabular}{@{}>{\raggedright\arraybackslash}p{0.25\columnwidth}>{\raggedright\arraybackslash}p{0.7\columnwidth}@{}}
\toprule
Group & Calls \\
\midrule
Processes & spawn, exec, fork, wait, exit, sleep, yield, shutdown \\
\rowrule
Files & open, close, read, write, truncate, stat, list, mkdir, remove, rename, chdir, getcwd \\
\rowrule
Descriptors & name a descriptor, find a descriptor by its name \\
\rowrule
Pipes & create a pipe, query its state \\
\rowrule
Terminal & read a line, ask a question, print \\
\rowrule
Credentials & check an access, require a right \\
\rowrule
Histories & create, push, pop, read and set the metadata, run an Oracle on a history \\
\rowrule
Checkpoints & save a checkpoint, load a checkpoint, compress a history \\
\bottomrule
\end{tabular}
\caption{PRTS's internal system calls, by group.}
\label{tab:syscalls}
\end{table}

\subsection{More on the Distribution}
\label{app:archnights-dist}

\paragraph{Why does the Oracle not see the internal system calls?} Low-level calls force the Oracle to wire descriptors, pipe ends and the
order of waits, and each call costs a round trip through the Oracle and a piece of its context. Fewer calls, each doing more, is
also what the Lynchpin hypothesis suggests. PRTS's internal system calls are therefore not exported to O-mode by default
(Table~\ref{tab:syscalls}), and the system library's operators are organized by intent (Table~\ref{tab:operators}). One operator
performs one action a program wants, and descriptor numbers, pipe ends and the order of waits stay inside the kernel and the system
library. One call of \texttt{run} with a command line \texttt{a | b}, an input and a request to capture the output does what a
Unix shell does with a dozen system calls: it creates the pipes, starts each stage with its descriptors in place, writes the
input, reads the captured output to its end, and collects every stage. One call of \texttt{file\_patch} reads the file, checks
that it is still the version the task read, applies the patch and writes it back, or refuses and says so if the file changed in
between. Merging also costs flexibility: the Oracle cannot hold a pipe end, so it cannot build an arbitrary topology of pipes;
signals and non-blocking pipes are not exposed; and the declaration, part of every query, grows with every operator. The
obligation of \S\ref{sec:query} is that each operator performs the same services as the sequence of calls it replaces.

\paragraph{Core tools and \texorpdfstring{$\mathtt{sodo}$}{sodo}.} The distribution has core tools such as \texttt{echo} and
\texttt{ls}; some are builtins of the shell, and some are also O-mode programs. $\mathtt{sodo}$ (Super Oracle do) switches
temporarily to the privileged O-mode user, which gains the right to run commands on the host through the host's Bash, and every use
of $\mathtt{sodo}$ goes through its check. The operator \texttt{bash} carries the semantics of $\mathtt{sodo}$ by default: every
call of it goes through $\mathtt{sodo}$'s check before it reaches the host, so a task of the ordinary user may call it, and no call
bypasses the check.
Whoever controls $\mathtt{sodo}$'s check therefore controls the channel by which a task gains rights on the host.

\paragraph{Shell.} The shell is an interpreter of the kind that runs a script in its own process. What the person does at the prompt
(running a program, changing directory, starting a job) reaches the Oracle as one-line events, and so do the jobs that the session
collects.
These carry no output and are placed before the next request in the same user record. A program that the person runs writes to
the terminal, and its output enters no history.

\subparagraph{Four kinds of line.} A line is one of four kinds, told apart by its first character:
\begin{itemize}
\item an empty line prompts again;
\item a line that begins with \texttt{/} is a builtin, which the session runs without the Oracle;
\item a line that begins with \texttt{!} runs on the host, through $\mathtt{sodo}$;
\item any other line is a request in natural language, for the Oracle part.
\end{itemize}
Of the builtins, \texttt{/run} runs a pipeline in the foreground with the terminal, \texttt{/bg} runs one in the background, and
\texttt{/wait} waits for a job. \texttt{/cd}, \texttt{/pwd} and \texttt{/echo} are as in other shells, and \texttt{/source} runs a
file's lines as if they were typed. \texttt{/clear} replaces the history with a fresh one, and \texttt{/compact} swaps the history
now. \texttt{/exit} ends the shell, and \texttt{/shutdown} shuts the system down. The session's loop, with what it does at each
kind of line, is in Figure~\ref{fig:loop}.

\subparagraph{A history, record by record.} Figure~\ref{fig:history} shows a history of the shell as it grows. It extends the running
example of Figure~\ref{fig:readings} as ArchNights records it: the person first asks to see the file and then runs a program that
changes it, so the first patch is refused and the task reads the file again. Seen from O-mode, the history is one program that
keeps receiving its continuation (Figure~\ref{fig:continuation}). Each call forces a thunk that the declaration offers. The kernel
evaluates it in P-mode and returns a value, and the rest of the history is the continuation to which the value is returned. Let
$c_1$, $c_2$ and $c_3$ be the calls at lines 9, 11 and 13; lines 9 to 15 then become
$\mathtt{force}\ c_1\ \mathtt{to}\ x_1.$\ $\mathtt{force}\ c_2\ \mathtt{to}\ x_2.$\ $\mathtt{force}\ c_3\ \mathtt{to}\ x_3.$\
$\mathtt{return}\ \mathit{result}$, and the Oracle writes each step only after the value before it has arrived. The kernel's code
never appears in the history, only its results, so the history stays the Oracle's own program, written as it is run. Had the
answer at line 9 been a fork, the parent's history and the child's would both be lines 1 to 9, each followed by its own reply.

\begin{figure*}[t]
\centering
\begin{pseudo}
loop
    collect the jobs that ended, and note each as an event
    if the next line comes from the terminal and other tasks are ready: yield
    line \tc{1}{\(\scriptstyle\leftarrow\)} next line of the innermost script, or else a call of the Human class in P-mode
    case line of
        empty:          continue
        "!\tc{1}{\(\scriptscriptstyle\dots\)}":           if typed at the prompt, run it on the host; note an event
        "/\tc{1}{\(\scriptscriptstyle\dots\)}":           run the builtin; note an event
        otherwise:      append the user record  "events \tc{1}{\(\|\)} line"  to the history
                        enter the Oracle part: the Oracle answers and the kernel serves its calls,
                        until an answer calls no operator; that answer is the line's result,
                        written to standard output
\end{pseudo}
\caption{The loop of the shell's session.}
\label{fig:loop}
\end{figure*}

\begin{figure*}[t]
\centering
\def\byP{\tc{1}{\textcolor{pcol}{\rule{3.8pt}{3.8pt}}}}%
\def\byO{\tc{1}{\textcolor{ostroke}{\rule{3.8pt}{3.8pt}}}}%
\begin{pseudo}
\byP  1  declaration   input, print, run, fork, \tc{1}{\(\scriptscriptstyle\dots\)}, file_read, file_patch, \tc{1}{\(\scriptscriptstyle\dots\)}
\byP  2  system        conventions; the shell's text; the profile
\byP  3  user          program /bin/ansh; cwd /home/oracle; stdin a terminal
\byP  4  user          show me notes.txt
\byO  5  answer        file_read(notes.txt)
\byP  6  reply         "# notes / - prepare teh meeting agenda", stamp \tc{2}{s\(\sb{1}\)}
\byO  7  answer        notes.txt: # notes / - prepare teh meeting agenda
                    \tc{3}{\(\cdots\)} the person types  /run /bin/note buy milk  ; the program appends to notes.txt \tc{3}{\(\cdots\)}
\byP  8  user          [user ran: /bin/note buy milk \tc{1}{\(\scriptstyle\to\)} exit 0]
                    fix the typo in notes.txt
\byO  9  answer        file_patch(notes.txt, teh \tc{1}{\(\scriptstyle\to\)} the, stamp \tc{2}{s\(\sb{1}\)})
\byP 10  reply         refused: the file changed since stamp \tc{2}{s\(\sb{1}\)}
\byO 11  answer        file_read(notes.txt)
\byP 12  reply         "# notes / - prepare teh \tc{1}{\(\scriptscriptstyle\dots\)} / - buy milk", stamp \tc{2}{s\(\sb{2}\)}
\byO 13  answer        file_patch(notes.txt, teh \tc{1}{\(\scriptstyle\to\)} the, stamp \tc{2}{s\(\sb{2}\)})
\byP 14  reply         done
\byO 15  answer        Fixed: teh \tc{1}{\(\scriptstyle\to\)} the. The file had changed, so I read it
                    again first.
\end{pseudo}
\caption{A schematic, abridged history of the shell, record by record, from a session of ArchNights-SE. A blue square marks a
record the kernel writes (declaration, system, user, reply), an amber square an answer of the Oracle.}
\label{fig:history}
\end{figure*}

\begin{figure}[t]
\centering
\begin{tikzpicture}[x=1pt, y=1pt, font=\footnotesize,
  ker/.style={draw=pcol, fill=pfill, line width=0.5pt, rounded corners=1.5pt, minimum width=54pt,
    minimum height=25pt, inner sep=1.5pt},
  flow/.style={->, line width=0.5pt, black!75},
  head/.style={anchor=base, inner sep=0pt},
  title/.style={anchor=base, inner sep=0pt},
  >={Stealth[length=4.2pt, width=3pt]}]
  \def\W{86}%
  \def\rec#1#2#3#4{%
    \fill[#2] (0,{-14*#1}) rectangle (\W,{-14*#1-13});
    \fill[#3] (0,{-14*#1}) rectangle (2,{-14*#1-13});
    \node[anchor=base west, inner sep=0pt] at (6,{-14*#1-9.2}) {#4};}
  \rec{0}{black!6}{black!6}{\makebox[12pt][r]{1--8}}
  \rec{1}{ofill}{ocol}{\makebox[12pt][r]{9}\quad $\mathtt{force}\ c_1$}
  \rec{2}{pfill}{pcol}{\makebox[12pt][r]{10}\quad $\mathtt{to}\ x_1$}
  \rec{3}{ofill}{ocol}{\makebox[12pt][r]{11}\quad $\mathtt{force}\ c_2$}
  \rec{4}{pfill}{pcol}{\makebox[12pt][r]{12}\quad $\mathtt{to}\ x_2$}
  \rec{5}{ofill}{ocol}{\makebox[12pt][r]{13}\quad $\mathtt{force}\ c_3$}
  \rec{6}{pfill}{pcol}{\makebox[12pt][r]{14}\quad $\mathtt{to}\ x_3$}
  \rec{7}{ofill}{ocol}{\makebox[12pt][r]{15}\quad $\mathtt{return}\ \mathit{result}$}
  \draw[ostroke, line width=0.6pt, rounded corners=1.5pt] (-1.5,1.5) rectangle (\W+1.5,-112+1-1.5);
  \foreach \k/\t in {1/c_1, 3/c_2, 5/c_3} {
    \node[ker] (k\k) at (172,{-14*\k-13.5}) {evaluate $\t$};
    \draw[flow] (\W+4,{-14*\k-6.5}) -- (k\k.west |- 0,{-14*\k-6.5});
    \draw[flow] (k\k.west |- 0,{-14*\k-20.5}) -- (\W+4,{-14*\k-20.5});
  }
  \node[head] at (\W/2,6) {O-mode: the history};
  \node[head] at (172,6) {P-mode: the kernel};
  \node[title] at (122,-128) {(a) lines 9 to 15};
  \begin{scope}[shift={(0,-160)}]
    \fill[black!6] (0,6.5) rectangle (\W,-6.5);
    \node[inner sep=0pt] at (\W/2,0) {1--9};
    \foreach \y/\t in {9/parent's reply, -9/child's reply} {
      \fill[pfill] (145,\y+6.5) rectangle (199,\y-6.5);
      \fill[pcol] (145,\y+6.5) rectangle (147,\y-6.5);
      \node[inner sep=0pt] at (173,\y) {\t};
      \draw[flow] (\W+4,0) -- (143,\y);
    }
    \node[title] at (122,-30) {(b) had line 9 been a fork};
  \end{scope}
\end{tikzpicture}
\caption{The history of Figure~\ref{fig:history} read from O-mode as one program (amber: the Oracle's answers; blue: the
kernel's replies and its work in P-mode). (a) Each call $c_i$ forces a thunk the declaration offers. The kernel
evaluates it in P-mode, and the value is returned and bound with $\mathtt{to}$. The rest of the history is the
continuation. (b) Had the answer at line 9 been a fork, the parent's history and the child's would share lines 1 to 9,
each followed by its own reply.}
\label{fig:continuation}
\end{figure}

\subparagraph{Shell mechanisms.} The shell has mechanisms close to those of Bash and other shells. Pipelines run in the foreground and in
the background, by \texttt{/run}, \texttt{/bg} and \texttt{/wait}. The operator \texttt{run} shares these builtins' parser and pipeline code,
so that a command line means the same to the person and to the Oracle, although only the person's \texttt{/run} gives a program the
terminal. A system profile and a user profile are read at login. Their builtin lines run silently, and their other text becomes
standing instructions in the shell's system record, so that logging in asks the Oracle nothing. \texttt{/source} runs a script as if
its lines were typed, and a host command in a script is refused, since the Oracle may have written the file. \texttt{/clear}
begins the conversation afresh in the same process. After an Oracle error the session returns to the prompt with the conversation
kept, and the person decides whether to ask again. An error that would recur ends the shell, and the init system starts a new one.
The Oracle's \texttt{exit} ends only its turn, and only the person ends the session.

\subsection{ZOOT}

\paragraph{Users and resource isolation.} Mainstream instruction sets have at least two privilege levels. Mainstream operating systems
mostly use two, and use M mostly for firmware. We consider the three levels U, S and M. U has many users, while S and M have one each.
With O-mode added, a question arises: O-mode may have several users; must each of them be tied logically to a particular user of U?
The relation between U and S has had three design philosophies in history.
\begin{itemize}
\item Users exist in S as well, and the checks on S's rights are raised to M, which then checks the users of two levels.
\item Kernel-isolating virtualization~\cite{barham2003xen}: each guest still has a single S, and kernel isolation in effect runs several
  S-level users on one machine.
\item Kernel-sharing virtualization~\cite{soltesz2007container}: one S is divided into containers by namespaces and capabilities.
\end{itemize}
All three should be possible on a machine with an extended privilege level. PRTS has one O-mode user and one privileged O-mode user.
ZOOT adopts the second philosophy for the users of U and S.

\paragraph{Tasks and memory.} The basic unit of scheduling in ZOOT is still the task, and logically each task of ZOOT shares one
task structure. The U-mode kernel needs full access to some of its fields, so the U-mode task structure is stored in the user
address space. As in PRTS, O-mode and P-mode share one task identifier. At boot, U-mode init switches further to O-mode init.
For memory, ZOOT adopts PRTS's scheme: O-mode reuses the U address space, and has no virtual addresses of its own. This costs
O-mode flexibility in the memory it can use, since a large Oracle stack needs a large region reserved in advance in the U-mode
kernel. But the U virtual address space is large enough to hold the Oracle stacks, and with lazily committed pages, committed
before each entry as in PRTS, a large reservation costs little. The fetch condition of \S\ref{sec:query} is then the U-mode
kernel's: the Oracle stacks lie in its address space, and it must not execute them.

\paragraph{File system.} O-mode can access the physical file system. For resource isolation, the file system must then support
discretionary access control for O-mode, which needs new fields for permissions and ownership. Another way is to virtualize
O-mode's file system, which suits the LLM class better. With several users and separate kernels the file systems are
separate too: each user has an O-mode file system of its own, and some files can be shared.

\paragraph{Boot.} As on an ordinary computer, a computer with O-mode boots in layers, and the final target of the boot is an
O-mode program. The design intends any further level below U to be transparent to M and S.

\paragraph{Bridging to Linux.} ZOOT can be bridged to the Linux kernel. The design is meant to need no intrusive change to Linux beyond
the drivers that may be needed and the mechanism that delegates the new exceptions.

\section{Further Discussion}
\label{app:further}

This appendix takes up topics that the body leaves open. It examines the encoders, caches and schedulers of current systems on the
machine, places the defences built for agents relative to the safety boundary, gives the reversed ordering a machine of its own,
separates the two breakings into a general machine and derives the boundary and the trusted computing base there, asks what several
levels of privilege amount to, turns to the Oracles other than language models and to what is known of the Lynchpin hypothesis, and
ends with applications and extensions.

\subsection{Encoders}
\label{app:further-encoders}

\S\ref{sec:eclipse} requires two properties of an encoder: that it factorize at the yield symbol, which a tokenizer does when it splits its
input at a registered special token, and that its backtrack inside an open Frame be bounded. The formal theory of tokenizers bears
on the second. The work on the boundary between a prompt and its continuation bears on a history that ends inside a Frame,
and the work on action tokenizers bears on which Oracles the results reach.

\paragraph{A formal theory.} A tokenizer is a pair of stochastic maps, and an encoder of bounded
variation~\cite{gastaldi2025foundations} (after Berstel's prefix distance~\cite{berstel1979transductions}) is subsequential by
Choffrut's theorem~\cite{choffrut1979generalization} if its inverse images preserve regular languages. The loss of
Definition~\ref{def:loss} is a Step-level form of bounded variation (Appendix~\ref{app:eclipse}). Byte-pair encoding has a formal
semantics and, with a proper dictionary, a finite lookahead, which depends only on the dictionary, and a left-to-right algorithm in
constant memory~\cite{berglund2023formalizing}. It also has a deterministic automaton~\cite{berglund2024constructing} and an
incremental algorithm that maintains the tokenization of every prefix of its input~\cite{jiang2026incremental}. The finite lookahead is what bounds the Step
backtrack of BPE (\S\ref{sec:eclipse-codec}). An incremental algorithm does inside an open Frame what the encoding cache does across
Frames: it extends the encoding as the history grows.

\paragraph{The boundary between a prompt and its continuation.} \S\ref{sec:eclipse-memory} notes a mismatch at this boundary.
Token healing backs up one token there~\cite{lundberg2023art} and token alignment backs up several~\cite{athiwaratkun2024token}. Tokenization biases
sampling~\cite{phan2024understanding}, and exact conversions to bytes and characters remove the
bias~\cite{phan2025exact,vieira2025afrom}, or remove it up to the probability of invalid token sequences~\cite{hayase2025sampling}.
The partial-token problem is common in practice~\cite{xu2026are}. On the machine a prompt that may end inside a token is a history
that ends inside a Frame. In O-mode such a history arises at a budget trap inside a Frame, the third place where the prefix chain
breaks (Appendix~\ref{app:eclipse}). When the encoder factorizes at $\natural$ the loss stays in the open Frame (consequence (i)
of Theorem~\ref{thm:run-sync}), and a history that ends at a Frame boundary, with the separator, leaves nothing to repair. Models also
assign probability to encodings the tokenizer never produces~\cite{vieira2025blanguage}. Such an encoding is a Frame that does not
re-encode to itself, the second of the three places where the prefix chain breaks (\S\ref{sec:eclipse-memory}).

\paragraph{Action tokenizers.} Vision-language-action models that predict action tokens are autoregressive Oracles whose
alphabet includes actions. RT-2
writes actions as text tokens~\cite{brohan2023rt}, OpenVLA predicts discretized action tokens~\cite{kim2024openvla}, and FAST
compresses action chunks with a discrete cosine transform followed by byte-pair encoding~\cite{pertsch2025fast}. The encoder
results of \S\ref{sec:eclipse} therefore apply to an alphabet of actions whenever its tokenizer meets their hypotheses, which is
checked per tokenizer, as it is for text.

\subsection{Caches}
\label{app:further-cache}

\S\ref{sec:eclipse-memory} models an Oracle call as one hierarchy, from the history down to the state cache; it
states the prefix chain within an append-only epoch and locates each mechanism of the serving systems. This subsection says how the systems
work and what the model adds to each: where shared prefixes come from, what is gained by keeping a state between turns, what a rewrite of
the history costs, and which kinds of reuse fall outside the model.

\paragraph{Shared prefixes.} SGLang keeps the KV cache in a radix tree keyed by token sequences and evicts the least recently used
entries~\cite{zheng2024sglang}, and vLLM pages the KV cache in blocks and shares blocks within and across
requests~\cite{kwon2023efficient}. ChunkAttention detects at run time that requests begin with the same prompt and shares the
chunks of their keys and values in a prefix tree, and Hydragen computes attention over a shared prefix and over the unique
suffixes separately, batching the prefix part across sequences~\cite{ye2024chunkattention,juravsky2024hydragen}. The
machine accounts for the shared prefix differently in each form. In the Agent form the kernel makes it: the histories of one program share their first records, and a
fork or a restore copies a history (Table~\ref{tab:designs}), so the tree grows at run time, by services the kernel decides to
serve. In the Workflow form the program makes it, by pushing one prefix onto the operands of several calls, as SGLang's fork
primitive does in its frontend language~\cite{zheng2024sglang}. The tree is then written in the program text and can
be inspected before the calls are made, which is why SGLang's frontend can pass it to the runtime (\S\ref{sec:eclipse-control}). Either way
these are the shared prompts and the trees of prompts that these systems serve. Under the hypotheses of
Proposition~\ref{prop:prefix-chain}, moreover, each query of a task within an append-only epoch extends the state after its last
Frame (item~1), so the keys of one task lie on one path of the tree, which each Frame lengthens.
These systems share state only along a common token prefix, so a history is reused as far as its encoding agrees with a cached
one, and Proposition~\ref{prop:excess} prices the rest: from the entry of a history, a lookup of an extension recomputes, beyond the
growth of the encoding, exactly the encoder's loss, zero at a Frame boundary when the encoder factorizes at $\natural$.

\paragraph{Tiers and turns.} Several systems keep the state of a conversation between its turns. CachedAttention saves the KV
caches of all requests in a hierarchy of accelerator memory, host memory and disk, loads them back layer by layer while the model
computes, and places them in the tiers by hints from the scheduler of inference jobs~\cite{gao2024cost}. Pensieve keeps the state
of each conversation in GPU and CPU memory~\cite{yu2025stateful}. Mooncake separates the clusters that prefill from those that
decode and pools the KV cache in the cluster's host memory and disks~\cite{qin2026mooncake}, and LMCache keeps KV caches outside
the accelerator and shares them across engines and queries~\cite{liu2025lmcache}. On the machine, a conversation whose turns
only append is a task in an append-only epoch. Under the hypotheses of Proposition~\ref{prop:prefix-chain}, its
item~2 says what keeping the state saves: each turn costs only what was appended since the last answer, and every symbol is computed once. The item
assumes a cache that never evicts. Tiers approximate it by moving a state down instead of dropping it, so that a turn pays a transfer
from a lower level instead of a prefill (Definition~\ref{def:store}). Two of the designs are taken further under the model.
CachedAttention's scheduler can hint only at the jobs submitted to it, which are all of the next keys the Oracle side has
(\S\ref{sec:eclipse-control}); the Priestess side holds the rest. And in the model a prefill and a decode are one operation, an
application of $\varphi$ per symbol; under the same hypotheses the prefill computes the symbols that the Priestess wrote, and the decode
computes those that the Oracle generates (Figure~\ref{fig:readings}(c)). Mooncake runs the two on different systems, so the state
that a prefill computes reaches the decoder by a transfer, a connection of two systems in the sense of Construction~\ref{con:compose}. What
becomes of a task's state while its tool call runs is a decision of the schedulers (Appendix~\ref{app:further-sched}).

\paragraph{Rewrites of the history.} Agent systems manage their context by rewriting it: they truncate the oldest turns, compact
part of the history into a summary, or move content out to external storage, as MemGPT does~\cite{packer2023memgpt}. Each is a
rewrite of the task and starts a new epoch (Table~\ref{tab:designs}), whose first query pays $\mathrm{cost}$ from the nearest cached
prefix (\S\ref{sec:eclipse-memory}), and the model says how large that price is. Under the hypotheses of
Proposition~\ref{prop:prefix-chain}, and as long as a cache keyed by encoded prefixes still holds the old states, the reuse extends at
least over the encoding of what precedes the rewrite's first change, up to the last $\natural$ there (consequence (ii) of
Theorem~\ref{thm:run-sync}), and the later queries of the epoch extend the new history again. The price thus depends on where a
rewrite begins and how often it recurs. A rare rewrite that keeps the head of the history costs a long prefill now and then, as
PRTS's swap does (\S\ref{sec:an-checkpoint}). A window that slides at every turn makes every query the first of an epoch, and
only the head kept ahead of the window is reused. LMCache reports from its deployments that context truncation, a widely applied
technique, can cut the hit ratio of the prefix cache by half~\cite{liu2025lmcache}. Quine's exec, which renews the
context~\cite{ke2026quine}, starts a new epoch in the same way, and so does PRTS's exec (\S\ref{sec:an-prts}). The same rewrite looks
different in the Workflow form. There a compaction is the program's: it pops its operand $d$ and rebuilds it with a summary, the
answer of a P-mode call. Between two calls on $d$ the query grows by the answer and the pushes unless the program pops $d$ in
between (Proposition~\ref{prop:ar-form}), so the chain can break only across a $\mathtt{pop}\ d$, a point that can be found in
the program text before it runs. In the Agent form, by contrast, where and when the kernel rewrites a task is its policy at run
time, as the threshold of PRTS's swap is.

\paragraph{Reuse beyond prefixes.} Some systems reuse what is not a prefix. CacheBlend reuses independently cached chunks and
recomputes a selected part of their KV entries to restore the interactions between them~\cite{yao2025cacheblend}. Prompt Cache
reuses the attention states of modules that a schema declares, at positions the schema reserves, so a module need not begin the
prompt~\cite{gim2024prompt}. When a conversation overflows the context window, CachedAttention truncates its saved KV caches and
keeps them valid by decoupling the positional encoding~\cite{gao2024cost}. Leyline's directives remove or replace a span inside
a cached history, in one of Leyline's modes without recomputing what follows the span~\cite{ma2026leyline}. All
four fall outside Definition~\ref{def:state-pres}, in which the state of a prefix is computed through every symbol before it. The last two lower exactly the
price of a rewrite above, by keeping the states after the removed or edited span. The states kept are then not those of the new
history in the sense of Definition~\ref{def:state-pres}, a trade the model does not cover.

\subsection{Scheduling}
\label{app:further-sched}

Table~\ref{tab:sched} in \S\ref{sec:eclipse-control} sorts the systems into four cells, by which side's
scheduler they are and which side they schedule. This subsection analyses the three cells that involve the Oracle's computations,
beginning with the Oracle side's own, and says for each system how it schedules and what the model says of its decision. In the fourth, the
Priestess side schedules only its own work.

\paragraph{The Oracle side's own work.} A server batches requests, and paging the KV cache lets more of them share the
accelerator's memory~\cite{kwon2023efficient}. Across devices, Preble places requests by weighing the reuse of cached prompts
against the load of each device, with a hierarchical scheduler~\cite{srivatsa2025preble}. This is the trade-off of
\S\ref{sec:eclipse-control} between cache affinity and load balance, and for a task in an append-only epoch the
model quantifies affinity. Under the hypotheses of Proposition~\ref{prop:prefix-chain}, the device that answered the task's last Frame holds
the state of its whole history, so the next query costs there only what the kernel pushed since (item~2). On another device it
costs $\mathrm{cost}$ from the nearest prefix cached there, the whole encoding if there is none (Definition~\ref{def:state-pres}).
The prefill that the next query needs stays the size of one push, while the price of a move grows with the history, so within an
epoch, and while the state cache of the last device stays resident, the longer an agent runs, the more its affinity weighs against
load.

\paragraph{Which program runs.} Parrot lets an application annotate the inputs and outputs of its requests as semantic variables,
so that the service can analyse the dataflow between requests and schedule them with the application in
mind~\cite{lin2024parrot}. Agentix (formerly Autellix) intercepts the calls that a program submits, and preempts and prioritizes each by
the calls its program has already completed, against head-of-line blocking~\cite{luo2026agentix}. ThunderAgent treats an agentic
workflow as one program and schedules its calls taking its KV cache and its tool environments into account~\cite{kang2026thunderagent}.
A server sees the program because the program exposes it or because the server intercepts its calls. On the machine the keys
already link the queries of one task within an append-only epoch, since they lie on one path of the prefix tree
(Appendix~\ref{app:further-cache}). They stop doing so at a rewrite, which starts a new epoch in the same task: the next key no longer extends the last, and a
common prefix alone does not tell the same task from another. Nor can they tell apart two tasks whose histories coincide, as after a
restore. The task's identifier links its queries in both cases. In
ArchNights every Oracle operation carries it, as the program identifiers of Agentix, Continuum and ThunderAgent
do~\cite{luo2026agentix,li2026continuum,kang2026thunderagent}.

\paragraph{A task's state during a tool call.} Here the cache and the schedulers meet. Inference engines treated each call to a
tool as the end of a request and formed a new request when the tool returned. InferCept measured the resulting recomputation of the
context at 37--40\% of the model's total forward time, and it chooses at each interception whether to keep the KV cache,
discard and recompute it, or swap it out, so that it wastes the least GPU resource~\cite{abhyankar2024infercept}. Continuum pins the
cache for a time-to-live that is set by the cost of reloading it and by the queueing delay that an eviction would
cause~\cite{li2026continuum}. Ask the Tool argues that an estimate that is fixed before a call starts cannot know the call's duration. It
has the running tool report its progress, and in its evaluation comes close to a policy that knows every return time~\cite{liu2026ask}. On
the machine a tool call is a yield trap, and while the kernel serves it the Oracle holds $\hat s(E(q)\,y)$, from which the next
query $q\,D(y)\,v$ costs $|E(v)|$ under the hypotheses of Proposition~\ref{prop:prefix-chain} (\S\ref{sec:eclipse-memory}). The
three choices are three costs of the model, each paid on top of that $|E(v)|$: nothing if the state is kept; the $\mathrm{cost}$ of rebuilding it from
the nearest cached prefix (the whole encoding if none is left) if it is discarded, as in the engines that InferCept measured; and a
transfer from a lower level if it is swapped (Definition~\ref{def:store}). The decision also needs to know when
the task returns, and that lies on the Priestess side: the kernel's service decides when the task is entered again, and nothing in
the queries that the Oracle side has received reveals when it returns. Proposition~\ref{prop:return} prices this. Between keeping and discarding,
no deterministic policy that does not know when the task returns stays within less than twice the cost of one that knows, and the time-to-live
$R/\mu$ stays within twice. Continuum's time-to-live is a policy of this kind. Ask the Tool's result approaches the remark of Appendix~\ref{app:eclipse} that
each side knows the other exactly when the other is transparent to it, since the running tool's report makes the Priestess side
partly transparent. In the Workflow form the same interval lies between two calls that share a prefix, and the program knows the
next keys. KVFlow reads them from the workflow's schedule, which it represents as a graph of agent steps; it evicts by each agent's
number of steps to execution, keeping the prefixes of the agents due soon, and prefetches those of the agents due
next~\cite{pan2025kvflow}.

\paragraph{The Priestess side schedules the Oracle.} In the Agent form these decisions are the operating system's: which task
enters O-mode, when, and for how long. PRTS chooses the task and its slice, the budget, counted in the Oracle's own steps
(\S\ref{sec:an-prts}). Two results of \S\ref{sec:eclipse} bear on the slice. Steps need not be visible in Frames
(Proposition~\ref{prop:steps-invisible}), so a slice counted in steps needs an Oracle that exposes them, as the descriptor of
PRTS's unit does. And with an Oracle that offers STEP, a slice can end inside a Frame, which leaves a history that
Proposition~\ref{prop:prefix-chain} does not cover (Appendix~\ref{app:eclipse}); an Oracle that offers only RUN is preempted between its
Frames (\S\ref{sec:an-isa}). Where a slice ends can thus cost cache as well as time. In the Workflow form the decisions
are the program's. Composable effect handlers separate a script's logic from its effectful operations (model calls, input and
output, and concurrency among them), so that a handler can run the calls concurrently without changing the script; in a
Tree-of-Thoughts case this gives a tenfold speedup~\cite{wang2025composable}. Such a handler sees the program's calls before the
Oracle does, which is where \S\ref{sec:eclipse-control} places the next keys. Its decision is written in the program's
language (\S\ref{sec:related-pl}). Pie runs programs, which it calls inferlets, inside the serving system and gives them control of
generation and of the KV cache~\cite{gim2025pie}. An inferlet is code on the Priestess's side of the interface, deployed on the
Oracle's: the sides are logical, not places. With the cache in reach, the Priestess side can know, instead of estimating, which
entries are resident, since the Oracle side is then transparent to it. An Oracle-side scheduler can take these decisions only
from what the Priestess exposes to it, a program identifier, a graph of calls or a hint while a call runs.

\subsection{Safety}
\label{app:further-safety}

Theorem~\ref{thm:boundary} names where content crosses between the two computations. On the way in it crosses at the Priestess's
writes to an operand: in the Agent form, the task's initial content, the declaration included, and the kernel's writes, its replies among
them. On the way out it crosses at the trap and the gates. Indirect prompt injection, which plants instructions in data that an application
is likely to retrieve~\cite{greshake2023not}, crosses twice. In the Agent form it enters a task as part of a reply, which is a write of
the kernel. Whatever it leads the task to write, any request included, reaches the kernel first at $\ell_{\mathrm{trap}}$. In the
Workflow form it enters an operand that the program builds, and reaches control only through the gates. AgentDojo places such
injections in the results of tools and counts an attack as successful when the attacker's goal is met~\cite{debenedetti2024agentdojo}.
A defence can therefore sit on the way in, inside the Oracle, or on the way out. This subsection places the defences built for
agents at these three positions and states what the theorem gives at each of them. Whether a defence works is not part of its placement,
and a harness reproduces runs of the machine only if it implements the machine faithfully.

\paragraph{On the way in.} What the theorem guarantees a defence here is completeness. By its first item the Priestess's writes are all
that a task reads beyond its own answers, so a filter or a type at those writes sees every symbol that enters a task from outside
it, provided it also applies to the initial content and every record the kernel builds, not only the replies. The defences here are of two kinds. The first decides what reaches a task at all: by the provenance of each part of
a tool's result~\cite{wu2024system,costa2025securing}, by showing the privileged model only the
names of variables that contain a quarantined model's outputs~\cite{willison2023dual}, by types that let only numbers, booleans and enumerations
through~\cite{jacob2025preventing}, by a hub through which applications exchange all their messages~\cite{wu2025isolategpt}, or by rules
and a classifier that intercept content before it enters the context~\cite{zou2026agentkernel}. A defence of this kind keeps its
decision for every Oracle, since the Oracle never reads what it withholds. In the dual-LLM pattern the quarantined model is a P-mode call whose answer is data (K4). The second kind marks what reaches a task and leaves the rest to
the Oracle. Spotlighting delimits, marks or encodes untrusted text~\cite{hines2025defending}, StruQ's front-end builds each query
from reserved delimiters that the data cannot forge~\cite{chen2025struq}, and AgentKernel also tags each source with a level of
trust~\cite{zou2026agentkernel}. A mark writes provenance for the Oracle to read, and is a defence only together with one inside
the Oracle.

\paragraph{Inside the Oracle.} The instruction hierarchy trains a model to rank instructions by the record they appear in, with
tool results lowest~\cite{wallace2024instruction}, and StruQ trains a model to ignore instructions in the data
channel~\cite{chen2025struq}. Provenance-aware transformers give each input token a ring ID for its origin, supplied by the
application, and learn an attention bias by origin~\cite{zhang2026origin}. Such a defence changes the answer relation. By Proposition~\ref{prop:where-not-which}, no rule
of the machine can certify it, since which request crosses is a property of the Oracle on its history, and the first two papers state that
strong adversarial attacks remain a risk. What the machine gives such a defence is its input. The delimiters, the ring IDs and the record that holds
each piece of content are written on the way in, so a defence inside the Oracle is only as sound as the writes it reads.
This is where the escalation below strikes.

\paragraph{On the way out.} Here too the theorem gives completeness, at a position that depends on the form. In the Agent form every
request reaches the kernel first at the service entry, so a check there sees every request (K3, Corollary~\ref{cor:crossing}). The
checks of requests in the kernels of \S\ref{sec:related-parallel} are all the kernel's service policy at $\ell_{\mathrm{trap}}$,
and so are a privilege policy over every tool call~\cite{shi2025progent} and a re-run of the model with the user's task
masked~\cite{zhu2025melon}.
The re-run stops the run when a tool call resembles one the masked run made, and the masked run is a further Oracle call
on a history the kernel writes. Pirch et al.\ ask for policy checks outside the model, and the service entry is also where these
would sit~\cite{pirch2026toward}. The theorem fixes where these checks sit, not what they decide: Progent decides by privilege and
declaration, deterministically, and MELON by the similarity of embeddings. AKTS narrows the request instead. An agent that tunes a
kernel's scheduler acts only by writing an index into an array of policies verified when they were loaded, and an invalid index
names an empty slot~\cite{khodabandehlou2026akts}, so every request selects a policy that was verified before the request was made.

In the Workflow form the boundary lies in the program, at the gates. CaMeL has a privileged model write a program from the user's
query alone and runs it in an interpreter of its own. A quarantined model with no tools returns values of a given schema, and
security policies are checked before each tool call, on its arguments and their dependencies~\cite{debenedetti2025defeating}.
Plan-then-execute and code-then-execute follow the same outline~\cite{beurerkellner2025design}. On the machine the model's program
is the answer of a P-mode call that the Priestess interprets, so the task runs in the Workflow form with the Agent paradigm
(Table~\ref{tab:designs}). By the second item of the theorem untrusted content then reaches control only through the gates, here
the interpreter's. The quarantined model's answers are data, but the program may branch on them. CaMeL's strict mode is a check at
the gates: it adds the condition of each branch, or the iterable of each loop, as a dependency of every variable assigned inside it.

\paragraph{When the harness escalates privilege.} He et al.\ show how a harness defeats a defence inside the Oracle and one on the way out
together~\cite{he2026when}. When a harness reconstructs a context for a sub-agent, a persistent goal or a scheduled task, it moves
content that entered as a tool's result into a user or system record. On the machine that is a write of the kernel,
which the contract does not prevent, since the kernel is trusted. The instruction hierarchy then treats the promoted content as
privileged, and so does a permission reviewer that sits at the service entry and infers authorization from the same records;
the reviewer is a further Oracle call like MELON's masked run. The escalation is a fault of the kernel's writes, not a crossing
around the boundary, and a defence on the way in sees the promoted content only if it covers every record the kernel builds, as
the first item of the theorem asks.

\paragraph{A systematization from the other side.} Zhang et al.\ reach the same division from the side of the defences~\cite{zhang2026when}. They grade each crossing of an agent by whether its question concerns provenance, which admits a deterministic predicate, or the
meaning of content, which does not. Their two structural gaps, at the input to the agent and between reasoning and action, are the
way in and the way out. The machine settles neither question, since which request crosses is the Oracle's, as noted above for defences inside the Oracle. Their projection principle, that a deterministic mediator can be obtained by narrowing what a crossing controls, is what typed
handoffs do on the way in and what a program fixed in advance does on the way out. Their design rule is that enforcement stays in a
mechanism that the model may propose to but not rewrite. It holds on the machine for the Priestess's code, since no rule writes $\Pi$ and
an excursion writes only its own stack (K1, K4).

The machine thus fixes where each defence outside the Oracle must be placed, and what it can see there. In a run of the machine
content crosses between the two computations nowhere else, so a defence sees all of it when it is placed at every crossing, the
initial content included. It does not see the kernel's choice of when, and how often, to enter a task, which is a control channel
(Appendix~\ref{app:eclipse}). The type bypass of \S\ref{sec:eclipse-safety} lies outside the runs the theorem is about, and so do
channels outside the model, a cache shared across tasks among them. What a defence decides, and whether that is right, is its own.

\paragraph{What this account does not model.} The boundary is a statement about where content crosses, and it is not a security
model. Of the classical account it uses two things: complete mediation in its usual sense~\cite{saltzer1975protection}, and
the shape of a reference monitor, which confines a party by the crossing it must make and assumes nothing of its
interior~\cite{anderson1972computer}. It proves no non-interference property~\cite{goguen1982security}, and
Proposition~\ref{prop:where-not-which} shows that no property that holds for every answer can exclude a given one. What it does not supply is a
lattice or a declassification policy, a treatment of channels outside the model---timing, the kernel's choice of when to enter a
task, and a cache shared across tasks among them---or any claim about whether a defence's decision is right. A defence placed at
this boundary is one component of a security argument, not the argument. Where the paper needs more it says so: the privilege levels
of Appendix~\ref{app:further-levels} inherit the boundary at each crossing and nothing beyond it, and a tower of levels would need a
theorem of its own, which this paper does not state.

\subsection{Privilege Reversed}
\label{app:further-reversed}

\S\ref{sec:os-agentic} takes the bottom row of Table~\ref{tab:quadrants} as the placements of a task's program when privilege
is reversed. Human--automation interaction has been modelled as interacting transition systems, whose transitions the user
triggers or the machine takes by itself~\cite{degani2002formal,combefis2009bisimulation,bolton2013using}. These models represent
the person by a model of the user and ask whether it predicts the machine. We found none that puts the person on a privileged
side through which the program's effects must pass. Where a person is an Oracle in the sense of Definition~\ref{def:oracle}, the
machine is privileged: a Turing machine with a human oracle~\cite{shahaf2007towards}, the human procedure call of
TurKit~\cite{little2010turkit} and a shield that restricts the choices of a human operator~\cite{konighofer2017shield} all have
the ordering of \S\ref{sec:os}. A person by itself does not produce the bottom row; privilege does. This subsection gives the
reversal a machine. The Oracle is in the privileged mode and decides which task runs and what it is answered, and Priestess code
runs confined to its own stacks and reaches the Oracle only by a trap. Its contract carries over item by item, and the one item
that changes says what trusting an Oracle costs.

\begin{definition}[reversed transition breaking $\overline{\mathrm{T}}$]
\label{def:tbar}
Fix an Oracle stack $\kappa$, the \textbf{kernel stack}, and a finite set of \textbf{tasks}. A task $t$ has a set $C_t$ of labels
of the program, its code, with an entry label $e_t \in C_t$, and a set $S_t$ of Priestess stacks, its stacks, with an input stack
$i_t \in S_t$. The sets $S_t$ are pairwise disjoint. Fix a symbol $\ulcorner t \urcorner \in \Sigma_Q \cap \Sigma_R$ for each
task, the \textbf{dispatch symbol} $\sharp \in \Sigma_R$, three symbols $\tau, \phi, \eta \in \Sigma_Q$, and a slice $\beta \in \mathbb{N}$
with $\beta \ge 1$. Add
the instruction $\mathtt{sys}\ s$, for $s$ a stack. Task code is \textbf{well formed}: an instruction at a label of $C_t$ is not
$\mathtt{halt}$, names only stacks of $S_t$, and every label a rule can give as its successor, $\ell+1$ and $l_e$ included, lies in
$C_t$.

An answer $r$ is a \textbf{dispatch} if $r \in \Sigma_R^*\,\ulcorner t \urcorner\,V^*\,\sharp$, where $V$ is $\Sigma_R$ without
the task symbols and $\sharp$; its \textbf{decision} is $(t, v)$, with $v \in V^*$ the word between $\ulcorner t \urcorner$ and
$\sharp$. Configurations are $\langle \mathsf O, \rho, \sigma \rangle$ and $\langle \mathsf P, t, \ell, b, \rho, \sigma \rangle$, where
$\rho$ gives each task its \textbf{resumption point}, a label of $C_t$ and a stack of $S_t$, and $b$ is the remaining slice. Runs
start in $\langle \mathsf O, \rho_0, \sigma_0 \rangle$ with $\rho_0(t) = (e_t, i_t)$. The rules:
\begin{itemize}
\item \textbf{O-mode.} With $w = \sigma(\kappa) \in \mathrm{dom}(E)$ and $r \in O(w)$: if $r$ is a dispatch with decision $(t, v)$
  and $\rho(t) = (\ell, s)$, then $\langle \mathsf O, \rho, \sigma \rangle \to \langle \mathsf P, t, \ell, \beta, \rho,
  \sigma[\kappa \mapsto w \cdot r][s \mapsto \sigma(s) \cdot v] \rangle$; otherwise $\langle \mathsf O, \rho, \sigma \rangle \to
  \langle \mathsf O, \rho, \sigma[\kappa \mapsto w \cdot r] \rangle$.
\item \textbf{P-mode.} If the instruction at $\ell$ is $\mathtt{sys}\ s$, then $\langle \mathsf P, t, \ell, b, \rho, \sigma \rangle
  \to \langle \mathsf O, \rho[t \mapsto (\ell+1, s)], \sigma[\kappa \mapsto \sigma(\kappa) \cdot \ulcorner t \urcorner \cdot
  \sigma(s)] \rangle$, a trap whose \textbf{request} is $\sigma(s)$. Otherwise, if the \mach{O-2-PDA} steps $(\ell, \sigma) \to
  (\ell', \sigma')$, the task takes that step and its slice falls by one; at $b = 1$ the step ends in
  $\langle \mathsf O, \rho[t \mapsto (\ell', i_t)], \sigma'[\kappa \mapsto \sigma'(\kappa) \cdot \ulcorner t \urcorner \cdot \tau]
  \rangle$, a slice trap. If the \mach{O-2-PDA} has no step and $\ell = n+1$, the task has finished its code, and the configuration
  goes to $\langle \mathsf O, \rho[t \mapsto (\ell, i_t)], \sigma[\kappa \mapsto \sigma(\kappa) \cdot \ulcorner t \urcorner \cdot \eta]
  \rangle$, a \emph{completion} trap; if $\ell \ne n+1$, the task is stuck, and the configuration goes to
  $\langle \mathsf O, \rho[t \mapsto (\ell, i_t)], \sigma[\kappa \mapsto \sigma(\kappa) \cdot \ulcorner t \urcorner \cdot \phi]
  \rangle$, a fault trap.
\end{itemize}
A \textbf{segment} of $t$ is a maximal part of a run in which $t$ runs in P-mode, together with the dispatch that starts it and,
if it is finite, the trap that ends it. The \textbf{\mach{O-2-PDA-$\overline{\mathrm{T}}$}} is the \mach{O-2-PDA} with $\overline{\mathrm{T}}$.
\end{definition}

In words: the Oracle deliberates on the kernel stack until an answer names a task and a reply; the reply is appended to the stack
the task last named, and the task resumes where it stopped. The task runs its own code on its own stacks, and returns control by
a system call, by exhausting its slice, by finishing its code or by getting stuck, and each of these appends its name and a request
to the kernel stack. Task code is well formed and so contains no $\mathtt{halt}$, which makes $n+1$ its only normal exit: a request
$\eta$ says that the task ran its code to the end, and $\phi$ that it got stuck, as a $\mathtt{pop}$ on an empty stack does.
The machine is good if every query reached on $\kappa$ is accepted. Runs start in the privileged mode, as on the
\mach{O-2-PDA-TS}.

\begin{proposition}[the reversed contract]
\label{prop:reversed}
For every Oracle, every program whose task code is well formed and every run of the \mach{O-2-PDA-$\overline{\mathrm{T}}$}:
\begin{itemize}
\item \textbf{$\overline{\mathrm{K}}$1 (a task writes only its stacks).} A P-mode transition of $t$ changes no stack outside
  $S_t$, except that a trap appends $\ulcorner t \urcorner$ and its request to $\kappa$. An O-mode transition changes only $\kappa$,
  by appending its answer, and, at a dispatch to $t$, one stack of $S_t$, by appending the reply.
\item \textbf{$\overline{\mathrm{K}}$2 (a task reads only its stacks).} In P-mode a task $t$ reads only stacks of $S_t$: what its
  own instructions wrote there, and the replies that dispatches to $t$ appended, the initial content included. No task reads
  $\kappa$ or another task's stacks.
\item \textbf{$\overline{\mathrm{K}}$3 (control passes only at a trap and a dispatch).} Every transition from P-mode to O-mode is
  a trap of the running task, and every transition from O-mode to P-mode is a dispatch, which enters $t$ at the label of
  $\rho(t)$: its entry label, the label after its last $\mathtt{sys}$, or the label at which another trap stopped it.
\item \textbf{$\overline{\mathrm{K}}$4 (a request is input to a decision, never a decision).} Which task runs next and what reply
  it receives are the decision of an Oracle answer on $\kappa$. A request reaches O-mode only as content appended to $\kappa$ by
  the trap that ends its segment.
\item \textbf{$\overline{\mathrm{K}}$5 (control returns to the kernel).} Every segment ends with a trap within $\beta$ P-mode
  transitions.\qedhere
\end{itemize}
\end{proposition}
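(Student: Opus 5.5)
The plan is to prove each item by inspection of the rules of Definition~\ref{def:tbar}, as Theorem~\ref{thm:contract} is proved from Definition~\ref{def:t}, with induction on the run wherever an invariant over configurations is needed. The central invariant is this: in every reachable configuration, $\rho(t) = (\ell, s)$ has $\ell \in C_t \cup \{n+1\}$ and $s \in S_t$. In a P-mode configuration $\langle \mathsf P, t, \ell, b, \rho, \sigma\rangle$, we also have $\ell \in C_t \cup \{n+1\}$ and $1 \le b \le \beta$.

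The invariant holds initially, because $\rho_0(t) = (e_t, i_t)$. It is preserved for the following reasons.
\begin{itemize}
\item A dispatch enters at the label of $\rho(t)$, with $b = \beta$.
\item Well formedness of task code puts every successor label, $\ell+1$ and $l_e$ included, in $C_t$, or at $n+1$ when the code runs past its end.
\item A $\mathtt{sys}\ s$ at a label of $C_t$ names only stacks of $S_t$, so the stack it records lies in $S_t$.
\item The slice, completion and fault traps record $i_t \in S_t$.
\end{itemize}

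With the invariant in hand, each item follows.
\begin{itemize}
\item \textbf{$\overline{\mathrm{K}}$1.} A P-mode step that is not a trap is an \mach{O-2-PDA} step at a label of $C_t$. By well formedness its instruction names only stacks of $S_t$, so Lemma~\ref{lem:locality} confines its change to that stack. Every trap rule additionally appends $\ulcorner t\urcorner$ and the request to $\kappa$ and touches nothing else. The O-mode rules change $\kappa$ and, at a dispatch, the stack $s$ of $\rho(t)$, which lies in $S_t$ by the invariant.
\item \textbf{$\overline{\mathrm{K}}$2.} This combines $\overline{\mathrm{K}}$1 with the same locality argument. The only instructions that read content are $\mathtt{jeq}$, $\mathtt{pop}$, the Oracle call and $\mathtt{sys}$, and all of them name stacks of $S_t$. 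One point needs checking: an \mach{O-2-PDA} Oracle call from task code must be on a stack of $S_t$, so it reads no other stack. The disjointness of the $S_t$, together with $\kappa$ being an Oracle stack and never in any $S_t$, then shows that the contents of a stack of $S_t$ were written only by $t$'s own instructions, by dispatches to $t$, and by the initial store.
\item \textbf{$\overline{\mathrm{K}}$3.} Every rule that leads from P to O is one of the four traps of the running task, and the only rule that leads from O to P is a dispatch, which enters at $\rho(t)$. A short case analysis then lists which trap last set $\rho(t)$, giving the three possible entry points.
\item \textbf{$\overline{\mathrm{K}}$4.} The successor task and the reply are read off the decision $(t,v)$ of the dispatching answer. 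Requests enter O-mode only through the $\kappa$-appends of the trap rules, by $\overline{\mathrm{K}}$1.
\end{itemize}

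\textbf{$\overline{\mathrm{K}}$5} is the step I expect to need the most care, because it requires exhaustiveness: every P-mode configuration must have a successor that either traps or lowers the slice. I would split the P-mode rule into four cases.
\begin{itemize}
\item A $\mathtt{sys}$ instruction traps.
\item An \mach{O-2-PDA} step lowers $b$ by one, and at $b=1$ it is a slice trap.
\item No step exists and $\ell = n+1$: this is a completion trap.
\item No step exists and $\ell \neq n+1$: this is a fault trap.
\end{itemize}
Well formedness excludes $\mathtt{halt}$, so a final configuration can only be at $n+1$. A rejected Oracle call is not a case of "no step", since it continues at $l_e \in C_t$. A segment therefore starts with $b=\beta$, and it traps at the latest by the $\beta$-th transition. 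Unlike K5, this needs no good machine, because it concerns P-mode only.
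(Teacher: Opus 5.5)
Your proposal is correct and follows the paper's proof: each item is read off the rules of Definition~\ref{def:tbar} using well-formedness and the locality lemma, and $\overline{\mathrm{K}}$5 uses the same four-case exhaustiveness argument (a $\mathtt{sys}$, a step that lowers the slice, a completion trap, a fault trap). The only difference is presentational: you state the invariant on $\rho$ and on the remaining slice explicitly and prove it by induction on the run, whereas the paper compresses this to the remark that only $\rho_0$ and the traps of $t$ set $\rho(t)$.
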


\begin{proof}
\leavevmode
\begin{itemize}
\item \textbf{$\overline{\mathrm{K}}$1.} A P-mode transition of $t$ takes at most one \mach{O-2-PDA} step, and a trap adds to it
  only an append of $\ulcorner t \urcorner$ and the request to $\kappa$ and a new $\rho(t)$: a slice trap takes the step it ends,
  and the other traps take none. The step is on an instruction of $C_t$, which names only stacks of $S_t$ by well-formedness, and
  by the locality lemma it changes at most the stack it names. The O-mode rules update $\sigma$ at $\kappa$ and, at a dispatch,
  at the stack of $\rho(t)$, which lies in $S_t$ since only traps of $t$ and $\rho_0$ set it.
\item \textbf{$\overline{\mathrm{K}}$2.} An instruction reads the store only at the stack it names: $\mathtt{jeq}$ and
  $\mathtt{pop}$ read its top, and the Oracle instruction and $\mathtt{sys}$ its content. By well-formedness that stack lies in
  $S_t$, and $\kappa \notin S_t$ since $\kappa$ is an Oracle stack. By $\overline{\mathrm{K}}$1 and its proof, a stack of $S_t$
  changes only by instructions of $t$ or by the reply of a dispatch to $t$.
\item \textbf{$\overline{\mathrm{K}}$3.} Immediate from the rules: the only rules from P-mode to O-mode are the traps, and the only rule
  from O-mode to P-mode is the dispatch, which enters at the label of $\rho(t)$; only $\rho_0$ and the traps of $t$ set
  $\rho(t)$.
\item \textbf{$\overline{\mathrm{K}}$4.} The successor of an O-mode configuration is fixed by an answer $r \in O(\sigma(\kappa))$,
  and by $\overline{\mathrm{K}}$1 a task changes $\kappa$ only at its traps.
\item \textbf{$\overline{\mathrm{K}}$5.} The P-mode rules give every P-mode configuration a successor: a $\mathtt{sys}$ traps, an
  \mach{O-2-PDA} step is taken, and where there is none the task traps. Each transition that is not a trap lowers the slice by
  one, and at slice $1$ every transition traps.\qedhere
\end{itemize}
\end{proof}

The items mirror K1--K5, and the proofs derive them from the rules as before. $\overline{\mathrm{K}}$5 needs no good machine, since
P-mode has a successor everywhere, but it is the return of control to the kernel. Its
counterpart for the tasks, that a dispatch eventually occurs, is a property of the Oracle: a person may never decide.

\paragraph{What does not carry over.} In CBPV the two kinds of force change sides. A dispatch forces the task code at the label of
$\rho(t)$, as a trap forces $k$ on the \mach{O-2-PDA-TS}, and a trap forces the thunk on $\kappa$, as $\mathtt{uret}$ forces the
thunk on a task stack. \textbf{So the operating system here is the Oracle's computation on $\kappa$, which forces the task code and
is forced by its traps.} K4 makes every answer data to the kernel because the kernel's code $\Pi$ lies apart from its stacks and
no rule writes it. The Oracle has no code apart from its query: $\kappa$ is an Oracle stack, on which content may have type $U\,B$
(Definition~\ref{def:s}), and a trap appends every request to it. So a request becomes part of the program the kernel runs next,
and no type keeps it data. $\overline{\mathrm{K}}$4 is the counterpart of K4 in what it preserves: a request selects among the decisions
the dispatch language names, and only through the Oracle's answer, as an answer selects among the successors $\Pi$'s labels allow,
and only through $\Pi$'s tests. What it loses is a selector that can be checked. A kernel on the \mach{O-2-PDA-TS} is a program and
can be verified against its specification~\cite{klein2009sel4,gu2016certikos}; on this machine the selector is the answer
relation of the Oracle, which no proof inspects. So $\overline{\mathrm{K}}$4 is verified as K4 is, for every Oracle and every
program: no request bypasses the trusted party, and every decision is its own. What no proof can verify is that the trusted party
decides well, for an Oracle the contract treats as a relation it never inspects. On the \mach{O-2-PDA-TS} trust in the kernel can be
discharged by verifying its code; here, for an Oracle fixed only by its answer relation, it can be assumed, or measured, and an
Oracle whose policy is code can be verified as a kernel's can, at which point the assumption moves to the specification of that
policy. Which Oracles are of which kind, and what a specification of a decision policy looks like, is outside this paper. A measurement certifies an average, not a decision: for a stochastic Oracle on a fixed domain, certifying with error at
most $\varepsilon$ that its reliability is at least $p_1$ rather than at most $p_0$ takes, to leading order,
$\ln(1/\varepsilon)/D(p_1 \,\|\, p_0)$ turns~\cite{wang2026dtoken}. \textbf{The reversed contract confines the program completely, and says nothing about the judgement of the party it
trusts.}

\paragraph{Automated systems and human--computer interaction.} Definition~\ref{def:sides} applies with the roles exchanged: the privileged side is the Oracle, and
the Oracle mediates a task when the task runs in P-mode and the Oracle serves its traps. In \emph{automated systems} the task's
program is task code: it runs confined, and the Oracle decides at its traps, so the Oracle is its operating system. Turing's
choice machines are an early instance: at an ambiguous configuration such a machine cannot go on until an external operator has
chosen~\cite{turing1936computable}. Human-centred automation states the ordering as a design principle, that the pilot remains in
command~\cite{billings1996human}, while other work shifts the final authority between the person and the automation as the
situation changes~\cite{inagaki2000situation}, which on the machine would move the privileged side during a run, outside both
machines. In \emph{human--computer interaction} the task's program is the Oracle's, on $\kappa$: the person dispatches tasks as
tools and reads what their traps report, as Workflow code reads the answers of P-mode calls, and nothing mediates the person.
Licklider's symbiosis divides the work this way: people set the goals and perform the evaluations, and machines do the
routinizable work~\cite{licklider1960man}. The levels of automation run from one in which the person does everything
to one in which the computer does~\cite{sheridan1978human}, separately for acquiring information, analysing it, selecting a decision and implementing it~\cite{parasuraman2000model}. On the machine, a level says which decisions the dispatch language
leaves to $\kappa$ and which the task code makes by itself.

\paragraph{Trusting the Oracle.} What $\overline{\mathrm{K}}$4 leaves to the Oracle is what the formal models of
human--automation interaction check, by putting a model of the user in the person's place. Degani and Heymann compose the user
model with the machine model and require that the composite reach no error state and no blocking state~\cite{degani2002formal};
Rushby model-checks a mental model against the machine to find automation surprises~\cite{rushby2002using}; and full control
requires that the user know exactly which commands are available and anticipate every observation~\cite{combefis2009bisimulation}.
On the reversed machine these are conditions on the Oracle's decisions, which the contract cannot state, since it holds for every
Oracle. Security has the same division. User-driven access control grants a permission only on an authentic user action, which
no application can generate~\cite{roesner2012user}; on the machine this is $\overline{\mathrm{K}}$4, which follows from the rules. The
trusted path asks in addition that untrusted software cannot imitate the channel to the person~\cite{dod1985trusted}. A request
may contain task symbols or $\sharp$, which do not dispatch but may mislead the Oracle that reads them, so a realization keeps them
out of task content. What neither guarantees is the person's judgement: in one study, 17\% of participants paid attention to the
permissions an application asked for at installation~\cite{felt2012android}. For a language model as the kernel, a request that
steers the decision is a prompt injection that lands on the privileged side. That is the case the ordering of \S\ref{sec:os}
excludes by placing an untrusted Oracle below the Priestess, and the reason this paper orders the sides that way for a language
model.

\paragraph{An interpreter.} As in Proposition~\ref{prop:interpreter}, the reversed kernel can run inside the ordinary machine:
Priestess code calls the Oracle in P-mode on a Priestess stack that stands for $\kappa$, decodes a dispatch and jumps to the task's
resumption point, and each $\mathtt{sys}$ becomes code that appends the request and jumps back. An event loop or an approval gate builds the
reversal this way, and its confinement is then an obligation of the interpreter's code and of a check of
the task code, as in Singularity~\cite{hunt2007singularity}. An agent harness that asks a person before it runs a tool request
combines the two orderings: the person above the Priestess, and the Priestess above the model. That chain has two Oracles of
different classes and different trust, which neither machine formalizes alone (\S\ref{sec:future}).

\subsection{Black-Box Confinement}
\label{app:further-blackbox}

Each breaking of \S\ref{sec:os} is stated for one colour of stack. Write $X$ for the colour that S lets hold a program the Oracle
runs, and $Y$ for the colour whose mode T makes privileged, and call the result the \mach{XY-2-PDA}; on the \mach{O-2-PDA-TS} of
this paper $X$ is $O$ and $Y$ is $P$. The two letters play different roles, and this subsection sets out the difference. S's
designation decides which stored content may be run; on a machine that has it, a system then decides, task by task, whether a
task's program is placed there at all (Definition~\ref{def:sides}). T's designation decides which side holds the privileged mode,
and since a privilege is the set of operations a mode is granted (\S\ref{sec:os-t}), writing the other side in that place is not a
renaming but a second set of rules, the one Appendix~\ref{app:further-reversed} gives. What travels between the two is the contract
and the reading that S gives content, not the rules and not the colours. The subsection then derives the contract on the general
machine the two names leave.

\paragraph{The designation of $X$.} Of the two, this one does little work. S's typing holds for either colour in place of $X$, and on
the \mach{O-2-PDA-S} the designations differ in no run: erasing the types maps the machine whose subject colour is $X$ to the one
whose subject colour is the other, so no observation distinguishes them (\S\ref{sec:os-s}). The reason is that the Oracle
instruction is available on both colours and the program may call it on either, which is what lets the Priestess call it at all;
naming one colour the subject is therefore a way of reading one machine, not a way of separating two. For the two names to stand
for two machines, the breaking would have to take the instruction away on one side rather than constrain how its operand is typed,
and S does not go that far. A machine that did so would have no Workflow form, since a Workflow is task code that calls the Oracle
in P-mode (\S\ref{sec:os-agentic}), and no hybrid either, since a hybrid has a Workflow part. That is the expected price of
separating the colours into two machines, and S does not pay it. This paper consequently fixes $X$ to $O$ without loss, and
the designation names the program side of a task; it does not choose a machine. The parity of the two colours is the
paper's claim that S restricts no run, and it is
what lets the contract hold for every program, including an adversarial one: a designation derived from the colours asks nothing of a
program, so there is no well-formedness condition for a program to fail.

\paragraph{The designation of $Y$.} This one does have a machine behind each value, and it is the designation of privilege. T runs
the program in the privileged mode, and the Oracle instruction has its P-mode rule on Priestess stacks alone (\S\ref{sec:os-t}); to
exchange the two sides here is to exchange what a mode may do, which rewrites the transition rules. The two machines share the
contract: K1--K5 depend on no type and hold on the \mach{O-2-PDA-T} already, and on the reversed machine the same five items hold with
the roles exchanged, derived from the rules as before (Proposition~\ref{prop:reversed}). What changes between them is one item stated with
the roles exchanged, and the price of trusting an Oracle is what it states. So the two letters are not a choice of two breakings but
the two names each breaking fixes, and with only one machine behind each name of $Y$, Table~\ref{tab:quadrants} is a table of
placements, not of machines: its columns are the two placements of a task's program, its rows the two assignments of $Y$, and
only the first row is the machine of this paper. Its four cells are the four ways the two roles can be assigned, \textbf{who acts} and
\textbf{who is trusted}, and an operating system is the case in which they fall on different sides (Definition~\ref{def:sides}).
Exchanging the two colours maps each cell to the one on the other diagonal, which is what remains of the symmetry the breakings give up
spend.

\paragraph{The contract needs no Oracle.} Once the two letters are separated this way, the contract as a statement about a
boundary emerges. Write $R$ for any map that assigns a non-empty set of words to each word of a set $\mathrm{dom}(R)$, and
let the machine with $R$ be the \mach{O-2-PDA} of Definition~\ref{def:o2pda} with $O$ replaced by $R$, taking T as
Definition~\ref{def:t} takes it: an accepted query appends one of the words $R$ allows, and a rejected one continues at $l_e$. Call
the machine good for $R$ if every query reachable in O-mode lies in $\mathrm{dom}(R)$.

\begin{proposition}[the contract without an Oracle]
\label{prop:blackbox}
For every such $R$, K1--K4 and the crossing invariant hold on the machine with $R$ and T, and K5 holds with budgets on a machine
that is good for $R$.
\end{proposition}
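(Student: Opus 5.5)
The plan is to re-run the proofs of Theorem~\ref{thm:contract}, Corollary~\ref{cor:crossing} and Proposition~\ref{prop:k5}, checking at each step which property of the Oracle is used. The claim to establish is that only two are: the answer map assigns each accepted query a \emph{set} of appended words, and acceptance is membership in a fixed domain. Neither the decomposition $(A,E,D)$ nor the codec is used. First I restate the rules of Definitions~\ref{def:o2pda}, \ref{def:t} and~\ref{def:budget} with $O$ replaced by $R$ and $\mathrm{dom}(E)$ by $\mathrm{dom}(R)$. I then note that the locality lemma (Lemma~\ref{lem:locality}) and the extension of the split lemma to the Oracle instruction carry over verbatim. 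The block that gathers the operand onto one physical stack only rearranges the tape around a single call, and that call still updates only its operand by appending a word.

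With locality in hand, the items follow. K1 is the induction over an excursion: each O-mode rule updates $\sigma$ only at $u$, by appending some $r \in R(\sigma(u))$. K2 holds because the O-mode rule takes $\sigma(u)$ alone as input; with K1 and locality, $u$ changes only by O-mode appends or by Priestess instructions naming $u$. K3 is read off the rules, since only $\mathtt{uret}$ leads to O-mode and every O-to-P rule targets $\ell_{\mathrm{trap}}$. K4 holds because the P-mode call continues at $\ell+1$ or $l_e$ according to membership in $\mathrm{dom}(R)$ alone. The induction on labels is unchanged, as no rule writes $\Pi$. The crossing invariant then follows from K1, K3 and locality exactly as before.

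For K5, I would use goodness for $R$. Every reachable O-mode configuration has a query in $\mathrm{dom}(R)$ and, since $R$ is non-empty there, at least one successor. Each O-mode transition either traps or lowers the budget, and at budget $1$ every transition traps, so the excursion ends within $\beta$ transitions.

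The main obstacle is the yield/budget distinction and the non-empty-answer convention. Definition~\ref{def:budget} distinguishes yield traps from budget traps by whether the task stack ends with $\natural$, which relies on answers being non-empty. The statement here only requires $R$ to be non-empty-valued, not to produce non-empty words. I would handle this by observing that K1--K5 themselves never use non-emptiness: the rules decide a yield by whether the appended word $r$ ends with $\natural$, not by inspecting the stack afterwards. If $\lambda \in R(w)$, that append is a non-yielding transition that leaves the stack unchanged, which K1 permits, and it still lowers the budget, so K5's bound stands. The kernel's ability to tell the two trap kinds apart is a matter of the kernel's protocol, not of the contract, and I would note it as such.
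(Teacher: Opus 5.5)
Your proposal is correct and follows the paper's own argument: the proofs of Theorem~\ref{thm:contract}, Corollary~\ref{cor:crossing} and Proposition~\ref{prop:k5} touch the Oracle only through the acceptance domain and the answer set, so they go through verbatim with $R$ in place of $O$ and $\mathrm{dom}(R)$ in place of $\mathrm{dom}(E)$. Your extra check is slightly sharper than the paper's proof, which lists the non-empty-answer convention among its ingredients. You show that K1--K5 need only non-empty answer \emph{sets} (for K5 on a good machine), not non-empty answer words, which agrees with the paper's remark in Appendix~\ref{app:eclipse-scope} that neither the contract nor K5 uses the convention.
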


\begin{proof}
The proofs of Theorem~\ref{thm:contract}, Corollary~\ref{cor:crossing} and Proposition~\ref{prop:k5} use the rules of
Definitions~\ref{def:2pda}, \ref{def:o2pda}, \ref{def:t} and~\ref{def:budget}, the locality lemma, and the convention that answers
are non-empty. The Oracle enters them only through $\mathrm{dom}(E)$ and $O(w)$, and $R$ has those two as well. So each proof
applies verbatim.
\end{proof}

So the contract is a property of the boundary, and not of the party behind it. This is the property a reference monitor has, and it
is what lets the classical account of confinement apply to a party whose interior is not assumed~\cite{anderson1972computer}. What
the machine adds is why the property asks nothing of that party: the confined side executes no instruction of its own and only
appends, so nothing of what it does to answer is used, and Proposition~\ref{prop:blackbox} obtains the property for every $R$, with
no well-formedness condition asked of $R$. A realization in which the confined side does execute has to supply the same items by
mechanisms of its own (Definition~\ref{def:tcb}).

\paragraph{The boundary uses S, and at one place only.} The contract needs no Oracle, and the boundary of
Theorem~\ref{thm:boundary} needs it no more, but the boundary does use S, and in one item only. Its \emph{In} item says that whatever
a call reads beyond the answers its own mode appended, the Priestess wrote there. In O-mode this is K2 with the crossing invariant,
which hold on the \mach{O-2-PDA-T}. In P-mode it is S: a P-mode call reads a Priestess stack $d$, an $\mathtt{uret}$ under S enters
only Oracle stacks, and by K1 no excursion writes $d$. Nowhere else does S enter, and the condition it supplies there can be obtained
directly.

\begin{definition}[safe program]
\label{def:safe}
A program is \textbf{safe} on the \mach{O-2-PDA-T} if no stack is both the operand of an $\mathtt{uret}$ and the operand of a
P-mode Oracle call. The machine with a safe program is \textbf{safe}.
\end{definition}

Stack operands are names written in the program, so safety is decidable from $\Pi$. It is a condition on the program, where goodness
is a condition on the codec (\S\ref{sec:os-t}, Proposition~\ref{prop:codec-good}), and it mirrors it: goodness keeps every query of
a run answerable, and safety keeps every P-mode operand out of an excursion's reach.

\begin{proposition}[the boundary without S]
\label{prop:safe}
On the \mach{O-2-PDA-T} with a safe program, K1--K4, the crossing invariant and the safety boundary hold for every Oracle, and so
does Proposition~\ref{prop:where-not-which}.
\end{proposition}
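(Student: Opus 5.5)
The plan is to observe that, of all the results listed, only the \emph{In} item of Theorem~\ref{thm:boundary}, in its P-mode half, used S; everything else is already derived on the \mach{O-2-PDA-T} from the transition rules and the locality lemma. So I will first recall that K1--K4 and the crossing invariant hold on the \mach{O-2-PDA-T}. Their proofs (Theorem~\ref{thm:contract}, Corollary~\ref{cor:crossing}) consult no type, as the remark after Theorem~\ref{thm:contract} already notes. Hence they hold for every program, safe or not, and every Oracle. With budgets, Proposition~\ref{prop:k5} carries them over unchanged.

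Next I take the safety boundary item by item. \emph{Out} is immediate from the rules of Definitions~\ref{def:2pda}, \ref{def:o2pda}, \ref{def:t} and~\ref{def:budget}, and the proof of Theorem~\ref{thm:boundary} never invokes S there. So only \emph{In} needs work.

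For an O-mode transition, \emph{In} is K2 with the crossing invariant, which hold on the \mach{O-2-PDA-T}. For a P-mode call on an operand $d$, the original proof used S in one step: an $\mathtt{uret}$ enters only Oracle stacks, so no excursion has $d$ as its stack. I replace that step with safety (Definition~\ref{def:safe}). Because $d$ is the operand of a P-mode Oracle call, it is the operand of no $\mathtt{uret}$ in $\Pi$. Every excursion starts with an $\mathtt{uret}$ naming its stack, so no excursion runs on $d$, and by K1 no excursion writes $d$. By the locality lemma and K4, $d$ then changes only through Priestess instructions naming $d$ and through earlier P-mode calls on $d$. This is exactly the \emph{In} statement, with ``a Priestess stack'' read as ``a stack no excursion enters''. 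The main obstacle is precisely this point. The statement of \emph{In} as worded mentions that the operand is a Priestess stack, so I will restate it for the \mach{O-2-PDA-T} as ``a stack that no $\mathtt{uret}$ names''. I also need to check that safety, being syntactic on $\Pi$, really covers every run. It does, because $\Pi$ is never written (K4) and stack operands are literal names in it.

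Finally, Proposition~\ref{prop:where-not-which}. Its proof uses only that the rules depend on the Oracle through $\mathrm{dom}(E)$ and $O$, and that enlarging $O$ preserves runs. Neither fact involves S or the typing. The argument therefore repeats verbatim on the \mach{O-2-PDA-T}, with a safe program, with or without budgets, and this completes the plan.
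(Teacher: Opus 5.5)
Your proposal is correct and follows the paper's proof essentially step for step. The contract and the crossing invariant carry over because no rule consults a type, and \emph{Out} never used S. In the P-mode half of \emph{In}, safety replaces S: no $\mathtt{uret}$ names the operand $d$, so by K1 no excursion writes it. Proposition~\ref{prop:where-not-which} quantifies over the Oracle alone. The point you flag as the main obstacle is not one. The P-mode rule of Definition~\ref{def:t} already admits the Oracle call only on Priestess stacks, on the \mach{O-2-PDA-T} as on the \mach{O-2-PDA-TS}, so the \emph{In} item holds as worded and needs no restating.
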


\begin{proof}
K1--K4 and the crossing invariant hold on the \mach{O-2-PDA-T}: the contract follows from the transition rules, none of which
depends on a type, so it holds on the machine without S (\S\ref{sec:os-kernel}), and the \emph{Out} item of the boundary does not use S
either. For the \emph{In} item in P-mode, a call on $d$ takes $\sigma(d)$ alone; by safety no $\mathtt{uret}$ names
$d$, so no excursion entered on $d$, and by K1 none wrote it. So $\sigma(d)$ holds only the answers of earlier P-mode calls on it and
what the Priestess wrote there, the initial content included. The O-mode case is K2 with the crossing invariant, and the last claim
quantifies over the Oracle alone.
\end{proof}

S implies safety, since under S an $\mathtt{uret}$ enters only Oracle stacks, a P-mode call reads only Priestess stacks, and the two
colours are disjoint, so every program of the \mach{O-2-PDA-TS} is safe. Safety is strictly weaker, admitting an $\mathtt{uret}$ on a
stack that no P-mode call reads whatever its colour, and it is weaker in kind, constraining the program's operands and not the
machine's colours. A P-mode call is the only transition at which the boundary can break,
and it breaks there exactly when the call's operand holds content that an excursion appended. Safety rules this out by the
operands alone. The
machine without P-mode calls needs no condition at all: with the Oracle instruction on Priestess stacks removed, the \emph{In}
item in P-mode is vacuous and the boundary holds on the bare \mach{O-2-PDA-T} for every program. That machine has no Workflow form,
and no hybrid.

\paragraph{What a realization supplies in place of the rules.} The propositions above say what confinement of this shape achieves when
the confined side executes no instruction of its own. A realization in which it does execute has to supply the same items by other
means, and the list is what a trusted computing base is.

\begin{definition}[trusted computing base]
\label{def:tcb}
A realization of the machine in which the confined side executes instructions of its own has a \textbf{trusted computing base} when
it supplies: (i)~isolation, so that the confined side writes and reads only its own storage; (ii)~an entry, so that it reaches the
trusted side only at a fixed place; (iii)~a discipline for what crosses back, so that no content from it becomes an instruction or a
code address of the trusted side; and (iv)~a bound, so that it returns control. The first supplies K1 and K2 together, and the rest
are K3, K4 and K5 in turn. One side of the realization is the base that supplies them.
\end{definition}

The items are stated in the vocabulary of the contract so that a proof of the contract carries over to a realization: where the machine's
rules give each item, the base's mechanisms give it, and the items then hold there as they hold of the machine, for every interior
the confined side may have. This is narrower than the classical notion of a trusted computing
base~\cite{dod1985trusted,anderson1972computer}, which is a whole system's
protection mechanism defined against an evaluation and a policy; what is defined here is only what the contract requires.

Two things are worth separating. The operating system of Definition~\ref{def:os} is defined \emph{relative to a task}: it is the
party that schedules the programs on Oracle stacks, and it needs a task whose program is placed against its privilege. The base is
defined \emph{relative to a party whose interior is not assumed}: it is one side of an interface, and it needs no task at all. So a
base can confine a party that is nobody's task, and a task can meet an operating system that is not a base, which is the interpreter
of Proposition~\ref{prop:interpreter}. A base is one way to put the items of Definition~\ref{def:tcb} behind a confined party, and
it is the way that keeps the party on the processor and moves the checks into the trusted side's code. A realization that instead
gives that party hardware of its own, reached through one interface, has the shape the machine already has
(\S\ref{sec:machine}, \S\ref{sec:query}), since the machine assumes nothing of the Oracle's interior and gives it no code of its
own; the items are then properties of a component boundary, not of a check in the trusted side's code, and they ask no more
of the realization than the machine asks of an Oracle.

\begin{proposition}[what a base cannot do]
\label{prop:tcb-forged}
For every $R$ and every run of the machine with $R$, a party confined behind a base reaches the trusted side only at the entry of
Definition~\ref{def:tcb} and only as data, whatever its interior. Which content reaches it is not fixed.
\end{proposition}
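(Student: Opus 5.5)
The plan is to prove the two halves separately: the positive half (\emph{only at the entry, only as data}) from Proposition~\ref{prop:blackbox}, and the negative half (\emph{which content is not fixed}) by the construction of Proposition~\ref{prop:where-not-which}, transported from an Oracle to an arbitrary $R$. Throughout I read a party confined behind a base as the O-mode side of the machine with $R$ and T. This is the shape Definition~\ref{def:tcb} abstracts: items (i)--(iv) are exactly K1/K2, K3, K4 and K5. What is to be shown is therefore a statement about runs of that machine, for every $R$.

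For the positive half, fix $R$ and a run. By Proposition~\ref{prop:blackbox}, K1--K4 and the crossing invariant hold for it. Two facts then give the claim.
\begin{itemize}
\item Entry. By K3 every transition from O-mode to P-mode is a trap to $\ell_{\mathrm{trap}}$, which is the fixed place of item~(ii). By K1 an excursion writes only its own stack, so content from the confined side reaches P-mode only through the force of $k$ at that trap (Corollary~\ref{cor:crossing}, O to P).
\item Data. By K4 every P-mode label is $1$, $\ell_{\mathrm{trap}}$, $\ell+1$ or a label written in $\Pi$, and no rule writes $\Pi$. So no appended word becomes a code address or instruction text. It can influence control only through the gates of the \emph{Out} item of Theorem~\ref{thm:boundary}, whose proof uses only the rules, and the rules mention $R$ only through $\mathrm{dom}(R)$ and $R(w)$.
\end{itemize}
Because none of these proofs inspects how an answer is produced, the conclusion holds whatever the interior is. This is the ``whatever its interior'' clause.

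For the negative half, I would repeat the proof of Proposition~\ref{prop:where-not-which} with $R$ in place of $O$. Take a reachable configuration $c$ that calls on a history $h\in\mathrm{dom}(R)$, and any non-empty word $u$. Set $R'(w)=R(w)$ for $w\neq h$ and $R'(h)=R(h)\cup\{u\}$; then $\mathrm{dom}(R')=\mathrm{dom}(R)$. The steps are:
\begin{itemize}
\item By induction on runs, every run with $R$ is a run with $R'$, because the rules give $R'$ every successor they give $R$.
\item At $c$, $R'$ has a successor that appends $u$.
\item $R'$ is again a legitimate $R$, with non-empty answers and the same domain, so the positive half applies to it as well.
\end{itemize}
Hence no property of runs that holds for every $R$ can exclude $u$ at $c$, which is what ``not fixed'' means. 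If $u$ ends with $\natural$, it also reaches the trusted side as a request at $\ell_{\mathrm{trap}}$.

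I expect the main obstacle to be interpretive rather than technical: making precise that ``behind a base'' means the trusted side's mechanisms realize exactly items (i)--(iv), so that the run is a run of the machine with some $R$. I would state this as the standing reading from the remark after Definition~\ref{def:tcb}, that the items hold there as they do of the machine. Once that reading is fixed, both halves are direct appeals to earlier results.
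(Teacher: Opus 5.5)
Your proposal is correct and follows the paper's proof. The first claim is K3 and K4, items (ii) and (iii) of Definition~\ref{def:tcb}, which the paper cites directly and you derive through Proposition~\ref{prop:blackbox}. The second claim is Proposition~\ref{prop:where-not-which} with $R$ in place of the Oracle, which you re-run by enlarging $R$ at $h$ alone; with no codec involved, this is slightly cleaner.
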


\begin{proof}
The first claim is items (ii) and (iii) of Definition~\ref{def:tcb}, which are K3 and K4 as they hold on a realization. For the second, apply
Proposition~\ref{prop:where-not-which} with $R$ in place of the Oracle: it quantifies over the answers alone, and no property that
holds for every answer can exclude one. On the machine the item that carries both is K4, and it is the one item a realization cannot
inherit, since an Oracle has no instructions to forge with and a program has.
\end{proof}

\subsection{Several Levels of Privilege}
\label{app:further-levels}

The previous subsection separates the two letters and derives the contract for a confined party that answers. This one asks what that
leaves of a computer that has several levels of privilege.

\paragraph{The computers in use.} This also answers a reader's first question here: whether the same account fits an ordinary
computer, which has several privilege levels and no Oracle. In part it does. Treat the Oracle as a user-mode program and the
Priestess as a kernel, and the boundary of Theorem~\ref{thm:boundary} is the one between user mode and the kernel. In part it does
not, in the way \S\ref{sec:eclipse-safety} warns about: the two restrictions that give O-mode its position, that it runs no
instruction of its own and only appends, are not properties of user mode, where a program runs instructions of its own and is
confined in its rights, not in its execution. In the terms of the two letters, the levels of such a computer are
assignments of $Y$ in which the mode that runs the user's program is denied operations but still executes, which is the weaker
form of the breaking. Their order is an order of rights, and \S\ref{sec:query} has already placed them: P-mode contains the
processor's existing privilege levels, and on a realization with an Oracle the Oracle mode is added below user mode
(\S\ref{sec:archnights}).

\paragraph{A tower of levels.} Put the two together and the levels compose: $O$ below $U$, and $U$, $S$ and $M$ one above another
inside P-mode. Each crossing is a fixed entry to the level below and a trap back to one fixed entry above it, as the crossing
between P-mode and O-mode is by K3, and no content of a lower level reaches the control of a higher one except through that entry.
Complete mediation, which Theorem~\ref{thm:boundary} gives at one boundary, can be required at each of them in turn: a check at an
entry sees every crossing into the level below it. For the levels the processor supplies, this is the processor's own guarantee,
stated in its privileged architecture~\cite{waterman2021risc}. There the higher level fixes each entry, not the instruction set: a
trap goes to an address that a higher level has set in a register the lower level cannot write, one per cause of interrupt in
vectored mode and at the level that delegation selects, and a return goes to the address that the higher level leaves in its
exception program counter. What the lower level cannot do is choose where it enters a higher one, which is what K3 secures for the
Priestess. What the machine adds to the tower is the bottom level, which those levels do not supply: a
level whose transitions are computed by a party that runs no instruction of the level at all. How the users of one level interact with those of
another is the old question that \S\ref{sec:related-os} and Appendix~\ref{app:archnights} answer in three ways, by
kernel-isolating virtualization, by kernel-sharing virtualization and by a monitor above both.

\paragraph{What the tower does not inherit.} The contract is a theorem about one Oracle, one program and one run, and a tower does not
inherit it: a level may apply it to what runs at that level, and the tower as a whole would need a theorem of its own, which this
paper does not state. Nor is the tower the only nesting. An approval gate has two Oracles of different classes and different
trust, which is a combination of the two assignments of $Y$, instead of a stack of levels, and \S\ref{sec:future} leaves it open.

\subsection{Oracles and the Lynchpin Hypothesis}
\label{app:further-oracles}

\S\ref{sec:query} runs in O-mode only the Oracles that are autoregressive inside, and lets P-mode call any Oracle. This subsection
says what that leaves open for the Oracles other than language models, and then reviews what is known of the Lynchpin hypothesis and
of the remedy that \S\ref{sec:query} proposes for it.

\paragraph{Oracles other than language models.} \S\ref{sec:query} places the classes. An action-conditioned world model and
vision-language-action models that predict action tokens may run in O-mode, and the actions of the latter reach P-mode as requests
that control a robot; a diffusion model, Jev and usually a person are called in P-mode. A diffusion-native runtime describes its
computation as the refinement of a current global state, instead of the extension of an immutable prefix, and lets the model update
some parts of that state and the environment and the runtime the others~\cite{cao2026continuous}. On the machine such a model is
called in P-mode, and the state it refines is Priestess content that the Priestess rewrites between calls, outside the append-only
epochs of \S\ref{sec:eclipse}. A survey covers vision-language-action
models~\cite{ma2024survey}. What the placement leaves open is whether such an Oracle meets the hypotheses of \S\ref{sec:eclipse}. For
an action tokenizer, factorization at the yield symbol is checked per tokenizer, as it is for text
(Appendix~\ref{app:further-encoders}). A world model such as Genie learns latent actions and an autoregressive dynamics over video
tokens~\cite{bruce2024genie}, so its answers are frames of video, and whether its codec is Frame-faithful is
a property of each entity, as it is for a language model.

\paragraph{The empirical evidence.} The hypothesis concerns correct calls, and the evidence on tool calls bears on it most directly.
There, accuracy falls with the size of the tool catalogue, the length of tool responses and the length of the
conversation~\cite{kate2025longfunceval}, and benchmarks measure calls in long, multi-step and agentic
settings~\cite{zhong2025complexfuncbench,patil2025berkeley}. The general evidence on long contexts is consistent with the hypothesis
for current models. Accuracy depends on where information sits in a long context~\cite{liu2024lost}, and a model's effective
context is shorter than the length it accepts~\cite{hsieh2024ruler}. Performance falls with length when the question and the
relevant passage share few words~\cite{modarressi2025nolima}, across many models~\cite{hong2025context}, and with length alone when
retrieval is perfect~\cite{du2025context}. A chain of thought can lower accuracy on some tasks~\cite{liu2024mind}. None of this
proves the hypothesis, and its principle is not understood.

\paragraph{Theories that may bear on it.} Three kinds of theory may bear on it. The first concerns errors over the steps of a
composition. Errors may compound over the steps~\cite{dziri2023faith} or snowball from an early commitment~\cite{zhang2024how},
although teacher-forced training may fail to learn an accurate next-token predictor in the first place, so compounding is not the
whole story~\cite{bachmann2024pitfalls}. The second concerns what a fixed model can compose over a long input. Communication lower
bounds for decoder-only transformers~\cite{chen2024theoretical} and lower bounds on chains of thought~\cite{amiri2025lower} limit
it, and an empirical study of the serial-depth bottleneck is framed by the first~\cite{kurtulush2026when}. The
third supplies the terms for the dependence itself. Chains of infinite order, processes whose next symbol depends on the whole past, have a
theory of continuity rates, loss of memory and Markov
approximation~\cite{harris1955chains,berbee1987chains,fernandez2002markov,fernandez2005chains}, in which an Oracle's dependence on a
long history can be stated. The stochastic-oracle machines of \S\ref{sec:related-parallel} give a structure to the distribution of answers~\cite{wang2026acomputing}. Which of them, if any, explains the evidence is open.

\paragraph{The remedy: call by name.} Semantic delegation evaluates a computation by name rather than by value, delaying it until it is
needed (\S\ref{sec:query-message}), much as lazy structures in probabilistic programming are evaluated on
demand~\cite{dash2023affine}. Whether it helps is a quantitative question. It shortens the history by the definition and by every
intermediate step, at the price of a declaration that grows with every operator (Appendix~\ref{app:archnights-dist}). A model
of the Oracle's answers, for which the stochastic-oracle setting is a candidate, would say when the trade pays (\S\ref{sec:future}).

\subsection{Applications and Extensions}
\label{app:further-apps}

Each paragraph places a direction of current work on the machine, says
which part of the paper applies to it, and names the open item of \S\ref{sec:future} it meets. No placement here is a result.

\paragraph{Embodied agents.} Robot foundation models split a slow model from a fast one: a vision-language model at about ten
hertz and an action model at more than a hundred~\cite{nvidia2025gr00t,figure2025helix}, or a high-level policy that plans again
every second and sends language commands to a low-level policy controlling at 50~Hz~\cite{shi2025hi}. On the machine the slow
model is an $A_{\mathrm{AR}}$ Oracle in O-mode, and the fast one an Oracle that P-mode calls, which may be non-autoregressive, as
a flow-matching policy is, or Jev. The slow model can then write a program with a hole, \texttt{switch (a) \{ case \dots\ \}},
which the Priestess forces at the control rate with \texttt{a} supplied by the fast model: semantic delegation
(\S\ref{sec:query-message}) at the rate of control. Code as Policies is the nearest published instance. A language model writes
Python that branches on the outputs of perception and calls only the control primitives in scope~\cite{liang2023code}, and
VoxPoser writes its code once per subtask and evaluates it again at every step of a model-predictive-control
loop~\cite{huang2023voxposer}. We have found no work in which a learned action model fills that hole at the rate of control. PhyAgentOS supports both places for the
decision loop, a loop that the runtime manages over a policy model and a loop in which the agent selects tools online, and prohibits
the agent from issuing raw hardware commands~\cite{liu2026phyagentos}. On the machine these are the Workflow and the Agent form, and
the prohibition holds by K1 and K2, since an excursion writes only its own stack. Two placements need care. When the
hand-off between the two systems is a latent and both are trained end to end, as in Helix and GR00T
N1~\cite{figure2025helix,nvidia2025gr00t}, they are one Oracle on the machine, and nothing that passes between them can be observed.
And a world model used for planning is called in P-mode: V-JEPA~2-AC is autoregressive, but a search planner calls it many times
for each action~\cite{assran2025vjepa}. What the machine adds is the safety boundary. Every actuator command is a P-mode
request, so a check on the Priestess side sees all of them, and an attack that jailbreaks the robot's language
model~\cite{robey2025jailbreaking} reaches the motors only through that check. That is the position of a guardrail on plans~\cite{ravichandran2026roboguard},
of a control-barrier filter on actions~\cite{hu2026vlsa} and of the decision module of the Simplex architecture~\cite{sha2001using}.
ROS~2 supplies much of the communication the Priestess needs: its services and actions are requests with responses, and actions
can be cancelled~\cite{macenski2022robot}. Its topics, however, are publish--subscribe, so the boundary holds only if access
control leaves the Priestess the only publisher to the actuators~\cite{mayoral2022sros2}. The open items are those of
\S\ref{sec:future}: Oracles of different classes on one machine, and asynchrony.

\paragraph{Reasoning as code.} A chain of thought is a history that the Oracle iterates on in O-mode. Program-aided reasoning makes
it a program: the model writes code, an interpreter computes the answer, and ``solving is delegated to the
interpreter''~\cite{gao2023pal,chen2023program}. On the machine the answer is then one force of a thunk in P-mode, and the
history contains the program, not the steps of its evaluation. Chain of Code runs the program line by line and hands each line the
interpreter cannot run to the model, which emulates it and updates the program state~\cite{li2024chaincode}. On the machine, the
Priestess evaluates the program and calls the Oracle in P-mode for its semantic leaves. Tool-integrated reasoning trains the
interleaving~\cite{gou2024tora,feng2025retool}, and frontier models use tools inside their chains of thought~\cite{openai2025o3};
there each program is forced as soon as it is written and its value enters the history, by value rather than by name. At the
other end the model emulates its own program in context with no interpreter~\cite{weir2024learning,chae2024language}, and a
continuous chain of thought feeds hidden states back as inputs, so that nothing of the reasoning stays on the history to be
read~\cite{hao2025training}. The evidence on emulation favours execution. In Chain of Code's ablation on BIG-Bench Hard,
interleaving execution with emulation scored 84, emulation 63, emulation with a state trace 57, and Python alone 48~\cite{li2024chaincode}, and in
2024 GPT-4 with a chain of thought predicted the output of Python functions of 3 to 13 lines in 81.9\% of
cases~\cite{gu2024cruxeval}. What this suggests is a chain of thought that is a program, whose answer is the force of one thunk and
which the Priestess evaluates. It meets the Lynchpin hypothesis (Appendix~\ref{app:further-oracles}): a history that
holds the program and not its evaluation is shorter, and whether it is also more often right is the open question of semantic
delegation.

\paragraph{Code and language: one Oracle or two.} An agent needs both code and natural language~\cite{xu2024lemur}, and training
for one can cost the other. At a fixed budget, the share of code in pre-training trades code generation against natural-language
tasks, although a moderate share helps both~\cite{aryabumi2025code}. A code expert branched from a general model gains on code and
loses on knowledge~\cite{sukhbaatar2024branch}, fine-tuning on code causes more forgetting than fine-tuning on
mathematics~\cite{biderman2024lora}, and mixed fine-tuning datasets conflict when they are plentiful~\cite{dong2024how}. The record is
not one-sided: reinforcement learning on software evolution improved general language understanding where supervised fine-tuning
degraded it~\cite{wei2025swerl}. Two remedies are in use. One separates the models in the system. An architect model describes the
solution and an editor model writes the edits~\cite{aider2024separating}, and Magentic-One specializes a coder agent by its system
prompt, can run it on a different model from the rest, and executes its programs on a deterministic
terminal~\cite{fourney2024magentic}. The other switches inside one model, as
a hybrid model switches between thinking and non-thinking modes~\cite{yang2025qwen3}. That switch is one of reasoning depth, not
of code and language, and its vendor later trained the two modes as separate models~\cite{qwen2025instruct}. On the machine the
first remedy is several Oracles of different classes, among which the Priestess chooses at each call, and \S\ref{sec:future}
leaves that machine open. The second is the Oracle's interior, which the machine does not see. The machine adds one placement:
whichever Oracle writes a program, the program runs only when the Priestess runs it (K4), as the programs of Magentic-One's coder
run on its terminal.

\paragraph{Learned prefetch.} A model can learn when to yield and what to request. Reinforcement learning can make it call tools
less often~\cite{wang2025acting}. A fine-tuned model issues its longest-running calls first and goes on decoding while they
run~\cite{gim2024asynchronous}, and a model trained on its own rollouts learns to predict its next tool call from a partial
trajectory, so that the call can be issued before the model reaches it~\cite{ji2026speculate}. On the machine this is speculation
across the yield. Its precedent in operating systems is a program that, while stalled on I/O, runs ahead speculatively to issue
hints for its future reads~\cite{chang1999automatic}. The Priestess can serve a predicted request before the task makes it, as
speculative actions do with a fast model that guesses the next action and commits only when the real action
matches~\cite{ye2025speculative}. A reply the Priestess caches changes no task stack, so the contract is untouched until the reply
is returned. What the machine leaves to the designer is the handling of effects. A request with effects cannot simply be undone, and current
work speculates only requests without effects or with effects that can be rolled
back~\cite{ye2025speculative,nichols2025optimizing,ji2026speculate}, runs speculated calls in copy-on-write sandboxes and commits
them in order~\cite{yu2026tomasullm}, or divides effects into those that can be buffered, those that can be compensated and those
released only at commit~\cite{mohammadi2026atomix}. Issuing a request is itself observable. A speculative call that only reads still
discloses the user's inferred intent to the service it reaches~\cite{mohammadi2026ghost}, the agent's counterpart of the side
channels of speculative execution~\cite{kocher2019spectre} and of Appendix~\ref{app:eclipse}. So an early request must have no
effect and also reveal nothing that the task's own request would not. The learning has a loop of its own: a model that learns from
its own traces to request early changes the traces it learns from, as a learned prefetcher changes the distribution of the misses
it is trained on~\cite{hashemi2018learning}. When the large model prepares work for smaller ones, as in a protocol in which a cloud
model decomposes a task into subtasks that small local models run~\cite{narayan2025minions}, its request is served by the
Priestess calling the smaller Oracles in P-mode, and the open items are again several Oracles and asynchrony.

\end{document}